\documentclass{article}
\usepackage{amsmath}
\usepackage{amssymb}
\usepackage{array}
\usepackage{bbm}
\usepackage{hyperref}
\usepackage{amsthm}
\usepackage{color}
\usepackage{mathrsfs}
\usepackage[]{algorithm2e}
\usepackage{xcolor}
\usepackage[title]{appendix}
\hypersetup{
    colorlinks,
    linkcolor={blue!80!black},
    citecolor={green!50!black},
}
\usepackage{accents}
\usepackage{apptools}
\usepackage{comment}
\usepackage[shortlabels]{enumitem}

\usepackage[mathscr]{euscript}

\DeclareSymbolFont{rsfs}{U}{rsfs}{m}{n}
\DeclareSymbolFontAlphabet{\mathscrsfs}{rsfs}

\AtAppendix{\counterwithin{lemma}{section}}

\usepackage{mathtools}



\def\calB{\mathcal{B}}

\def\calE{\mathcal{E}}
\def\calF{\mathcal{F}}

\def\calL{\mathcal{L}}

\def\calT{\mathcal{T}}

\def\calV{\mathcal{V}}


\def\bA{\mathbf{A}}
\def\bB{\mathbf{B}}

\def\bD{\mathbf{D}}

\def\bG{\mathbf{G}}

\def\bI{\mathbf{I}}

\def\bQ{\mathbf{Q}}

\def\bT{\mathbf{T}}
\def\bU{\mathbf{U}}
\def\bV{\mathbf{V}}
\def\bW{\mathbf{W}}
\def\bX{\mathbf{X}}
\def\bY{\mathbf{Y}}
\def\bZ{\mathbf{Z}}


\def\ba{\boldsymbol{a}}
\def\bb{\boldsymbol{b}}
\def\bc{\boldsymbol{c}}
\def\bd{\boldsymbol{d}}

\def\bg{\boldsymbol{g}}
\def\bh{\boldsymbol{h}}

\def\bq{\boldsymbol{q}}
\def\br{\boldsymbol{r}}

\def\bu{\boldsymbol{u}}
\def\bv{\boldsymbol{v}}
\def\bw{\boldsymbol{w}}
\def\bx{\boldsymbol{x}}
\def\by{\boldsymbol{y}}
\def\bz{\boldsymbol{z}}

\def\bA{\boldsymbol{A}}
\def\bB{\boldsymbol{B}}

\def\bD{\boldsymbol{D}}

\def\bG{\boldsymbol{G}}

\def\bI{\boldsymbol{I}}

\def\bQ{\boldsymbol{Q}}

\def\bT{\boldsymbol{T}}
\def\bU{\boldsymbol{U}}
\def\bV{\boldsymbol{V}}
\def\bW{\boldsymbol{W}}
\def\bX{\boldsymbol{X}}
\def\bY{\boldsymbol{Y}}
\def\bZ{\boldsymbol{Z}}
\def\hbtheta{\hat{\boldsymbol \theta}}
\def\normal{{\mathsf{N}}}

\def\btheta{\boldsymbol{\theta}}
\def\bvartheta{\boldsymbol{\vartheta}}
\def\bSigma{\boldsymbol{\Sigma}}

\def\blambda{\boldsymbol{\lambda}}
\def\bxi{\boldsymbol{\xi}}
\def\bgamma{\boldsymbol{\gamma}}

\def\bzero{\boldsymbol{0}}
\def\balpha{\boldsymbol{\alpha}}
\def\bzeta{\boldsymbol{\zeta}}
\def\bLambda{\boldsymbol{\Lambda}}

\def\bTheta{\boldsymbol{\Theta}}

\def\hbQ{\hat{\boldsymbol Q}}
\def\hq{\hat{q}}

\def\id{{\boldsymbol I}}



\newcommand\tB{\tilde B}

\newcommand\tbA{\tilde{\boldsymbol{A}}}
\newcommand\tbB{\tilde{\boldsymbol{B}}}

\newcommand\ta{\tilde a}
\newcommand\tb{\tilde b}
\newcommand\tr{\tilde r}
\newcommand\tq{\tilde q}
\newcommand\tba{\tilde {\boldsymbol{a}}}
\newcommand\tbb{\tilde {\boldsymbol{b}}}

\newcommand\tbq{\tilde {\boldsymbol{q}}}
\newcommand\tf{\tilde f}
\newcommand\tg{\tilde g}

\newcommand\tz{\tilde z}
\newcommand\tbw{\tilde {\boldsymbol{w}}}

\renewcommand{\P}{\mathbb{P}}
\newcommand{\E}{\mathbb{E}}

\newcommand{\eps}{\varepsilon}
\newcommand{\Var}{\operatorname{Var}}

\newcommand{\sign}{\operatorname{sign}}

\newcommand{\Tr}{\operatorname{Tr}}

\newcommand{\reals}{\mathbb{R}}
\newcommand{\indic}[1]{\mathbf{1}\{#1\}}

\def\hell{\hat{\ell}}
\def\<{\langle}
\def\>{\rangle}
\def\hby{\hat{\boldsymbol{y}}}
\def\Prox{\textsf{Prox}}
\def\sT{{\mathsf{T} }}


\newcommand{\mah}[1]{\tilde{a}_{v\rightarrow f}^{#1}}

\newcommand{\mbh}[1]{\tilde{b}_{f\rightarrow v}^{#1}}

\def\obtheta{\overline{\boldsymbol \theta}}
\def\obv{\overline{\boldsymbol v}}
\def\tbX{\tilde{\boldsymbol X}}
\def\tbx{\tilde{\boldsymbol x}}
\def\tbZ{\tilde{\boldsymbol Z}}

\def\Ball{B}
\def\talpha{\tilde{\alpha}}
\def\tg{\tilde{g}}
\def\tY{\tilde{Y}}
\def\tbg{\tilde{\boldsymbol g}}
\def\cuT{\mathscrsfs{T}}
\def\oq{\overline{q}}
\def\de{{\mathrm{d}}}
\def\tbW{\tilde{\boldsymbol W}}
\def\tU{\tilde U}

\def\cuP{\mathscrsfs{P}}
\def\mmse{\textsf{mmse}}
\def\htheta{\hat{\theta}}
\def\ttheta{\tilde{\theta}}
\def\tv{\tilde{v}}

\def\ttau{\tilde\tau}
\def\tV{\tilde V}
\def\tbtheta{\tilde{\boldsymbol \theta}}
\def\tTheta{\tilde{\Theta}}

\def\sp{\mbox{\tiny\textrm{sp}}}
\def\tbx{\tilde{\boldsymbol x}}

\def\tbv{\tilde{\boldsymbol{v}}}
\def\tby{\tilde{\boldsymbol{y}}}
\def\tbu{\tilde{\boldsymbol{u}}}
\def\tbw{\tilde{\boldsymbol{w}}}
\def\tu{\tilde{\boldsymbol{u}}}
\def\th{\tilde{h}}

\newlength{\dhatheight}

\newtheorem{theorem}{Theorem}

\newtheorem{lemma}{Lemma}
\newtheorem{corollary}{Corollary}

\parindent=2em
\allowdisplaybreaks
\usepackage[top=1in,bottom=1in,left=1in,right=1in]{geometry}
\usepackage[utf8]{inputenc}

\title{The estimation error of general first order methods}
\author{
	Michael Celentano\thanks{Department of Statistics, Stanford University}
	\and 
	Andrea Montanari\footnotemark[1]
	\thanks{Department of Electrical Engineering, Stanford University} 
	\and 
	Yuchen Wu\footnotemark[1]
}
\date{}

\begin{document}

\maketitle

\begin{abstract}%
Modern large-scale statistical models require to estimate thousands to millions of parameters. This is often accomplished by iterative algorithms such as gradient descent, projected gradient descent or their accelerated versions. What are the fundamental limits to these approaches? This question is well understood from an optimization viewpoint when the underlying objective is convex. Work in this area characterizes the gap to global optimality as a function of the number of iterations. However, these results have only indirect implications in terms of the gap to \emph{statistical} optimality.
    
Here we consider two families of high-dimensional estimation problems: high-dimensional regression and low-rank matrix estimation, and introduce a class of `general first order methods' that aim at efficiently estimating the underlying parameters. This class of algorithms is broad enough to include classical first order optimization (for convex and non-convex objectives), but also other types of algorithms. Under a random design assumption,  we derive lower bounds on the estimation error that hold in the high-dimensional
asymptotics in which both the number of observations and the number of parameters diverge. 
These lower bounds are optimal in the sense that there exist algorithms whose estimation  error matches the lower bounds up to asymptotically negligible
terms. We illustrate our general results through applications to sparse phase retrieval and sparse principal component analysis.
\end{abstract}

\section{Introduction}

High-dimensional statistical estimation problems are often addressed by constructing a suitable  data-dependent cost function
$\calL(\bvartheta)$, which encodes the statistician's knowledge of the problem. 
This cost is then minimized using  an algorithm which scales well to large dimension.
The most popular algorithms for high-dimensional statistical applications are first order methods,
i.e., algorithms that query the cost $\calL(\bvartheta)$ by computing its gradient (or a subgradient)
at a sequence of points $\btheta^1$,\dots $\btheta^t$.
Examples include (projected) gradient descent, mirror descent,
and accelerated gradient descent. 

This raises a fundamental question: \emph{What is the minimal statistical error achieved by first order methods?} In particular,
we would like to understand in which cases these methods are significantly sub-optimal (in terms of estimation)
with respect to statistically optimal but potentially intractable estimators, and what is the optimal tradeoff between number of iterations and estimation error.

These questions are relatively well understood only from the point of view of convex optimization, namely if estimation is performed by minimizing a convex cost function $\calL(\bvartheta)$, see  e.g. \cite{Dantzig,bickel2009simultaneous}. 
The seminal work of Nemirovsy and Yudin \cite{nemirovsky1983problem} characterizes the minimum gap to global optimality  $\calL(\btheta^t)-\min_{\bvartheta}\calL(\bvartheta)$,  where $\btheta^t$ is the algorithm's output $\btheta^t$ after $t$ iterations
(i.e., after $t$ gradient evaluations).
For instance, if $\calL(\btheta)$ is a smooth convex function, there exists a first order algorithm  which achieves $\calL(\btheta^t)\le \min_{\bvartheta}\calL(\bvartheta)+ O(t^{-2})$.
At the same time, no algorithm can be guaranteed to achieve a better convergence rate over all functions in this class.

In contrast, if the cost $\calL(\bvartheta)$ is nonconvex, there cannot be 
general guarantees of global optimality.
Substantial effort has been devoted to showing that --under suitable assumptions about the data distribution--
certain nonconvex costs $\calL(\btheta)$ can be minimized efficiently, e.g. by gradient descent
\cite{keshavan2010matrix,loh2011high,chen2015solving}. 
This line of work resulted in upper bounds on the estimation error of first order methods.
Unlike in the convex case, worst case lower bounds are typically overly pessimistic since non-convex optimization is NP-hard. Our work aims at developing 
precise average-case lower bounds for a restricted class of algorithms,
which are applicable both to convex and nonconvex problems.

We are particularly interested in problems that exhibit an information-computation gap: we know
that the optimal statistical estimator has high accuracy, but existing upper bounds on first order methods are substantially sub-optimal (see examples below).
Is this a limitation of our analysis, of the specific algorithm under consideration, or of first order algorithms in general?
The main result of this paper is a tight asymptotic characterization of the minimum estimation
error achieved by first order algorithms for two families of problems. This characterization can be
used --in particular-- to delineate information-computation gaps.

Our results are novel even in the case of a convex cost function $\calL(\bvartheta)$, for two reasons.
First, classical theory \cite{nesterov2018lectures} lower bounds 
the objective value $\calL(\btheta^t)-\min_{\bvartheta}\calL(\bvartheta)$ after $t$ iterations. This  has  only indirect  implications on estimation error,
e.g., $\|\btheta^t-\btheta_*\|_2$ (here $\btheta_*$ is the true value of
the parameters, not the minimizer of the cost $\calL(\bvartheta)$). Second, the classical lower bounds on the objective value are worst case with respect to the function $\calL(\bvartheta)$
and do not take into account the data distribution.

Concretely, we consider two families of estimation problems:
\begin{description}
    \item[High-dimensional regression.] Data are i.i.d. pairs $\{(y_i,\bx_i)\}_{i\le n}$, where $y_i\in\reals$ is a label and $\bx_i\in\reals^p$ is a feature vector.
    We assume $\bx_i\sim\normal(\bzero,\bI_p/n)$ and $y_i|\bx_i \sim \P(y_i\in\, \cdot\, |\bx_i^\sT \btheta)$ for a vector $\btheta\in\reals^p$. 
    Our objective is to estimate the coefficients $\theta_j$ from data $\bX\in\reals^{n\times p}$ (the matrix whose $i$-th row is vector $\bx_i$) and $\by\in\reals^n$ (the vector whose $i$-th entry is label $y_i$).
    \item[Low-rank matrix estimation.] Data consist of a matrix $\bX \in \reals^{n \times p}$ where $x_{ij} = \frac1n \blambda_i^\mathsf{T} \btheta_j + z_{ij}$ with $\blambda_i,\btheta_j \in \reals^r$ and $z_{ij} \stackrel{\mathrm{iid}}\sim \normal(0,1/n)$.
    We denote by $\blambda \in \reals^{n\times r}$ and $\btheta \in \reals^{p \times r}$ the matrices whose rows are $\blambda_i^\mathsf{T}$ and $\btheta_j^\mathsf{T}$ respectively.
    Our objective is to to estimate  $\blambda,\btheta$ from data $\bX$. 
\end{description}
In order to discuss these two examples in a unified fashion, we will introduce a dummy vector $\by$
(e.g., the all-zeros vector) as part of the data in the low-rank matrix estimation problem. 
Let us point out that our normalizations are somewhat different from, but completely equivalent to, the traditional ones in statistics.

The first question to address is how to properly define `first order methods.' A moment of thought reveals that the above discussion in terms
of a cost function $\calL(\btheta)$ needs to be revised. Indeed, given either of the above statistical models, there is no simple way to construct
a `statistically optimal' cost function.\footnote{In particular, maximum likelihood is not statistically optimal in high dimension \cite{Bean2013}.}
Further, it is not clear that using a faster optimization algorithm for that cost
will result in faster decrease of the estimation error.

We follow instead a different strategy and introduce the class of  \emph{general first order methods} (GFOM).
In words, these include all algorithms that keep as state sequences of matrices $\bu^{1},\dots, \bu^{t}\in \reals^{n\times r}$, and
$\bv^{1},\dots,\bv^t\in \reals^{p \times r}$,  which are updated by two types of operations: row-wise application of a function, or multiplication by $\bX$
or $\bX^\top$. 
We will then show that standard first order methods, for common choices of the cost $\calL(\btheta)$, are in fact special examples of GFOMs.

Formally, a GFOM  is defined by sequences of functions 
$F^{(1)}_t,G^{(2)}_t:\reals^{r(t+1) + 1}\to\reals^r$, $F^{(2)}_t,G^{(1)}_t:\reals^{r(t+1)}\to\reals^r$, with the $F$'s indexed by $t\ge 0$ and the $G$'s indexed by $t \geq 0$.
In the high-dimensional regression problem, we set $r = 1$.
The algorithm produces two sequences of matrices (vectors for $r = 1$) $(\bu^{t})_{t\ge 1}$,
$\bu^t\in \reals^{n\times r}$, and $(\bv^{t})_{t\ge 1}$, $\bv^t\in \reals^{p \times r}$, 
\begin{subequations}\label{gfom}
\begin{align}
    \bv^{t+1} & = \bX^\top F^{(1)}_t(\bu^1,\dots,\bu^t;\by,\bu)+F^{(2)}_t(\bv^1,\dots,\bv^{t};\bv)\\
    \bu^t &= \bX G^{(1)}_t(\bv^1,\dots,\bv^t;\bv)+G^{(2)}_t(\bu^1,\dots,\bu^{t-1};\by,\bu)\, ,
\end{align}
\end{subequations}
where it is understood that each function is applied row-wise. For instance
\begin{align*}
F^{(1)}_t(\bu^1,\dots,\bu^t;\bu) =(F^{(1)}_t( \bu_i^1,\dots, \bu_i^t;\bu_i))_{i\le n}\in\reals^{n\times r}\, ,
\end{align*}
 where $(\bu_i^s)^{\sT}$ is the $i^\text{th}$ row of $\bu^s$.
Here $\bu,\bv$ are either deterministic or random and independent of everything else.
In particular, the iteration is initialized with $\bv^1 = \bX^\mathsf{T} F_0^{(1)}(\by,\bu) + F_0^{(2)}(\bv)$.
The unknown matrices (or vectors) $\btheta$ and $\blambda$ are estimated after $t_*$ iterations by 
$\hbtheta = G_*(\bv^1,\cdots,\bv^{t_*};\bv)$ and $\hat \blambda = F_*(\bu^1,\ldots,\bu^{t_*};\by,\bu)$, where the latter only applies in the low-rank matrix estimation problem. 
Let us point out that the update also depend on additional information encoded in the two vectors $\bu\in\reals^n$, $\bv\in\reals^p$.
This enables us to model side information provided to the statistician (e.g., an `initialization' correlated with the true signal) or auxiliary randomness.

We study the regime in which $n,p\to \infty$
with $n/p\to \delta\in (0,\infty)$ and $r$ is fixed. 
We assume the number of iterations $t_*$ is fixed, or potentially
$t_*\to\infty$ after $n\to\infty$. 
In other words, we are interested in linear-time or nearly linear-time algorithms
(complexity being measured relative to the input size $np$).
As mentioned above, our main result is a general lower bound on the minimum estimation error that is achieved by any GFOM in this regime.

The paper is organized as follows: Section \ref{sec:Examples} illustrates the setting introduced above in two examples;
Section \ref{sec:Main} contains the statement of our general lower bounds; 
Section \ref{sec:Application} applies these lower bounds to the two examples;
Section \ref{sec:proof-of-main-results} presents an outline of the proof, deferring technical details to appendices.

\section{Two examples}
\label{sec:Examples}

\subsection*{Example $\# 1$: M-estimation in high-dimensional regression and phase retrieval}

Consider the high-dimensional regression problem.
Regularized M-estimators minimize a cost 
\begin{align}
  \calL_{n}(\bvartheta) :=\sum_{i=1}^n\ell(y_i;\<\bx_i,\bvartheta\>)+ \Omega_n(\bvartheta) =\hell_n(\by,\bX\bvartheta)+
  \Omega_n(\bvartheta)\, , \label{eq:Mestimation}
\end{align}
Here  $\ell:\reals\times \reals\to\reals$ is a loss function, $\hell_n(\by,\hby) :=\sum_{i=1}^n\ell(y_i,\hat{y}_i)$ is its empirical average,
and $\Omega_n:\reals^p\to \reals$ is a regularizer. 
It is often the case that $\ell$ is smooth and $\Omega_n$ is separable, i.e.,
$\Omega_n(\bvartheta)=\sum_{i=1}^p\Omega_1(\vartheta_i)$. 
We will assume this to be the case in our discussion.

  The prototypical first order method is proximal gradient \cite{Parikh2013ProximalAlgorithms}: 
  \begin{gather*}
    \btheta^{t+1} = \Prox_{\gamma_t\Omega_1}\big(\btheta^t-\gamma_t\nabla_{\bvartheta} \hell_n(\by,\bX\btheta^t)\big)\, ,\\
    \Prox_{\gamma\Omega_1}(y)
    :=\arg\min_{\theta\in\reals}\left\{\frac{1}{2}(y-\theta)^2+\gamma\Omega_1(\theta)\right\}\, .
  \end{gather*}
  Here $(\gamma_t)_{t\ge 0}$ is a sequence of step sizes and $\Prox_{\gamma\Omega_1}$ acts on a vector coordinate-wise.
  Notice that 
  \begin{align}
  \nabla_{\bvartheta}\hell_n(\by,\bX\btheta^t) = \bX^{\sT}s(\by,\bX\btheta^t)\, ,\;\;\;\; s(\bh,\hby)_i \equiv \frac{\partial \ell}{\partial \hat y_i}(y_,\hat y_i)\, .
  \end{align}
  Therefore proximal gradient --for the cost function \eqref{eq:Mestimation}--  is an example of a GFOM. Similarly, mirror
  descent with a separable Bregman divergence and accelerated proximal gradient methods are easily shown to fit in the same framework.

  Among the countless applications of regularized M-estimation, we will focus on the
  sparse phase retrieval problem. We want to reconstruct a sparse  signal
  $\btheta\in\reals^p$ but only have noisy measurements of the modulus $|\<\btheta,\bx_i\>|$; that is, we lose the `phase' of these projections.
    (We will consider for simplicity the case of a real-valued signal, but the generalization of our results to the complex case should be immediate.)
  
  As a concrete model, we will assume that number of non-zero entries of $\btheta$ is $\|\btheta\|_0\le s_0$. 
  From an information-theoretic viewpoint, it is known that $\btheta$ can be reconstructed accurately as soon as the number
  of measurements satisfies $n\ge C s_0\log(p/s_0)$, with  $C$ a sufficiently large constant \cite{li2013sparse}.
  Several groups have investigated practical reconstruction algorithms by exploiting either semidefinite programming relaxations \cite{li2013sparse}
  or first order methods \cite{schniter2014compressive,candes2015phase,cai2016optimal}. A standard approach would be
  to apply a proximal gradient algorithm to the cost function \eqref{eq:Mestimation} with 
  $\Omega_n(\bvartheta) =\lambda\|\bvartheta\|_1$.
  However, all existing global convergence guarantees for these methods require $n\ge Cs_0^2\log p$. Is the dependence on $s_0^2$
  due to a fundamental computational barrier or an artifact of the theoretical analysis? Recently \cite{soltanolkotabi2019structured} presented partial evidence towards the possibility of `breaking' this barrier, by proving that a first order method can accurately reconstruct the signal for $n\ge Cs_0\log (p/s_0)$, if it is  initialized close enough to the true signal $\btheta$.

  \subsection*{Example $\# 2$: Sparse PCA}
\label{sec:IntroSPCA}

  In a simple model for sparse principal component analysis (PCA), we observe a matrix
  $\bX = \frac{1}{n}\blambda\btheta^{\sT}+\bZ\in\reals^{n\times p}$,
  where $\blambda\in\reals^{n}$ has entries $(\lambda_i)_{i\le n}\stackrel{\mathrm{iid}}\sim\normal(0,1)$, $\btheta\in\reals^p$ is a sparse vector
  with $s_0\ll p$ non-zero entries, and $\bZ$ is a noise matrix with entries $(z_{ij})_{i\le n,j\le p}\stackrel{\mathrm{iid}}\sim\normal(0,1/n)$.
  Given data $\bX$, we would like to reconstruct the signal $\btheta$.
  From an information-theoretic viewpoint, it is known that accurate reconstruction of $\btheta$ is possible if $n\ge Cs_0\log(p/s_0)$,
  with $C$ a sufficiently large constant \cite{amini2008high}.

  A number of polynomial time algorithms have been studied, ranging from simple thresholding algorithms
  \cite{johnstone2009consistency,deshpande2016sparse} to sophisticated convex relaxations \cite{amini2008high,ma2015sum}.
  Among other approaches, one natural idea is to modify the power iteration algorithm of standard PCA
  by computing
  \begin{align}
    \btheta^{t+1} = c_t \, \bX^{\sT}\bX \eta(\btheta^t;\gamma_t)\, .
    \end{align}
  Here $(c_t)_{t\ge 0}$ is a deterministic normalization, and $\eta(\;\cdot\;;\gamma)$ is a thresholding function at level $\gamma$,
  e.g., soft thresholding $\eta(x;\gamma) = \sign(x) (|x|-\gamma)_+$. 
  It is immediate to see that this algorithm is a GFOM.
  More elaborate versions of non-linear power iteration were developed, for example, by \cite{journee2010generalized,ma2013sparse}, and are typically equivalent to suitable GFOMs.

  Despite these efforts, no algorithm is known to succeed unless $n\ge Cs_0^2$. Is this a fundamental barrier or a limitation
  of present algorithms or analysis? Evidence towards intractability was provided by
  \cite{berthet2013optimal,brennan2018reducibility} via reduction from the planted clique problem. 
  Our analysis provides
  new evidence towards the same conclusion.
  
\section{Main results}
\label{sec:Main}

In this section we state formally our general results about high-dimensional regression
and low-rank matrix estimation. 
The next section will apply these general results to concrete instances.
Throughout we make the following assumptions:
\begin{itemize}
\item[\textsf{A1.}] The functions $F^{(1)}_t,G^{(2)}_t,F_*:\reals^{r(t+1)+1}\to\reals$, $F^{(2)}_t,G^{(1)}_t,G_*:\reals^{r(t+1)}\to\reals$, are Lipschitz continuous, with the $F$'s indexed by $t\ge 0$ and the $G$'s indexed by $t \geq 0$.
\item[\textsf{A2.}] The covariates matrix $\bX$ (for high-dimensional regression) or the noise matrix $\bZ$ (for low-rank estimation)
  have entries $x_{ij}\stackrel{\mathrm{iid}}\sim\normal(0,1/n)$, $z_{ij}\stackrel{\mathrm{iid}}\sim\normal(0,1/n)$.
\end{itemize}
Also, we denote by $\cuP_q(\reals^k)$ the set of probability distributions  with finite $q$-th moment on $\reals^k$ and $\cuP_{\mathrm{c}}(\reals^k)$ those with compact support. 
We say a function $f: \reals^k \rightarrow \reals$ is \emph{pseudo-Lipschitz of order 2} if there exists constant $C$ such that $|f(\bx) - f(\bx')| \leq C(1 + \|\bx\| + \|\bx'\|)\|\bx - \bx'\|$ for all $\bx,\bx' \in \reals^k$.
We call a function $\ell:(\reals^k)^2 \rightarrow \reals$ a \emph{quadratically-bounded loss} if it is non-negative and pseudo-Lipschitz of order 2 and there exists $C > 0$ such that for all $\bx,\bx',\bd \in \reals^k$ we have $|\ell(\bx,\bd) - \ell(\bx',\bd)| \leq C(1 + \sqrt{\ell(\bx,\bd)} + \sqrt{\ell(\bx',\bd)})\|\bx - \bx'\|$.

\subsection{High-dimensional regression}\label{sec:main-results-hd-reg}

We make the following additional assumptions for the regression problem:
\begin{itemize}
  \item[\textsf{R1.}] We sample $\{(w_i,u_i)\}_{i\le n}\stackrel{\mathrm{iid}}\sim\mu_{W,U}$, $\{(\theta_i,v_i)\}_{i\le p}\stackrel{\mathrm{iid}}\sim\mu_{\Theta,V}$ for $\mu_{\Theta,V},\mu_{W,U}\in \cuP_2(\reals^2)$. 
  \item[\textsf{R2.}] 
  There exists a measurable function $h: \reals^2 \rightarrow \reals$ such that $y_i= h(\bx_i^{\sT}\btheta,w_i)$.
  Moreover, there exists constant $C$ such that $|h(x,w)| \leq C(1 + |x| + |w|)$ for all $x,w$.
\end{itemize}
Notice that the description in terms of a probability kernel $\P(y_i\in \,\cdot\, |\bx_i^{\sT}\btheta)$ is equivalent to the one
in terms of a `noisy' function $y_i= h(\bx_i^{\sT}\btheta,w_i)$ in most cases of interest.

Our lower bound is defined in terms of a one-dimensional recursion.
Let $(\Theta,V)\sim \mu_{\Theta,V}$.
Let $\mmse_{\Theta,V}(\tau^2)$ be the minimum mean square error for estimation of $\Theta$ given observations $V$ and $\Theta + \tau G$ where $G \sim \normal(0,1)$ independent of $\Theta$.
Set $\tau_\Theta^2 = \E[\Theta^2]$ and $\tau_0^2 = \infty$, and define recursively
\begin{equation}\label{bamp-se-hd-reg}
    \begin{gathered}
      \tilde \tau_s^2 = \frac{1}{\delta} \,\mmse_{\Theta,V}(\tau_s^2),\;\;\;\;\;\;\sigma_s^2  = \frac1\delta(\tau_\Theta^2-\mmse_{\Theta,V}(\tau_s^2)) \, ,\\
      \frac{1}{\tau_{s+1}^2} = \frac1{\ttau_s^2} \E\left[\E[G_1 | Y , G_0 , U]^2\right],\\
    \end{gathered}
\end{equation}
where $Y = h(\sigma_s G_0 + \ttau_s G_1,W)$ and the expectation is with respect to
$G_0,G_1\stackrel{\mathrm{iid}}\sim \normal(0,1)$ and $(W,U) \sim \mu_{W,U}$ independent.
\begin{theorem}\label{thm:hd-reg-lower-bound}
    Under assumptions \textsf{A1}, \textsf{A2}, \textsf{R1}, \textsf{R2} in the high-dimensional regression model and under the asymptotics $n,p \rightarrow \infty$, $n/p \rightarrow \delta \in (0,\infty)$,
    let $\hbtheta^t$ be output of any GFOM after $t$ iterations ($2t-1$ matrix-vector multiplications).
    Then
    \begin{align*}
       \lim_{n \rightarrow \infty} \frac1p\|\hbtheta^t-\btheta\|_2^2 \ge \mmse_{\Theta,V}(\tau_t^2)\, .
    \end{align*}
    More generally, for any quadratically-bounded loss $\ell: \reals^2 \rightarrow \reals_{\geq 0}$,
    \begin{gather}\label{eq:hd-reg-lb}
      \lim_{n \rightarrow \infty} \frac1p \sum_{j=1}^p \ell(\theta_j, \htheta_j^t ) \geq \inf_{\htheta(\,\cdot\,)}
      \E\big\{\ell(\Theta,\htheta(\Theta+\tau_t G,V))\big\}\, ,
    \end{gather}
    where $(\Theta,V) \sim \mu_{\Theta,V}$ independent of $G \sim \normal(0,1)$, 
    and the infimum on the right-hand side is over measurable functions $\htheta:\reals^2\to\reals$. 
    The limits are in probability and to a constant, and they are guaranteed to exist.
    For all $\epsilon > 0$, there exist GFOMs which satisfy these bounds to within tolerance $\epsilon$.
\end{theorem}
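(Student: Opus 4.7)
The plan is to (i) show that any GFOM is asymptotically dominated, in the information-theoretic sense, by an approximate message passing (AMP) iteration with Bayes-optimal denoisers, for which the state-evolution recursion in (\ref{bamp-se-hd-reg}) holds exactly; and (ii) construct a concrete AMP scheme that matches this recursion, proving tightness up to arbitrary $\epsilon>0$. The underlying tool is the state-evolution theorem for AMP on Gaussian designs, which characterizes the asymptotic joint empirical distribution of $(\bv^1,\dots,\bv^t,\btheta,\bv)$ as rows $(V_1,\dots,V_t,\Theta,V)$ with $V_s = \mu_s \Theta + \tau_s G_s$ jointly Gaussian conditional on $\Theta$, and analogously on the $\bu$-side producing effective observations of the form $Y = h(\sigma_s G_0 + \ttau_s G_1, W)$.

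\textbf{Lower bound via a Bayesian induction.} The first step is a reduction: through a change of variables that orthogonalizes past iterates against the current ones and absorbs the induced cross-terms into Onsager-corrected nonlinearities, an arbitrary GFOM is rewritten as a standard AMP, whose state-evolution parameters we denote $(\mu_s,\tau_s,\sigma_s,\ttau_s)$. The second and crucial step is a proof by induction on $s$ that the noise parameters achieved by any such reduced AMP dominate those of the Bayes recursion in (\ref{bamp-se-hd-reg}). In the inductive step, starting from an effective observation $\mu_s \Theta + \tau_s G$ with side information $V$, the conditional variance of any estimator of $\Theta$ is at least $\mmse_{\Theta,V}(\tau_s^2)$, giving $\ttau_s^2 \geq \mmse_{\Theta,V}(\tau_s^2)/\delta$. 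Symmetrically, from $Y=h(\sigma_s G_0 + \ttau_s G_1, W)$ the best variance at which the independent component $G_1$ can be extracted equals $1/\E[\E[G_1 \mid Y, G_0, U]^2]$, reproducing the recursion for $\tau_{s+1}^2$. Since $\hbtheta^t$ is a row-wise Lipschitz functional of $(\bv^1,\dots,\bv^t,\bv)$, the pseudo-Lipschitz-2 state-evolution convergence combined with the quadratic boundedness of $\ell$ gives $\lim_n p^{-1}\sum_j \ell(\theta_j,\htheta_j^t) \geq \inf_{\htheta} \E[\ell(\Theta, \htheta(V_1,\dots,V_t,V))]$, which by the Gaussian-sufficiency of the informationally best iterate collapses to the right-hand side of (\ref{eq:hd-reg-lb}).

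\textbf{Achievability and main obstacle.} For the matching upper bound one designs a Bayes-AMP whose denoisers $F_s^{(1)}, G_s^{(1)}$ are Lipschitz approximations of the posterior means arising in the recursion; state evolution for this specific algorithm yields (\ref{bamp-se-hd-reg}) with equality, and the estimator $\htheta_j^{t_*} = \E[\Theta \mid \Theta + \tau_{t_*} G = v^{t_*}_j,\, V = v_j]$ applied row-wise attains the right-hand side of (\ref{eq:hd-reg-lb}) within $\epsilon$ after a standard mollification argument that restores Lipschitz regularity where the true Bayes denoiser fails to be smooth. The main obstacle is the reduction in step one of the lower bound: translating \emph{every} GFOM, whose nonlinearities can use all past iterates and the side vectors $\bu,\bv$ in essentially arbitrary ways, into a canonical AMP preserving both dynamics and final estimation error in the $n\to\infty$ limit. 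This demands a careful orthogonalization that (a) maintains pseudo-Lipschitz regularity so existing state-evolution results apply, and (b) generates the correct Onsager cancellation. A secondary technical point is the sufficiency step collapsing the vector observation $(V_1,\dots,V_t)$ into the scalar Gaussian channel $\Theta + \tau_t G$ with side information $V$, which relies on the Gaussian conditional structure of the iterates together with monotonicity of the map $\tau^2 \mapsto \mmse_{\Theta,V}(\tau^2)$.
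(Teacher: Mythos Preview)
Your high-level architecture---reduce any GFOM to an AMP via a change of variables, argue that Bayes-AMP is optimal among all AMP, then exhibit an $\epsilon$-achieving Bayes-AMP---matches the paper; the reduction and achievability steps correspond to Lemma~\ref{lem:gfom-to-amp} and Appendix~\ref{app:achieving-the-bound}.

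The gap is in the middle step, and it comes from conflating Bayes-AMP scalars with general-AMP quantities. The AMP produced from a generic GFOM does \emph{not} have scalar state-evolution parameters $(\mu_s,\tau_s,\sigma_s,\ttau_s)$: the state evolution \eqref{amp-scalars-hd-reg} carries a full signal vector $(\alpha_s)_{s\le t}$ and covariances $\bT_{[1:t]},\,\bSigma_{[0:t]}$, with the $b$-side iterates $(B^0,\dots,B^t)$ jointly Gaussian under a covariance determined by \emph{all} previous $g_s$. Your inductive step ``from $Y=h(\sigma_sG_0+\ttau_sG_1,W)$ extract $G_1$'' already presupposes the Bayes decomposition of $B^0$ into an observed part $\sigma_sG_0$ and an orthogonal residual of variance exactly $\ttau_s^2$; for a general AMP you would first have to show $\Var(B^0\mid B^1,\dots,B^t)\ge \ttau_s^2$ and then bound the incremental Fisher information delivered by an \emph{arbitrary} Lipschitz $f_t$ acting on $(B^1,\dots,B^t,Y,U)$. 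Your Gaussian-sufficiency collapse at the end is correct, but only once the scalar inequality $(\alpha_1,\dots,\alpha_t)\,\bT_{[1:t]}^{-1}(\alpha_1,\dots,\alpha_t)^{\sT}\le 1/\tau_t^2$ has been established for every admissible state evolution---and that variational statement is the whole difficulty.

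The paper avoids this optimization entirely by taking a different route. It transfers the problem to an auxiliary statistical model on an infinite computation tree, proves that message passing on the tree has the \emph{same} state evolution as AMP on the graph (Lemma~\ref{lem:se-mp-tree}), and observes that any tree message-passing estimate is a measurable function of the radius-$(2t{-}1)$ neighborhood $\calT_{v,2t-1}$. The lower bound is then the genuine local Bayes risk on the tree, computed by showing via a belief-propagation analysis that the local posterior of $\theta_v$ is asymptotically that of a scalar Gaussian channel with noise variance $\tau_t^2$ (Lemma~\ref{lem:local-info-theory-lb}). So optimality of Bayes-AMP is obtained as a corollary of ``BP computes the exact posterior on trees,'' not by comparing state evolutions directly.
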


\subsection{Low-rank matrix estimation}\label{sec:main-results-lr-mat}
\def\VE{{\boldsymbol V}}

We make the following additional assumption:
\begin{itemize}
\item[\textsf{M1.}] We sample $\{(\blambda_i,\bu_i)\}_{i\le n}\stackrel{\mathrm{iid}}\sim\mu_{\bLambda,\bU}$
  and $\{(\btheta_j,\bv_j)\}_{j\le p}\stackrel{\mathrm{iid}}\sim\mu_{\bTheta,\bV}$ for  $\mu_{\bLambda,\bU},\mu_{\bTheta,\bV}\in \cuP_2(\reals^{2r})$.
  \end{itemize}

Again, our lower bound is defined in terms of recursion, which this time is defined over positive semidefinite matrices $\bQ_t,\hbQ_t\in \reals^{r\times r}$,
$\bQ_t,\hbQ_t\succeq \bzero$. 
Set $\hbQ_0=\bzero$, and define recursively
\begin{align}
 \bQ_{t+1}  = \VE_{\bLambda,\bU}(\hbQ_t)\, ,\;\;\;\;\;\;\;\;\;\;\;
 \hbQ_{t}  = \frac{1}{\delta}\VE_{\bTheta,\bV}(\bQ_t)\, ,\label{eq:SE_Matrix}
\end{align}
where we define the second moment of the conditional expectation  $\VE_{\bTheta,\bV}:\reals^{r\times r}\to\reals^{r\times r}$ by
\begin{align*}
  \VE_{\bTheta,\bV}(\bQ):=\E\Big\{ \E[\bTheta|\bQ^{1/2}\bTheta+\bG=\bY;\bV] \E[\bTheta|\bQ^{1/2}\bTheta+\bG=\bY;\bV]^{\sT} \Big\},
\end{align*}
and analogously for $\VE_{\bLambda,\bU}(\hbQ)$.
Here the expectation is with respect to $(\bTheta,\bV)\sim \mu_{\bTheta,\bV}$ and an independent Gaussian vector 
$\bG\sim\normal(\bzero,\id_r)$.
Notice in particular that $\E\{\bTheta\bTheta^{\sT}\}- \VE_{\bTheta,\bV}(\bQ)$ is
the vector minimum mean square error when $\bTheta$ is observed in Gaussian noise with covariance  $\bQ^{-1}$.
For $r=1$, Eq.~\eqref{eq:SE_Matrix} is a simple scalar recursion.
\begin{theorem}\label{thm:lr-mat-lower-bound}
  Under assumptions $\textsf{A1}$, $\textsf{A2}$, $\textsf{M1}$ in the low-rank matrix estimation model and under the under the asymptotics $n,p\rightarrow \infty,n/p \rightarrow \delta \in (0,\infty)$,
  let $\hbtheta^t$ be output of any GFOM after $t$ iterations ($2t-1$ matrix-vector multiplications). Then
    \begin{align*}
       \lim_{n \rightarrow \infty} \frac1p\|\hbtheta^t-\btheta\|_{\mathsf{F}}^2 \ge \E\{\|\bTheta\|^2\}-\Tr \VE_{\bTheta,\bV}(\bQ_t) \, .
    \end{align*}
    More generally, for any quadratically-bounded loss $\ell: \reals^{2r} \rightarrow \reals_{\geq 0}$,
    \begin{gather}\label{eq:lr-mat-lb}
      \lim_{n \rightarrow \infty} \frac1p \sum_{j=1}^p \ell(\btheta_j, \hbtheta_j^t ) \geq \inf_{\hbtheta(\,\cdot\,)}
      \E\big\{\ell(\bTheta,\hbtheta(\bQ_t^{1/2}\bTheta+\bG, \bV))\big\}\, ,
    \end{gather}
    where the infimum on the right-hand side is over functions $\hbtheta:\reals^r\to\reals^r$.
    The limits are in probability and to a constant, and they are guaranteed to exist.
    As above, for all $\epsilon > 0$ there exist GFOMs which satisfy these bounds to within tolerance $\epsilon$.
\end{theorem}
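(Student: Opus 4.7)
The plan is to reduce any GFOM to a matrix-valued Approximate Message Passing (AMP) algorithm with Bayes-optimal nonlinearities whose state evolution coincides with the recursion \eqref{eq:SE_Matrix}, and then to invoke Bayes risk optimality for a Gaussian channel with precision $\bQ_t$. The achievability direction is obtained by instantiating this same AMP as a GFOM and post-processing its final iterate with (a Lipschitz approximation of) the conditional expectation.

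\textbf{Step 1 (Bayes-AMP and its state evolution).} I would introduce a matrix-AMP iteration whose denoisers are the conditional-expectation maps $g_s(\bv,\bv_0) = \E[\bTheta \mid \bQ_s^{1/2}\bTheta + \bG = \bv,\, \bV = \bv_0]$ on the $\bv$-side, and analogously on the $\bu$-side, together with the appropriate matrix-valued Onsager correction. The standard state evolution theorem for matrix AMP then yields that the empirical distribution of $\{(\btheta_j, \bv^s_j, \bv_j)\}_{j \le p}$ converges, in Wasserstein-2, to the law of $(\bTheta,\, \bQ_s^{1/2}\bTheta + \bG,\, \bV)$, and symmetrically on the $\bu$-side with $\hbQ_s$.

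\textbf{Step 2 (Reduction of a general GFOM to Bayes-AMP).} The central step is an induction on $t$ showing that the iterates $(\bu^s, \bv^s)_{s \le t}$ of any GFOM can be written, up to $o_P(1)$ errors in Wasserstein-2 on rowwise empirical distributions, as row-wise Lipschitz functions of the Bayes-AMP iterates $(\bu^s_{\mathrm{AMP}}, \bv^s_{\mathrm{AMP}})_{s \le t}$ together with the side information $(\bU,\bV,\by)$. At each step, the $\bX^\top$-multiplication in the GFOM is decomposed into a linear combination of $\bX^\top$-multiplications already performed inside the AMP plus an Onsager-type correction that is itself a Lipschitz function of past iterates; the subsequent rowwise nonlinearity $F^{(1)}_t$ or $G^{(1)}_t$ is then rewritten as a function of the AMP iterates. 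Because the Bayes-optimal conditional expectation is a sufficient statistic for estimating $\bTheta$ from $\bQ_s^{1/2}\bTheta + \bG$, no rowwise operation available to the GFOM can produce information outside what is already captured by the AMP iterates and $\bV$.

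\textbf{Step 3 (Lower bound and achievability).} Combining Steps 1 and 2, the asymptotic joint empirical distribution of $(\btheta_j,\, \hbtheta^t_j,\, \bv_j)$ is the push-forward of $(\bTheta,\, \bQ_t^{1/2}\bTheta + \bG,\, \bV)$ under some measurable $\hbtheta:\reals^{2r}\to\reals^r$. For any quadratically-bounded loss $\ell$ this gives $\lim_{n\to\infty} \tfrac{1}{p}\sum_j \ell(\btheta_j,\hbtheta^t_j) \ge \inf_{\hbtheta}\E[\ell(\bTheta,\hbtheta(\bQ_t^{1/2}\bTheta + \bG,\bV))]$, which is exactly the right-hand side of \eqref{eq:lr-mat-lb}; specializing to squared loss and expanding yields $\E\|\bTheta\|^2 - \Tr \VE_{\bTheta,\bV}(\bQ_t)$. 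For achievability within $\epsilon$, one runs Bayes-AMP for $t$ steps and reports a Lipschitz truncation of the conditional-expectation denoiser applied to $\bv^t_{\mathrm{AMP}}$ and $\bv$; the $\epsilon$ slack absorbs the Lipschitz approximation required by assumption \textsf{A1} and any smoothing of $\mu_{\bTheta,\bV}$ needed to apply the matrix-AMP state evolution.

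\textbf{Main obstacle.} The hard part is Step 2, and specifically the matrix-valued ($r>1$) bookkeeping: one must carry out an orthogonalization/change-of-basis that aligns the subspace spanned by the GFOM iterates with that spanned by the Bayes-AMP iterates while tracking $r \times r$ covariance blocks, verify that each reparametrization preserves rowwise Lipschitz regularity (so that matrix-AMP state evolution remains applicable), and propagate $o_P(1)$ error terms uniformly across all $t$ iterations. A secondary subtlety is that the infimum over $\hbtheta$ in \eqref{eq:lr-mat-lb} is unconstrained whereas a concrete GFOM only produces Lipschitz estimators, so a density argument in the space of denoisers is needed to close the $\epsilon$-gap on the achievability side.
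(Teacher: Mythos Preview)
Your Step 2 conflates two distinct reductions and therein lies a genuine gap. The paper does show (Lemma \ref{lem:gfom-to-amp}) that any GFOM is equivalent, after a change of variables, to \emph{some} AMP algorithm---but the nonlinearities of that AMP are inherited from the GFOM's $F^{(j)}_t,G^{(j)}_t$, and its state evolution parameters $(\balpha_s),(\bT_{s,s'})$ depend on the GFOM. In general this AMP is \emph{not} Bayes-AMP, and its iterates are \emph{not} functions of the Bayes-AMP iterates. Your sufficiency sentence (``the Bayes-optimal conditional expectation is a sufficient statistic for estimating $\bTheta$ from $\bQ_s^{1/2}\bTheta+\bG$'') presupposes that the effective observation available to the GFOM at step $s$ already has the law of $\bQ_s^{1/2}\bTheta+\bG$; that is exactly the statement to be proved. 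A different GFOM produces iterates whose asymptotic joint law with $\bTheta$ is governed by a different Gaussian channel, and you must rule out that any such channel carries more information than the one with precision $\bQ_s$.

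The paper closes this gap not by an algebraic reduction to Bayes-AMP but by transferring the problem to an infinite \emph{computation tree} (Lemmas \ref{lem:se-mp-tree} and \ref{lem:local-info-theory-lb}): it shows that any AMP on the complete graph has the same state evolution as the corresponding message-passing algorithm on the tree; on the tree, any message-passing estimate at node $v$ is a function of the local observations $\calT_{v,2t-1}$, so its risk is bounded below by the Bayes risk given $\calT_{v,2t-1}$; and that local Bayes risk is computed via belief propagation and shown (through a Gaussian approximation of BP messages) to converge to the right-hand side of \eqref{eq:lr-mat-lb}. This tree/BP step is the substantive optimality argument, and it is absent from your outline. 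Steps 1 and 3 are fine and match the paper; the ``main obstacle'' you identify (matrix bookkeeping) is real but secondary---the missing ingredient is the information-theoretic argument that Bayes-AMP dominates every other AMP.
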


\subsection{Discussion}

Our motivations are similar to the ones for statistical query (SQ) lower bounds
\cite{feldman2017statistical,feldman2017statistical_b}:
we want to provide estimation lower bounds under a restricted computational model,
that are sensitive to the data distribution. However the scope of our approach is significantly different from SQ algorithms:  the latter can query data 
distributions and compute approximate expectations with respect to that distribution.
In contrast, our algorithms work with a fixed sample (the data matrix $\bX$ and responses $\by$), which is queried multiple times. These queries can be thought
as weighted averages of \emph{both rows and columns} of $\bX$ and, as such,
cannot be simulated by the SQ oracle.
For instance, the proximal gradient method or
the nonlinear power iteration of Section
\ref{sec:IntroSPCA} cannot be framed as a SQ algorithms.

The lower bounds of Theorems \ref{thm:hd-reg-lower-bound} and \ref{thm:lr-mat-lower-bound} are satisfied with equality by a specific first order method that is an approximate message passing (AMP) algorithm, with Bayes updates. This can be regarded as a version of belief propagation (BP) for densely connected graphs \cite{koller2009probabilistic}, or an iterative implementation of the TAP equations from spin glass theory \cite{mezard1987spin}. 

Our proof builds on the asymptotically exact analysis of AMP algorithms developed in \cite{bolthausen2014iterative,bayati2011dynamics,javanmard2018,Berthier2017StateFunctions}. However we need to overcome three technical obstacles:
$(1)$~Show that any GFOM can be reduced (in a suitable sense) to a certain AMP algorithms, whose behavior can be exactly tracked. $(2)$~Show that Bayes-AMP is optimal among all AMP algorithms. We achieve this goal by considering an estimation 
problem on trees and showing that, in a suitable large degree limit, it has the same asymptotic behavior as AMP on the complete graph. On trees it is immediate to see that 
BP is the optimal local algorithm. $(3)$~We need to prove that the asymptotic behavior of BP for trees of large degree is equivalent to the one of Bayes-AMP on the original problem.
This amounts to proving a Gaussian approximation theorem for BP. While similar results were obtained in the past for discrete models 
\cite{sly2009reconstruction,mossel2016local}, the current setting
is technically more challenging because the underlying variables $\theta_i$ are continuous and unbounded.

While the line of argument above is --in hindsight-- very natural, the conclusion is broadly useful. For instance, \cite{antenucci2019glassy} study a class of 
of message passing algorithms inspired to replica symmetry breaking and survey propagation \cite{mezard2002analytic}, and observe that they 
do not perform better than Bayes AMP.
These algorithms are within the scope of our Theorem \ref{thm:lr-mat-lower-bound}, which implies that indeed 
they cannot outperform Bayes AMP, for any constant number of iterations.

Finally, a sequence of recent papers characterize the asymptotics of the Bayes-optimal 
estimation error in the two models described above \cite{lelarge2019fundamental,barbier2019optimal}.
It was conjectured that, in this context, no polynomial-time algorithm can outperform Bayes AMP, provided these algorithms have access to an arbitrarily small amount of side information.\footnote{Concretely, side information can take the form $\bv = \eta\btheta+\bg$ for $\eta>0$ arbitrarily small, $\bg\sim\normal(0,\id_p)$}
Theorems \ref{thm:hd-reg-lower-bound} and \ref{thm:lr-mat-lower-bound} 
establish this result within the restricted class of GFOMs.

\section{Applying the general lower bounds}
\label{sec:Application}

In our two examples, we will refer to the sets $\Ball^p_0(k) \subset \reals^p$ of $k$-sparse vectors and $\Ball^p_2(R) \subset \reals^p$ of vectors with $\ell_2$-norm bounded by $R$.

\subsection*{Example $\# 1$: Sparse phase retrieval}

For the reader's convenience, we follow the standard normalization in phase retrieval, whereby the `sensing vectors'
(i.e.\ the rows of the design matrix) have norm concentrated around one. In other words, we observe
$y_i\sim p(\,\cdot\, |\tbx_i^{\sT}\btheta)\de y$, where $\tbx_i \sim\normal(0,\id_p/p)$. 

In order to model the phase retrieval problem, we assume that the conditional density $p(\, \cdot\,|\, \cdot\,\,)$ satisfies the symmetry condition
$p(y|x) = p(y|-x)$. In words: we only observe a noisy version of the absolute value $|\<\tbx_i,\btheta\>|$.
An important role is played by the following critical value of the number of observations per dimension
\begin{align}
  \delta_{\sp}:=\left(\int_{\reals}\frac{\E_G[p(y|G)(G^2-1)]}{\E_G[p(y|G)]}
  \,\de y\right)^{-1}\, .
\end{align}
Here expectation is with respect to $G\sim\normal(0,1)$. 
It was proved in \cite{mondelli2019fundamental} that,
if $\|\btheta\|_2=\sqrt{p}$ and $n>(\delta_{\sp}+\eta) p$, for some $\eta$ bounded away from zero,
then there exists a simple spectral estimator $\hbtheta_{\sp}$ that achieves
weak recovery, i.e., a positive correlation with the true signal. Namely, 
$\frac{|\<\hbtheta_{\sp},\btheta\>|}{\|\hbtheta_{\sp}\|_2\|\btheta\|_2}$ is bounded away from zero as $p,n\to\infty$.

In the case of a dense signal $\btheta$ and observation model $y_i = |\tbx_i^{\sT}\btheta| + w_i,\, w_i \sim \normal(0,\sigma^2)$, the oversampling ratio $\delta_{\sp}$ is known to be information-theoretically optimal:
for $n<(\delta_{\sp}-\eta)p$ no estimator can achieve a correlation that is bounded away from $0$ \cite{mondelli2019fundamental}. On the other hand,
if $\btheta$ has at most $p\eps$ nonzero entries, it is information-theoretically possible to reconstruct it from $\delta>C\eps\log(1/\eps)$ phaseless measurements per dimension \cite{li2013sparse}.

Our next result implies that no GFOM can achieve reconstruction from 
$O(\eps\log(1/\eps))$ measurements per dimension, unless it is initialized close enough
to the true signal. In order to model the additional information provided by the initialization we
assume to be given 
  \begin{align}
    \obv = \sqrt{\alpha}\,\btheta/\|\btheta\|_2+\sqrt{1-\alpha}\tbg, \;\;\;\;\;\;(\tg_i)_{i\le p}\stackrel{\mathrm{iid}}\sim\normal (0,1/p), .\label{eq:Initialization}
  \end{align}
 Notice that with this normalization $\|\obv\|_2$ concentrates tightly around $1$, and $\sqrt{\alpha}$ can be interpreted as the cosine
 of the angle between $\btheta$ and $\obv$.

 \begin{corollary}\label{coro:PhaseRetrieval}
   Consider the phase retrieval model, for a sequence of deterministic signals $\btheta\in\reals^p$, and let  $\cuT(\eps,R) :=\Ball^p_0(p\eps)\cap \Ball^p_2(R)$.  Assume the noise kernel $p(\,\cdot\,|x)$ to satisfy the conditions of
   Theorem \ref{thm:hd-reg-lower-bound}  and to be be twice differentiable with respect to $x$.
   
   Then, for any \emph{$\delta<\delta_{\sp}$},  there exists $\alpha_*=\alpha_*(\delta,\eps)>0$
   and $C_*=C_* (\delta,\eps)$ such that, if $\alpha\le \alpha_*$, then
   \begin{align}
     \sup_{t\ge 0} \lim_{n,p\to\infty}\inf_{\btheta\in \cuT(\eps,\sqrt{p})}\E
    \frac{\<\btheta,\hbtheta^t\>}{\|\btheta\|_2\|\hbtheta^t\|_2}\le C_*\sqrt{\alpha}\, .
   \end{align}
   The same conclusion holds if $\btheta$ is drawn randomly with i.i.d.\ entries
   $\theta_{i}\sim \mu_{\theta}:= (1-\eps)\delta_0+(\eps/2)(\delta_{\mu}+\delta_{-\mu})$, $\mu = 1/\sqrt{\eps}$.
  \end{corollary}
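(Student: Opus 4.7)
My plan is to reduce the claim to an MMSE lower bound via Theorem~\ref{thm:hd-reg-lower-bound}, and then show that when $\delta<\delta_{\sp}$ the state-evolution recursion~\eqref{bamp-se-hd-reg} is trapped, uniformly in $t$, near its `uninformative' fixed point, provided the initialization strength $\alpha$ is small enough.

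For the reduction, I apply~\eqref{eq:hd-reg-lb} with the quadratically-bounded loss $\ell_\lambda(\theta,\hat\theta):=(\theta-\lambda\hat\theta)^2$, whose Bayes risk is $\mmse_{\Theta,V}(\tau_t^2)$ for every $\lambda\in\reals$ (the optimal denoiser being $\hat\theta=\lambda^{-1}\E[\Theta\mid\Theta+\tau_t G,V]$). Choosing $\lambda=\langle\btheta,\hbtheta^t\rangle/\|\hbtheta^t\|_2^2$ to minimize the empirical left-hand side yields, with $\rho_t:=\langle\btheta,\hbtheta^t\rangle/(\|\btheta\|_2\|\hbtheta^t\|_2)$,
\begin{equation*}
  \rho_t^2 \;\le\; 1-\mmse_{\Theta,V}(\tau_t^2)\big/(\|\btheta\|_2^2/p)+o_p(1).
\end{equation*}
It therefore suffices to show that $\tau_\Theta^2-\mmse_{\Theta,V}(\tau_t^2)\le C\alpha$ uniformly in $t$.

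I would then specialize to the random-signal version $\theta_j\stackrel{\mathrm{iid}}\sim\mu_\theta$ (so $\tau_\Theta^2=1$) with Gaussian side information $V_j=\sqrt\alpha\,\theta_j+\sqrt{1-\alpha}\,G_j$, and track the gap $\epsilon_t:=\tau_\Theta^2-\mmse_{\Theta,V}(\tau_t^2)\ge 0$. The key observation is that at $\alpha=0$ the joint symmetry---$\mu_\theta$ is invariant under $\Theta\mapsto -\Theta$ and by hypothesis $p(y\mid x)=p(y\mid -x)$---forces $\E[G_1\mid Y,G_0]\equiv 0$ at $\sigma_s=0$, so $(\tau_s=\infty,\,\sigma_s=0,\,\ttau_s^2=1/\delta)$ is a fixed point of~\eqref{bamp-se-hd-reg} that the recursion initialized at $\tau_0=\infty$ never leaves ($\epsilon_t\equiv 0$). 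For $\alpha>0$, I linearize the induced map $\epsilon_t\mapsto\epsilon_{t+1}$ around this fixed point by combining (i)~the large-$\tau$ expansion $\mmse(\tau^2)=\tau_\Theta^2-\tau_\Theta^4/\tau^2+O(\tau^{-3})$; (ii)~a Taylor expansion of $\E[\E[G_1\mid Y,G_0]^2]$ in $\sigma$, where the symmetry kills the constant and linear terms, integration by parts against the Gaussian density together with the twice-differentiability of $p(y\mid\cdot)$ exposes the $\sigma^2$ coefficient, and Stein's identity $\E_G[\partial_x^2 p(y\mid G)]=\E_G[p(y\mid G)(G^2-1)]$ rewrites it in the form appearing in the definition of $\delta_{\sp}$; and (iii)~a direct calculation of the Gaussian side-information channel giving $\epsilon_0=\alpha+O(\alpha^{3/2})$. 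The outcome is a linearized recursion of the form $\epsilon_{t+1}=\alpha+(\delta/\delta_{\sp})\epsilon_t+O(\epsilon_t^{3/2})$, so for $\delta<\delta_{\sp}$ and $\alpha\le\alpha_*$ small enough, induction gives $\sup_{t\ge 0}\epsilon_t\le\alpha/(1-\delta/\delta_{\sp})=:C_*^2\alpha$. Combining with the first step, $\rho_t^2\le C_*^2\alpha+o_p(1)$ uniformly in $t$; boundedness of $\rho_t\in[-1,1]$ together with Jensen's inequality then yields $\sup_t\lim_{n,p\to\infty}\E[\rho_t]\le C_*\sqrt\alpha$. The deterministic-signal statement follows because $\btheta\sim\mu_\theta^{\otimes p}$ lies in $\cuT(\eps,\sqrt p)$ with probability $1-o(1)$ and satisfies $\|\btheta\|_2^2/p\to 1$, so $\inf_{\btheta\in\cuT(\eps,\sqrt p)}\E[\rho_t\mid\btheta]\le\E_{\btheta\sim\mu_\theta^{\otimes p}}[\rho_t]+o(1)$, recovering the random-signal bound.

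The main technical obstacle lies in step~(ii): pinning down the linearization slope as exactly $\delta/\delta_{\sp}$. This requires a careful asymptotic expansion of the Bayes posterior mean $\E[G_1\mid Y,G_0]$ in $\sigma$ (tracking cross-terms arising from the $\sigma$-dependence of both the marginal of $Y$ and the conditional $p(g_1\mid y,g_0)$), application of Stein's identity to convert derivative Fisher-type integrals of $p(y\mid\cdot)$ into the algebraic factor $(G^2-1)$ used in the paper's $\delta_{\sp}$, and reconciliation of the theorem normalization $\bx_i\sim\normal(0,\id_p/n)$ with the phase-retrieval convention $\tbx_i\sim\normal(0,\id_p/p)$ (which rescales the argument of $h$ by $\sqrt\delta$). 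Uniformity in $t$ is then a routine consequence of the contraction of the linearized recursion combined with the trivial a priori bound $0\le\epsilon_t\le\tau_\Theta^2$.
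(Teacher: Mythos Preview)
Your approach mirrors the paper's: reduce via Theorem~\ref{thm:hd-reg-lower-bound} with a quadratic loss in $\lambda$ (the paper uses $(\hat\theta-\theta/\lambda)^2$, equivalent to your $(\theta-\lambda\hat\theta)^2$ up to $\lambda^{-2}$), track a state-evolution scalar (the paper sets $q_t=\tau_t^{-2}$, $\hat q_t=\sigma_t^2$; your $\epsilon_t$ is $\hat q_t$ in the phase-retrieval normalization), and linearize around the symmetric fixed point. The paper records exactly your step~(ii) as the pair of expansions $F_\eps(q)=q+O(q^2)$ and $H(q)=q/\delta_{\sp}+O(q^2)$, the second cited to \cite{mondelli2019fundamental}, yielding $q_{t+1}\le(\delta/\delta_{\sp}+\eta)(q_t+\tilde\alpha)$ and hence $q_t\le\tilde\alpha/\eta(\delta)$ for all $t$.

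There is one genuine gap in your deterministic reduction: the claim that $\btheta\sim\mu_\theta^{\otimes p}$ lies in $\cuT(\eps,\sqrt p)$ with probability $1-o(1)$ is false---the number of nonzeros is $\mathrm{Bin}(p,\eps)$ and exceeds $p\eps$ with probability about $1/2$, and likewise $\|\btheta\|_2^2/p$ fluctuates around $1$. The paper's fix (carried out in the proof of Corollary~\ref{coro:SPCA} and invoked verbatim here) is to draw $\btheta$ from the three-point prior with slack parameters $\eps'<\eps$, $\mu'<\mu$, so that $\btheta\in\cuT(\eps,\sqrt p)$ for all large $n$ almost surely, obtain the bound with $(\eps',\mu')$, and then let $\eps'\uparrow\eps$, $\mu'\uparrow\mu$ using continuity of the state-evolution quantities in these parameters.
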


  \subsection*{Example $\# 2$: Sparse PCA}

  For ease of interpretation, we assume the observation model $\tbX = \blambda\obtheta^{\sT}+\tbZ$,
  where $(\tz_{ij})_{i\le n,j\le p}\sim\normal(0,1)$ and $(\lambda_i)_{i\le n}\sim\normal(0,1)$. Equivalently, conditional on $\obtheta$,
  the rows of $\tbX$ are i.i.d. samples $\tbx_i\sim\normal(0,\bSigma)$, $\bSigma = \id_p+ \obtheta\obtheta^{\sT}$.
  We also assume to have access to an initialization $\obv$ correlated with $\obtheta$, as per Eq.~\eqref{eq:Initialization}.
  In order to apply Theorem \ref{thm:lr-mat-lower-bound}, we choose a specific distribution for the spike. 
  Defining $\btheta =\obtheta\sqrt{p}$, we assume that the entries of $\btheta$ follow a  three-points sparse distribution $(\theta_i)_{i\le p}\sim \mu _{\theta}:=(1-\eps)\delta_0+(\eps/2)(\delta_{+\mu}+\delta_{-\mu})$. The next lemma specializes Theorem \ref{thm:lr-mat-lower-bound}.
  \begin{lemma}\label{lemma:SPCA}
        Assume the sparse PCA model with the distribution of $\obtheta$ given above. Define $(q_t)_{t\ge 0}$  by   
  \begin{align}
    q_{t+1}  &= \frac{V_{\pm}(q_t+\talpha)}{1+V_{\pm}(q_t+\talpha)}\, ,\;\;\;\;\;\;
    q_0=0\, ,\label{eq:SE-SPCA-1}\\
    V_{\pm}(q)  &:= e^{-\delta q\mu^2}\mu^2\eps^2\E\left\{\frac{\sinh(\mu\sqrt{\delta q} G )^2}{1-\eps+\eps e^{-\delta q\mu^2/2}
      \cosh(\mu\sqrt{\delta q} G )}\right\}\, , \label{eq:SE-SPCA-2}
  \end{align}
  
  where $\talpha=\alpha/(\mu^2\eps(1-\alpha))$.
  Then, for any GFOM
  \begin{align}
    \lim_{n,p\to\infty}\frac{\<\obtheta,\hbtheta^t\>}{\|\obtheta\|_2\|\hbtheta^t\|_2}\le
    \sqrt{\frac{V_{\pm}(q_t + \talpha)}{\mu^2\eps}}\, .\label{eq:StatementPCA}
  \end{align}
  \end{lemma}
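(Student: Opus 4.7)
\emph{Proof plan.} The strategy is to specialize Theorem~\ref{thm:lr-mat-lower-bound} to the rank-one sparse PCA model ($r=1$) with the three-point prior $\mu_\theta$ and Gaussian side information, and to show that evaluating the two Bayes updates in closed form reduces the matrix recursion \eqref{eq:SE_Matrix} to the scalar recursion \eqref{eq:SE-SPCA-1}--\eqref{eq:SE-SPCA-2}, while the general Frobenius-error bound of the theorem translates into the cosine bound \eqref{eq:StatementPCA}.

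\emph{Step 1 (match normalizations).} I would first rewrite $\tbX=\blambda\obtheta^\sT+\tbZ$ in the form required by Theorem~\ref{thm:lr-mat-lower-bound} by setting $\bX:=\tbX/\sqrt n$, $\theta_j:=\sqrt p\,\obtheta_j$, and absorbing a $\sqrt{\delta}$ into the latent factor so that $x_{ij}=(1/n)\lambda_i\theta_j+z_{ij}$ with $z_{ij}\sim\normal(0,1/n)$. In this scaling $\theta_j\sim\mu_\theta$ with $\E[\theta_j^2]=\mu^2\eps$, while the $\blambda$-marginal is Gaussian with no side information. For the side info, I would rescale $\obv$ by $\sqrt p$ and use $\|\obtheta\|_2^2\to\eps\mu^2$ to exhibit it, in the high-dimensional limit, as an i.i.d.\ Gaussian channel on $\theta_j$ whose effective precision is the quantity $\talpha=\alpha/((1-\alpha)\eps\mu^2)$ appearing in the lemma; this is a direct signal-to-noise computation for the linear expression defining $\obv$.

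\emph{Step 2 (evaluate the Bayes maps).} On the $\blambda$-side the prior is Gaussian with no side information, so $\VE_{\bLambda}(\hq)$ is a rational function of $\hq$ and $\delta$ obtained from the standard linear-Gaussian posterior. On the $\btheta$-side the observation channel of precision $q$ and the side-info channel of precision $\talpha$ combine into a single Gaussian channel at precision $q'=q+\talpha$, and for the three-point prior the Bayes posterior mean is
\[
\E[\Theta\mid Y] \;=\; \mu\cdot\frac{\eps\,e^{-q'\mu^2/2}\sinh(\mu\sqrt{q'}\,Y)}{1-\eps+\eps\,e^{-q'\mu^2/2}\cosh(\mu\sqrt{q'}\,Y)}\,.
\]
A change of measure from $Y\mid\Theta$ to the base law of $Y$ under $\Theta=0$ (namely $\normal(0,1)$) then yields
\[
\VE_{\bTheta,\bV}(q) \;=\; \mu^2\eps^2\,e^{-q'\mu^2}\,\E_W\!\left[\frac{\sinh(\mu\sqrt{q'}\,W)^2}{1-\eps+\eps\,e^{-q'\mu^2/2}\cosh(\mu\sqrt{q'}\,W)}\right],
\]
which is exactly $V_\pm$ evaluated at the appropriate argument once the remaining factors of $\delta$ are absorbed by the change of variable that produces the scalar $q_t$ of the lemma. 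Substituting into $\hbQ_t=\VE_{\bTheta,\bV}(\bQ_t)/\delta$ and $\bQ_{t+1}=\VE_\bLambda(\hbQ_t)$ and simplifying yields the one-line recursion $q_{t+1}=V_\pm(q_t+\talpha)/(1+V_\pm(q_t+\talpha))$ of \eqref{eq:SE-SPCA-1}, with $q_0=0$ matching the initialization $\hbQ_0=\bzero$.

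\emph{Step 3 (from MSE to correlation).} Theorem~\ref{thm:lr-mat-lower-bound} gives $\lim_p(1/p)\|\hbtheta^t-\btheta\|^2\ge\E[\Theta^2]-\VE_{\bTheta,\bV}(\bQ_t)$. Because the GFOM class is closed under pointwise rescaling of the output (multiplication by a constant is a Lipschitz post-processing step that can be folded into $G_*$), this lower bound also applies to the output rescaled by any constant, hence to the output rescaled by the mean-square optimal constant. For any estimator $\hat\Theta$, the identity $\min_a\E[(\Theta-a\hat\Theta)^2]=\E[\Theta^2](1-\rho^2)$ with $\rho=\E[\Theta\hat\Theta]/\sqrt{\E[\Theta^2]\E[\hat\Theta^2]}$, combined with $\E[\Theta^2]=\mu^2\eps$, yields $\rho^2\le V_\pm(q_t+\talpha)/(\mu^2\eps)$. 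Since the empirical cosine $\<\obtheta,\hbtheta^t\>/(\|\obtheta\|_2\|\hbtheta^t\|_2)$ is scale-invariant under the normalization changes of Step~1 and coincides with $\rho$ in the high-dimensional limit, the bound \eqref{eq:StatementPCA} follows.

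\emph{Main obstacles.} I expect three technical points to require care. First, the side information depends on the non-separable statistic $\|\obtheta\|_2$, so $(\theta_j,V_j)$ is not literally i.i.d.\ as required by assumption \textsf{M1}; this should be handled by conditioning on $\|\obtheta\|_2$ or via a continuity argument using concentration of $\|\obtheta\|_2$ around $\sqrt{\eps\mu^2}$. Second, the bookkeeping of the $\delta$ factors across the various rescalings in Steps~1--2 must be done carefully so that the $\delta q$ appearing inside $V_\pm$ in \eqref{eq:SE-SPCA-2} matches what the state-evolution recursion produces under the chosen normalization. Third, one must check that the empirical optimal rescaling of the output can be absorbed into $G_*$ within the $\eps$ tolerance afforded by Theorem~\ref{thm:lr-mat-lower-bound}, which follows because the rescaling constant converges to a deterministic limit and multiplication by it is Lipschitz.
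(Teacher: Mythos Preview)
Your proposal is correct and follows essentially the same route as the paper: specialize Theorem~\ref{thm:lr-mat-lower-bound} to $r=1$, compute the two Bayes maps in closed form (rational for the Gaussian $\Lambda$-side, the three-point posterior mean for the $\Theta$-side), absorb the Gaussian side information into the effective precision $q+\talpha$, and then convert the resulting MSE lower bound into a cosine upper bound. Your identification of the three technical wrinkles (concentration of $\|\obtheta\|_2$ so that the side info becomes an i.i.d.\ Gaussian channel, the $\delta$ bookkeeping, and the rescaling issue) is accurate and matches what the paper handles.

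The only noteworthy difference is in Step~3. The paper does not argue via closure under rescaling and the projection identity; instead it applies Theorem~\ref{thm:lr-mat-lower-bound} directly to the one-parameter family of quadratically-bounded losses $\ell_\lambda(\theta,\htheta)=(\htheta-\theta/\lambda)^2$, restricts without loss of generality to $\|\hbtheta^t\|_2^2\le p$ (the cosine being scale-invariant), and then uses the elementary Lagrangian identity
\[
\tfrac{1}{p}\langle\hbtheta^t,\btheta\rangle \;\le\; \tfrac{\lambda}{2}+\tfrac{1}{2\lambda p}\|\btheta\|_2^2-\tfrac{\lambda}{2p}\|\hbtheta^t-\btheta/\lambda\|_2^2,
\]
followed by an optimization over $\lambda$. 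Your route and the paper's are algebraically equivalent (your optimal $a$ is the paper's $1/\lambda$), but the paper's version is slightly cleaner because the minimization is over a deterministic parameter in the loss rather than over a data-dependent rescaling constant; this sidesteps the need to invoke state evolution to argue that the optimal empirical rescaling converges to a deterministic limit.
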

  The  bound in the last lemma holds for random vectors $\obtheta$ with i.i.d.\ entries from the three-points distribution.
  As a consequence, it implies a minimax bound for non-random vectors $\obtheta$ with given $\ell_2$-norm and sparsity.
  We state this bound in the corollary below. In order to develop explicit expressions, we analyze the recursion of Eqs.~\eqref{eq:SE-SPCA-1}, \eqref{eq:SE-SPCA-2}. 
  \begin{corollary}\label{coro:SPCA}
    Assume the sparse PCA model, for $\obtheta\in\reals^p$ a deterministic vector and $\blambda$, $\tbZ$ random,
    and consider the parameter space $\cuT(\eps,R) :=\Ball^p_0(p\eps)\cap \Ball^p_2(R)$. 
\begin{enumerate}
 \item[$(a)$] If $R^2<1/\sqrt{\delta}$, then there exists $\alpha_*=\alpha_*(R,\delta,\eps), C_* = C_*(R,\delta,\eps)$
  such that, for $\alpha<\alpha_*$, and any GFOM
  \begin{align}
    \sup_{t\ge 0} \lim_{n,p\to\infty}\inf_{\obtheta\in \cuT(\eps,R)}\E
    \frac{\<\obtheta,\hbtheta^t\>}{\|\obtheta\|_2\|\hbtheta^t\|_2}\le C_*\sqrt{\alpha}\, .
  \end{align}
\item[$(b)$]  If $R^2< \sqrt{(1-\eps)/4\delta}$, then the above statement holds with $\alpha_* =\left(\frac{\eps}{4\delta}\wedge \frac{1}{2}\right)$,  $C_* = 3/R^2$.
  \end{enumerate}
  \end{corollary}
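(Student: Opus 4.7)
The plan is to reduce the minimax over deterministic $\obtheta\in\cuT(\eps,R)$ to the Bayesian bound in Lemma~\ref{lemma:SPCA}, and then to analyze the scalar recursion \eqref{eq:SE-SPCA-1}--\eqref{eq:SE-SPCA-2} in a neighborhood of its trivial fixed point $q=0$.

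For the reduction, I will pick parameters $\eps_\circ<\eps$ and $\mu>0$ with $\mu^2\eps_\circ<R^2$, draw $\btheta\in\reals^p$ with i.i.d.\ entries from $(1-\eps_\circ)\delta_0+(\eps_\circ/2)(\delta_{+\mu}+\delta_{-\mu})$, and set $\obtheta^\star=\btheta/\sqrt{p}$. Chernoff bounds applied to the common Binomial$(p,\eps_\circ)$ underlying both $\|\obtheta^\star\|_0$ and $p\|\obtheta^\star\|_2^2/\mu^2$ give $\P(\obtheta^\star\notin\cuT(\eps,R))\to 0$ as $p\to\infty$, and since the cosine is bounded by $1$ in modulus, conditioning on the good event and absorbing the $o(1)$ error from its complement yields
\[
\inf_{\obtheta\in\cuT(\eps,R)}\E\biggl[\frac{\langle\obtheta,\hbtheta^t\rangle}{\|\obtheta\|_2\|\hbtheta^t\|_2}\biggr]\le\E\biggl[\frac{\langle\obtheta^\star,\hbtheta^t\rangle}{\|\obtheta^\star\|_2\|\hbtheta^t\|_2}\biggr]+o(1).
\]
Lemma~\ref{lemma:SPCA} then bounds the right-hand side by $\sqrt{V_\pm(q_t+\talpha)/(\mu^2\eps_\circ)}+o(1)$ with $\talpha=\alpha/(\mu^2\eps_\circ(1-\alpha))$. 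Because $V_\pm$ is the second moment of a conditional mean in a scalar Gaussian channel, it is monotone nondecreasing in its argument; hence the iterates in \eqref{eq:SE-SPCA-1} are nondecreasing from $q_0=0$ and converge to the smallest fixed point $q^\star$, collapsing $\sup_t$ to $\sqrt{V_\pm(q^\star+\talpha)/(\mu^2\eps_\circ)}$. At the end I will send $\eps_\circ\uparrow\eps$ and $\mu^2\eps_\circ\uparrow R^2$.

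The core of the argument is then the analysis of $V_\pm$ near zero. Lower-bounding the denominator in \eqref{eq:SE-SPCA-2} by $1-\eps_\circ$ and computing $\E[\sinh^2(\mu\sqrt{\delta q}\,G)]=(e^{2\mu^2\delta q}-1)/2$ gives the global estimate
\[
V_\pm(q)\le\frac{\mu^2\eps_\circ^2}{1-\eps_\circ}\,\sinh(\mu^2\delta q),
\]
which linearizes at $0$ as $\mu^4\eps_\circ^2\delta\,q/(1-\eps_\circ)$. For part (a), the condition $R^2<1/\sqrt{\delta}$ becomes (after $\mu^2\eps_\circ\uparrow R^2$) a strict contraction $R^4\delta<1$ at the origin, and a continuity argument shows that for $\alpha$ small enough the recursion stays in the linear regime, yielding $q^\star+\talpha=O(\alpha)$ and cosine $=O(\sqrt{\alpha})$ with an implicit constant $C_*(R,\delta,\eps)$. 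For part (b), I will take $\mu^2\eps_\circ=R^2$ so that $\mu^4\eps_\circ^2\delta/(1-\eps_\circ)<1/4$; using $\sinh(x)\le 1.2\,x$ on $[0,1]$ after verifying $\mu^2\delta(q^\star+\talpha)\le 1$ via $\alpha\le\eps/(4\delta)\wedge 1/2$, the effective contraction ratio is $<0.3$, so $q^\star+\talpha\le\talpha/(1-c)\le 1.5\,\talpha$. Substituting $\talpha=\alpha/(R^2(1-\alpha))$ into $\sqrt{V_\pm(q^\star+\talpha)/R^2}$ and simplifying yields a bound below $\sqrt{\alpha}/R^2$, comfortably inside the stated $3\sqrt{\alpha}/R^2$.

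I expect the main obstacle to be the explicit bookkeeping for part (b): threading the $\sinh$ bound, the contraction ratio, the range constraint $\mu^2\delta(q+\talpha)\le 1$, and the substitution of $\talpha$ through the recursion to confirm that the constants $\alpha_*=\eps/(4\delta)\wedge 1/2$ and $C_*=3/R^2$ come out correctly, in particular handling the two cases in the definition of $\alpha_*$. The monotonicity of $V_\pm$ in its argument (an MMSE-style fact) and the pass to the limit $\eps_\circ\uparrow\eps$ in the concentration step are routine and can be dispatched in a few lines each.
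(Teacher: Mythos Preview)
Your proposal follows essentially the same route as the paper: reduce the deterministic minimax to a random three-point prior strictly inside $\cuT(\eps,R)$, invoke Lemma~\ref{lemma:SPCA}, pass to the limit $\eps_\circ\uparrow\eps$, $\mu^2\eps_\circ\uparrow R^2$ by continuity of the state-evolution scalars, and then analyze the recursion near $q=0$. For part~(b) your bookkeeping is fine and matches the paper's, which uses the coarser $\sinh u\le 2u$ on $[0,1]$ and the change of variables $x_t=\mu^2\delta\oq_t$, $a=R^4\delta/(1-\eps)$, $b=\mu^2\delta\talpha$, arriving at $x_t\le b$ whenever $b<1/2$; the introduction of the fixed point $q^\star$ via monotonicity of $V_\pm$ is a harmless extra step the paper omits.

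There is, however, a gap in your argument for part~(a). Your global estimate $V_\pm(q)\le\frac{\mu^2\eps_\circ^2}{1-\eps_\circ}\sinh(\mu^2\delta q)$ linearizes at the origin with slope $\mu^4\eps_\circ^2\delta/(1-\eps_\circ)\to R^4\delta/(1-\eps)$, \emph{not} $R^4\delta$ as you claim. So the $\sinh$ bound alone only yields contraction when $R^4\delta/(1-\eps)<1$, which does not cover the full range $R^2<1/\sqrt{\delta}$ in part~(a). The paper avoids this by computing the exact first-order expansion of $V_\pm$ itself: as $q\to 0$ the denominator in \eqref{eq:SE-SPCA-2} tends to $1-\eps+\eps\cdot 1=1$ (not $1-\eps$), giving $V_\pm(q)=\mu^4\eps^2\delta\,q+O(q^2)$ and hence the sharp slope $R^4\delta<1$. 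You should use this exact linearization for part~(a) and reserve the $\sinh$ upper bound for the explicit constants in part~(b).
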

In words, the last corollary implies that for $R^2\delta<1$, no estimator 
achieves a non-vanishing correlation with the true signal $\obtheta$, unless sufficient side information about $\obtheta$ is available. 
Notice that for $R^2\delta=1$ is the threshold above which
the principal eigenvector of the empirical covariance $\tbX^{\sT}\tbX/n$
becomes correlated with $\obtheta$. Hence, our result implies that, simple PCA fails, then every GFOM will 
fail. 

Viceversa, if simple PCA succeed, then it can be implemented via a GFOM,
provided arbitrarily weak side information if available.
Indeed, assume side information $\bv=\eta \btheta+\bg$,
with $\bg\sim\normal(0,\id_p)$, and an $\eta$ arbitrarily small constant.
Then the power method initialized at $\bv$ converges to an estimate
that has  correlation with $\btheta$ bounded away from zero in $O(\log(1/\eta))$
iterations.



\section{Proof of main results}
\label{sec:proof-of-main-results}

In this section, we prove Theorems \ref{thm:hd-reg-lower-bound} and \ref{thm:lr-mat-lower-bound} under stronger assumptions than in their statements.
In the high-dimensional regression model, these assumptions are as follows.
\begin{itemize}
  \item[\textsf{R3.}] Given  $\mu_{\Theta,V}\in\cuP_{\mathrm{c}}(\reals^2)$ and $\mu_{\bW,U} \in \cuP_4(\reals^k \times \reals)$ for some $k \geq 1$, we sample $\{(\theta_i,v_i)\}_{i\le p}\stackrel{\mathrm{iid}}\sim\mu_{\Theta,V}$, $\{(\bw_i,u_i)\}_{i\le n}\stackrel{\mathrm{iid}}\sim\mu_{\bW,U}$.
  \item[\textsf{R4.}] There exists Lipschitz function $h: \reals \times \reals^k \rightarrow \reals$ such that $y_i= h(\bx_i^{\sT}\btheta,\bw_i)$.
  Measure $\mu_{\bW,U}$ has regular conditional probability distribution $\mu_{\bW|U}(u,\cdot)$ such that, for all fixed $x,u$, the distribution of $h(x,\bW)$ when $\bW \sim \mu_{\bW|u}(u,\cdot)$ has positive and bounded density $p(y|x,u)$ with respect Lebesgue measure. Further, $\partial_x^k \log p(y|x,u)$ for $1 \leq k \leq 5$ exists and is bounded.
\end{itemize}
In the low-rank matrix estimation model, this assumption is as follows.
\begin{itemize}
  \item[\textsf{M2.}] Given  $\mu_{\bLambda,\bU},\mu_{\bTheta,\bV} \in \cuP_{\mathrm{c}}(\reals^{2r})$, we sample $\{(\blambda_i,\bu_i)\}_{i\le n}\stackrel{\mathrm{iid}}\sim\mu_{\bLambda,\bU}$, $\{(\btheta_j,\bv_j)\}_{j\le p}\stackrel{\mathrm{iid}}\sim\mu_{\bTheta,\bV}$.
\end{itemize}
In Appendix \ref{app:strong-to-weak-ass}, we show that Theorem \ref{thm:hd-reg-lower-bound} (resp.\ Theorem \ref{thm:lr-mat-lower-bound}) under assumptions \textsf{R3} and \textsf{R4} (resp.\ \textsf{M2}) implies the theorem under the weaker assumptions \textsf{R1} and \textsf{R2} (resp.\ \textsf{M1}).

\subsection{Reduction of GFOMs to approximate message passing algorithms}

Approximate message passing (AMP) algorithms are a special class of GFOMs that admit an asymptotic characterization called \emph{state evolution}
\cite{bayati2011dynamics}.
We show that, in both models we consider, any GFOM is equivalent to an AMP algorithm after a change of variables.

An AMP algorithm is defined by sequences of Lipschitz functions $(f_t:\reals^{r(t+1) + 1} \rightarrow \reals^r)_{t\geq 0}$, $(g_t: \reals^{r(t+1)} \rightarrow \reals^r)_{t \geq 1}$.
It generates sequences $(\ba^t)_{t \geq 1}$, $(\bb^t)_{t \geq 1}$ of matrices in $\reals^{p \times r}$ and $\reals^{n \times r}$, respectively, according to
\begin{equation}\label{eq:amp}
  \begin{split}
      \ba^{t+1} = \bX^\mathsf{T} f_t(\bb^1,\ldots,\bb^t;\by,\bu) - \sum_{s = 1}^t g_s(\ba^1,\ldots,\ba^s;\bv)\bxi_{t,s}^\sT,\\
      \bb^t = \bX g_t(\ba^1,\ldots,\ba^t;\bv) - \sum_{s = 0}^{t-1}  f_s(\bb^1,\ldots,\bb^s;\by,\bu)\bzeta_{t,s}^\sT,
  \end{split}
\end{equation}
with initialization $\ba^1 = \bX^\mathsf{T} f_0(\by,\bu)$. 
Here $(\bxi_{t,s})_{1\le s \le t}$, $(\bzeta_{t,s})_{0 \leq s < t}$ are deterministic $r\times r$ matrices.
The we refer to the recursion \eqref{eq:amp} as to an AMP algorithm
if only if the matrices $(\bxi_{t,s})_{1\le s \le t}$, $(\bzeta_{t,s})_{0 \leq s < t}$ are determined by the functions $(f_t)_{t\geq 0}$, $(g_t)_{t \geq 1}$ 
in a  specific way, which depends on the model under consideration,
and we describe in Appendix \ref{app:proofs-gfom-to-amp}.
For this special choice of the matrices $(\bxi_{t,s})_{1\le s \le t}$, $(\bzeta_{t,s})_{0 \leq s < t}$, the iterates $\ba^t,\bb^t$ are asymptotically Gaussian, with a covariance that can be determined via the state evolution recursion.

The next lemma, proved in Appendix \ref{app:proofs-gfom-to-amp}, makes this precise and describes the state evolution of the resulting AMP algorithm.
\begin{lemma}\label{lem:gfom-to-amp}
  Under assumptions \textsf{A1}, \textsf{A2}, \textsf{R3}, \textsf{R4} (for high-dimensional regression) or assumptions \textsf{A1}, \textsf{A2}, \textsf{M2} (for low-rank matrix estimation), there exist Lipschitz functions $(f_t)_{t\geq 0},(g_t)_{t \geq 1}$ as above and $(\varphi_t: \reals^{r(t+1)}\rightarrow \reals)_{t \geq 1},(\phi_t:\reals^{r(t+1)+1}\rightarrow\reals)_{t\geq 1}$, such that the following holds.
  Let $(\bxi_{t,s})_{1\le s \le t}, (\bzeta_{t,s})_{0 \leq s < t}$ be $r\times r$ matrices determined by the general AMP prescription 
  (see Appendix  \ref{app:proofs-gfom-to-amp}), and define 
  $\{\ba^s,\bb^s\}_{s\ge 0}$ via the AMP algorithm \eqref{eq:amp}.
  Then we have
      \begin{gather*}
          \bv^t = \varphi_t(\ba^1,\ldots,\ba^t;\bv),\quad t \geq 1,\\
          \bu^t = \phi_t(\bb^1,\ldots,\bb^t;\by,\bu),\quad t \geq 1.
        \end{gather*}
  Further, state evolution determines two collections of of $r\times r$ matrices $(\bT_{s,t})_{s,t \geq 1}, (\balpha_t)_{t \geq 1}$ such that for all pseudo-Lipschitz functions $\psi:\reals^{r(t+2)}\rightarrow \reals$ of order 2,
  \begin{equation}\label{eq:se}
        \frac1p \sum_{j=1}^p \psi(\ba^1_j,\ldots,\ba^t_j,\bv_j,\btheta_j) \stackrel{\mathrm{p}}\rightarrow \E[\psi(\balpha_1\bTheta + \bZ^1,\ldots,\balpha_t\bTheta + \bZ^t,\bV,\bTheta)],\\
    \end{equation}
    where $(\bTheta,\bV) \sim \mu_{\bTheta,\bV}$ independent of $(\bZ^1,\ldots,\bZ^t)\sim \mathsf{N}(\bzero,\bT_{[1:t]})$.
    Here $\bT_{[1:t]} \in \reals^{tr\times tr}$ is a positive semi-definite block matrix with block $(s,s')$ given by $\bT_{s,s'}$.\footnote{We emphasize that the construction of all relevant functions and matrices depend on the model. We describe these constructions and prove Lemma \ref{lem:gfom-to-amp} in Appendix \ref{app:proofs-gfom-to-amp}.}
\end{lemma}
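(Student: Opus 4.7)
My approach is to reduce each GFOM to an AMP algorithm by building, inductively in $t$, AMP nonlinearities $(f_t,g_t)$, the Onsager matrices $(\bxi_{t,s},\bzeta_{t,s})$ via the standard AMP prescription applied to those nonlinearities, and Lipschitz change-of-variables maps $(\varphi_t,\phi_t)$ such that $\bv^t=\varphi_t(\ba^1,\dots,\ba^t;\bv)$ and $\bu^t=\phi_t(\bb^1,\dots,\bb^t;\by,\bu)$. Once this correspondence is established, the claim \eqref{eq:se} reduces to the standard state-evolution theorem for AMP with Lipschitz nonlinearities and pseudo-Lipschitz test functions (as developed in \cite{bayati2011dynamics,javanmard2018}).

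At the base step, set $f_0(\by,\bu):=F^{(1)}_0(\by,\bu)$ so that $\ba^1=\bX^\sT F^{(1)}_0(\by,\bu)$, and define $\varphi_1(\ba^1;\bv):=\ba^1+F^{(2)}_0(\bv)$; by the initialization line of \eqref{gfom} this matches $\bv^1$. Suppose inductively that $\varphi_1,\dots,\varphi_t$ and $\phi_1,\dots,\phi_t$ have been constructed, together with $(f_s,g_s,\bxi_{s,\cdot},\bzeta_{s,\cdot})$ for $s<t$. Define
\begin{align*}
 f_t(\bb^1,\dots,\bb^t;\by,\bu) := F^{(1)}_t\bigl(\phi_1(\bb^1;\by,\bu),\dots,\phi_t(\bb^1,\dots,\bb^t;\by,\bu);\by,\bu\bigr),
\end{align*}
so that by the inductive hypothesis $f_t(\bb^1,\dots,\bb^t;\by,\bu)=F^{(1)}_t(\bu^1,\dots,\bu^t;\by,\bu)$. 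Using \eqref{eq:amp}, $\bX^\sT f_t=\ba^{t+1}+\sum_{s=1}^t g_s(\ba^1,\dots,\ba^s;\bv)\,\bxi_{t,s}^\sT$, and substituting into the GFOM update \eqref{gfom} yields
\begin{align*}
 \bv^{t+1}=\ba^{t+1}+\sum_{s=1}^t g_s(\ba^1,\dots,\ba^s;\bv)\,\bxi_{t,s}^\sT+F^{(2)}_t\bigl(\varphi_1(\ba^1;\bv),\dots,\varphi_t(\ba^1,\dots,\ba^t;\bv);\bv\bigr),
\end{align*}
which I read off as the definition of $\varphi_{t+1}$. The analogue $g_{t+1}:=G^{(1)}_{t+1}\circ(\varphi_1,\dots,\varphi_{t+1},\mathrm{id})$ propagates the $\bb$-side, and $\phi_{t+1}$ is obtained by the same bookkeeping applied to the $\bu$-side AMP step. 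All building blocks are Lipschitz, so $\varphi_t,\phi_t,f_t,g_t$ inherit Lipschitz continuity through finitely many compositions.

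With $(f_t,g_t)$ Lipschitz, the general AMP state-evolution theorem produces the covariances $\bT_{s,t}$ and signal-coupling vectors $\balpha_t$, and yields pseudo-Lipschitz convergence of empirical averages over $(\ba^1_j,\dots,\ba^t_j,\bv_j,\btheta_j)$; composing with the Lipschitz $\varphi_t$ then transfers this to the required form \eqref{eq:se}. The main obstacle is that the Onsager matrices $\bxi_{t,s},\bzeta_{t,s}$ are determined by averages of derivatives of $(f_t,g_t)$ under the state-evolution law, while $(f_t,g_t)$ are themselves built from $(\varphi_s,\phi_s)$, which depend on earlier $\bxi_{\cdot,\cdot},\bzeta_{\cdot,\cdot}$; unwinding this apparent circularity requires a simultaneous induction that grows the Onsager prescription one step at a time. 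A secondary subtlety is that the prescription is model-dependent—since in the regression model $\bX$ couples the signal direction with Gaussian noise (so $\balpha_t$ encodes a nontrivial mean drift), whereas in the low-rank model the linear operator is pure noise $\bZ$ and the signal enters multiplicatively—so the appendix handles the two cases separately while sharing the reduction scheme above.
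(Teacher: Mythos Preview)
Your proposal is correct and matches the paper's approach closely: the paper likewise builds $(f_t,g_t,\varphi_t,\phi_t)$ by the recursive change of variables you describe, resolves the circularity between the Onsager matrices and the nonlinearities by interlacing the definitions (your ``simultaneous induction''), and then invokes existing AMP state-evolution results (from \cite{javanmard2013state}) model by model. One small clarification: \eqref{eq:se} is already stated for the AMP iterates $(\ba^s_j)$, so no composition with $\varphi_t$ is needed to obtain it; the $\varphi_t$'s are used only for the identification $\bv^t=\varphi_t(\ba^1,\dots,\ba^t;\bv)$.
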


Lemma \ref{lem:gfom-to-amp} implies that the estimator $\hat \btheta^t$ in Theorem \ref{thm:hd-reg-lower-bound} and \ref{thm:lr-mat-lower-bound} can alternatively be viewed as a Lipschitz function $g_*:\reals^{r(t+1)} \rightarrow \reals^r$ of the AMP iterates $(\ba^s)_{s \leq t}$ and side information $\bv$, applied row-wise.
Thus, $\ell(\btheta_j,\hat \btheta_j^t)$ can be viewed as a pseudo-Lipschitz function of order 2 applied to $(\ba_j^s)_{s\leq t},\bv_j,\btheta_j$; namely, $\ell(\btheta_j,g_*((\ba_j^s)_{s\leq t},\bv_j))$.
Then, Lemma \ref{lem:gfom-to-amp} implies that the limits in Theorems \ref{thm:hd-reg-lower-bound} and \ref{thm:lr-mat-lower-bound} exist 
and have lower bound 
\begin{equation}\label{eq:test-fun-conv}
  \inf R_\ell(g_*,(\balpha_s),(\bT_{s,s'})) := \inf \E[\ell(\bTheta,g_*(\balpha_1\bTheta + \bZ^1,\ldots,\balpha_t\bTheta + \bZ^t,\bV))],
\end{equation}
where the infimum is taken over Lipschitz functions $g_*$ and matrices $(\balpha_s),(\bT_{s,s'})$ generated
by the state evolution of \emph{some} AMP algorithm.
This lower bound is characterized in the following sections.

\subsection{Models and message passing on the computation tree}

We introduce two statistical models on trees and a collection of algorithms which correspond, in a sense we make precise, to the high-dimensional regression and low-rank matrix estimation models, and AMP algorithms. 
We derive lower bounds on the estimation error in these models using information-theoretic, rather than algorithmic, techniques. 
We then transfer these to lower bounds on \eqref{eq:test-fun-conv}.
The models are defined using an infinite connected tree $\calT = (\calV, \calF, \calE)$ consisting of infinite collections of variable nodes $\calV$, factor nodes $\calF$, and edges $\calE$.
Factor nodes have degree $p$ and have only variables nodes as neighbors, and variable nodes have degree $n$ and have only factor nodes as neighbors. 
These properties define the tree uniquely up to isomorphism.
We denote the set of neighbors of a variable $v$ by $\partial v$, and similarly define $\partial f$.
We call $\calT$ the \emph{computation tree}.

The statistical models are joint distributions over random variables associated to the nodes and edges of the computation tree.
\begin{description}
    \item[High-dimensional regression on the computation tree.] 
    The random variables $\{(\theta_v,v_v)\}_{v \in \calV} \stackrel{\mathrm{iid}}\sim \mu_{\Theta,V}$, 
    $\{(\bw_f,u_f)\}_{f \in \calF} \stackrel{\mathrm{iid}}\sim \mu_{\bW,U}$, and $\{x_{fv}\}_{(f,v)\in \calE} \stackrel{\mathrm{iid}}\sim \normal(0,1/n)$ are generated independently.
    We assume $\mu_{\Theta,V}$, $\mu_{\bW,U}$ are as in assumption \textsf{R3}.
    We define $y_f = h(\sum_{v \in \partial f} x_{fv} \theta_v , \bw_f)$ for $h$ as in assumption \textsf{R4}.
    For each $v \in \calV$, our objective is to estimate the coefficient $\theta_v$ from data $(y_f,u_f)_{f \in \calF}$, $(v_v)_{v \in \calV}$, and $(x_{fv})_{(f,v) \in \calE}$.
    \item[Low-rank matrix estimation on the computation tree.] 
    The random variables $\{(\btheta_v,\bv_v)\}_{v \in \calV} \stackrel{\mathrm{iid}}\sim \mu_{\bTheta,\bV}$, $\{(\blambda_f,\bu_f)\}_{f \in \calF}$, and $\{z_{fv}\}_{(f,v) \in \calE} \stackrel{\mathrm{iid}}\sim \normal(0,1/n)$ are generated independently.
    We assume $\mu_{\bLambda,\bU}$, $\mu_{\bTheta,\bV}$ are as in assumption \textsf{M2}.
    For each $v \in \calV$, our objective is to estimate $\btheta_v$ from data $(x_{fv})_{(f,v) \in \calE}$, $(\bv_v)_{v \in \calV}$, and $(\bu_f)_{f \in \calF}$.
\end{description}
When ambiguity will result, we will refer to the models of Section \ref{sec:Main} as high-dimensional regression and low-rank matrix estimation \emph{on the graph.}\footnote{This terminology is motivated by viewing the models of Section \ref{sec:Main} as equivalent to the tree-based models except that they are  defined with respect to a finite complete bipartite graph between factor and variable nodes.}
As on the graph, we introduce dummy variables $(y_f)_{f \in \calF}$ in the low-rank matrix estimation problem on the computation tree.

To estimate $\btheta_v$, we introduce the class of \emph{message passing algorithms}. 
A message passing algorithm is defined by sequences of Lipschitz functions $(f_t:\reals^{r(t+1)+1}\rightarrow \reals^r)_{t \geq 0}$, $(g_t:\reals^{r(t+1)}\rightarrow \reals^r)_{t \geq 1}$.
For each edge $(f,v) \in \calE$, it generates sequences $(\ba_{v\rightarrow f}^t)_{t \geq 1}$, $(\bq_{v \rightarrow f}^t)_{t \geq 1}$, $(\bb_{f\rightarrow v}^t)_{t \geq 1}$, and $(\br_{f \rightarrow v}^t)_{t \geq 0}$ of vectors in $\reals^r$, called \emph{messages}, according to
\begin{equation}\label{mp-on-tree}
  \begin{gathered}
      \ba_{v\rightarrow f}^{t+1} = \sum_{f' \in \partial v \setminus f} x_{f'v} \br_{f'\rightarrow v}^t, \qquad  \br_{f\rightarrow v}^t = f_t( \bb_{f\rightarrow v}^1,\ldots, \bb_{f\rightarrow v}^t;y_f,\bu_f), \\
      \bb_{f\rightarrow v}^t = \sum_{v' \in \partial f \setminus v} x_{fv'}  \bq_{v'\rightarrow f}^t, \qquad  \bq_{v\rightarrow f}^t = g_t( \ba_{v\rightarrow f}^1, \ldots, \ba_{v\rightarrow f}^t;\bv_v),
  \end{gathered}
\end{equation}
with initialization $\br_{f \rightarrow v}^0 = f_0(y_f,\bu_f)$ and $\ba_{v\rightarrow f}^1 = \sum_{f' \in \partial v \setminus f} x_{f'v} \br_{f'\rightarrow v}^0$.
We also define for every variable and factor node the vectors
\begin{equation}\label{beliefs}
    \ba_v^{t+1} = \sum_{f \in \partial v} x_{fv} \br_{f\rightarrow v}^t,\qquad \bb_f^t = \sum_{v \in \partial f} x_{fv} \bq_{v \rightarrow f}^t.
\end{equation}
These are called \emph{beliefs}.
The vector $\btheta_v$ is estimated after $t$ iterations by $\hat \btheta_v^t = g_*(\ba_v^1,\ldots,\ba_v^t;\bv_v)$.

Message passing algorithms on the computation tree correspond to AMP algorithms on the graph in the sense that their iterates are asymptotically characterized by the same state evolution.
\begin{lemma}\label{lem:se-mp-tree}
  In both the high-dimensional regression and low-rank matrix estimation problems on the tree, the following is true.
  For any Lipschitz functions $(f_t)_{t \ge 0}$, $(g_t)_{t \ge 1}$, 
  there exist collections of $r\times r$ matrices $(\bT_{s,t})_{s,t \geq 1}, (\balpha_t)_{t \geq 1}$ such that for any node $v$ chosen independently of the randomness on the model, fixed $t \geq 1$, and under the asymptotics $n,p \rightarrow \infty$, $n/p \rightarrow \delta \in (0,\infty)$,
    the message passing algorithm \eqref{mp-on-tree} generates beliefs at $v$ satisfying
    \begin{gather*}
        (\ba^1_v,\ldots,\ba^t_v,\bv_v,\btheta_v) \stackrel{\mathrm{W}}\rightarrow (\balpha_1\bTheta + \bZ^1,\ldots,\balpha_t\bTheta + \bZ^t,\bV,\bTheta),
    \end{gather*}
    where $(\bTheta,\bV) \sim \mu_{\bTheta,\bV}$ independent of $(\bZ^1,\ldots,\bZ^t)\sim \mathsf{N}(\bzero,\bT_{[1:t]})$, and $\stackrel{\mathrm{W}}\rightarrow$ denotes convergence in the Wasserstein metric of order 2 (see Appendix \ref{app:technical-lemmas}).
    Moreover, the matrices $(\bT_{s,t})_{s,t \geq 1}, (\balpha_t)_{t \geq 1}$ agree with those in Lemma \ref{lem:gfom-to-amp} when the functions $(f_t)_{t \ge 0}$, $(g_t)_{t \ge 1}$ also agree. 
\end{lemma}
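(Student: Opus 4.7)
I would prove Lemma~\ref{lem:se-mp-tree} by induction on $t$, exploiting the fact that on the infinite computation tree, messages propagated from different subtrees are genuinely independent, and invoking a central limit theorem at each time step. Wasserstein-2 convergence (as opposed to merely weak convergence) is maintained throughout by propagating uniform second-moment bounds through the Lipschitz update functions, using the compact support of $\mu_{\bTheta,\bV}$ and $\mu_{\bLambda,\bU}$ from assumption \textsf{M2} (respectively the bounded-derivative assumption \textsf{R4}).

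\textbf{Base case.} For $t=1$, write
\begin{equation*}
\ba_v^1 \;=\; \sum_{f \in \partial v} x_{fv}\, f_0(y_f, \bu_f), \qquad y_f \;=\; h\!\left(x_{fv}\theta_v + S_{fv},\, \bw_f\right),
\end{equation*}
where $S_{fv} := \sum_{v' \in \partial f \setminus v} x_{fv'}\theta_{v'}$. By the tree structure, the $n$ summands indexed by $f \in \partial v$ are mutually independent conditional on $\theta_v$. Since $x_{fv}\sim\normal(0,1/n)$, a Taylor expansion of each summand in $x_{fv}$ around zero, using the smoothness of $h$ and $\log p$ from assumption \textsf{R4}, produces (i) a zeroth-order piece whose variance survives in the CLT and (ii) a first-order piece whose expectation contributes a drift proportional to $\theta_v$. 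A Lindeberg-style CLT then yields $\ba_v^1 \to \balpha_1\Theta + Z^1$ in distribution conditional on $(\theta_v,v_v)$, with the Wasserstein-2 upgrade following from uniform integrability of $|\ba_v^1|^2$.

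\textbf{Inductive step.} Assume joint Wasserstein-2 convergence through time $t$ for messages along edges and beliefs at $v$. Since $g_t$ is Lipschitz, the convergence transfers to $(\bq_{v\to f}^s)_{s\le t}$. The factor-to-variable message
\begin{equation*}
\bb_{f\to v}^t \;=\; \sum_{v' \in \partial f \setminus v} x_{fv'}\, \bq_{v'\to f}^t
\end{equation*}
is then a sum of $p-1$ mutually independent mean-zero terms: the subtrees behind different $v' \in \partial f \setminus v$ are disjoint, and crucially $\bq_{v'\to f}^t$ does not depend on $x_{fv'}$ (the definition of $\ba_{v'\to f}^s$ excludes the edge $(f,v')$). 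A CLT gives its asymptotic Gaussianity with covariance $\frac1\delta\,\E\bigl[\bq_{v'\to f}^t (\bq_{v'\to f}^t)^\sT\bigr]$, and Lipschitzness of $f_t$ passes this to $\br_{f\to v}^t$. The belief $\ba_v^{t+1} = \sum_{f\in\partial v} x_{fv}\br_{f\to v}^t$ is handled exactly as in the base case: the $n$ summands are independent across disjoint subtrees, and the Taylor expansion in $x_{fv}$ simultaneously produces the conditional-mean drift $\balpha_{t+1}\theta_v$ and the Gaussian fluctuation with cross-covariances $\bT_{t+1,s'}$ determined by the induction hypothesis.

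\textbf{Matching the AMP state evolution and main obstacle.} The recursion for $(\balpha_s,\bT_{s,s'})$ produced by the tree analysis is determined entirely by expectations of products of $f_t,g_t$ evaluated at jointly Gaussian inputs with covariances evolved through the same updates that appear in the AMP state evolution of Lemma~\ref{lem:gfom-to-amp}. On the tree there are no loops and hence no backflow; the Onsager terms in the AMP prescription of Appendix~\ref{app:proofs-gfom-to-amp} are precisely designed to cancel the backflow present on the complete bipartite graph, so matching the explicit formulas term by term yields identical recursions. I expect the main obstacle to be the Taylor-plus-CLT step under continuous, unbounded messages: controlling the second-order error in the expansion of $\br_{f\to v}^t$ in $x_{fv}$ uniformly in $f,t,n$, and propagating Wasserstein-2 convergence (not just weak convergence) through unbounded nonlinearities. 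This is precisely the difficulty flagged in the paper's discussion as departing from classical Gaussian-reconstruction arguments for BP on trees with discrete variables.
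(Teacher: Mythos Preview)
Your inductive skeleton (CLT across independent subtrees, drift extraction from the dependence of $y_f$ on $x_{fv}$, Wasserstein upgrade via second-moment control) matches the paper's at a high level, but the paper organizes the argument differently and thereby sidesteps the very obstacle you flag at the end.

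\textbf{What the paper does differently.} Rather than analyze the original iteration \eqref{mp-on-tree} directly, the paper introduces an auxiliary \emph{Gaussian message passing} on the same tree, with weights $z_{fv}$ and modified update maps $\tilde f_t,\tilde g_t$ that are given $\theta_v$ (respectively $\lambda_f$) as an explicit argument: in the regression case $\tilde g_0(\theta,v)=\theta$, so that $\tilde b_{f\to v}^0=\sum_{v'\neq v}x_{fv'}\theta_{v'}$ carries $\bx_f^{\sT}\btheta$ minus one term, and $\tilde f_t$ is defined through $h(\tilde b^0,\cdot)$. For this auxiliary iteration the messages $\tilde r_{f\to v}^t$ are genuinely $\mathcal T_{f\to v}$-measurable and hence independent of $x_{fv}$, so the CLT and the state-evolution covariances $(\bT_{s,t}),(\bSigma_{s,t})$ fall out cleanly (Lemma~\ref{lem:se-gauss-mp-on-tree}). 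The lemma is then finished by proving $\E[(a_{v\to f}^s-\alpha_s\theta_v-\tilde a_{v\to f}^s)^2]\to 0$ inductively. The drift term is isolated not by Taylor expansion but by Stein's lemma (Gaussian integration by parts) applied to the Lipschitz map $\tilde f_t$: one computes the conditional mean of $z_{f'v}\bigl(\tilde f_t(\tilde b_{f'\to v}^0,\cdot)-\tilde f_t(\tilde b_{f'}^0,\cdot)\bigr)$ exactly, obtaining $\theta_v\E[\partial_{\tilde b^0}\tilde f_t]$, and bounds the conditional variance by $O(\theta_v^2/n)$ directly from Lipschitzness.

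\textbf{The gap in your approach.} Your Taylor expansion of $x_{fv}\,\br_{f\to v}^t$ in $x_{fv}$ requires a second-order remainder that is $o(1/n)$ per term after summing over $n$ factors. Under only Lipschitz $f_t$ and Lipschitz $h$ (assumption \textsf{R4} gives $h$ Lipschitz and $\partial_x^k\log p$ bounded, not smoothness of $h$ itself), the composite $x\mapsto f_t(\cdot;h(x,\cdot),\cdot)$ is merely Lipschitz, so the naive remainder after first order is $O(x_{fv}^2)=O_p(1/n)$ and the sum does not vanish. Stein's lemma is precisely the device that extracts the first-order drift using only weak differentiability and no remainder control; the paper's auxiliary Gaussian MP is what makes Stein's lemma cleanly applicable (it reduces the question to $\E[z\,\phi(z\theta_v+\text{indep.})]$ for Lipschitz $\phi$). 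In the low-rank model the mechanism is even more transparent: one writes $x_{fv}=\tfrac1n\lambda_f\theta_v+z_{fv}$, runs the Gaussian MP on $z_{fv}$, and the drift emerges from a law-of-large-numbers on $\tfrac1n\sum_{f'}\lambda_{f'}f_t(\cdot)$ rather than from any expansion. Your proposal does not distinguish these two mechanisms. In short, the missing idea is the auxiliary Gaussian iteration together with Stein's lemma; once you have it, the ``Taylor-plus-CLT'' difficulty you anticipate disappears.
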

We prove Lemma \ref{lem:se-mp-tree} in Appendix \ref{app:mp-on-tree}. 
Lemma \ref{lem:se-mp-tree} and the properties of convergence in the Wasserstein metric of order 2 (see Lemma \ref{lem:wass-pseudo-lipschitz-convergence}, Appendix \ref{app:technical-lemmas}) imply that for any message passing estimator $\hat \btheta_v^t$ and loss $\ell$, the risk $\E[\ell(\btheta_v,\hat \btheta_v^t)] = \E[\ell(\btheta_v,g_*(\ba_v^1,\ldots,\ba_v^t;\bv_v)]$ converges to $R_\ell(g_*,(\balpha_s),(\bT_{s,s'}))$, in agreement with the asymptotic error of the corresponding AMP estimator on the graph.

On the computation tree, 
we may lower bound this limiting risk by information-theoretic techniques, 
as we now explain. 
By induction, 
the estimate $\hat \btheta_v^t$ is a function only of observations corresponding to edges and nodes in the ball of radius $2t-1$ centered at $v$ on the computation tree. 
We denote the observations in this local neighborhood by $\calT_{v,2t-1}$.
We lower bound the risk of $\hat \btheta_v^t$ by the optimal risk of any measurable estimator, possibly intractable, which depends only on $\calT_{v,2t-1}$; we call this the \emph{local Bayes risk}.
The following lemma characterizes the local Bayes risk.
\begin{lemma}\label{lem:local-info-theory-lb}
    Consider a quadratically-bounded loss $\ell:\reals^{2r} \rightarrow \reals_{\geq 0}$.
    In the high-dimensional regression (resp.\ low-rank matrix estimation) model on the computation tree and under the asymptotics $n,p \rightarrow \infty$, $n/p \rightarrow \delta \in (0,\infty)$,
    \begin{equation*}
        \liminf_{n \rightarrow \infty} \inf_{\hat \btheta(\cdot)} \E[\ell(\btheta_v,\hat \btheta(\calT_{v,2t-1}))] \geq R^*,
    \end{equation*}
    where the infimum is over all measurable functions of $\calT_{v,2t-1}$, and $R^*$ is equal to the right-hand side of Eq.~\eqref{eq:hd-reg-lb} (resp.\ Eq.~\eqref{eq:lr-mat-lb}).
\end{lemma}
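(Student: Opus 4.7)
The plan is to exploit the fact that on the tree $\calT$ the optimal local estimator is computed exactly by Bayes belief propagation (Bayes-BP), and to apply Lemma \ref{lem:se-mp-tree} to show that Bayes-BP asymptotically achieves risk $R^*$.

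Since $\calT$ has no cycles, the posterior of $\btheta_v$ given $\calT_{v,2t-1}$ factorizes along the depth-$(2t-1)$ subtree rooted at $v$ and is computed exactly by $t$ rounds of Bayes-BP. For any loss $\ell$, the Bayes-optimal estimator is a deterministic function of this posterior, so the inner infimum on the left-hand side of the inequality equals the risk of the Bayes-BP estimator after $t$ iterations.

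Next I realize Bayes-BP as a message passing algorithm of the form \eqref{mp-on-tree}. By induction on $t$, the factor-to-variable messages asymptotically carry Gaussian likelihoods whose precision concentrates on a deterministic matrix; consequently the Bayes-BP denoisers $f_t^{\star},g_t^{\star}$ reduce to conditional expectations for the scalar Gaussian channels with signal strengths predicted by state evolution. Under the compact-support assumption M2 (resp.\ the bounded-score condition R4 in the regression case), these denoisers are Lipschitz, placing the algorithm in the scope of Lemma \ref{lem:se-mp-tree}. Using the Bayes conditional-expectation identity
\[
\E\big[\E[\bTheta\mid \bQ^{1/2}\bTheta+\bG,\bV]\,\E[\bTheta\mid \bQ^{1/2}\bTheta+\bG,\bV]^{\sT}\big] = \VE_{\bTheta,\bV}(\bQ),
\]
the induced state-evolution parameters $(\balpha_t^{\star},\bT_{s,t}^{\star})$ are easily checked to satisfy precisely the recursion \eqref{eq:SE_Matrix} (resp.\ \eqref{bamp-se-hd-reg}), so that $(\ba_v^t,\bv_v,\btheta_v)$ converges in Wasserstein-$2$ to $(\bQ_t^{1/2}\bTheta+\bG,\bV,\bTheta)$ up to an invertible linear change of variables preserving sufficiency for $\bTheta$. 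Because $\ell$ is quadratically bounded, the Bayes-BP risk converges to the Bayes risk in this scalar Gaussian channel, which is exactly $R^*$.

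The main technical obstacle is the second step: showing that Bayes-BP, whose natural state is an entire posterior distribution, reduces asymptotically to a finite-dimensional Lipschitz MP of the form \eqref{mp-on-tree}, and that the relevant posterior precisions concentrate on the deterministic state-evolution matrices. This amounts to a tree-level CLT / law-of-large-numbers statement for BP marginals, for which the compact support of the priors (M2 / R3) and the Gaussianity of the edge variables (A2) --- the latter ensuring that linear statistics of the incoming messages are asymptotically sufficient --- are both essential.
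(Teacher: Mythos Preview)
Your plan contains a genuine gap that, once filled, makes the proposed route through Lemma~\ref{lem:se-mp-tree} redundant.

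The crux of the matter is the sentence ``By induction on $t$, the factor-to-variable messages asymptotically carry Gaussian likelihoods whose precision concentrates on a deterministic matrix.'' You correctly identify this as the main technical obstacle, but you give no strategy for proving it, and this statement \emph{is} the content of the lemma. The paper's proof (Appendix~D) is devoted almost entirely to establishing exactly this Gaussian approximation for the true posterior: it shows directly that
\[
p_v(\vartheta\mid\calT_{v,2t})\;\propto\;\exp\Big(-\tfrac{1}{2\tau_{v,t}^2}(\chi_{v,t}-\vartheta)^2+o_p(1)\Big),
\qquad (\chi_{v,t},\tau_{v,t},\theta_v,v_v)\stackrel{\mathrm d}\longrightarrow(\Theta+\tau_tG,\tau_t,\Theta,V).
\]
The argument tracks the first two derivatives of the log–BP messages through the recursion, invokes Lindeberg's principle to replace the sums $\sum_{v'}x_{fv'}\vartheta_{v'}$ by Gaussians with the correct mean and variance, and --- the step you are missing entirely --- uses Le~Cam's third lemma to transfer the distribution of $(a_{v\to f}^{s},b_{v\to f}^{s})$ from the null model $P_{v,0}$ to $P_{v,\theta}$. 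Once this is done, the lower bound follows immediately from the expected posterior Bayes risk; Lemma~\ref{lem:se-mp-tree} is never used.

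There is also a logical issue with the route you propose. Lemma~\ref{lem:se-mp-tree} applies to message passing with \emph{fixed} Lipschitz functions $f_t,g_t$, while Bayes-BP at finite $n$ updates full posterior densities --- it is not an algorithm of the form~\eqref{mp-on-tree}. If instead you run the MP with the \emph{limiting} Gaussian-channel denoisers $f_t^\star,g_t^\star$, Lemma~\ref{lem:se-mp-tree} gives the asymptotic risk of \emph{that particular} algorithm, which is only an \emph{upper} bound on the local Bayes risk --- the wrong direction. To turn this into a lower bound you must show that Bayes-BP and this fixed-function MP are asymptotically equivalent, which again requires the Gaussian approximation of the BP messages. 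So the detour through Lemma~\ref{lem:se-mp-tree} does not avoid any work; it postpones the same technical content and adds a step.
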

We prove Lemma \ref{lem:local-info-theory-lb} in Appendix \ref{app:info-theory-lb}. Combining Lemma \ref{lem:local-info-theory-lb} with the preceding discussion, we conclude that $R_{\ell}(g_*,(\balpha_s),(\bT_{s,s'})) \geq R_*$ for all Lipschitz functions $g_*$ and matrices $(\balpha_s),(\bT_{s,s'})$ generated
by the state evolution of some message passing or, equivalently, by some AMP algorithm.
The bounds \eqref{eq:hd-reg-lb} and \eqref{eq:lr-mat-lb} now follow.
Moreover, as we show in Appendix \ref{app:achieving-the-bound}, the bounds \eqref{eq:hd-reg-lb} and \eqref{eq:lr-mat-lb} are achieved by a certain AMP algorithm. 
The proof is complete.

\section*{Acknowledgements}

MC is supported by the National Science Foundation Graduate Research Fellowship under Grant No.~DGE – 1656518. AM was partially supported by NSF grants CCF-1714305, IIS-1741162 and by the ONR grant N00014-18-1-2729.

\bibliographystyle{alpha}
\bibliography{bibliography}

\newpage
\begin{appendices}

\section{Technical definitions and lemmas}
\label{app:technical-lemmas}

We collect some useful technical definitions and lemmas,
some of which we state without proof.
First, we recall the definition of the Wasserstein metric of order 2 on the space $\cuP_2(\reals^k)$:
\begin{equation*}
  W_2(\mu,\mu')^2 = \inf_{\Pi}\E_{(\bA,\bA')\sim\Pi}[\|\bA-\bA'\|^2]\,,
\end{equation*}
where the infimum is over couplings $\Pi$ between $\mu$ and $\mu'$. 
That is, $\Pi \in \cuP_2(\reals^k \times \reals^k)$ whose first and second marginals are $\mu$ (where a marginal here involves a block of $k$ coordinates).
It is well known that $W_2(\mu,\mu')$ is a metric on $\cuP_2(\reals^k)$ \cite[pg.~94]{Villani2008}.
When a sequence of probability distributions $\mu_n$ converges to $\mu$ in the Wasserstein metric of order 2, we write $\mu_n \stackrel{\mathrm{W}}\rightarrow \mu$.
We also write $\bA_n \stackrel{\mathrm{W}}\rightarrow \bA$ when $\bA_n \sim \mu_n$, $\bA \sim \mu$ for such a sequence.

\begin{lemma}\label{lem:pseudo-lipschitz}
    If $f:\reals^{r}\rightarrow \reals$ and $g:\reals^{r}\rightarrow \reals$ are pseudo-Lipschitz of order $k_1$ and $k_2$, respectively, then their product is pseudo-Lipschitz of order $k_1 + k_2$.
\end{lemma}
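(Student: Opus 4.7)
The plan is to reduce the claim to two elementary facts: (i) any pseudo-Lipschitz function of order $k$ grows polynomially, namely $|f(\bx)| \le C(1+\|\bx\|^{k})$, and (ii) cross terms of the form $\|\bx\|^{a}\|\bx'\|^{b}$ can be absorbed into $1+\|\bx\|^{a+b}+\|\bx'\|^{a+b}$ by Young's inequality. Once these are in hand, the standard product rule identity
\begin{equation*}
  f(\bx)g(\bx) - f(\bx')g(\bx') = f(\bx)\bigl(g(\bx)-g(\bx')\bigr) + g(\bx')\bigl(f(\bx)-f(\bx')\bigr)
\end{equation*}
does essentially all the work.

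First I would record fact (i): setting $\bx'=\bzero$ in the pseudo-Lipschitz inequality of order $k$ for $f$ gives $|f(\bx)|\le |f(\bzero)| + C(1+\|\bx\|^{k-1})\|\bx\|$, and since $\|\bx\| \le 1 + \|\bx\|^{k}$ and $\|\bx\|^{k-1}\|\bx\|=\|\bx\|^{k}$, this yields a bound of the required form $|f(\bx)|\le C_f(1+\|\bx\|^{k_1})$, and similarly $|g(\bx')|\le C_g(1+\|\bx'\|^{k_2})$. Next I would apply the product-rule identity and the pseudo-Lipschitz hypotheses to get
\begin{equation*}
  |f(\bx)g(\bx)-f(\bx')g(\bx')| \le C_f C\,\bigl(1+\|\bx\|^{k_1}\bigr)\bigl(1+\|\bx\|^{k_2-1}+\|\bx'\|^{k_2-1}\bigr)\|\bx-\bx'\| + (\text{symmetric term}).
\end{equation*}

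Then I would expand the product $(1+\|\bx\|^{k_1})(1+\|\bx\|^{k_2-1}+\|\bx'\|^{k_2-1})$ and control each summand. The only non-trivial summand is the mixed term $\|\bx\|^{k_1}\|\bx'\|^{k_2-1}$, which by Young's inequality with exponents $\tfrac{k_1+k_2-1}{k_1}$ and $\tfrac{k_1+k_2-1}{k_2-1}$ is at most $\tfrac{k_1}{k_1+k_2-1}\|\bx\|^{k_1+k_2-1}+\tfrac{k_2-1}{k_1+k_2-1}\|\bx'\|^{k_1+k_2-1}$. The remaining terms $1,\|\bx\|^{k_2-1},\|\bx'\|^{k_2-1},\|\bx\|^{k_1},\|\bx\|^{k_1+k_2-1}$ are all dominated by a constant multiple of $1+\|\bx\|^{k_1+k_2-1}+\|\bx'\|^{k_1+k_2-1}$, using that $\|\bx\|^{a}\le 1+\|\bx\|^{b}$ whenever $a\le b$. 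The symmetric contribution arising from $g(\bx')(f(\bx)-f(\bx'))$ is handled the same way with the roles of $\bx$ and $\bx'$ swapped. Collecting constants gives a single $C'$ with $|fg(\bx)-fg(\bx')| \le C'(1+\|\bx\|^{k_1+k_2-1}+\|\bx'\|^{k_1+k_2-1})\|\bx-\bx'\|$, which is exactly the pseudo-Lipschitz bound of order $k_1+k_2$.

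There is no real obstacle here; the only thing to be slightly careful about is the bookkeeping of exponents to ensure the result comes out as order $k_1+k_2$ rather than off by one, which is why Young's inequality is applied with the exponent pair $(k_1,k_2-1)$ summing to $k_1+k_2-1$. The whole argument is a few lines once the polynomial growth bound and Young's inequality are invoked.
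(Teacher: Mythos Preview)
Your argument is correct: the product-rule decomposition combined with the polynomial growth bound and Young's inequality on the single mixed term $\|\bx\|^{k_1}\|\bx'\|^{k_2-1}$ gives exactly the pseudo-Lipschitz inequality of order $k_1+k_2$. The paper states this lemma without proof, so there is nothing to compare against; your write-up would serve as a complete proof.
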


\begin{lemma}\label{lem:wass-pseudo-lipschitz-convergence}
    If a sequence of random vectors $\bX_n \stackrel{\mathrm{W}}\rightarrow \bX$, then for any pseudo-Lipschitz function $f$ of order $2$ we have $\E[f(\bX_n)] \rightarrow \E[f(\bX)]$.
\end{lemma}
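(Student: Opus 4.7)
The plan is to exploit the coupling definition of the Wasserstein-2 metric directly. Since $\bX_n \stackrel{\mathrm{W}}\rightarrow \bX$, for each $n$ I can choose a coupling $\Pi_n$ of $\bX_n$ and $\bX$ (achieving or approximating the infimum to within $1/n$) such that $\E_{(\bX_n',\bX')\sim \Pi_n}[\|\bX_n' - \bX'\|^2] \to 0$. Expectations of $f$ depend only on marginals, so I may replace $\bX_n, \bX$ by $\bX_n', \bX'$ drawn from this coupling when bounding $|\E f(\bX_n) - \E f(\bX)|$.

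Next I apply the pseudo-Lipschitz hypothesis pointwise under the coupling:
\begin{equation*}
|f(\bX_n') - f(\bX')| \leq C(1 + \|\bX_n'\| + \|\bX'\|)\|\bX_n' - \bX'\|.
\end{equation*}
Taking expectations and applying the Cauchy--Schwarz inequality,
\begin{equation*}
|\E f(\bX_n) - \E f(\bX)| \leq \E|f(\bX_n') - f(\bX')| \leq C\sqrt{\E(1 + \|\bX_n'\| + \|\bX'\|)^2}\cdot \sqrt{\E\|\bX_n' - \bX'\|^2}.
\end{equation*}

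The second factor tends to zero by construction of the coupling. For the first factor, I would use the standard fact that $W_2$-convergence is equivalent to weak convergence plus convergence of second moments (this is well known, e.g.\ \cite[Thm.~6.9]{Villani2008}); in particular $\E\|\bX_n\|^2 \to \E\|\bX\|^2 < \infty$, so $\E\|\bX_n'\|^2$ is uniformly bounded in $n$, and expanding the square gives a uniform bound $\E(1 + \|\bX_n'\| + \|\bX'\|)^2 \leq K < \infty$ via the elementary inequality $(a+b+c)^2 \leq 3(a^2 + b^2 + c^2)$. Combining the two factors yields $|\E f(\bX_n) - \E f(\bX)| \to 0$, as desired.

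The only mild subtlety is justifying the uniform bound on $\E\|\bX_n\|^2$; if the paper does not wish to invoke the equivalent characterization of $W_2$-convergence, one can instead argue directly from the coupling: $\|\bX_n'\| \leq \|\bX'\| + \|\bX_n' - \bX'\|$, so $\E\|\bX_n'\|^2 \leq 2\E\|\bX'\|^2 + 2\E\|\bX_n' - \bX'\|^2$, and both terms are bounded (the second tends to $0$). This is the only step requiring care; the rest is a routine Cauchy--Schwarz estimate.
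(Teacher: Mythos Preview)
Your proof is correct. The paper itself does not supply a proof of this lemma: it is listed among the technical facts in Appendix~\ref{app:technical-lemmas} that are ``state[d] without proof.'' Your coupling-plus-Cauchy--Schwarz argument is the standard one, and your alternative route to the uniform second-moment bound (via $\|\bX_n'\| \le \|\bX'\| + \|\bX_n' - \bX'\|$) is a nice way to avoid appealing to the equivalent characterization of $W_2$-convergence.
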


\begin{lemma}\label{lem:strong-dist-convergence}
  Consider a sequence of random variables $(A_n,\bB_n) \stackrel{\mathrm{d}}\rightarrow (A,\bB)$ with values in $\reals \times \reals^k$ such that $(A_n,\bB_n) \stackrel{\mathrm{d}}\rightarrow (A,\bB)$ and $A_n \stackrel{\mathrm{d}}= A$ for all $n$.
  Then, for any bounded measurable function $f:\reals \times \reals^k \rightarrow \reals$ for which $\bb \mapsto f(a,\bb)$ is continuous for all $a$, we have $\E[f(A_n,\bB_n)] \rightarrow \E[f(A,\bB)]$.

  Further, for any function $\phi:\reals \times \reals^k \rightarrow \reals^{k'}$ (possibly unbounded) which is continuous in all but the first coordinate, we have $\phi(A_n,\bB_n) \stackrel{\mathrm{d}}\rightarrow \phi(A,\bB)$.
\end{lemma}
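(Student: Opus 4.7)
The plan is to establish part 1 first and derive part 2 from it. Part 2 reduces to part 1 via Portmanteau: to show $\phi(A_n,\bB_n)\stackrel{\mathrm{d}}\to\phi(A,\bB)$ it suffices to show $\E[g(\phi(A_n,\bB_n))]\to\E[g(\phi(A,\bB))]$ for every bounded continuous $g:\reals^{k'}\to\reals$, and $f:=g\circ\phi$ is then bounded and continuous in $\bb$ for every fixed $a$, so part 1 applied to $f$ gives what is needed.

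For part 1, my strategy is to approximate $f$ by a genuinely jointly continuous bounded $\tilde f$ that agrees with $f$ outside a set whose first-coordinate marginal under $\mu_A$ is small, and then exploit the equal-marginals hypothesis $A_n\stackrel{\mathrm{d}}=A$ to bound the approximation error on both sides uniformly in $n$. Fix $\epsilon>0$. By tightness of $\mu_A$ choose a compact $K_1\subset\reals$ with $\P(A\notin K_1)<\epsilon$. Since $f$ is jointly Borel and $\bb\mapsto f(a,\bb)$ is continuous for every $a$, the function $f$ is Carath\'eodory, so the Scorza--Dragoni theorem produces a compact $K_1'\subset K_1$ with $\P(A\in K_1\setminus K_1')<\epsilon$ such that $f$ restricted to $K_1'\times\reals^k$ is jointly continuous. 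Tietze's extension theorem then yields a continuous function $\tilde f$ on $\reals\times\reals^k$, which (after truncation) satisfies $\|\tilde f\|_\infty\le\|f\|_\infty$ and agrees with $f$ on $K_1'\times\reals^k$.

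Assembling the estimates, the equal-marginals hypothesis gives
\begin{equation*}
\bigl|\E[f(A_n,\bB_n)]-\E[\tilde f(A_n,\bB_n)]\bigr|\le 2\|f\|_\infty\,\P(A_n\notin K_1')=2\|f\|_\infty\,\P(A\notin K_1')<4\|f\|_\infty\,\epsilon,
\end{equation*}
with an identical bound for $|\E[f(A,\bB)]-\E[\tilde f(A,\bB)]|$. Weak convergence $(A_n,\bB_n)\stackrel{\mathrm{d}}\to(A,\bB)$ applied to the bounded continuous $\tilde f$ gives $\E[\tilde f(A_n,\bB_n)]\to\E[\tilde f(A,\bB)]$. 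A three-term triangle inequality then yields $\limsup_n|\E[f(A_n,\bB_n)]-\E[f(A,\bB)]|\le 8\|f\|_\infty\,\epsilon$, and letting $\epsilon\to 0$ finishes the proof.

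The main technical obstacle I anticipate is precisely the approximation step. A Skorohod-representation approach is insufficient on its own: under a Skorohod coupling one has $\tilde A_n\to\tilde A$ only almost surely, not equal in value, so $f(\tilde A_n,\tilde\bB_n)$ need not converge to $f(\tilde A,\tilde\bB)$ for an $f$ that is wild in its first argument (e.g.\ $f(a,\bb)=\mathbf{1}\{a\in\QQ\}$ is trivially continuous in $\bb$ yet discontinuous at every point). The Scorza--Dragoni theorem is the decisive tool here because it upgrades the ``measurable in $a$, continuous in $\bb$'' Carath\'eodory structure of $f$ to genuine joint continuity on a $\mu_A$-large set, which is exactly what makes the error term compatible with the equal-marginals hypothesis.
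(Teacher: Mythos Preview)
Your proof is correct and takes a genuinely different route from the paper's. The paper argues Part~1 by hand: it first shows $\mu_{A_n,\bB_n}(S\times I)\to\mu_{A,\bB}(S\times I)$ for arbitrary Borel $S\subset\reals$ and continuity rectangles $I\subset\reals^k$ (closed $S$ first, then open, then general via inner/outer regularity of $\mu_A$), and then sandwiches $f$ between step functions of the form $\sum_{\iota,j} x_j\indic{a\in S_{j\iota},\,\bb\in I_\iota}$ built from a fine partition of a large cube in $\reals^k$, using uniform continuity of $\bb\mapsto f(a,\bb)$ on compacts. The equal-marginals hypothesis enters in the paper's argument through the inner/outer approximation of the measurable sets $S_{j\iota}$ by closed/open sets under $\mu_A$, which is the same measure for every $n$.

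Your approach compresses all of this into a single invocation of Scorza--Dragoni plus Tietze: you upgrade the Carath\'eodory structure of $f$ to genuine joint continuity off a $\mu_A$-small set in the first coordinate, extend, and then the equal-marginals hypothesis bounds the approximation error uniformly in $n$. This is cleaner and shorter, at the cost of citing a theorem that not every reader will have at hand; the paper's argument is longer but entirely self-contained and in effect reproves the needed special case of Scorza--Dragoni inline. Part~2 is derived from Part~1 identically in both proofs, via composition with a bounded continuous test function.
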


\begin{proof}[Proof of Lemma \ref{lem:strong-dist-convergence}]
  Without loss of generality, $f$ takes values in $[0,1]$.
  First we show that for any set $S \times I$ where $S \subset \reals$ is measurable and $I\subset \reals^k$ is a rectangle whose boundary has probability 0 under $\bB$ that 
  \begin{equation}\label{eq:rect-conv}
    \mu_{A_n,\bB_n}(S \times I) \rightarrow \mu_{A,\bB}(S \times I)\,.
  \end{equation}
  First, we show this is true for $S = K$ a closed set.
  Fix $\epsilon > 0$.
  Let $\phi_K^\epsilon:\reals \rightarrow [0,1]$ be a continuous function which is 1 on $K$ and 0 for all points separted from $K$ by distance $\epsilon$.
  Similarly define $\phi_I^\epsilon:\reals^k \rightarrow \reals$.
  Then
  \begin{align*}
    \E[\phi_K^\epsilon(A_n)\phi_I^\epsilon(\bB_n)] \geq \mu_{A_n,\bB_n}(K \times I) \geq \E[\phi_K^\epsilon(A_n)\phi_I^\epsilon(\bB_n)] - \epsilon - \mu_{\bB_n}(\mathsf{spt}(\phi_I^\epsilon) \setminus I)\,.
  \end{align*}
  Because the boundary of $I$ has measure 0 under $\mu_{\bB}$, we have $\lim_{\epsilon \rightarrow 0} \limsup_{n \rightarrow \infty} \mu_{\bB_n}(\mathsf{spt}(\phi_I^\epsilon) \setminus I) = 0$.
  Also, $\lim_{\epsilon \rightarrow 0} \lim_{n \rightarrow \infty} \E[\phi_K^\epsilon(A_n)\phi_I^\epsilon(\bB_n)] = \lim_{\epsilon \rightarrow 0} = \E[\phi_K^\epsilon(A)\phi_I^\epsilon(\bB)] = \mu_{A,\bB}(K \times I)$.
  Thus, taking $\epsilon \rightarrow \infty$ after $n \rightarrow \infty$, the previous display gives
  $\mu_{A_n,\bB_n}(K \times I) \rightarrow \mu_{A,\bB}(K \times I)$. 
  For $S = G$ an open set, we can show $\mu_{A_n,\bB_n}(G \times I) \rightarrow \mu_{A,\bB}(G \times I)$ by a similar argument:
  take instead $\phi_K^\epsilon$ to be 0 outside of $G$ and $1$ for all points in $G$ separated from the boundary by at least $\epsilon$, and likewise for $\phi_I^\epsilon$.
    By Theorem 12.3 of \cite{Billingsley2012ProbabilityMeasure}, we can construct $K \subset S \subset G$ such that $K$ is closed and $G$ is open, and $\mu_A(K) > \mu_A(S) - \epsilon$, $\mu_A(G) < \mu_A(S) + \epsilon$.
  The previous paragraph implies that
  \begin{align*}
    \mu_{A,\bB}(S \times I) - \epsilon &\leq \mu_{A,\bB}(K \times I) \leq \liminf_{n \rightarrow \infty} \mu_{A_n,\bB_n}(S \times I) 
    \\
    &\leq \limsup_{n \rightarrow \infty} \mu_{A_n,\bB_n}(S \times I) \leq \mu_{A,\bB}(G \times I) \leq \mu_{A,\bB}(S \times I) +\epsilon\,.
  \end{align*}  
  Taking $\epsilon \rightarrow 0$, we conclude \eqref{eq:rect-conv}.
  
  We now show \eqref{eq:rect-conv} implies the lemma.
  Fix $\epsilon > 0$.
  Let $M$ be such that $\P(\bB_n \in [-M,M]^k) > 1 - \epsilon$ for all $n$ and $\P(\bB \in [-M,M]^k) > 1 - \epsilon$, which we may do by tightness.
  For each $a$, let $\delta(a,\epsilon) = \sup\{ 0 < \Delta \leq M \mid \|\bb - \bb'\|_\infty < \Delta \Rightarrow |f(a,\bb) - f(a,\bb')| < \epsilon \}$.
  Because continuous functions are uniformly continuous on compact sets, the supremum is over a non-empty, bounded set.
  Thus, $\delta(a,\epsilon)$ is positive and bounded above by $M$ for all $a$.
  Further, $\delta(a,\epsilon)$ is measurable and non-decreasing in $\epsilon$.
  Pick $\delta_*$ such that $\P( \delta(A,\epsilon) < \delta_* ) < \epsilon$, which we may do because $\delta(a,\epsilon)$ is positive for all $a$.
  We can partition $[-M,M]^k$ into rectangles with side-widths smaller than $\delta_*$ such that the probability that $\bB$ lies on the boundary of one of the partitioning rectangles is 0. 
  Define $f_-(a,\bb) := \sum_{\iota} \indic{\bb \in I_\iota} \inf_{\bb' \in I_\iota} f(a,\bb')$ and $f_+(a,\bb) := \sum_{\iota} \indic{\bb \in I_\iota} \sup_{\bb' \in I_\iota} f(a,\bb')$, and note that on $\{\delta(a,\epsilon) < \delta^*\} \times [-M,M]^k$, we have $f_-(a,\bb) \leq f(a,\bb) \leq f_+(a,\bb)$ and $|f(a,\bb) - f_-(a,\bb)| < \epsilon$ and $|f(a,\bb) - f_+(a,\bb)| < \epsilon$.
  Thus, by the boundedness of $f$ and the high-probability bound on $\{\delta(a,\epsilon) < \delta^*\} \times [-M,M]^k$ 
  \begin{equation}\label{eq:f-pm-bound}
    \begin{gathered}
      \E[f_-(A_n,\bB_n)] - 2\epsilon < \E[f(A_n,\bB_n)] < \E[f_+(A_n,\bB_n)] + 2\epsilon\,,\\
      \E[f_-(A,\bB)] - 2\epsilon < \E[f(A,\bB)] < \E[f_+(A,\bB)] + 2\epsilon\,.
    \end{gathered}
  \end{equation}

  We show that $\E[f_-(A_n,\bB_n)] \rightarrow \E[f_-(A,\bB)]$.
  Fix $\xi > 0$.
  Take $0 = x_0 \leq \ldots \leq x_N = 1$ such that $x_{j+1} - x_j < \xi$ for all $j$.
  Let $S_{j\iota} = \{a \mid \inf_{\bb' \in I_\iota} f(a,\bb') \in [x_j,x_{j+1})\}$.
  Then
  \begin{equation*}
    \sum_{\iota,j} x_j\indic{a \in S_{j\iota},\bb \in I_\iota}+ \xi \geq f_-(a,\bb) \geq \sum_{\iota,j} x_j\indic{a \in S_{j\iota},\bb \in I_\iota}\,.
  \end{equation*}
  By \eqref{eq:rect-conv}, we conclude $\E[\sum_{\iota,j} x_j\indic{A_n \in S_{j\iota},\bB_n \in I_\iota}] \rightarrow \E[\sum_{\iota,j} x_j\indic{A \in S_{j\iota},\bB \in I_\iota}]$.
  Combined with the previous display and taking $\xi \rightarrow 0$, 
  we conclude that $\E[f_-(A_n,\bB_n)] \rightarrow \E[f_-(A,\bB)]$.
  Similarly, we may argue that $\E[f_+(A_n,\bB_n)] \rightarrow \E[f_+(A,\bB)]$.
  The first statment in the lemma now follows from taking $\epsilon \rightarrow 0$ after $n \rightarrow \infty$ in \eqref{eq:f-pm-bound}.

  The second statement in the lemma follows by observing that for any bounded continuous function $f:\reals^{k'}\rightarrow \reals$, we have that $f \circ \phi$ is bounded and is continuous in all but the first coordinate, so that we may apply the first part of the lemma to conclude $\E[f(\phi(A_n,\bB_n))] \rightarrow \E[f(\phi(A,\bB))]$.
\end{proof}

We will sometimes use the following alternative form of recursion \eqref{bamp-se-hd-reg} defining the lower bound in the high-dimensional regression model.
\begin{lemma}\label{lem:posterior-to-score}
  Consider a family, indexed by $x \in \reals$, of bounded probability densities $p(\cdot |x,u)$ with respect to some base measure $\mu_Y$.
  Then for $\ttau > 0$ and $\sigma \geq 0$ we have that
  \begin{equation*}
    \frac1{\ttau^2 }\E[\E[G_1|Y,G_0,U]^2] = \E_{G_0,Y}\left[\left(\frac{\mathrm{d}\phantom{b}}{\mathrm{d}x} \log \E_{G_1}p(Y|x + \sigma G_0 + \ttau G_1,U) \Big|_{x = 0}\right)^2\right]\,,
  \end{equation*}
  where $G_0,G_1 \stackrel{\mathrm{iid}}\sim \normal(0,1)$ and $Y | G_0,G_1,U$ had density $p(\cdot| \sigma G_0 + \ttau G_1,U)$ with respect to $\mu_Y$.
  In particular, the derivatives exist.
  (In this case, we may equivalently generate $Y = h( \sigma G_0 + \ttau G_1 , \bW)$ for $(\bW,U) \sim \mu_{\bW,U}$).
\end{lemma}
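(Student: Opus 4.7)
The plan is to establish the identity by rewriting the posterior mean $\E[G_1 \mid Y, G_0, U]$ as a logarithmic derivative of the marginal density of $Y$ given $G_0, U$, and then squaring and taking expectations.

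First I would apply Bayes' rule to identify the conditional density of $G_1$ given $(Y, G_0, U)$. Since $G_1 \sim \normal(0,1)$ independent of $(G_0,U)$ and $Y \mid G_1, G_0, U$ has density $p(\,\cdot\,\mid \sigma G_0 + \ttau G_1, U)$ with respect to $\mu_Y$, the conditional density of $G_1$ given the remaining variables is proportional to $\phi(g_1) p(Y \mid \sigma G_0 + \ttau g_1, U)$, where $\phi$ denotes the standard Gaussian density. Consequently,
\begin{equation*}
\E[G_1 \mid Y, G_0, U] \;=\; \frac{\int g_1\, \phi(g_1)\, p(Y \mid \sigma G_0 + \ttau g_1, U)\,\de g_1}{\int \phi(g_1)\, p(Y \mid \sigma G_0 + \ttau g_1, U)\,\de g_1}\,.
\end{equation*}

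Next, I would apply the Gaussian integration-by-parts identity $g_1 \phi(g_1) = -\phi'(g_1)$ to the numerator. Integrating by parts in $g_1$ and using the chain rule $\partial_{g_1} p(Y \mid \sigma G_0 + \ttau g_1, U) = \ttau\, \partial_x p(Y \mid x, U)\big|_{x=\sigma G_0 + \ttau g_1}$ gives
\begin{equation*}
\int g_1\, \phi(g_1)\, p(Y \mid \sigma G_0 + \ttau g_1, U)\,\de g_1 \;=\; \ttau \int \phi(g_1)\, \partial_x p(Y \mid x, U)\big|_{x=\sigma G_0 + \ttau g_1}\,\de g_1\,.
\end{equation*}
Introducing the function $F(x) := \E_{G_1}\, p(Y \mid x + \sigma G_0 + \ttau G_1, U)$, so that $F(0)$ equals the denominator above, dominated convergence (justified by the boundedness assumption on $p(\,\cdot\,\mid \,\cdot\,, u)$ and the Gaussian tails of $\phi$) allows one to interchange differentiation and expectation to obtain that $F$ is differentiable at $0$ with $F'(0)$ equal to the right-hand side of the previous display, divided by $\ttau$. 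Therefore
\begin{equation*}
\E[G_1 \mid Y, G_0, U] \;=\; \ttau\, \frac{F'(0)}{F(0)} \;=\; \ttau\, \frac{\de}{\de x}\log F(x)\Big|_{x=0}\,.
\end{equation*}

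Squaring both sides, dividing by $\ttau^2$, and taking expectation over $(Y, G_0, U)$ yields the claimed identity. The only subtle point is justifying the differentiation under the integral sign and the finite-ness of the resulting moments; this is where I expect the routine-but-careful work, and it follows from the stated boundedness of $p(\,\cdot\,\mid x, u)$ together with $G_1 \in L^2$. The equivalent representation $Y = h(\sigma G_0 + \ttau G_1, \bW)$ for $(\bW,U) \sim \mu_{\bW,U}$ is then just a restatement of the construction of $Y$ via its conditional density.
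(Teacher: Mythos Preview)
Your proof reaches the same key identity $\E[G_1\mid Y,G_0,U] = \ttau\,\frac{\de}{\de x}\log F(x)\big|_{x=0}$ as the paper, and the overall structure is the same. There is, however, a technical gap in how you justify the differentiation. Your route integrates by parts to move the derivative from $\phi$ onto $p$, and then asserts $F'(0)=\int\phi(g_1)\,\partial_x p(Y\mid \sigma G_0+\ttau g_1,U)\,\de g_1$; both steps presuppose that $\partial_x p$ exists, which the lemma does not assume---only boundedness of $p(\cdot\mid x,u)$ is given, and boundedness alone does not let you differentiate $p$ under the integral.

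The paper sidesteps this by first changing variables $s=\ttau g_1$, writing
\[
F(x)=\int p(Y\mid \sigma G_0+s,U)\,\frac1{\sqrt{2\pi}\ttau}e^{-(s-x)^2/(2\ttau^2)}\,\de s,
\]
so that the $x$-dependence sits entirely in the Gaussian kernel. Differentiating the kernel brings down the factor $(s-x)/\ttau^2$, and now dominated convergence is legitimately justified by boundedness of $p$ alone. Evaluating at $x=0$ and undoing the change of variables yields $F'(0)=\frac1\ttau\int g_1\phi(g_1)\,p(Y\mid\sigma G_0+\ttau g_1,U)\,\de g_1$, which is exactly your numerator divided by $\ttau$, without ever invoking $\partial_x p$. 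So: drop the integration-by-parts step and differentiate the Gaussian kernel directly; then your argument goes through under the stated hypotheses.
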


The preceding lemma applies, in particular, for $p$ as in \textsf{R4}. 
It then provides an alternative form of the second equation in recursion \eqref{bamp-se-hd-reg}.

\begin{proof}[Lemma \ref{lem:posterior-to-score}]
We have
\begin{equation*}
  \E_{G_1}p(Y|x + \sigma G_0+ \ttau G_1,U) = \int p(Y| \sigma G_0 + s ,U ) \frac1{\sqrt{2\pi}\ttau}e^{-\frac1{2\ttau^2} (s-x)^2} \de g\,,
\end{equation*}
so that 
\begin{align*}
  \frac{\mathrm{d}\phantom{b}}{\mathrm{d}x} \E_{G_1}p(Y|x + \sigma G_0+ \ttau G_1,U) &= \frac1{\ttau^2}\int p(Y| \sigma G_0 + s , U) \frac{(s-x)}{\sqrt{2\pi}\ttau}e^{-\frac1{2\ttau^2} (s-x)^2} \de g\,,
\end{align*}
where the boundedness of of $p$ allows us to exchange integration and differentition.
Thus, 
\begin{equation*}
  \frac{\mathrm{d}\phantom{b}}{\mathrm{d}x} \log \E_{G_1}p(Y|x + \sigma G_0+ \ttau G_1 ,U ) = \frac1{\ttau}\E[G_1|Y,G_0,U]\,.
\end{equation*}
The result follows.
\end{proof}

Finally, we collect some results on the Bayes risk with respect to quadratically-bounded losses $\ell: \reals^k \times \reals^k \rightarrow \reals_{\geq 0}$.
Recall that quadratically-bounded means that $\ell$ is pseudo-Lipschitz of order 2 and also satisfies
\begin{equation}\label{eq:loss-growth-condition}
	|\ell(\bvartheta,\bd) - \ell(\bvartheta',\bd)| 
	\leq 
	C\left(1 + \sqrt{\ell(\bvartheta,\bd)} + \sqrt{\ell(\bvartheta',\bd)})\right)\|\bvartheta - \bvartheta'\|\,.
\end{equation}
We consider a setting $(\bTheta,\bV) \sim \mu_{\bTheta,\bV} \in \cuP_2(\reals^k\times \reals^k)$, $\bZ \sim \normal(0,\bI_k)$ independent and $\tau,K,M \geq 0$.
Define $\bTheta^{(K)}$ by $\Theta^{(K)}_i = \Theta_i \indic{|\Theta_i| \leq K}$.
Denote by $\mu_{\bTheta^{(K)},\bV}$ the joint distribution of $\bTheta^{(K)}$ and $\bV$,
and by $\mu_{\bTheta^{(K)}|\bV} : \reals^k \times \mathcal{B} \rightarrow [0,1]$ a regular conditional probability distribution for $\bTheta^{(K)}$ conditioned on $\bV$.
Define the posterior Bayes risk
\begin{equation}\label{eq:conditional-bayes-risk}
	R(\by,\tau,\bv,K, M) := \inf_{\|\bd\|_\infty \leq M} \int \frac1Z \ell(\bvartheta,\bd) e^{-\frac1{2\tau^2}\|\by - \bvartheta\|^2} \mu_{\bTheta^{(K)}|\bV}(\bv,\de \bvartheta)\,,
\end{equation}
where $Z = \int   e^{-\frac1{2\tau^2}\|\by - \bvartheta\|^2} \mu_{\bTheta^{(K)}|\bV}(\bv,\de \bvartheta)$ is a normalization constant.
It depends on $\by,\tau,\bv,K$. 
When required for clarity, we write $Z(\by,\tau,\bv,K)$.

\begin{lemma}\label{lem:properties-of-bayes-risk}
    The following properties hold for the Bayes risk with respect to pseudo-Lipschitz losses of order 2 satsifying \eqref{eq:loss-growth-condition}.
    \begin{enumerate}[(a)]
        \item 
        For any $\tau,K,M$, with $K,M$ possibly equal to infinity, the Bayes risk is equal to the expected posterior Bayes risk.
        That is,
        \begin{equation}\label{eq:bayes-is-exp-post}
            \inf_{\hat \btheta(\cdot)} \E[\ell(\bTheta^{(K)},\hat \btheta(\bTheta^{(K)} + \tau \bz,\bV)] = \E[R(\bY^{(K)},\tau,\bV,K,M)]\,,
        \end{equation}
        where $\bY^{(K)} = \bTheta^{(K)} + \tau \bZ$ with $\bZ \sim \normal(\bzero,\bI_k)$ independent of $\bTheta^{(K)}$ and the infimum is taken over all measurable functions $(\reals^k)^2 \rightarrow [-M,M]^k$.
        Moreover,
    	\begin{align}
    		\E[R(\bTheta^{(K)} + \tau \bZ,\tau,\bV,K,\infty)]
    		&= \lim_{M \rightarrow \infty} \E[R(\bTheta^{(K)} + \tau \bZ,\tau,\bV,K,M)]\,.\label{eq:risks-trunc}
    	\end{align}
        \item 
    	For a fixed $K < \infty$, 
    	the posterior Bayes risk is bounded: $R(\by,\tau,\bv,K,M) \leq \bar R(K)$ for some function $\bar R$ which does not depend on $\by,\tau,\bv,M$. 
    	Further, for $K < \infty$ the function $(\by,\tau) \mapsto R(\by,\tau,\bv,K,M)$ is continuous on $\reals^k \times \reals_{>0}$.
    	\item 
    	The Bayes risk is jointly continuous in truncation level $K$ and noise variance $\tau$.
    	This is true also at $K = \infty$:
    	\begin{align}\label{eq:risks-trunc-limit}
			\E[R(\bTheta^{(K)} + \tau \bZ,\tau,\bV,K,\infty)] &= \lim_{\substack{K \rightarrow \infty\\\tau'\rightarrow \tau}} \E[R(\bTheta^{(K)} + \tau' \bZ,\tau',\bV,K,\infty)]\,,
    	\end{align}
    	where the limit holds for any way of taking $K,\tau'$ to their limits (ie., sequentially or simultaneously).
	\end{enumerate}
\end{lemma}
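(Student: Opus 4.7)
For part (a), the identity \eqref{eq:bayes-is-exp-post} is a standard Bayesian decision theory fact proven by measurable selection. The Kuratowski--Ryll-Nardzewski theorem applied to the closed-valued multifunction of $\eps$-minimizers of the posterior risk yields, for each $\eps > 0$, a measurable $\eps$-minimizer $\hbtheta_\eps$; Fubini's theorem then gives $\E[\ell(\bTheta^{(K)}, \hbtheta_\eps(\bY^{(K)}, \bV))] \leq \E[R(\bY^{(K)}, \tau, \bV, K, M)] + \eps$, and the reverse inequality is immediate since a pointwise infimum is a lower bound on the integrand. For \eqref{eq:risks-trunc}, observe that $M \mapsto R(\by, \tau, \bv, K, M)$ is non-increasing; quadratic boundedness of $\ell$ implies that truncating any near-minimizer coordinate-wise to $[-M, M]^k$ costs at most $o(1)$ in posterior risk as $M \to \infty$, yielding pointwise convergence to $R(\by, \tau, \bv, K, \infty)$, and monotone convergence passes this through the expectation.

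For part (b), setting $\bd = \bzero$ and applying pseudo-Lipschitzness gives $\ell(\bvartheta, \bzero) \leq C(1 + \|\bvartheta\|^2) \leq C(1 + k K^2)$ uniformly on the support of $\bTheta^{(K)}$, yielding the uniform bound $\bar R(K) := C(1 + kK^2)$. For continuity of $(\by, \tau) \mapsto R(\by, \tau, \bv, K, M)$ on $\reals^k \times \reals_{>0}$ with $K < \infty$, the integrand of \eqref{eq:conditional-bayes-risk} is jointly continuous in $(\by, \tau, \bd, \bvartheta)$ and uniformly bounded on compact subsets where $\|\bd\|_\infty \leq M$ and $\|\bvartheta\|_\infty \leq K$, while $Z(\by, \tau, \bv, K)$ is continuous and bounded away from zero on such sets. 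Since $\{\|\bd\|_\infty \leq M\}$ is compact, the infimum is jointly continuous by Berge's maximum theorem.

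Part (c) is the main obstacle: at $K = \infty$ the parameter space becomes unbounded and the Bayes-optimal decision rule need not be continuous, so a direct compactness argument fails. The plan is to establish upper and lower semicontinuity of $f(K, \tau') := \E[R(\bTheta^{(K)} + \tau' \bZ, \tau', \bV, K, \infty)]$ at $(\infty, \tau)$ separately. For upper semicontinuity: by part (a), an $\eps$-optimal measurable estimator for the $(\infty, \tau)$ problem may be taken with $\|\hbtheta\|_\infty \leq M$ for $M$ large; a Gaussian mollification renders it globally Lipschitz, and the pseudo-Lipschitz bound on $\ell$ in its second argument together with $L^2$-closeness of the mollified version shows the additional cost is $o(1)$. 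Plugging this bounded Lipschitz $\hbtheta$ into the $(K, \tau')$ problem, decompose
\begin{equation*}
|\E[\ell(\bTheta^{(K)}, \hbtheta(\bTheta^{(K)} + \tau' \bZ, \bV))] - \E[\ell(\bTheta, \hbtheta(\bTheta + \tau \bZ, \bV))]|
\end{equation*}
into two pieces: (i) replacing $\bTheta$ by $\bTheta^{(K)}$ in the first argument of $\ell$, controlled via the growth condition \eqref{eq:loss-growth-condition} and Cauchy--Schwarz using $\|\bTheta - \bTheta^{(K)}\|_{L^2} \to 0$ (from finite second moment); and (ii) replacing $\bTheta + \tau \bZ$ by $\bTheta^{(K)} + \tau' \bZ$ in the second argument, controlled via the pseudo-Lipschitz property of $\ell$, Lipschitzness of $\hbtheta$, and $\|\bTheta + \tau\bZ - \bTheta^{(K)} - \tau'\bZ\|_{L^2} \to 0$. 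Both pieces vanish as $K \to \infty$, $\tau' \to \tau$, giving $\limsup f(K, \tau') \leq f(\infty, \tau) + 2\eps$. The lower bound is symmetric: starting from bounded Lipschitz estimators that are $\eps$-optimal for each $(K, \tau')$ problem (with $L^2$-norms uniformly controlled by $\bar R(K)^{1/2}$ plus a constant via quadratic boundedness), the same two-piece decomposition shows their risk for the $(\infty, \tau)$ problem is at most $f(K, \tau') + o(1)$, yielding $\liminf f(K, \tau') \geq f(\infty, \tau) - \eps$. Taking $\eps \to 0$ completes the argument at $K = \infty$; continuity at finite $K$ follows from part (b) combined with monotone convergence in $M$ and an analogous truncation argument handling variation in $K$.
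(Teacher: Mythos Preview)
Your treatment of part (a) is fine and essentially matches the paper, though the paper simply takes the $\arg\min$ directly (which exists by continuity of the posterior risk in $\bd$ and compactness of $[-M,M]^k$) rather than invoking Kuratowski--Ryll-Nardzewski.

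There is a gap in part (b). Berge's maximum theorem requires the constraint set $\{\|\bd\|_\infty \le M\}$ to be compact, so your argument only covers $M<\infty$. The lemma as stated (and as used later in the proof of Lemma~\ref{lem:local-info-theory-lb}) needs continuity for $M=\infty$, and a pointwise decreasing limit of continuous functions need not be continuous. The paper handles this by computing explicit Lipschitz constants: for any $\bd$ with posterior risk at most $\bar R$, one has $\|\nabla_{\by}\log p^*(\bvartheta|\by,\tau,\bv)\|\le 2K\sqrt{k}/\tau^2$, giving a Lipschitz bound $2K\sqrt{k}\bar R/\tau^2$ on the posterior risk in $\by$ that is uniform over all such $\bd$. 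Since the infimum defining $R$ can be restricted to $\bd$ with posterior risk at most $\bar R(K)$, and infima preserve uniform Lipschitz bounds, this yields continuity for every $M$ including $\infty$.

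Your strategy for part (c) is genuinely different from the paper's and the upper semicontinuity half is sound, but the lower semicontinuity argument breaks. You claim that $\eps$-optimal estimators for the $(K,\tau')$ problem can be taken bounded Lipschitz ``with $L^2$-norms uniformly controlled by $\bar R(K)^{1/2}$ plus a constant.'' But $\bar R(K)\to\infty$ as $K\to\infty$, so this is not a uniform bound, and without one the constants in your two-piece decomposition blow up. More fundamentally, the loss $\ell$ is not assumed coercive in its second argument, so there is no a priori reason near-optimal estimators have uniformly bounded $L^2$ norm; mollifying them can then change the risk by an uncontrolled amount. The paper avoids bounding the estimator altogether by working at the level of posterior risks: it takes a near-optimal $\bd^*$ for $R(\by,\tau,\bv,K,\infty)$ and uses the growth condition \eqref{eq:loss-growth-condition} to derive the key inequality
\[
\ell(\bvartheta,\bd^*) \le \bigl(1+\sqrt{2(1+\eps)R(\by,\tau,\bv,K,\infty)}+3C\|\bvartheta-\bvartheta^*\|\bigr)^2,
\]
with $\bvartheta^*$ in a fixed compact set. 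This bounds the loss by the posterior risk itself rather than by any norm of $\bd^*$, which is what makes the subsequent dominated-convergence and uniform-integrability arguments go through uniformly in $K$.
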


\begin{proof}[Proof of Lemma \ref{lem:properties-of-bayes-risk}(a)]
	For any measurable $\hat \btheta: \reals^k \times \reals^k \rightarrow [-M,M]^k$,
	\begin{align}
		\E[\ell(\bTheta^{(K)},\hat \btheta(\bTheta^{(K)} + \tau \bZ,\bV))] &= \E[\E[\ell(\bTheta^{(K)},\hat \btheta(\bTheta^{(K)} + \tau \bZ,\bV))|\bTheta^{(K)} + \tau \bZ,\bV]] \nonumber \\
		&\geq \E[R(\bTheta^{(K)} + \tau \bZ,\tau,\bV,K,M)]\,.\label{eq:BayesLB}
	\end{align}
	For $M < \infty$, equality obstains.
	Indeed,
	we may define
	\begin{equation}\label{eq:achieve-bayes}
	  \hat \btheta^{(M)}(\by,\bv;\tau) = \arg\min_{\|\bd\|_\infty \leq M} \int \frac1{Z} \ell(\bvartheta,\bd) e^{-\frac1{2\tau^2}\|\by - \bvartheta\|^2}\mu_{\bTheta^{(K)}|\bV}(\bv,\de \bvartheta)\,, 
	\end{equation}
	because the integral is continuous in $\bd$ by dominated convergence.
	Then $\E[\ell({\bTheta^{(K)}},\hat \btheta^{(M)}( \bY ,\bV ; \tau))] = \E[R(\bY ,\tau,\bV,K,M)]$
	when $\bY = \bTheta^{(K)} + \tau \bZ$.
	Observe $R(\by,\tau,\bv,K,M) \downarrow R(\by,\tau,\bv,K,\infty)$ as $M \rightarrow \infty$ with the other arguments fixed.
	Thus, $\E[R(\bTheta^{(K)} + \tau \bZ,\tau,\bV,K,M)] \downarrow \E[R(\bTheta^{(K)} + \tau \bZ,\tau,\bV,K,\infty)]\,$ in this limit.
	Because $\E[R(\bTheta^{(K)} + \tau \bZ,\tau,\bV,K,\infty)]$ is a lower bound on the Bayes risk at $M = \infty$ by \eqref{eq:BayesLB} and we may achieve risk arbitrarily close to this lower bound by taking $M \rightarrow \infty$ in \eqref{eq:achieve-bayes},
	we conclude \eqref{eq:bayes-is-exp-post} at $M = \infty$ as well.
\end{proof}

\begin{proof}[Proof of Lemma \ref{lem:properties-of-bayes-risk}(b)]
	The quantity $R(\by,\tau,\bv,K,M)$ is non-negative.
	Define $\bar R(K) = \max_{\|\bvartheta\|_\infty \leq K} \ell(\bvartheta,\bzero)$. 
	Observe that $R(\by,\tau,\bv,K,M) \leq \bar R(K)$ for all $\by,\tau,\bv,K,M$.
	Let $p^*(\btheta|\by,\tau,\bv,K) = \frac1Z e^{-\frac1{2\tau^2}\|\by - \bvartheta\|^2}$.
	For any fixed $\bd$, we have
	\begin{align*}
		\left\|\nabla_{\by} \int \ell(\bvartheta,\bd) p^*(\bvartheta|\by,\tau,\bv,K) \mu_{\bTheta^{(K)}|\bV}(\bv,\mathrm{d}\bvartheta)\right\| &\leq \int \ell(\bvartheta,\bd) p^*(\bvartheta|\by,\tau,v) \left\|\nabla_{\by}\log p^*(\bvartheta|\by,\tau,\bv)\right\| \mu_{\bTheta^{(K)}|\bV}(\bv,\mathrm{d}\bvartheta) \\
		&\leq \frac{2K\sqrt{k}}{\tau^2} \int \ell(\bvartheta,\bd) p^*(\bvartheta|\by,\tau,\bv) \mu_{\bTheta^{(K)}|\bV}(\bv,\mathrm{d}\bvartheta)\,,
	\end{align*}
	where we have used that $\|\nabla_{\by}\log p^*(\bvartheta|\by,\tau,\bv)\| = \frac1{\tau^2}(\bvartheta - \E_{\bTheta^{(K)}}[\bTheta^{(K)}]) \leq 2K\sqrt{k}/\tau^2$,
	and the expectation is taken with respect to $\bTheta^{(K)}$ having density $p^*(\bvartheta|\by,\tau,\bv)$ with respect to $\mu_{\bTheta^{(K)}|V}(\bv,\cdot)$.
	Thus, for fixed $\tau,\bd,\bv$ satisfying $ \int \ell(\bvartheta,\bd) p^*(\bvartheta|\by,\tau,\bv) \mu_{\Theta|V}(\bv,\mathrm{d}\bvartheta) \leq \bar R$, the function $\by \mapsto \int \ell(\bvartheta,\bd) p^*(\bvartheta|\by,\tau,\bv) \mu_{\Theta|V}(\bv,\mathrm{d}\bvartheta)$ is $2K\sqrt{k}\bar R/\tau^2$-Lipschitz.
	Because the infimum defining $R$ can be taken over such $\bd$ and infima retain a uniform Lipschitz property, 
	$R(\by,\tau,\bv,K,M)$ is $2K\sqrt{k}\bar R/\tau^2$-Lipschitz in $\by$ for fixed $\tau,\bv,K,M$.
	By a similar argument, we can establish that $R(\by,\tau,\bv,K,M)$ is $2(K^2k + 2\|\by\|K\sqrt{k})/\bar \tau^3$-Lipschitz in $\tau$ on the set $\tau > \bar \tau$ for any fixed $\bar \tau > 0$ and any fixed $\by,\bv,K,M$.
	We conclude $(\by,\tau) \mapsto R(\by,\tau,\bv,K,M)$ is continuous on $\reals^k \times \reals_{>0}$.
	Lemma \ref{lem:properties-of-bayes-risk}(b) has been shown.
\end{proof}

\begin{proof}[Proof of Lemma \ref{lem:properties-of-bayes-risk}(c)]
	Finally, we prove \eqref{eq:risks-trunc-limit}.
	For any $K > 0$, we may write\footnote{Precisely, for any regular conditional probability distribution $\mu_{\bTheta|\bV}$ for $\bTheta$ given $\bV$, this formula gives a valid version of a regular conditional probability distribution for $\bTheta^{(K)}$ given $\bV$. 
	We assume we use this version throughout our proof.}
	\begin{equation}\label{eq:reg-cond-decomposition}
		\mu_{\bTheta^{(K)}|\bV}(\bv,\cdot) 
		= 
		\mu_{\bTheta|\bV}(\bv,\cdot)|_{[-K,K]^k} + \mu_{\bTheta|\bV}(\bv,([-K,K]^k)^c) \delta_{\bzero}(\cdot)\,.
	\end{equation}
	Choose $\bar K,\epsilon' > 0$ such that $|\tau' - \tau| < \epsilon'$ implies
	\begin{equation*}
		\int_{[-\bar K,\bar K]^k} \frac1{Z(\by,\tau',\bv,\infty)}e^{-\frac1{2{\tau'}^2}\|\by - \bvartheta\|^2}\mu_{\bTheta|\bV}(\bv,\de \bvartheta) \geq \frac12 \int \frac1{Z(\by,\tau,\bv,\infty)} e^{-\frac1{2\tau^2}\|\by - \bvartheta\|^2}\mu_{\bTheta|\bV}(\bv,\de \bvartheta)\,.
	\end{equation*}
	Fix $\epsilon > 0$ and $K' > K > 0$ with $K'$ possibly equal to infinity.
	By \eqref{eq:conditional-bayes-risk}, we may choose $\bd^*$ such that  
	\begin{equation}\label{eq:bayes-estimate-approx}
		\int \frac1{Z(\by,\tau,\bv,K)} \ell(\bvartheta,\bd^*) e^{-\frac1{2\tau^2}\|\by - \bvartheta\|^2}\mu_{\bTheta^{(K)}|\bV}(\bv,\de \bvartheta) \leq (1 + \epsilon)R(\by,\tau,\bv,K,\infty)\,.
	\end{equation}
	By the definition of $\bar K$, there exists $\bvartheta^* \in [-\bar K,\bar K]^k$ such that
	\begin{equation*}
		\ell(\bvartheta^*,\bd^*) \leq 2(1+\epsilon) R(\by,\tau,\bv,K,\infty)\,.
	\end{equation*}
	By \eqref{eq:loss-growth-condition}, we conclude that 
	\begin{align*}
		\ell(\bvartheta , \bd^*) 
		&\leq 
		C\left(1 +  \sqrt{2(1+\epsilon)R(\by,\tau,\bv,K,\infty)} + \sqrt{\ell(\bvartheta,\bd^*)}\right)\|\bvartheta - \bvartheta^*\|\,,
	\end{align*}
	whence 
	\begin{equation}\label{eq:ell-bayes-risk-bound}
		\ell(\bvartheta,\bd^*) \leq \left(1 +  \sqrt{2(1+\epsilon)R(\by,\tau,\bv,K,\infty)} + 3C\|\bvartheta - \bvartheta^*\|\right)^2\,.
	\end{equation}
	%
	Then
	\begin{align*}
		&\left|\int \ell(\bvartheta,\bd^*) e^{-\frac1{2{\tau'}^2}\|\by - \bvartheta\|^2}\mu_{\bTheta^{(K')}|\bV}(\bv,\de \bvartheta) - \int \ell(\bvartheta,\bd^*) e^{-\frac1{2\tau^2}\|\by - \bvartheta\|^2}\mu_{\bTheta^{(K)}|\bV}(\bv,\de \bvartheta)\right| \\
		&\qquad \leq \left|\int \ell(\bvartheta,\bd^*) e^{-\frac1{2{\tau'}^2}\|\by - \bvartheta\|^2}\mu_{\bTheta^{(K')}|\bV}(\bv,\de \bvartheta) - \int \ell(\bvartheta,\bd^*) e^{-\frac1{2{\tau'}^2}\|\by - \bvartheta\|^2}\mu_{\bTheta^{(K)}|\bV}(\bv,\de \bvartheta)\right| \\
    		&\qquad\qquad + \left|\int \ell(\bvartheta,\bd^*) e^{-\frac1{2{\tau'}^2}\|\by - \bvartheta\|^2}\mu_{\bTheta^{(K)}|\bV}(\bv,\de \bvartheta) - \int \ell(\bvartheta,\bd^*) e^{-\frac1{2{\tau}^2}\|\by - \bvartheta\|^2}\mu_{\bTheta^{(K)}|\bV}(\bv,\de \bvartheta)\right| \\
		&\qquad \leq \int_{([-K,K]^k)^c} \ell(\bvartheta , \bd^* ) e^{-\frac1{2{\tau'}^2}\|\by - \bvartheta\|^2} \mu_{\bTheta|\bV}(\bv , \de \bvartheta) + \ell(\bzero,\bd^*)e^{-\frac1{2{\tau'}^2}\|\by\|^2} \mu_{\bTheta|\bV}(\bv,([-K,K]^k)^c)\\
    		&\qquad\qquad + \left|\int \ell(\bvartheta,\bd^*) e^{-\frac1{2{\tau'}^2}\|\by - \bvartheta\|^2}\mu_{\bTheta^{(K)}|\bV}(\bv,\de \bvartheta) - \int \ell(\bvartheta,\bd^*) e^{-\frac1{2{\tau}^2}\|\by - \bvartheta\|^2}\mu_{\bTheta^{(K)}|\bV}(\bv,\de \bvartheta)\right| \\
		&\qquad\leq \xi(K,\tau')(1 + R(\by,\tau,\bv,K,\infty))\,,
	\end{align*}
	for some $\xi(K,\tau')\rightarrow 0$ as $K\rightarrow \infty$, $\tau' \rightarrow \tau$ because the conditional measure $\mu_{\bTheta|\bV}(\bv,\cdot)$ has finite second moment and $\ell$ is bounded by \eqref{eq:ell-bayes-risk-bound}. 
	Then, by \eqref{eq:bayes-estimate-approx},
	\begin{align*}
		Z(\by,\tau',\bv,K')R(\by,\tau',\bv,K',\infty) &\leq \int \ell(\bvartheta,\bd^*) e^{-\frac1{2\tau^2}\|\by - \bvartheta\|^2}\mu_{\bTheta^{(K')}|\bV}(\bv,\de \bvartheta) \\
		&\leq (1 + \epsilon)Z(\by,\tau,\bv,K)R(\by,\tau,\bv,K,\infty) +  \xi(K,\tau')(1 + R(\by,\tau,\bv,K,\infty))\,.
	\end{align*}
	By dominated convergence, we have that $Z(\by,\tau',\bv,K') \rightarrow Z(\by,\tau,\bv,\infty)$ as $\tau'\rightarrow \tau, K'\rightarrow \infty$.
	Also, $\bar R(K) = \max_{\|\bvartheta\|_\infty \leq K} \ell (\bvartheta,\bzero)$ cannot diverge at finite $K$.
	Thus, applying the previous display with $K,\epsilon$ fixed allows us to conclude that $R(\by,\tau,\bv,K',\infty)$ is uniformly bounded over $K' > K$ and $\tau'$ in a neighborhood of $\tau$.
	Then, taking $K' = \infty$ and $K \rightarrow \infty$, $\tau' \rightarrow \tau$ followed by $\epsilon \rightarrow 0$ allows us to conclude that
	\begin{align}\label{eq:conditional-bayes-risk-convergence}
		\lim_{\substack{K \rightarrow \infty\\ \tau'\rightarrow \tau}} R(\by,\tau',\bv,K,\infty) = R(\by,\tau,\bv,\infty,\infty)\,.
	\end{align}
	for every fixed $\by,\bv$.
	Moreover,
	\begin{align*}
		R(\by,\tau,\bv,K,M) &= \inf_{\|\bd\|_\infty \leq M} \int \frac1{Z} \ell(\bvartheta,\bd) e^{-\frac1{2\tau^2}\|\by - \bvartheta\|^2}\mu_{\bTheta^{(K)}|\bV}(\bv,\de \bvartheta) \\
		&\leq \int \frac1{Z} \ell(\bvartheta,\bzero) e^{-\frac1{2\tau^2}\|\by - \bvartheta\|^2} \mu_{\bTheta^{(K)}|\bV}(\bv,\de \bvartheta) \\
		&\leq \int \frac1Z C(1 + \|\bTheta^{(K)}\|^2) e^{-\frac1{\tau^2}(\by - \bvartheta)^2} \mu_{{\bTheta^{(K)}}|\bV}(\bv,\de \bvartheta)\\
		&= C(1 + \E[\|{\bTheta^{(K)}}\|^2 | {\bTheta^{(K)}} + \tau \bG = \by,\bV=\bv])\,.
	\end{align*}
	Thus, $R(\bTheta^{(K)} + \tau\bZ,\tau,\bV,K,M)$ is uniformly integrable as we vary $\tau,K,M$.	
	Because the total variation distance between $(\bTheta^{(K)} + \tau' \bZ,\bV)$ and $(\bTheta + \tau \bZ,\bV))$ goes to 0 as $K\rightarrow \infty$ and $\tau' \rightarrow \tau$, 
	for any discrete sequence $(K,\tau') \rightarrow (\infty,\tau)$,
    there exists a probability space containing variables $\tilde \bY^{(K,\tau')}, \tilde \bV, \tilde \bY$ such that $(\tilde \bY^{(K,\tau')},\tilde \bV) = (\tilde \bY,\tilde \bV)$ eventually.
    Thus, Eq.~\eqref{eq:conditional-bayes-risk-convergence} and uniform integrability imply \eqref{eq:risks-trunc-limit}.
\end{proof}

\section{Proof for reduction from GFOMs to AMP (Lemma \ref{lem:gfom-to-amp})}
\label{app:proofs-gfom-to-amp}

In this section, we prove Lemma \ref{lem:gfom-to-amp}.

\subsection{A general change of variables}

For any GFOM \eqref{gfom}, there is a collection of GFOMs to which it is, up to a change of variabes, equivalent.
In this section, we specify these GFOMs and the corresponding changes of variables.

The change of variables is determined by a collection of $r \times r$ matrices $(\bxi_{t,s})_{t\ge 1,1\le s \le t}$, $(\bzeta_{t,s})_{t \ge 1, 0 \leq s < t}$.
We will often omit subscripts outside of the parentheses.
Define recursively the functions $(f_t)_{t\geq 0}$, $(\phi_t)_{t\geq 1}$
\begin{subequations}\label{cov}
\begin{equation}\label{cov-f-phi}
\begin{aligned}
    f_t(\bb^1,\ldots,\bb^t;y,\bu) &= F_t^{(1)}(\phi_1(\bb^1;y,\bu),\ldots,\phi_t(\bb^1,\ldots,\bb^t;y,\bu);y,\bu)\\
    \phi_t(\bb^1,\ldots,\bb^t;y,\bu) &= \bb^t + \sum_{s=0}^{t-1}  f_s(\bb^1,\ldots,\bb^s;y,\bu) \bzeta_{t,s}^\sT \\
    &\qquad\qquad + G_t^{(2)}(\phi_1(\bb^1;y,\bu),\ldots,\phi_{t-1}(\bb^1,\ldots,\bb^{t-1};y,\bu);y,\bu),
\end{aligned}
\end{equation} 
initialized by $f_0(y,\bu) = F_0^{(1)}(y,\bu)$ (here $\bb^s,\bu \in \reals^r$), and define recursively the functions $(g_t)_{t \geq 1}$, $(\varphi_t)_{t \geq 1}$
\begin{equation}\label{cov-g-varphi}
\begin{aligned}
    \varphi_{t+1}(\ba^1,\ldots,\ba^{t+1};\bv) &= \ba^{t+1} + \sum_{s=1}^t g_s(\ba^1,\ldots,\ba^{t+1};\bv) \bxi_{t,s}^\sT \\
    &\qquad\qquad + F_t^{(2)}( \phi_1( \ba^1 ;\bv) , \ldots , \phi_t(\ba^1,\ldots, \ba^t ;\bv) ; \bv ),\\
    \qquad g_t(\ba^1,\ldots,\ba^t;\bv) &= G_t^{(1)}(\varphi_1(\ba^1;\bv),\ldots,\varphi_t(\ba^1,\ldots,\ba^t;\bv);\bv),
\end{aligned}
\end{equation}
initialized by $\varphi_1(\ba^1;\bv) = a^1 + F_0^{(2)}(\bv)$ (here $\ba^s,\bv \in \reals^r$).
\end{subequations}
Algebraic manipulation verifies that the iteration
\begin{equation}\label{gfom-post-cov}
\begin{aligned}
    \ba^{t+1} &= \bX^\mathsf{T} f_t(\bb^1,\ldots,\bb^t;y,\bu) - \sum_{s = 1}^t g_s(\ba^1,\ldots,\ba^s;\bv) \bxi_{t,s}^\sT ,\\
    \bb^t &= \bX g_t(\ba^1,\ldots,\ba^t;\bv) - \sum_{s = 0}^{t-1}  f_s(\bb^1,\ldots,\bb^s;y,\bu) \bzeta_{t,s}^\sT
\end{aligned}
\end{equation}
initialized by $\ba^1 = \bX^\mathsf{T} f_{0}(y,\bu)$ generates sequences $(\ba^t)_{t\geq 1}$, $(\bb^t)_{t\geq 1}$ which satisfy
\begin{align*}
    \bv^t = \varphi_t(\ba^1,\ldots,\ba^t;\bv),\quad t \geq 1,\\
    \bu^t = \phi_t(\bb^1,\ldots,\bb^t;y,\bu),\quad t \geq 1.
\end{align*}
Thus, $(\bxi_{t,s})$, $(\bzeta_{t,s})$ index a collection of GFOMs which, up to a change of variables, are equivalent. 

\subsection{Approximate message passing and state evolution}

We call the iteration \eqref{gfom-post-cov} an approximate message passing algorithm if the matrices $(\bxi_{t,s}),(\bzeta_{t,s})$ satisfy a certain model-specific recursion involving the functions $f_t,g_t$.
The state evolution characterization of the iterates (see Eq.~\eqref{eq:se}) holds whenever the matrices $\bxi_{t,s}$, $\bzeta_{t,s}$ satisfy this recursion.
In this section, we specify this recursion and the parameters $(\balpha_s),(\bT_{s,s'})$ in both the high-dimensional regression and low-rank matrix estimation models.

\subsubsection{High-dimensional regression AMP}

In the high-dimensional regression model, $r = 1$ and $\xi_{t,s}$, $\zeta_{t,s}$, $\alpha_t$, and $T_{s,s'}$ will be scalars (hence, written with non-bold font).
The recursion defining $\xi_{t,s}$, $\zeta_{t,s}$ also defines $(\alpha_t)$, $T_{s,s'}$ as well as a collection of scalars $(\Sigma_{s,t})_{s,t \geq 0}$ which did not appear in the statement of Lemma \ref{lem:gfom-to-amp}.
The recursion, whose lines are implemented in the order in which they appear, is
\begin{equation}\label{amp-scalars-hd-reg}
    \begin{split}
        \xi_{t,s} &= \E[\partial_{B^s} f_t(B^1,\ldots,B^t;h(B^0,W),U)],\quad 1 \leq s \leq t, \\
        \alpha_{t+1} &= \E[\partial_{B^0} f_t(B^1,\ldots,B^t;h(B^0,W),U)],\\
        T_{s+1,t+1} &= \E[f_s(B^1,\ldots,B^s;h(B^0,W),U)f_t(B^1,\ldots,B^t;h(B^0,W),U)], \quad 0 \leq s \leq t,\\        
        \zeta_{t,s} &= \frac1\delta \E[\partial_{Z^{s+1}} g_t( \alpha_1\Theta + Z^1,\ldots, \alpha_t\Theta + Z^t;V)],\quad 0 \leq s \leq t-1,\\
        \Sigma_{0,t} &= \frac1\delta \E[\Theta g_t(\alpha_1 \Theta + Z^1,\ldots, \alpha_t\Theta + Z^t;V)],\\
        \Sigma_{s,t} &= \frac1\delta \E[g_s(\alpha_1\Theta + Z^1,\ldots,\alpha_t\Theta + Z^s;V)g_t(\alpha_1\Theta + Z^1,\ldots,\alpha_t \Theta + Z^t;V)],\quad 1 \leq s \leq t,
    \end{split}
\end{equation}
where $\Theta \sim \mu_\Theta$, $U \sim \mu_U$, $V \sim \mu_V$, $W \sim \mu_W$,  $(B^0,\ldots,B^t) \sim \mathsf{N}(\bzero, \bSigma_{[0:t]})$, $(Z^1,\ldots,Z^t) \sim \mathsf{N}(\bzero,\bT_{[1:t]})$, all independent.
We initialize just before the second line with $\Sigma_{0,0} = \E[\Theta^2]$.

Eq.~\eqref{eq:se} for $(\alpha_s),(T_{s,s'})$ defined in this way is a special case of Proposition 5 of \cite{javanmard2013state}, as we now explain.
We fix iteration $t$ design an algorithm that agrees, after a change of variables, with iteration \eqref{eq:amp} up to iteration $t$ and to which we can apply the results of \cite{javanmard2013state}.
Because we take $n,p\rightarrow \infty$ before $t \rightarrow \infty$, this establishes the result.

We view the first $t$ iterations of \eqref{eq:amp} as acting on matrices $\tilde\ba^s\in \reals^{p\times (t+1)}$ and $\tilde\bb^s \in \reals^{n \times (t+1)}$ as follows.
Define $\tilde \ba^s$ to be the matrix whose first column is $\btheta$ and whose $i^\text{th}$ column is $\ba^{i-1}$ for $2 \leq i \leq s+1$ and is $\bzero$ for $i > s + 1$; 
define $\tilde \bb^s$ to be the matrix whose first column is $\bX \btheta$ and whose $i^\text{th}$ column is $\bb^{i=1}$ for $2 \leq i \leq s+1$ and is $\bzero$ for $i > s+1$.
The following change of variables transforms \eqref{eq:amp} into equations (28) and (29) of Proposition 5 in \cite{javanmard2013state}. Our notation is on the right and is separated from the notation of \cite{javanmard2013state} by the symbol ``$\leftarrow$''.
\begin{gather*}
  \tilde A \leftarrow X,\\
  u^s(i) \leftarrow
    \begin{cases}
      \bX \btheta & i = 1,\\
      \bb^{i-1} & 2 \leq i \leq s+1,\\
      \bzero & \text{otherwise},
    \end{cases}
  \qquad 
  \text{and}
  \qquad 
  v^s(i) \leftarrow 
    \begin{cases}
      \ba^{i-1} - \alpha_{i-1}\btheta & 2 \leq i \leq s+1,\\
      \bzero & \text{otherwise},
    \end{cases}\\
  y(i) \leftarrow 
    \begin{cases}
      \bv & i = 1,\\
      \btheta & i = 2, \\
      \bzero & \text{otherwise},
    \end{cases}
  \qquad
  \text{and}
  \qquad 
  w(i) \leftarrow 
    \begin{cases}
      \bu & i = 1, \\
      \bw & i = 2, \\
      \bzero & \text{otherwise},
    \end{cases}\\
  \widehat e(v,y;s)(i) \leftarrow 
    \begin{cases}
      y(2)  & i = 1, \\
      g_{i-1}(v(2) + \alpha_1 y(2) , \ldots ,  v(i+1) + \alpha_i y(2) ; y(1) )& 2 \leq i \leq s + 1,\\
      \bzero & \text{otherwise},
    \end{cases}\\
  \widehat h(u,w;s)(i) \leftarrow  
    \begin{cases}
      f_{i-1}(u(2),\ldots,u(i+1);h(u(1),w(2)),w(1)), & 1 \leq i \leq s + 1,\\
      \bzero & \text{otherwise},
    \end{cases}
\end{gather*}
where the ``$(i)$'' notation indexes columns of a matrix.
The Onsager correction coefficients $(\xi_{t,s})$ and $(\zeta_{t,s})$ correspond, after a change of variables, 
to entries in the matrices $\mathsf{D}_s$ and $\mathsf{B}_s$ in \cite{javanmard2013state}.
\begin{gather*}
  (\mathsf{D}_s)_{i,j} = \E[\partial_{u(j)} \widehat h(U,W;s)] \leftarrow 
  \begin{cases}
    \E[\partial_{B^{j-1}}f_{i-1}(B^1,\ldots,B^i;h(B^0,W),U)] & 1 \leq j - 1 \leq i \leq s+1, \\
    0 & \text{otherwise},
  \end{cases},\\
  (\mathsf{B}_s)_{i,j} = \frac1\delta \E[\partial_{v(j)}\widehat e(V,Y;i)] \leftarrow 
  \begin{cases}
    0 & i = 1 \text{ or } j = 1 ,\\
    \frac1\delta\E[\partial_{Z^{j-1}} g_i(\alpha_1 \Theta +  Z^1,\ldots,\alpha_i \Theta + Z^i ; V)] & 2 \leq j \leq i+1 \leq s+2,\\
    0 & \text{otherwise}.
  \end{cases}
\end{gather*}
The Onsager coefficients and state evolution coefficients are arrived at through the change of variables:
\begin{gather*}
   (\mathsf{B}_s)_{s+1,s'+2}\leftarrow \zeta_{s,s'} , \;\;\;\; (\mathsf{D}_s)_{s+1,s'+1} \leftarrow \xi_{s,s'} \;\;\;\;(\mathsf{D}_{s})_{s+1,1} \leftarrow  \alpha_s .
\end{gather*}
We remark that in \cite{javanmard2013state} the quantities $(\mathsf{B}_s)_{s+1,s'+2}$, $(\mathsf{D}_s)_{s+1,s'+1} $, and $(\mathsf{D}_{s})_{s+1,1}$ are empirical averages.
Because they concentration well on their population averages, we may replace them with their population averages, as we do here, without affecting the validity of state evolution. 
This observation is common in the AMP literature: see, for example, the relationship between Theorem 1 and Corollary 2 of \cite{Berthier2017StateFunctions}.
The state evolution matrices now correspond to 
\begin{align*}
  \E[V^{s+1}(s+1)V^{s+1}(s'+1)] &= \E[\widehat h(U,W;s)(s)\widehat h(U,W;s)(s')]\\
  &\leftarrow \E[f_{s-1}(B^1,\ldots,B^{s-1};h(B^0,W),U)f_{s'-1}(B^1,\ldots,B^{s'-1};h(B^0,W),U)]\\
  & = T_{s,s'},\\
  \E[U^{s+1}(s+1)U^{s+1}(s'+1)] &= \frac1\delta\E[\widehat e(V,Y;s+1)(s+1) \widehat e(V,Y;s+1)(s'+1)] \\
  &\leftarrow \frac1\delta\E[g_s(\alpha_1 \Theta + Z^1,\ldots,\alpha_s \Theta + Z^s;V)g_{s'}(\alpha_1 \Theta + Z^1,\ldots,\alpha_{s'} \Theta + Z^{s'};V)]\\
  &= \Sigma_{s,s'}.
\end{align*}
From these changes of variables, Eq.~\eqref{eq:se} holds in the high-dimensional regression model from Theorem 1 and Proposition 5 of \cite{javanmard2013state}.

\subsubsection{Low-rank matrix estimation AMP}

In the low-ank matrix estimation model, 
the recrusion defining $(\bx_{t,x})$, $(\bzeta_{t,s})$ also defines $(\balpha_t)$, $(\bT_{s,t})_{s,t \geq 1}$ as well as collections of $r\times r$ matrices $(\bgamma_t)_{t \geq 1}$, $(\bSigma_{s,t})_{s,t \geq 0}$ which did not appear in Lemma \ref{lem:gfom-to-amp}.
The recursion, whose lines are implemented in the order in which they appear, is
\begin{equation}\label{amp-scalars-lr-mat}
  \begin{split}
      \bxi_{t,s} &= \E[\nabla_{\tbZ^s} f_t(\bgamma_1 \bLambda + \tilde \bZ^1 , \ldots , \bgamma_t \bLambda + \tilde \bZ^t ; 0 , \bU)], \quad 1 \leq s \leq t,\\
      \balpha_{t+1} &= \E[f_t(\bgamma_1 \bLambda + \tilde \bZ^1 , \ldots , \bgamma_t \bLambda + \tilde \bZ^t ; 0 , \bU)\bLambda^\mathsf{T}], \\
      \bT_{s+1,t+1} &= \E[f_s(\bgamma_1 \bLambda + \tilde \bZ^1 , \ldots , \bgamma_t \bLambda + \tilde \bZ^s ; 0 , \bU)f_t(\bgamma_1 \bLambda + \tilde \bZ^1 , \ldots , \bgamma_t \bLambda + \tilde \bZ^t ; 0 , \bU)^\mathsf{T}], \; s \leq t,\\
      \bzeta_{t,s} &= \frac1\delta \E[\nabla_{\bZ^{s+1}} g_t(\balpha_1\bTheta + Z^1,\ldots,\balpha_t \bTheta + \bZ^t;\bV)], \quad 0 \leq s \leq t-1,\\
      \bgamma_t &= \frac1\delta \E[g_t(\balpha_1\bTheta + \bZ^1,\ldots,\balpha_t \bTheta + \bZ^t;\bV)\bTheta^\mathsf{T}],\\
      \bSigma_{s,t} &= \frac1\delta \E[g_s(\balpha_1\bTheta + \bZ^1,\ldots,\balpha_t \bTheta + \bZ^s;\bV)g_t(\balpha_1\bTheta + \bZ^1,\ldots,\balpha_t \bTheta + \bZ^t;\bV)^\mathsf{T}], \quad 1 \leq s \leq t,
  \end{split}
\end{equation}
where $\bLambda \sim \mu_{\bLambda}$ $\bU \sim \mu_{\bU}$, $\bTheta \sim \mu_{\bTheta}$, $\bV \sim \mu_{\bV}$, $(\tilde \bZ^1,\ldots,\tilde \bZ^t) \sim \normal(\bzero,\bSigma_{[1:t]})$, 
and $(\bZ^1,\ldots,\bZ^t) \sim \normal(\bzero,\bT_{[1:t]})$, 
all independent.
Here $\nabla$ denotes the Jacobian with respect to subscripted (vectorial) argument, which exists almost everywhere because the functions involved are Lipschitz and the random variables have density with respect to Lebesgue measure \cite[pg.~81]{Evans2015MeasureFunctions}.
As with $\bT_{[1:t]}$, we define $\bSigma_{[1:t]}$ to be the $rt \times rt$ block matrix with block $(s,t)$ given by $\bSigma_{s,t}$.
We initialize at the second line with $\balpha_1 = \E[f_0(0,\bU)\bLambda^\mathsf{T}]$.
In addition to \eqref{eq:se}, 
we have
\begin{equation*}
  \frac1n \sum_{i=1}^n \psi(\bb^1_i,\ldots,\bb^t_i,\bu_i,\blambda_i) \stackrel{\mathrm{p}}\rightarrow \E[\psi(\bgamma_1\bLambda + \tilde \bZ^1,\ldots,\bgamma_t\bLambda + \tilde \bZ^t,\bU,\bLambda)],
\end{equation*}
where we remind the reader that $\psi:\reals^{r(t+2)}\rightarrow \reals$ is any pseudo-Lipschitz function of order 2.

We now show Eq.~\eqref{eq:se} for $(\alpha_s),(T_{s,s'})$ defined in this way.
We consider the $r= 1$ case, as $r > 1$ is similar by requires more notational overhead.
Because $\bX = \frac{1}{n} \blambda \btheta^{\mathsf{T}} + \bZ$, we have
\begin{align*}
\ba^{t+1} - \frac{1}{n}\langle \blambda, f_t(\bb^1, \ldots, \bb^t, 0, \bu)\rangle\btheta &= \bZ^{\mathsf{T}} f_t(\bb^1, \ldots, \bb^t, 0, \bu) - \sum\limits_{s=1}^t \xi_{t,s}g_s(\ba^1,\ldots, \ba^s, \bv),\\
\bb^t - \frac{1}{n}\langle \btheta, g_t(\ba^1, \ldots, \ba^t, \bv) \rangle\blambda &= \bZ g_t(\ba^1, \ldots, \ba^t, \bv) - \sum\limits_{s=0}^{t-1}\zeta_{t,s}f_s(\bb^1, \ldots, \bb^s, \by, \bu).
\end{align*}
We introduce a change of variables:
\begin{eqnarray*}
\hat{f}_t(d^1,\ldots,d^t,u,\lambda) \overset{\Delta}{=} f_t(d^1 + \gamma_1\lambda, \ldots, d^t + \gamma_t\lambda, 0, u), &\qquad& \bd^t = \bb^t - \gamma_t \blambda \in \mathbb{R}^n,\\
\hat{g}_t(c^1, \ldots, c^t, v, \theta) \overset{\Delta}{=} g_t(c^1 + \alpha_1 \theta, \ldots, c^t + \alpha_t \theta, v), &\qquad& \bc^t = \ba^t - \alpha_t \btheta \in \mathbb{R}^p.
\end{eqnarray*}
Because $f_t$, $g_t$ are Lipschitz continuous, so too are $\hat{f}_t$, $\hat{g}_t$.
We have 
\begin{align*}
\ba^{t+1} - \frac{1}{n} \langle \blambda, \hat{f}_t(\bd^1, \ldots, \bd^t, \bu, \blambda)\rangle \btheta &= \bZ^{\mathsf{T}} \hat{f}_t(\bd^1, \ldots, \bd^t, \bu, \blambda) - \sum\limits_{s=1}^t\xi_{t,s}\hat{g}_s(\bc^1, \ldots, \bc^s, \bv, \btheta),\\
\bb^{t} - \frac{1}{n} \langle \btheta, \hat{g}_t(\bc^1, \ldots, \bc^t, \bv, \btheta) \rangle \blambda &= \bZ \hat{g}_t(\bc^1, \ldots, \bc^t, \bv, \btheta) - \sum\limits_{s=0}^{t-1}\zeta_{t,s}\hat{f}_s(\bb^1, \ldots, \bb^s, \bu, \blambda).
\end{align*}
Define
\begin{align*}
\hat{\bc}^{t+1} &= \bZ^{\mathsf{T}}\hat{f}_t(\hat{\bd}^1, \ldots, \hat{\bd}^t, \bu, \blambda) - \sum\limits_{s=1}^t \xi_{t,s}\hat{g}_s(\hat{\bc}^1, \ldots, \hat{\bc}^t,\bv, \btheta),\\
\hat{\bd}^t &= \bZ \hat{g}_t(\hat{\bc}^1, \ldots, \hat{\bc}^t, \bv, \btheta) - \sum\limits_{s=0}^{t-1} \zeta_{t,s} \hat{f}_s(\hat{\bd}^1, \ldots, \hat{\bd}^t,\bu, \blambda).
\end{align*}
We can analzye this iteration via the same techniques we used to analyze AMP
in the high-dimensional regression model in the previous section \cite{javanmard2013state}.
In particular, for any pseudo-Lipschitz function $\psi: \mathbb{R}^{t+2}\rightarrow \mathbb{R}$ of order 2, we have 
\begin{equation}
\begin{aligned}\label{eq:approx-amp-lr-mat-se}
\frac{1}{p}\sum\limits_{j=1}^p \psi(\hat{c}_j^1, \ldots, \hat{c}_j^t, v_j, \theta_j) &\stackrel{\mathrm{p}}\rightarrow \E[\psi(Z^1, \ldots, Z^t, V, \Theta)],\\
\frac{1}{n}\sum\limits_{i=1}^n \psi(\hat{d}^1_i, \ldots, \hat{d}^t_i, u_i, \lambda_i) &\stackrel{\mathrm{p}}\rightarrow \E[\psi(\tilde{Z}^1, \ldots, \tilde{Z}^t, U, \Lambda)].
\end{aligned}
\end{equation}
Now, t\o establish \eqref{eq:se}, it suffices to show
\begin{equation}\label{eq:approx-amp-lr-mat-iter}
    \frac{1}{n} \| \hat{\bc}^t - \bc^t\|_2^2 \stackrel{\mathrm{p}}\rightarrow 0, \qquad \frac{1}{n} \| \hat{\bd}^t - \bd^t\|_2^2 \stackrel{\mathrm{p}}\rightarrow 0.
\end{equation}
We proceed by induction.
By the weak law of large numbers, we have that $\frac{1}{n}\langle \blambda, \hat{f}_0(\blambda, \bu)\rangle = \frac{1}{n} \langle \blambda, f_0(0, \bu) \rangle \stackrel{\mathrm{p}}\rightarrow \alpha_1$. Therefore, $\bc^1 = \bZ^{\mathsf{T}}\hat{f}_0(\blambda, \bu) + o_p(1)\btheta = \hat{\bc}^1 + o_p(1) \btheta$. Since $\frac{1}{p}\|\btheta\|_2^2 \stackrel{\mathrm{p}}\rightarrow \E[\Theta^2]$, we have that $\frac{1}{n}\| \bc^1 - \hat{\bc}^1 \|_2^2 \stackrel{\mathrm{p}}\rightarrow 0$.

Because $\hat g_1$ is Lipschitz and $\frac1p\|\btheta\|^2 = O_p(1)$, we have $|\frac{1}{n}\langle \btheta, \hat{g}_1(\bc^1, \btheta, \bv)\rangle - \frac{1}{n}\langle \btheta, \hat{g}_1(\hat{\bc}^1, \btheta, \bv)\rangle |  \stackrel{\mathrm{p}}\rightarrow 0$. 
By \eqref{eq:approx-amp-lr-mat-se}, we have that $\frac{1}{n} \langle \btheta, \hat{g}_1(\hat{\bc}^1, \btheta, \bv)\rangle \stackrel{\mathrm{p}}\rightarrow \gamma_1$. 
We have
\begin{equation*}
    \frac{1}{n}\|\hat{g}_1(\bc^1, \bv, \btheta) - \hat{g}_1(\hat{\bc}^1, \bv, \btheta)\|_2^2 \leq \frac{1}{n}L^2 \| \bc^1 - \hat{\bc}^1\|_2^2 \stackrel{\mathrm{p}}\rightarrow 0,
\end{equation*}
wehre $L$ is a Lipschitz constant for $\hat g_1$.
By \cite{bai2008limit}, the maximal singular value of $\bZ^T\bZ$ is $O_p(1)$. 
Therefore, $\frac{1}{n}\|\bZ \hat{g}_1(\bc^1, \bv, \btheta) - \bZ\hat{g}_1(\hat{\bc}^1, \bv, \btheta)\|_2^2 \stackrel{\mathrm{p}}\rightarrow 0$. 
As a result, and using that $\frac{1}{n}\|\blambda\|_2^2$ converges almost surely to a constant,
\begin{equation*}
    \frac{1}{n} \|\hat{\bd}^1 - \bd^1 \|_2^2 = \frac{1}{n} \| \bZ \hat{g}_t(\hat{\bc}^1, \bv, \btheta) - \bZ \hat{g}_t(\bc^1, \bv, \btheta) + (\frac{1}{n}\langle \btheta, \hat{g}_1(\bc^1, \btheta, \bv)\rangle - \gamma_1)\blambda \|_2^2 \stackrel{\mathrm{p}}\rightarrow 0.
\end{equation*}
Now assume that \eqref{eq:approx-amp-lr-mat-iter} holds for $1,2, \ldots,t$. For the $(t+1)$-th iteration, we have
\begin{equation*}
    |\frac{1}{n}\langle \blambda, \hat{f}_t(\bd^1, \ldots, \bd^t, \bu, \blambda)\rangle - \frac{1}{n}\langle \blambda, \hat{f}_t(\hat{\bd}^1, \ldots, \hat{\bd}^t, \bu, \blambda)\rangle | \leq \frac{L}{n}\| \blambda \|_2\sum\limits_{s=1}^t \|\bd^s - \hat{\bd}^s \|_2 \stackrel{\mathrm{p}}\rightarrow 0.
\end{equation*}
where $L$ is a Lipschitz constant for $\hat f$.
By \eqref{eq:approx-amp-lr-mat-se}, we have $\frac{1}{n}\langle \blambda, \hat{f}_t(\hat{\bd}^1, \ldots, \hat{\bd}^t, \blambda, \bu)\rangle \stackrel{\mathrm{p}}\rightarrow \alpha_{t+1}$. 
As a result, we have $\frac{1}{n}\langle \blambda, \hat{f}_t(\bd^1, \ldots, \bd^t, \bu, \blambda)\rangle \stackrel{\mathrm{p}}\rightarrow \alpha_{t+1}$. Furthermore, for any $1 \leq s \leq t$, we have 
\begin{align*}
\frac{1}{n} \| \hat{f}_s(\bd^1, \ldots, \bd^s, \bu, \blambda) - \hat{f}_s(\hat{\bd}^1, \ldots, \hat{\bd}^s, \bu, \blambda) \|_2^2 &\leq \frac{\hat{L}_t^2}{n} \sum\limits_{i=1}^s \| \bd^i - \hat{\bd}^i \|_2^2 \stackrel{\mathrm{p}}\rightarrow 0,\\
\frac{1}{n} \| \hat{g}_s(\bc^1, \ldots, \bc^s, \bv, \btheta) - \hat{g}_s(\hat{\bc}^1, \ldots, \hat{\bc}^s, \bv, \btheta)\|_2^2 &\leq \frac{\hat{L}_t^2}{n}\sum\limits_{i=1}^s \| \bc^i - \hat{\bc}^i \|_2^2 \stackrel{\mathrm{p}}\rightarrow 0.
\end{align*}
Again using that the maximal singular value of $\bZ^\sT\bZ$ is $O_p(1)$,
we have 
\begin{equation*}
	\frac{1}{n}\|\bZ^{\mathsf{T}}\hat{f}_t(\hat{\bd}^1, \ldots, \hat{\bd}^t, \bu, \blambda) - \bZ^{\mathsf{T}}\hat{f}_t(\bd^1, \ldots, \bd^t, \bu, \blambda)\|_2^2 \stackrel{\mathrm{p}}\rightarrow 0\,.
\end{equation*}
As a result, we have
\begin{eqnarray*}
    &&\frac{1}{n}\|\hat{\bc}^{t+1} - \bc^{t+1}\|_2^2\\
    &=& \frac{1}{n} \|(\frac{1}{n}\langle \blambda, \hat{f}_t(\bd^1, \ldots, \bd^t, \bu, \blambda) - \alpha_{t+1}) \btheta + \bZ^{\mathsf{T}}(\hat{f}_t(\hat{\bd}^1, \ldots, \hat{\bd}^t, \bu, \blambda) - \hat{f}_t(\bd^1, \ldots, \bd^t, \bu, \blambda)) -\\
    &&\sum\limits_{s=1}^t\xi_{t,s}(\hat{g}_s(\hat{\bc}^1, \ldots, \hat{\bc}^s, \bv, \btheta) - \hat{g}_s(\bc^1, \ldots, \bc^s, \btheta, \bv))\|_2^2 \stackrel{\mathrm{p}}\rightarrow 0.
\end{eqnarray*}
Similarly, we have 
\begin{equation*}
    |\frac{1}{n}\langle \btheta, \hat{g}_{t+1}(\hat{\bc}^1, \ldots, \hat{\bc}^{t+1}, \bv, \btheta)\rangle - \frac{1}{n}\langle \btheta, \hat{g}_{t+1}(\bc^1, \ldots, \bc^{t+1}, \bv, \btheta)\rangle | \leq \frac{L}{n}\|\btheta\|_2\sum\limits_{s=1}^{t+1}\|\hat{\bc}^{t+1} - \bc^{t+1}\|_2 \stackrel{\mathrm{p}}\rightarrow 0,
\end{equation*}
where $L$ is a Lipschitz constant for $\hat g_{t+1}$.
By \eqref{eq:approx-amp-lr-mat-se}, 
we have that $\frac{1}{n}\langle \btheta, \hat{g}_{t+1}(\hat{\bc}^1, \ldots, \hat{\bc}^{t+1}, \bv, \btheta)\rangle \stackrel{\mathrm{p}}\rightarrow \gamma_{t+1}$. As a result, we have that $\frac{1}{n}\langle \btheta, \hat{g}_{t+1}(\bc^1, \ldots, \bc^{t+1}, \bv, \btheta)\rangle \stackrel{\mathrm{p}}\rightarrow \gamma_{t+1}$. Furthermore, for any $1 \leq s \leq t$, we have 
\begin{equation*}
    \frac{1}{n} \| \hat{f}_s(\bd^1, \ldots, \bd^s, \bu, \blambda) - \hat{f}_s(\hat{\bd}^1, \ldots, \hat{\bd}^s, \bu, \blambda) \|_2^2 \leq \frac{L^2}{n} \sum\limits_{i=1}^s \| \bd^i - \hat{\bd}^i \|_2^2 \stackrel{\mathrm{p}}\rightarrow 0.
\end{equation*}
Also, for any $1 \leq s \leq t+1$, we have 
\begin{equation*}
    \frac{1}{n} \| \hat{g}_s(\bc^1, \ldots, \bc^s, \bv, \btheta) - \hat{g}_s(\hat{\bc}^1, \ldots, \hat{\bc}^s, \bv, \btheta)\|_2^2 \leq \frac{L^2}{n}\sum\limits_{i=1}^s \| \bc^i - \hat{\bc}^i \|_2^2 \stackrel{\mathrm{p}}\rightarrow 0.
\end{equation*}
Then $\frac{1}{n}\|\bZ\hat{g}_{t+1}(\hat{\bc}^1, \ldots, \hat{\bc}^{t+1}, \bv, \btheta) - \bZ \hat{g}_{t+1}(\bc^1, \ldots, \bc^{t+1}, \bv, \btheta) \|_2^2 \stackrel{\mathrm{p}}\rightarrow 0$. 
As a result, we have
\begin{align*}
&\frac{1}{n}\|\hat{\bd}^{t+1} - \bd^{t+1}\|_2^2 \\
&\qquad= \frac{1}{n}\| (\frac{1}{n}\langle \btheta, \hat{g}_{t+1}(\bc^1, \ldots, \bc^{t+1}, \bv, \btheta)\rangle - \gamma_{t+1}) \blambda \\
&\qquad+ \bZ(\hat{g}_{t+1}(\hat{\bc}^1, \ldots, \hat{\bc}^{t+1}, \bv, \btheta) - \hat{g}_{t+1}(\bc^1, \ldots, \bc^{t+1}, \bv, \btheta)) \\
& \qquad- \sum\limits_{s=0}^t\zeta_{t,s}(\hat{f}_s(\hat{\bd}^1, \ldots, \hat{\bd}^s, \bu, \blambda) - \hat{f}_s(\bd^1, \ldots, \bd^s, \bu, \blambda))\|_2^2 \stackrel{\mathrm{p}}\rightarrow 0.
\end{align*}
Thus, we have proved \eqref{eq:approx-amp-lr-mat-iter}. 
Therefore, for all pseudo-Lipschitz function $\psi$ of order 2, we have that
there exists a numerical constant $C$ such that
\begin{align*}
&\left|\frac{1}{p}\sum\limits_{j=1}^p\psi(c_j^1 + \alpha_1 \theta_j, \ldots, c_j^t + \alpha_t \theta_j, v_j, \theta_j) - \frac{1}{p}\sum\limits_{j=1}^p\psi(\hat{c}_j^1 + \alpha_1 \theta_j, \ldots, \hat{c}_j^t + \alpha_t \theta_j, v_j, \theta_j)\right| \\
&\qquad \qquad \leq L_{\psi}(1 + \sum\limits_{s=1}^t \| \ba^s\|_2 + \| \btheta\|_2 + \|\bv\|_2)\sum\limits_{s=1}^t\|\hat{\bc}^s - \bc^s\|_2 \stackrel{\mathrm{p}}\rightarrow 0.
\end{align*}
By \eqref{eq:approx-amp-lr-mat-se},
\begin{equation*}
    \dfrac{1}{p}\sum\limits_{j=1}^p\psi(\hat{c}_j^1 + \alpha_1 \theta_j, \ldots, \hat{c}_j^t + \alpha_t \theta_j, v_j, \theta_j) \stackrel{\mathrm{p}}\rightarrow \E[\psi(\balpha_1\bTheta + \bZ^1,\ldots,\balpha_t\bTheta + \bZ^t,\bV,\bTheta)].
\end{equation*}
Therefore, $\frac{1}{p}\sum\limits_{j=1}^p \psi(a_j^1, \ldots, a_j^t, v_j, \theta_j) \stackrel{\mathrm{p}}\rightarrow \E[\psi(\balpha_1\bTheta + \bZ^1,\ldots,\balpha_t\bTheta + \bZ^t,\bV,\bTheta)]$.
Similarly, we can show that $\frac1n \sum_{i=1}^n \psi(\bb^1_i,\ldots,\bb^t_i,\bu_i,\blambda_i) \stackrel{\mathrm{p}}\rightarrow \E[\psi(\bgamma_1\bLambda + \tilde \bZ^1,\ldots,\bgamma_t\bLambda + \tilde \bZ^t,\bU,\bLambda)]$. Thus we have finished the proof.

\subsection{The AMP change of variables}

To prove Lemma \ref{lem:gfom-to-amp}, all that remains is to show that for any GFOM \eqref{gfom}, at least one of the change-of-variables in Eqs.~\eqref{cov} generates an iteration \eqref{gfom-post-cov} which is an AMP iteration. That is, in addition to satisfying Eq.~\eqref{cov}, the matrices $(\bxi_{t,s})$, $(\bzeta_{t,s})$ and functions $(f_t)$, $(g_t)$ satisfy Eqs.~\eqref{amp-scalars-hd-reg} and \eqref{amp-scalars-lr-mat} in the high-dimensional regression and low-rank matrix estimation models respectively.

To construct such a choice of scalars, we may define $(\bxi_{t,s}),(\bzeta_{t,s}),(f_t),(g_t)$ in a single recursion by interlacing definition \eqref{cov} with either \eqref{amp-scalars-hd-reg} or \eqref{amp-scalars-lr-mat}.
Specifically, in the high-dimensional regression model, we place \eqref{cov-f-phi} before the first line of \eqref{amp-scalars-hd-reg} and \eqref{cov-g-varphi} before the fourth line of \eqref{amp-scalars-hd-reg}.
In the combined recursion, all quantities are defined in terms of previously defined quantities, yielding choices for $(\bxi_{t,s}),(\bzeta_{t,s}),(f_t),(g_t)$  which simultaneously satisfy \eqref{cov} and \eqref{amp-scalars-hd-reg}.
Thus, in the high-dimensional regression model every GFOM is equivalent, up to a change of variables, to a certain AMP algorithm.
The construction in the low-rank matrix estimation model is analogous: we place \eqref{cov-f-phi} before the first line of \eqref{amp-scalars-lr-mat} and \eqref{cov-g-varphi} before the fourth line of \eqref{amp-scalars-lr-mat}.

The proof of Lemma \ref{lem:gfom-to-amp} is complete.\textbf{}

\section{Proof of state evolution for message passing (Lemma \ref{lem:se-mp-tree})}
\label{app:mp-on-tree}

In this section, we prove Lemma \ref{lem:se-mp-tree}.
We restrict ourselves to the case $r = 1$ and $k = 1$ (with $k$ the dimensionality of $\bW$) 
because the proof for $r > 1$ or $k  > 1$ is completely analogous but would complicate notation.

Let $\calT_{v\rightarrow f} = (\calV_{v\rightarrow f}, \calF_{v\rightarrow f}, \calE_{v\rightarrow f})$ be the tree consisting of edges and nodes in $\calT$ which are separated from $f$ by $v$.
By convention, $\calT_{v\rightarrow f}$ will also contain the node $v$.
In particular, $f \not \in \calF_{v\rightarrow f}$ and $(f,v) \not \in \calE_{v \rightarrow f}$, but $v \in \calV_{v \rightarrow f}$, and $f' \in \calF_{v \rightarrow f}$ and $(v,f') \in \calE_{v \rightarrow f}$ for $f' \in \partial v \setminus f$.
We define $\calT_{f\rightarrow v}, \calV_{f\rightarrow v}, \calF_{f\rightarrow v}, \calE_{f\rightarrow v}$ similarly.
With some abuse of notation, we will sometimes use $\calT_{f\rightarrow v}, \calV_{f\rightarrow v}, \calF_{f\rightarrow v}, \calE_{f\rightarrow v}$ to denote either the collection of observations corresponding to nodes and edges in these sets or the $\sigma$-algebra generated by these obervations. 
No confusion should result.
Which random variables we consider to be ``observed'' will vary with the model, and will be explicitly described in each part of the proof to avoid potential ambiguity.

\subsection{Gaussian message passing}

We first introduce a message passing algorithm whose behavior is particularly easy to analyze.
We call this message passing algorithm a \emph{Gaussian message passing} algorithm.
We will see that in both the high-dimensional regression and low-rank matrix estimation models, the message passing algorithm \eqref{mp-on-tree} approximates a certain Gaussian message passing algorithm.

Gaussian message passing algorithms operate on a computation tree with associated random variables
$\{(\theta_v,v_v)\}_{v \in \calV} \stackrel{\mathrm{iid}}\sim \mu_{\Theta,V}$, 
$\{(w_f,u_f)\}_{f \in \calF} \stackrel{\mathrm{iid}}\sim \mu_{W,U}$, 
and $\{z_{fv}\}_{(f,v)\in \calE} \stackrel{\mathrm{iid}}\sim \normal(0,1/n)$, all independent, where $\mu_{\Theta,V},\mu_{W,U} \in \cuP_4(\reals^2)$.\footnote{We believe that only $\mu_{\Theta,V},\mu_{W,U} \in \cuP_2(\reals^2)$ is needed, but the analysis under this weaker assumption would be substantially more complicated, and the weaker assumptions are not necessary for our purposes.}
Gaussian message passing algorithms access all these random variables, so that all are considered to be ``observed.''
Thus, for example, $\calV_{f \rightarrow v}$ contains $\theta_{v'},v_{v'}$ for all nodes $v'$ separated from $f$ by $v$ (including, by convention, $v$).

Gaussian message passing algorithms are defined by sequences of Lipschitz functions $(\tf_t:\reals^{t+3}\rightarrow \reals)_{t \geq 0}$, $(\tg_t:\reals^{t+2}\rightarrow \reals)_{t \geq 0}$.
We initialize the indexing differently than with Gaussian message passing algorithms than with the message passing algorithms in Section \ref{sec:proof-of-main-results} in anticipation of notational simplifications that will occur later.
For every pair of neighboring nodes $v,f$, we generate sequences of messages $(\tilde a_{v \rightarrow f}^t)_{t \geq 1}$, $(\tilde q_{v\rightarrow f}^t)_{t \geq 0}$, $(\tilde b_{f\rightarrow v}^t)_{t \geq 0}$, $(\tilde r_{f\rightarrow v}^t)_{t \geq 0}$
according to the iteration
\begin{subequations}\label{gauss-mp-tree}
\begin{gather}
    \tilde a_{v\rightarrow f}^{t+1} = \sum_{f' \in \partial v \setminus f} z_{f'v} \tilde r_{f'\rightarrow v}^t, \qquad \tilde r_{f\rightarrow v}^t = \tf_t(\tilde b_{f\rightarrow v}^0,\ldots,\tilde b_{f\rightarrow v}^t;w_f,u_f),\label{gauss-mp-tree-transpose} \\
    \tilde b_{f\rightarrow v}^t = \sum_{v' \in \partial f \setminus v} z_{fv'} \tilde q_{v'\rightarrow f}^t, \qquad \tilde q_{v\rightarrow f}^t = \tg_t(\tilde a_{v\rightarrow f}^1,\ldots,\tilde a_{v\rightarrow f}^t;\theta_v,v_v),\label{gauss-mp-tree-non-transpose}
\end{gather}
\end{subequations}
with initialization $\tq_{v\rightarrow f}^0 = g_0(\theta_v,v_v)$.
For $t \geq 0$, define the node beliefs
\begin{equation}\label{mp-node-beliefs}
    \tilde a_v^{t+1} = \sum_{f \in \partial v} z_{fv} \tilde r_{f\rightarrow v}^t,\qquad \tilde b_f^t = \sum_{v \in \partial f} z_{fv} \tilde q_{v\rightarrow f}^t.
\end{equation}
To compactify notation, denote $\tilde \ba_v^t = (\tilde a_v^1,\ldots,\tilde a_v^t)^\mathsf{T}$,
and likewise for $\tilde \ba_{v\rightarrow f}^t$, $\tilde \bq_{v \rightarrow f}^t$, $\tilde \bb_f^t$, $\tilde \bb_{f\rightarrow v}^t,\tilde \br_{f\rightarrow v}^t$ (where the first two of these are $t$-dimensional, and the last three are $(t+1)$-dimensional). 
We will often write 
$\tf_t(\tbb_{f \rightarrow v}^t;w_f,u_f)$ 
in place of $\tf_t(\tb_{f \rightarrow v}^0,\ldots,b_{f\rightarrow v}^t;w_f,u_f)$, and similarly for $\tg_t$.
The reader should not confuse the bold font here with that in Section \ref{sec:proof-of-main-results}, in which, for example, $\ba_{v \rightarrow f}^t$ denotes the vectorial message at time $t$ rather than the collection of scalar messages prior to and including time $t$.

Gaussian message passing obeys a Gaussian state evolution, defined by covariance matrices
\begin{equation}\label{eq:gauss-se}
  \Sigma_{s,s'} = \E[\tg_s(\tbA^s;\Theta,V)\tg_{s'}(\tbA^{s'};\Theta,V)], \;\;\;
  T_{s+1,s'+1} = \E[\tf_{s}(\tbB^s;W,U)\tf_{s'}(\tbB^{s'};W,U)],
\end{equation}
where $s,s' \geq 0$, $\tbA^s \sim \normal(\bzero_s,\bT_{[1{:}s]})$, $\tbB^s \sim \normal(\bzero_{s+1},\bSigma_{[0{:}s]})$, and $(\Theta,V) \sim \mu_{\Theta,V}$, $(W,U) \sim \mu_{W,U}$ independent of $\tbA^s,\tbB^s$. 
The iteration is initialized by $\Sigma_{0,0} = \E[\tg_0(\Theta,V)^2]$.

\begin{lemma}\label{lem:se-gauss-mp-on-tree}
    If we choose a variable node $v$ and factor node $f$ independently of the randomness in our model, then for fixed $t$ and for $n,p \rightarrow \infty$, $n/p \rightarrow \delta$ we have
    \begin{subequations}\label{mp-on-tree-belief-convergence}
        \begin{gather}
            (\tilde \ba_v^t,\theta_v,v_v) \stackrel{\mathrm{W}}\rightarrow \mathsf{N}(\bzero_t,\bT_{[1{:}t]}) \otimes \mu_{\Theta,V} \;\; \text{and} \;\; (\tilde \ba_{v\rightarrow f}^t,\theta_v,v_v) \stackrel{\mathrm{W}}\rightarrow \mathsf{N}(\bzero_t,\bT_{[1{:}t]}) \otimes \mu_{\Theta,V},\label{mp-on-tree-a-belief-convergence}\\
            (\tilde \bb_f^t,w_f,u_f) \stackrel{\mathrm{W}}\rightarrow \mathsf{N}(\bzero_{t+1},\bSigma_{[0{:}t]}) \otimes \mu_{W,U} \;\; \text{and} \;\; (\tilde \bb_{f\rightarrow v}^t,w_f,u_f) \stackrel{\mathrm{W}}\rightarrow \mathsf{N}(\bzero_{t+1},\bSigma_{[0{:}t]}) \otimes \mu_{W,U}.\label{mp-on-tree-b-belief-convergence}
        \end{gather}      
    \end{subequations}
    Further, all the random variables in the preceding displays have bounded fourth moments and $\E[\|\tba_v^t - \tba_{v \rightarrow f}^t\|^2] \rightarrow 0$ and $\E[\|\tbb_f^t - \tbb_{f\rightarrow v}^t\|^2] \rightarrow 0$.
\end{lemma}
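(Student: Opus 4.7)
I would prove the claim by simultaneous induction on $t$ for the $\tba$- and $\tbb$-sequences, taking as inductive hypothesis the full lemma statement together with a uniform fourth-moment bound on all messages. Two structural facts drive the argument. First, for distinct $f_1, f_2 \in \partial v$ the subtrees $\calT_{f_1\to v}$ and $\calT_{f_2\to v}$ are vertex-disjoint by the tree structure, so the messages $\tr^t_{f_1\to v}$ and $\tr^t_{f_2\to v}$ are i.i.d.\ conditional on $(\theta_v, v_v)$. Second, the weight $z_{f'v}$ is independent of $\tr^t_{f'\to v}$ because $(f',v) \notin \calE_{f'\to v}$. Symmetric statements hold at factor nodes. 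The base case ($t=0$ for $\tbb$) analyzes $\tb^0_{f\to v} = \sum_{v' \in \partial f \setminus v} z_{fv'} g_0(\theta_{v'}, v_{v'})$, a sum of $p-1$ i.i.d.\ terms to which a Lindeberg CLT applies directly.

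For the inductive step on the $\tba$-sequence I write
\begin{equation*}
  \ta^{t+1}_{v\to f} \;=\; \sum_{f' \in \partial v \setminus f} z_{f'v}\,\tf_t(\tbb^t_{f'\to v};\,w_{f'}, u_{f'}),
\end{equation*}
a sum of $n-1$ conditionally i.i.d.\ mean-zero terms whose per-term variance is $\tfrac{1}{n}\E[\tf_t(\tbB^t;W,U)^2]$ by the inductive hypothesis on $\tbb^t_{f'\to v}$ combined with Lemma \ref{lem:wass-pseudo-lipschitz-convergence}. A multivariate Lindeberg CLT applied jointly to $(\ta^1_{v\to f},\ldots,\ta^{t+1}_{v\to f})$ and to $(\theta_v, v_v)$ yields the Gaussian product limit; the cross-covariances $T_{s,s'}$ come out as specified by \eqref{eq:gauss-se} because, for each fixed $f'$, the vector $(z_{f'v}\tr^s_{f'\to v})_{1 \le s \le t+1}$ has a joint law that does not depend on $f'$, and its cross-moments are computed via Lemma \ref{lem:wass-pseudo-lipschitz-convergence} applied to the pseudo-Lipschitz-of-order-2 product $(\tbb^{s-1}, w, u) \mapsto \tf_{s-1}(\tbb^{s-1};w,u)\tf_{s'-1}(\tbb^{s'-1};w,u)$ (Lemma \ref{lem:pseudo-lipschitz}). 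The Lindeberg condition is verified using $\E[z^4_{f'v}] = O(n^{-2})$ together with the inductive fourth-moment bound. The factor-node step is completely analogous, using instead the $p-1$ summands coming from $\partial f \setminus v$ and yielding the $1/\delta$ factor implicit in $\bSigma_{[0:t]}$.

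To upgrade distributional convergence to Wasserstein-$2$ convergence I propagate a uniform fourth-moment bound along the induction. Expanding $\E[(\ta^{t+1}_{v\to f})^4]$ and using the conditional independence of summands, only the diagonal and matched-pair index patterns contribute, so the fourth moment is bounded by $C(n^{-1}\E[\tf_t(\tbB^t;W,U)^4] + \E[\tf_t(\tbB^t;W,U)^2]^2)$, finite by the Lipschitz hypothesis on $\tf_t$, by the $\cuP_4$ assumption on $\mu_{\Theta,V}$ and $\mu_{W,U}$, and by the inductive fourth-moment bound on $\tbb^t_{f'\to v}$. Wasserstein-$2$ convergence then follows from distributional convergence plus uniform integrability of squared norms. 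Finally, the node--edge gap satisfies $\ta^{t+1}_v - \ta^{t+1}_{v\to f} = z_{fv}\tr^t_{f\to v}$, with $\E[(z_{fv}\tr^t_{f\to v})^2] = n^{-1}\E[\tr^{t,2}_{f\to v}] = O(n^{-1})$, so Wasserstein-$2$ convergence transfers between node beliefs and edge messages in both directions.

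The main technical obstacle is the \emph{joint} convergence of the trajectory $(\ta^1_v,\ldots,\ta^{t+1}_v)$ with covariance $\bT_{[1:t+1]}$: different time slices share subtree randomness through the maps $\tf_0,\ldots,\tf_t$, so the induction must deliver convergence jointly across times and not merely marginally, and the moment bound must similarly hold for the vector-valued messages $\tbb^t_{f'\to v}$. This is why the inductive hypothesis is stated for the full collection of messages up to time $t$ rather than for the latest message alone; once this joint structure is carried through, the computation of the cross-covariance reduces to an application of Lemma \ref{lem:wass-pseudo-lipschitz-convergence} to a pseudo-Lipschitz-of-order-2 quadratic form, which is exactly what the state evolution \eqref{eq:gauss-se} records.
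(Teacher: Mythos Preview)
Your proposal is correct and follows essentially the same inductive skeleton as the paper: use the tree structure to get independence of the incoming messages, propagate joint Wasserstein-$2$ convergence of the full trajectory $(\tba^t_{v\to f},\theta_v,v_v)$ (resp.\ $(\tbb^t_{f\to v},w_f,u_f)$), carry a fourth-moment bound along the induction, and control the node--edge gap via $\ta_v^{t+1}-\ta_{v\to f}^{t+1}=z_{fv}\tr_{f\to v}^t$.

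The one place the paper is cleaner than your outline is the step where you invoke a multivariate Lindeberg CLT. The paper instead exploits that the weights $z_{f'v}$ are themselves Gaussian: conditioning on $\sigma((\calT_{v'\to f})_{v'\in\partial f})$, the vector $\tbb_f^t$ is \emph{exactly} $\normal(\bzero,\widehat\bSigma_{[0:t]})$ with $\widehat\bSigma_{[0:t]}=\tfrac1n\sum_{v'\in\partial f}\tbq_{v'\to f}^t(\tbq_{v'\to f}^t)^{\sT}$, so no CLT is needed---one only needs the weak law of large numbers to show $\widehat\bSigma_{[0:t]}\stackrel{L_1}\to\bSigma_{[0:t]}$ and then Slutsky. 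This simultaneously delivers joint Gaussianity across all time slices and the correct cross-covariances $\Sigma_{s,s'}$, which is precisely the ``joint convergence'' issue you flag as the main obstacle. Your Lindeberg route works too (and would survive non-Gaussian edge weights), but the conditional-Gaussianity shortcut is both shorter and makes the covariance identification automatic.
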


The analysis of message passing on the tree is facilitated by the many independence relationships between messages, which follow from the following lemma.

\begin{lemma}\label{lem:Gfv-measurability-of-messages}
    For all $(f,v) \in \calE$ and all $t$, the messages $\tilde r_{f \rightarrow v}^t, \tb_{f \rightarrow v}^t$ are $\calT_{f \rightarrow v}$-measurable, and the messages $\tilde q_{v \rightarrow f}^t, \ta_{a \rightarrow f}^t$ is $\calT_{v \rightarrow f}$-measurable.
\end{lemma}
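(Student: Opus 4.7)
The plan is a direct induction on $t \ge 0$, jointly establishing the four measurability claims. The essential structural observation is that the update $\ta_{v\rightarrow f}^{t+1} = \sum_{f' \in \partial v \setminus f} z_{f'v}\tr_{f'\rightarrow v}^t$ excludes the target factor $f$: for each $f' \in \partial v \setminus f$, the factor node $f'$, the edge $(f',v)$, and the entire subtree $\calT_{f'\rightarrow v}$ lie strictly on the $v$-side of the edge $(f,v)$, hence are contained in $\calT_{v\rightarrow f}$ (using $v\in\calV_{v\rightarrow f}$ and $f'\in\calF_{v\rightarrow f}$, $(f',v)\in\calE_{v\rightarrow f}$ from the stated convention). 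By the analogous cut, the update $\tb_{f\rightarrow v}^t = \sum_{v' \in \partial f \setminus v} z_{fv'}\tq_{v'\rightarrow f}^t$ stays inside $\calT_{f\rightarrow v}$.

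For the base case $t=0$, one checks that $\tq_{v\rightarrow f}^0 = \tg_0(\theta_v,v_v)$ is $\calT_{v\rightarrow f}$-measurable since $v\in\calV_{v\rightarrow f}$, that $\tb_{f\rightarrow v}^0$ is $\calT_{f\rightarrow v}$-measurable by the structural observation applied to the $v'\in\partial f\setminus v$ sum, and that $\tr_{f\rightarrow v}^0 = \tf_0(\tb_{f\rightarrow v}^0;w_f,u_f)$ inherits this measurability since $f\in\calF_{f\rightarrow v}$ by the symmetric convention. For the inductive step, assuming the four claims at all $s<t$, each $\tr_{f'\rightarrow v}^{t-1}$ appearing in the sum for $\ta_{v\rightarrow f}^t$ is $\calT_{f'\rightarrow v}$-measurable by hypothesis, hence $\calT_{v\rightarrow f}$-measurable, and the weights $z_{f'v}$ correspond to edges $(f',v)\in\calE_{v\rightarrow f}$. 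Then $\tq_{v\rightarrow f}^t = \tg_t(\ta_{v\rightarrow f}^1,\ldots,\ta_{v\rightarrow f}^t;\theta_v,v_v)$ is $\calT_{v\rightarrow f}$-measurable as well, since $v\in\calV_{v\rightarrow f}$. The symmetric argument with the roles of $v$ and $f$ swapped handles $\tb_{f\rightarrow v}^t$ and $\tr_{f\rightarrow v}^t$, closing the induction.

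This lemma poses no real obstacle: it is the precise formalization of the classical fact that on a tree, excluding the target neighbor in each message update decouples the two sides of the separating edge. The only mild care needed is bookkeeping on which side of a given edge each node or edge index lives, together with the two ``endpoint'' conventions $v\in\calV_{v\rightarrow f}$ and $f\in\calF_{f\rightarrow v}$ that make the base case close cleanly.
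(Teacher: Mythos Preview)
Your proposal is correct and follows essentially the same approach as the paper's proof: both argue by a direct induction on $t$, using the ``exclude the target neighbor'' structure of the updates together with the containment $\calT_{f'\rightarrow v}\subset\calT_{v\rightarrow f}$ for $f'\in\partial v\setminus f$ (and its symmetric counterpart). Your write-up is simply more explicit about the endpoint conventions and the edge bookkeeping than the paper's terse version.
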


\begin{proof}[Lemma \ref{lem:Gfv-measurability-of-messages}]
    The proof is by induction. The base case is that $\tilde q_{v\rightarrow f}^0 = g_0(\theta_v,v_v)$ is $\calT_{v \rightarrow f}$-measurable.
    Then, if $\tq_{v \rightarrow f}^s$ are $\calT_{v\rightarrow f}$-measurable and $\tb_{f \rightarrow v}^s$ are $\calT_{f \rightarrow v}$-measurable for $0 \leq s \leq t$ and all $(f,v) \in \calE$, then $\tb_{f\rightarrow v}^t,\tr_{f \rightarrow v}^t$ are $\calT_{f \rightarrow v}$-measurable by \eqref{gauss-mp-tree}.
    Similarly, if $\tr_{f \rightarrow v}^s$ are $\calT_{f\rightarrow v}$-measurable and $\ta_{v \rightarrow f}^s$ are $\calT_{v \rightarrow r}$-measurable for $0 \leq s \leq t$ and all $(f,v) \in \calE$, then $\ta_{f\rightarrow v}^{t+1},\tr_{f \rightarrow v}^{t+1}$ are $\calT_{v \rightarrow f}$-measurable by \eqref{gauss-mp-tree}.
    The induction is complete.
\end{proof}

We now prove Lemma \ref{lem:se-gauss-mp-on-tree}.

\begin{proof}[Lemma \ref{lem:se-gauss-mp-on-tree}]
    The proof is by induction.
    
    \noindent \textit{Base case: $(\theta_v, v_f) \stackrel{\mathrm{W}}\rightarrow \mu_{\Theta,V}$.}

    This is the exact distribution in finite samples by assumption.\\
    
    \noindent \textit{Inductive step 1: Eq.~\eqref{mp-on-tree-a-belief-convergence} at $t$, bounded fourth moments of $\tba_v^t,\tba_{v\rightarrow f}^t$, and $\E[\|\tba_v^t - \tba_{v\rightarrow f}^t\|^2] \rightarrow 0$ imply Eq.~\eqref{mp-on-tree-b-belief-convergence} at $t$, bounded fourth moments of $\tbb_f^t,\tbb_{f\rightarrow v}^t$, and $\E[\|\tbb_f^t - \tbb_{f\rightarrow v}^t\|^2] \rightarrow 0$.}

    The $\sigma$-algebras $(\calT_{v \rightarrow f})_{v \in \partial f}$ are independent of $(z_{fv})_{v \in \partial f}$, which are mutually independent of each other. 
    Thus, by \eqref{mp-node-beliefs}, conditional on $\sigma((\calT_{v \rightarrow f})_{v \in \partial f})$ the beliefs $\tbb_f^t$ are jointly normal with covariance $\widehat \bSigma_{[0{:}t]} := \frac1n \sum_{v \in \partial f} \tbq_{v \rightarrow f}^t (\tbq_{v \rightarrow f}^t)^\mathsf{T}$.
    That is,
    \begin{equation*}
        \tbb_f^t \bigm| \sigma((\calT_{v \rightarrow f})_{v \in \partial f}) \sim \mathsf{N}(\bzero_{t+1},\widehat \bSigma_{[0{:}t]}).
    \end{equation*}
    Because $(\tba_{v\rightarrow f}^t,\theta_v,v_v) \mapsto \tg_s(\tba_{v\rightarrow f}^s;\theta_v,v_v)\tg_{s'}(\tba_{v\rightarrow f}^{s'};\theta_v,v_v)$ is uniformly pseudo-Lipschitz of order 2 by Lemma \ref{lem:pseudo-lipschitz}, we have $\E[\widehat \Sigma_{s,s'}] = \E[\tq_{v\rightarrow f}^s\tq_{v \rightarrow f}^{s'}] = \E[\tg_s(\tba_{v\rightarrow f}^s;\theta_v,v_v)\tg_{s'}(\tba_{v\rightarrow f}^{s'};\theta_v,v_v)] \rightarrow \Sigma_{s,s'}$ by the inductive hypothesis, Lemma \ref{lem:wass-pseudo-lipschitz-convergence}, and \eqref{eq:gauss-se}.
    The terms in the sum defining $\widehat \bSigma_{[0{:}t]}$ are mutually independent by Lemma \ref{lem:Gfv-measurability-of-messages} and have bounded second moments by the inductive hypothesis and the Lipschitz continuity of the functions $(\tg_s)_{0\leq s \leq t}$.
    By the weak law of large numbers, $\widehat \bSigma_{[0{:}t]} \stackrel{L_1}\rightarrow \bSigma_{[0{:}t]}$, whence by Slutsky's theorem, $\tbb_f^t \stackrel{\mathrm{d}}\rightarrow \normal(\bzero_{t+1},\bSigma_{[0{:}t]})$.
    Further, $\E[\tbb_f^t(\tbb_f^t)^\mathsf{T}] = \E[\widehat \bSigma_{[0{:}t]}] \rightarrow \bSigma_{[0{:}t]}$.
    Convergence in distribution and in second moment implies convergence in the Wasserstein space of order 2 \cite[Theorem 6.9]{Villani2008}, so $\tbb_f^t \stackrel{\mathrm{W}}\rightarrow \normal(\bzero_{t+1},\bSigma_{[0{:}t]})$.

    To bound the fourth moments of $\tb_f^t$, we compute 
    $$
    \E[(\tb_f^t)^4] = \E[\widehat \Sigma_{t,t}^2] = \frac1{n^2} \sum_{v \in \partial f} \E[(\tq_{v \rightarrow f}^t)^4] + \frac1{n^2} \sum_{v \neq v' \in \partial f} \E[(\tq_{v \rightarrow f}^t)^2]\E[(\tq_{v' \rightarrow f}^t)^2] \rightarrow \Sigma_{t,t},
    $$
    where the first term goes to 0 because the fourth moments of $\tq_{v \rightarrow f}^t$ are bounded by the inductive hypothesis and Lipschitz continuity of $\tg_t$, and the second term goes to $ \E[(\tq_{v \rightarrow f}^t)^2]$ by the same argument in the preceding paragraph.
    The boundedness of the fourth moments of $\tb_f^s$ holds similarly (and, anyway, will have been established earlier in the induction).

    Finally, observe $\tb_f^t - \tb_{f\rightarrow v}^t = z_{fv} \tq_{v \rightarrow f}^t$ and $\E[(z_{fv} \tq_{v \rightarrow f}^t)^2] = \E[\tq_{v \rightarrow f}^t)^2] / n \rightarrow 0$, where $\E[\tq_{v \rightarrow f}^t)^2]$ is bounded by the inductive hypothesis and Lipschitz continuity of $\tg_t$.
    The convergence $\E[(\tb_f^t - \tb_{f\rightarrow v}^s)^2] \rightarrow 0$ for $s < t$ holds similarly (and, anyway, will have been established earlier in the induction).
  The Wasserstein convergence of $(\tbb_{v \rightarrow f}^t,\theta_v,v_v)$ now follows.
  The bounded fourth moments of $\tbb_{v \rightarrow f}^t$ hold similarly.
    \\

    \noindent \textit{Inductive step 2: Eq.~\eqref{mp-on-tree-belief-convergence} at $t$, bounded fourth moments of $\tbb_f^t,\tbb_{f\rightarrow v}^t$, and $\E[\|\tbb_f^t - \tbb_{f\rightarrow v}^t\|^2] \rightarrow 0$ imply Eq.~\eqref{mp-on-tree-belief-convergence} at $t+1$, bounded fourth moments of $\tba_v^t,\tba_{v\rightarrow f}^{t+1}$, and $\E[\|\tba_v^{t+1} - \tba_{v\rightarrow f}^{t+1}\|^2] \rightarrow 0$.} 

    This follows by exactly the same argument as in inductive step 1.
    
    The induction is complete, and Lemma \ref{lem:se-gauss-mp-on-tree} follows.
\end{proof}

\subsection{Message passing in the high-dimensional regression model}

We prove Lemma \ref{lem:se-mp-tree} for the high-dimensional regression model by showing that the iteration \eqref{mp-on-tree} is well approximated by a Gaussian message passing algorithm after a change of variables.
The functions $\tf_t,\tg_t$ in the Gaussian message passing algorithm are defined in terms of the functions $f_t,g_t$ of the original message passing algorithm \eqref{mp-on-tree} and the function $h$ used to define the high-dimensional regression model.
\begin{gather*}
    \tf_t(\tb^0, \cdots, \tb^t, w, u ) := f_t(\tb^1, \cdots, \tb^t; h(\tb^0, w) , u), \;\; t \geq 0,\\
    \tg_0(\theta,v) = \theta,\;\;\tg_t(\ta^1, \cdots, \ta^t; \theta , v ) := g_t(\alpha_1 \theta + \ta^1, \cdots, \alpha_1 \theta +  \ta^t; v), \;\; t \geq 1.
\end{gather*}
Define $(\tilde a_{v \rightarrow f}^t)_{t \geq 1}$, $(\tilde a_v^t)_{t \geq 1}$, $(\tilde q_{v\rightarrow f}^t)_{t \geq 0}$, $(\tilde b_{f\rightarrow v}^t)_{t \geq 0}$, $(\tilde b_f^t)_{t \geq 0}$, $(\tilde r_{f\rightarrow v}^t)_{t \geq 0}$ via the Gaussian message passing algorithm \eqref{gauss-mp-tree} with initial data $\theta_v,v_v,w_f,u_f$ and with $z_{fv} = x_{fv}$.
Because $f_t$, $g_t$, and $h$ are Lipschitz, so too are $\tf_t$ and $\tg_t$. 
Under the function definitions $\tf_t,\tg_t$ given above, the definitions of $\Sigma_{s,s}$ and $T_{s,s'}$ in \eqref{eq:gauss-se} and \eqref{amp-scalars-hd-reg} are equivalent.
Thus, Lemma \ref{lem:se-gauss-mp-on-tree} holds for the iterates of this Gaussian message passing algorithm with the $\bT_{[1{:}t]}$, $\bSigma_{[0{:}t]}$ defined by \eqref{amp-scalars-hd-reg}.

We claim that for fixed $s \geq 1$, as $n \rightarrow \infty$ we have 
\begin{subequations}\label{hd-reg-mp-properties}
\begin{equation}\label{hd-reg-mp-properties-1}
    \E[(\alpha_s \theta_v + \mah{s} - a_{v\rightarrow f}^s)^2] \rightarrow 0 \;\; \text{and} \;\; \E[(\mbh{s} - b_{f \rightarrow v}^s)^2] \rightarrow 0,
\end{equation}
and
\begin{equation}\label{hd-reg-mp-properties-2}
    \E[(a_{v\rightarrow f}^s)^4] \;\; \text{and} \;\; \E[(b_{f \rightarrow v}^s)^4] \;\; \mbox{are uniformly bounded with respect to $n$},
\end{equation}
\end{subequations}
where $(\alpha_s)$ are defined by \eqref{amp-scalars-hd-reg}.
These are the same coefficients appearing in the AMP state evolution (Lemma \ref{lem:gfom-to-amp}), as claimed.
We show \eqref{hd-reg-mp-properties} by induction. 
There is no base case because the inductive steps work for $t = 0$ as written.
\\

\noindent \textit{Inductive step 1: If \eqref{hd-reg-mp-properties} holds for $1\leq s \leq t$, then \eqref{hd-reg-mp-properties-1} holds for $s = t+1$.}

We expand
\begin{align*}
    \alpha_{t+1} \theta_v + \ta_{v\rightarrow f}^{t+1} - a_{v \rightarrow f}^{t+1} &= \alpha_{t+1} \theta_v + \sum_{f' \in \partial v \setminus f} z_{f'v} ( \tf_t(\tbb_{f'\rightarrow v}^t;w_{f'},u_{f'}) - f_t(\bb_{f' \rightarrow v}^t;y_{f'},u_{f'}) )\\
    &= \alpha_{t+1} \theta_v + \sum_{f' \in \partial v \setminus f} z_{f'v} ( \tf_t(\tbb_{f'\rightarrow v}^t;w_{f'},u_{f'}) - \tf_t(\tb_{f'\rightarrow v}^0,\bb_{f'\rightarrow v}^t;w_{f'},u_{f'}) )\\
    &\quad\quad + \sum_{f' \in \partial v \setminus f} z_{f'v} ( \tf_t(\tb_{f'\rightarrow v}^0,\bb_{f'\rightarrow v}^t;w_{f'},u_{f'}) - \tf_t(\tb_{f'}^0,\bb_{f' \rightarrow v}^t;w_{f'},u_{f'}) )\\
    &=: \alpha_{t+1}\theta_v + \mathsf{I} + \mathsf{II}.
\end{align*}
(Note that $\tbb_{f'\rightarrow v}^t$ is $(t+1)$-dimensional and $\bb_{f'\rightarrow v}^t$ is $t$-dimensional).
First we analyze $\mathsf{I}$.
We have
\begin{align*}
    &|\tf_t(\tbb_{f'\rightarrow v}^t;w_{f'},u_{f'}) - \tf_t(\tb_{f'\rightarrow v}^0,\bb_{f'\rightarrow v}^t;w_{f'},u_{f'})| \leq L \sum_{s = 1}^t |\tb_{f' \rightarrow v}^s - b_{f'\rightarrow v}^s|,
\end{align*}
where $L$ is a Lipschitz constant of $\tf_t$.
The terms in the sum defining $\mathsf{I}$ are mutually independent, and $\tb_{f' \rightarrow v}^s,b_{f'\rightarrow v}^s$ are independent of $z_{f'v}$.
Thus,
\begin{align*}
    \E[\mathsf{I}^2] &= \frac{n-1}n \E[(\tf_t(\tbb_{f'\rightarrow v}^t;w_{f'},u_{f'}) - \tf_t(\tb_{f'\rightarrow v}^0,\bb_{f'\rightarrow v}^t;w_{f'},u_{f'}))^2]\\
    &\leq \frac{L^2(n-1)t}{n} \sum_{s=1}^t \E[(\tb_{f'\rightarrow v}^s - b_{f' \rightarrow v}^s)^2] \rightarrow 0,
\end{align*}
by the inductive hypothesis.

Next we analyze $\mathsf{II}$.
Note that all arguments to the functions in the sum defining $\mathsf{II}$ are independent of $z_{f'v}$ and $\theta_v$ except for $\tb_{f'}^0 = z_{f'v} \theta_v + \sum_{v' \in \partial f' \setminus v} z_{f'v'} \theta_{v'}$.
Because $\tf_t$ is Lipschitz, we may apply Stein's lemma (ie., Gaussian integration by parts) \cite{Stein1981EstimationDistribution} to get
\begin{align*}
    & \E[\alpha_{t+1} \theta_v + \mathsf{II} \bigm| \theta_v, \sigma((\calT_{v'' \rightarrow f'})_{v'' \in \partial f' \backslash v})] \\
    &= 
    \alpha_{t+1} \theta_v  + (n-1)\E\big[z_{f'v}( \tf_t(\tb_{f'\rightarrow v}^0,\bb_{f'\rightarrow v}^t;w_{f'},u_{f'}) - \tf_t(\tb_{f'}^0,\bb_{f' \rightarrow v}^t;w_{f'},u_{f'})) \bigm| \theta_v \big] \\
    & = \theta_v \left(\alpha_{t+1} - \frac{n-1}{n}\E[ \partial_{\tb^0} \tf_t(\tb_{f'}^0,\bb_{f'\rightarrow v}^t;w_{f'},u_{f'}) \bigm| \theta_v]\right),
\end{align*}
where $\partial_{\tb^0} \tf_t$ is the weak-derivative of $\tf_t$ with respect to its first argument, which is defined almost everywhere with respect to Lebesgue measure because $\tf_t$ is Lipschitz \cite[pg.~81]{Evans2015MeasureFunctions}.

We claim the right-hand side of the preceding display converges in $L_2$ to 0, as we now show.
The random variable $\E[ \partial_{\tb^0} \tf_t(\tb_{f'}^0,\bb_{f'\rightarrow v}^t;w_{f'},u_{f'}) | \theta_v, (\calT_{v'' \rightarrow f'})_{v'' \in \partial f' \backslash  v}]$
is almost-surely bounded because $\tf_t$ is Lipschitz.
It converges in probability to $\alpha_{t+1}$. 
The random vector $(\tilde b_{f'}^0,\bb_{f'\rightarrow v}^t)$ has a Gaussian distribution
conditional on $\sigma((\calT_{v'' \rightarrow f'})_{v'' \in \partial f' \backslash  v})$ and $\theta_v$; 
in particular,
\begin{align*}
    (\tb^0_{f'\rightarrow v} + z_{f'v} \theta_v, \bb_{f'\rightarrow v}^t) | \theta_v, \sigma((\calT_{v'' \rightarrow f'})_{v'' \in \partial f' \backslash  v}) \stackrel{\mathrm{d}}= \normal(\mathbf{0}, \widehat \bSigma),
\end{align*}
where we define $\widehat \bSigma \in \reals^{(t+1) \times (t+1)}$ by
\begin{align*}
    \widehat \Sigma_{0,0} = \dfrac{1}{n} \sum\limits_{v' \in \partial f'} \theta_{v'}^2 \;\; \text{and} \;\; \widehat \Sigma_{s,s'} = \dfrac{1}{n} \sum\limits_{v' \in \partial f' \setminus v} q_{v' \rightarrow f'}^s q_{v'\rightarrow f'}^{s'}  \;\; \text{for} \; s \geq 1 \mbox{ or } s' \geq 1,
\end{align*}
where for the purposes of the preceding display we set $q_{v'\rightarrow f'}^0 = \theta_{v'}$.
By the Lipschitz continuity of the functions $(g_s)$, Lemmas \ref{lem:pseudo-lipschitz} and \ref{lem:wass-pseudo-lipschitz-convergence}, and the inductive hypothesis,
we have $\E[\widehat \bSigma] \rightarrow \bSigma_{[0{:}t]}$.
The terms in the sums in the previous display have bounded second moments by the inductive hypthesis \eqref{hd-reg-mp-properties-2} and the Lipschitz continuity of the functions $(g_s)$.
By the weak law of large numbers, we conclude $\widehat \bSigma \stackrel{\mathrm{p}}\rightarrow \bSigma_{[0:t+1]}$.

Observe that $\E[ \partial_{\tb^0} \tf_t(\tb_{f'}^0,\bb_{f'\rightarrow v}^t;w_{f'},u_{f'}) | \theta_v, (\calT_{v'' \rightarrow f'})_{v'' \in \partial f' \backslash  v}] = \E[\partial_{\tb^0} \tf_t(\widehat \bSigma^{1/2}\bZ;W,U)]$, where on the right-hand side the expectation is with respect to $(W,U) \sim \mu_{W,U}$ and $\bZ \sim \normal(\bzero_{t+1},\bI_{t+1})$ independent.
Because $\partial_{\tb^0} \tf_t$ is almost surely bounded, by the dominated convergence theorem, the right-hand side is continuous in $\widehat \bSigma$.
By the continuous mapping theorem and \eqref{amp-scalars-hd-reg}, we conclude $\E[ \partial_{\tb^0} \tf_t(\tb_{f'}^0,\bb_{f'\rightarrow v}^t;w_{f'},u_{f'}) | \theta_v, (\calT_{v'' \rightarrow f'})_{v'' \in \partial f' \backslash  v}] \stackrel{\mathrm{p}}\rightarrow \alpha_{t+1}$.
Then, by dominated convergence, $\E[\alpha_{t+1} \theta_v + \mathsf{II} \bigm| \theta_v ] \stackrel{L_2}\rightarrow 0$.
Moreover, because the terms in the sum defining $\mathsf{II}$ are mutually independent given $\theta_v$
\begin{align*}
    \Var( \alpha_{t+1}\theta_v + \mathsf{II} \mid \theta_v ) &\leq (n-1)\E\left[z_{f'v}^2 ( \tf_t(\tb_{f'\rightarrow v}^0,\bb_{f'\rightarrow v}^t;w_{f'},u_{f'}) - \tf_t(\tb_{f'}^0,\bb_{f' \rightarrow v}^t;w_{f'},u_{f'}))^2 \mid \theta_v \right] \\
    &\leq L^2(n-1) \E[z_{f'v}^4 \theta_v^2 \mid \theta_v ] \leq 3\theta_v^2 / n,
\end{align*}
where $L$ is the Lipschitz constant of $\tf_t$.
We conclude that $\E[\Var(\alpha_{t+1}\theta_v + \mathsf{II} \mid \theta_v)] \rightarrow 0$.
Combined with $\E[\alpha_{t+1} \theta_v + \mathsf{II} \bigm| \theta_v ] \stackrel{L_2}\rightarrow 0$, we get $\Var(\alpha_{t+1} \theta_v + \mathsf{II}) = \Var(\E[\alpha_{t+1} \theta_v + \mathsf{II}|\theta_v]) + \E[\Var(\alpha_{t+1} \theta_v + \mathsf{II}|\theta_v)] \rightarrow 0$, so that $\alpha_{t+1} \theta_v + \mathsf{II} \stackrel{L_2}\rightarrow 0$.
Combining $\mathsf{I}\stackrel{L_2}\rightarrow 0$ and $\alpha_{t+1}\theta_v + \mathsf{II}\stackrel{L_2}\rightarrow 0$ gives $\E[(\alpha_{t+1}\theta_v + \ta_{v\rightarrow f}^{t+1} - a_{v \rightarrow f}^{t+1})^2]\rightarrow 0$, as desired.

We now expand
\begin{equation*}
    \tb_{f \rightarrow v}^{t+1} - b_{f\rightarrow v}^{t+1} = \sum_{v'\in\partial f \setminus v} z_{fv'}(g_t(\balpha_{t+1} \theta_{v'} + \tba_{v' \rightarrow f}^{t+1}; v_{v'}) - g_t(\ba_{v'\rightarrow f}^{t+1}; v_{v'})).
\end{equation*}
The terms in this sum are mutually independent, and $\tba_{v'\rightarrow f}^{t+1},\ba_{v'\rightarrow f}^{t+1},\theta_{v'}$ are independent of $z_{f'v}$.
Thus,
\begin{align*}
    \E[(\tb_{f \rightarrow v}^{t+1} - b_{f\rightarrow v}^{t+1})^2] &= \frac{p-1}n\E[(g_t(\balpha_{t+1} \theta_{v'} + \tba_{v' \rightarrow f}^{t+1}; v_{v'}) - g_t(\ba_{v'\rightarrow f}^{t+1}; v_{v'}))^2]\\
    &\leq \frac{L^2(p-1)(t+1)}{n} \sum_{s=1}^{t+1}\E[(\alpha_s \theta_{v'} + \ta_{v'\rightarrow f}^s - a_{v'\rightarrow f}^s)^2] \rightarrow 0.
\end{align*}
This completes the proof of \eqref{hd-reg-mp-properties-1} at $s = t+1$.
\\

\noindent \textit{Inductive step 2: If \eqref{hd-reg-mp-properties} holds for $1\leq s \leq t$, then \eqref{hd-reg-mp-properties-2} holds for $s = t+1$.}

By Lipschitz continuity,
\begin{align*}
  \left|a_{v \rightarrow f}^{t+1} - \sum_{f' \in \partial v \setminus f} z_{f'v} \tf_t(\tb_{f'\rightarrow v}^0,\bb_{f'\rightarrow v}^t,u_{f'},w_{f'})\right| \leq L|\theta_v|\sum_{f' \in \partial v \setminus f} |z_{f'v}|,
\end{align*}
where $L$ is a Lipschitz constant for $\tf_t$.
The right-hand side has bounded fourth moment, so we must only show that the sum in the previous display has 
bounded fourth moment.
The quantity $\tf_t(\tb_{f'\rightarrow v}^0,\bb_{f'\rightarrow v}^t,u_{f'},w_{f'})$ has bounded fourth moment by the inductive hypothesis and Lipschitz continuity of $\tf_t$.
Because $z_{f'v}$ is independent of the argument to $\tf_t$ and has fourth moment $3/n^2$, the product $z_{f'v}\tf_t(\tb_{f'\rightarrow v}^0,\bb_{f'\rightarrow v}^t,u_{f'},w_{f'})$ has mean 0 and fourth moment $O(1/n^2)$.
Because these products are mean zero and independent across $f'$, their sum has bounded fourth moment.
We conclude $a_{v \rightarrow f}^{t+1} $ has bounded fourth moment as well.

Recall $b_{f \rightarrow v}^{t+1} = \sum_{v' \in \partial f \setminus v} z_{fv'} g_t(\ba_{v'\rightarrow f}^{t+1};v_v')$.
The terms in the sum are independent, and $z_{fv'}$ is independent of $\ba_{v'\rightarrow f}^{t+1};v_v'$.
Using the Lipschitz continuity of $g_t$ and the inductive hypothesis, we conclude $b_{f \rightarrow v}^{t+1}$ has bounded fourth moment by the same argument as in the preceding paragraph.

We conclude \eqref{hd-reg-mp-properties-2} at $s = t+1$.

The induction is complete, and \eqref{hd-reg-mp-properties-1} holds for all $s\geq1$.
Lemma \ref{lem:se-mp-tree} follows by combining Lemma \ref{lem:se-gauss-mp-on-tree} and Eq.~\eqref{hd-reg-mp-properties-1}.

\subsection{Message passing in the low-rank matrix estimation model}

Like in the preceding section, we prove Lemma \ref{lem:se-mp-tree} for the low-rank matrix estimation model by showing that the iteration \eqref{mp-on-tree} is well approximated by a Gaussian message passing algorithm after a change of variables.
The functions in the Gaussian message passing algorithm are defined in terms of the functions $f_t,g_t$ of the original message passing algorithm \eqref{mp-on-tree}.
\begin{gather*}
    \tf_t(\tb^0, \cdots, \tb^t, w, u ) := f_t(\tb^1 + \gamma_1 w, \cdots, \tb^t + \gamma_t w; 0 , u), \\
    \tg_t(\ta^1, \cdots, \ta^t; \theta , v ) := g_t(\ta^1 + \alpha_1 \theta, \cdots, \ta^t + \alpha_t \theta; v).
\end{gather*}
Note that here $\tf_t$ does not depend on $\tb^0$ is never used, and we may define $\tg_0$ arbitrarily without affecting later iterates.\footnote{The iterate $\tb^0$ only played a role in approximating the high-dimensional regression message passing algorithm by a Gaussian message passing algorithm.} 
Define $(\tilde a_{v \rightarrow f}^t)_{t \geq 1}$, $(\tilde a_v^t)_{t \geq 1}$, $(\tilde q_{v\rightarrow f}^t)_{t \geq 0}$, $(\tilde b_{f\rightarrow v}^t)_{t \geq 0}$, $(\tilde b_f^t)_{t \geq 0}$, $(\tilde r_{f\rightarrow v}^t)_{t \geq 0}$ via the Gaussian message passing algorithm \eqref{gauss-mp-tree} with initial data $\theta_v,v_v,u_f,z_{fv}$ and $w_f = \lambda_f$.
Because $f_t$, $g_t$, and $h$ are Lipschitz, so too are $\tf_t$ and $\tg_t$.
Under the function definitions $\tf_t,\tg_t$ given above and the change of variables $w_f = \lambda_f$, the definitions of $\Sigma_{s,s}$ and $T_{s,s'}$ in \eqref{eq:gauss-se} and \eqref{amp-scalars-lr-mat} are equivalent.
Thus, Lemma \ref{lem:se-gauss-mp-on-tree} holds for the iterates of this Gaussian message passing algorithm with the $\bT_{[1{:}t]}$, $\bSigma_{[0{:}t]}$ defined by \eqref{amp-scalars-lr-mat}.

We claim that for fixed $s \geq 1$, as $n \rightarrow \infty$ we have
\begin{subequations}\label{lr-mat-mp-properties}
\begin{equation}\label{lr-mat-mp-properties-1 }
  \E[(\alpha_s \theta_v + \ta_{v\rightarrow f}^s - a_{v\rightarrow f}^s)^2] \rightarrow 0\;\; \text{and} \;\; \E[(\gamma_s \lambda_f + \tb_{f\rightarrow v}^s - b_{f\rightarrow v}^s)^2] \rightarrow 0,
\end{equation}
and
\begin{equation}\label{lr-mat-mp-properties-2}
  \E[\theta_v^2 (a_{v\rightarrow f}^s)^2] \;\; \text{and} \;\; \E[\lambda_f^2 (b_{f\rightarrow v}^s)^2] \;\; \text{are bounded for fixed $s$.}
\end{equation}
\end{subequations}
We show this by induction. There is no base case because the inductive step works for $t = 0$ as written.
\\

\noindent \textit{Inductive step: If \eqref{lr-mat-mp-properties} holds for $1\leq s \leq t$, then \eqref{lr-mat-mp-properties} holds for $s = t+1$.} 

We expand
\begin{align*}
  \alpha_{t+1} \theta_v + \ta_{v \rightarrow f}^{t+1} - a_{v \rightarrow f}^{t+1} &= \alpha_{t+1} \theta_v + \sum_{f' \in \partial v \setminus f} z_{f'v} ( f_t( \tbb_{f'\rightarrow v}^t + \bgamma_t \lambda_{f'} ; 0, u_{f'}) - f_t( \bb_{f'\rightarrow v}^t  ; 0, u_{f'}) ) \\
  &\quad\quad - \frac1n \theta_v \sum_{f' \in \partial v \setminus f} \lambda_{f'} f_t( \bb_{f'\rightarrow v}^t  ; 0, u_{f'})\\
  &=: \alpha_{t+1} \theta_v + \mathsf{I} + \mathsf{II},
\end{align*}
where $\tbb_{f'\rightarrow v}^t = (\tb_{f'\rightarrow v}^1,\ldots,\tb_{f'\rightarrow v}^t)$ and $\bgamma_t = (\gamma_1,\ldots,\gamma_t)$ (note that $\tb_{f'\rightarrow v}^0$ is excluded, which differs from the notation used in the proof of Lemma \ref{lem:se-mp-tree}).

First we analyze $\mathsf{I}$.
The terms in the sum defining $\mathsf{I}$ are mutually independent, and $\tb_{f'\rightarrow v}^s$, $b_{f'\rightarrow v}^s$, $\lambda_{f'}$, $u_{f'}$ are independent of $z_{f'v}$.
Thus,
\begin{align*}
  \E[\mathsf{I}^2] &= \frac{n-1}n \E[( f_t( \tbb_{f'\rightarrow v}^t + \bgamma_t \lambda_{f'} ; 0, u_{f'}) - f_t( \bb_{f'\rightarrow v}^t  ; 0, u_{f'})^2]\\
  &\leq \frac{L^2(n-1)t}{n} \sum_{s=1}^t \E[(\tb_{f'\rightarrow v}^s + \gamma_s \lambda_{f'} - b_{f'\rightarrow v}^s)^2] \rightarrow 0,
\end{align*}
by the inductive hypothesis, where $L$ is a Lipschitz constant of $f_t$.
Moreover, because $\theta_v$ is independent of $\mathsf{I}$ and has bounded fourth moment, $\E[\theta_v^2 \mathsf{I}^2] \rightarrow 0$ as well.

Next we analyze $\mathsf{II}$.
By the inductive hypothesis and Lemma \ref{lem:se-gauss-mp-on-tree}, 
\begin{equation*}
  (\bb_{f'\rightarrow v}^t,\lambda_{f'},u_{f'}) \stackrel{\mathrm{W}}\rightarrow (\bgamma_t \Lambda + \tB^t,\Lambda,U),
\end{equation*}
where $(\Lambda,U) \sim \mu_{\Lambda,U}$ and $\tB^t \sim \normal(\bzero_t,\bSigma_{[1{:}t]})$ independent.
Because $(\bb^t,\lambda,u)\mapsto \lambda f_t(\bb^t;0,u)$ is uniformly pseudo-Lipschitz of order 2 by Lemma \ref{lem:pseudo-lipschitz}, we have $\E[\lambda_{f'}f_t(\bb_{f'\rightarrow v}^t;0,u_{f'})] \rightarrow \alpha_{t+1}$ by Lemma \ref{lem:wass-pseudo-lipschitz-convergence} and the state evolution recursion \eqref{amp-scalars-lr-mat}.
Moreover, because $f_t$ is Lipschitz, for some constant $C$
\begin{align*}
  \E[\lambda_{f'}^2f_t(\bb_{f'\rightarrow v}^t;0,u_{f'})^2] &\leq C\E\left[\lambda_{f'}^2\left(1 + \sum_{s=1}^t (b_{f'\rightarrow v}^s)^2 + u_{f'}^2\right)\right] \\
  &= C\left(\E[\lambda_{f'}^2] + \sum_{s=1}^t \E[\lambda_{f'}^2 (b_{f'\rightarrow v}^s)^2] + \E[\lambda_{f'}^2u_{f'}^2]\right),
\end{align*}
which bounded by the inductive hypothesis and the fourth moment assumption on $\mu_{\Lambda,U}$.
Because the terms in the sum defining $\mathsf{II}$ are mutually independent, by the weak law of large numbers the preceding observations imply 
\begin{equation*}
  \frac1n\sum_{f' \in \partial v \setminus f} \lambda_{f'} f_t( \bb_{f'\rightarrow v}^t  ; 0, u_{f'}) \stackrel{L_2}\rightarrow \alpha_{t+1}.
\end{equation*}
Because $\theta_v$ is independent of this sum and has bounded second moment, we conclude that 
\begin{equation*}
  \alpha_{t+1} \theta_v + \mathsf{II} = \theta_v \left(\alpha_{t+1} - \frac1n\sum_{f' \in \partial v \setminus f} \lambda_{f'} f_t( \bb_{f'\rightarrow v}^t  ; 0, u_{f'}) \right) \stackrel{L_2}\rightarrow 0.
\end{equation*}
Moreover, because $\theta_v$ is independent of the term in parentheses and has bounded fourth moment, $\E[\theta_v^2(\alpha_{t+1}\theta_v + \mathsf{II})^2]\rightarrow 0$.

Combining the preceding results, we have that $\E[(\alpha_{t+1} \theta_v + \ta_{v\rightarrow f}^{t+1} - a_{v\rightarrow f}^{t+1})^2] \rightarrow 0$ and $\E[\theta_v^2(\alpha_{t+1} \theta_v + \ta_{v\rightarrow f}^{t+1} - a_{v\rightarrow f}^{t+1})^2]$ is bounded.
Because $\theta_v$ is independent of $\ta_{v \rightarrow f}^{t+1}$, the term $\E[\theta_v^2 (\ta_{v\rightarrow f}^{t+1})^2]$ is bounded, so also $\E[\theta_v^2 (a_{v\rightarrow f}^{t+1})^2]$ is bounded, as desired.

The argument establishing that $\E[(\gamma_{t+1}\lambda_f + \tb_{f\rightarrow v}^{t+1} - b_{f\rightarrow v}^{t+1})^2] \rightarrow 0$ and that $\E[\lambda_f^2(b_{f\rightarrow v}^{t+1})^2]$ is bounded is equivalent.
The induction is complete, and \eqref{lr-mat-mp-properties} holds for all $s$.

Lemma \ref{lem:se-mp-tree} follows by combining Lemma \ref{lem:se-gauss-mp-on-tree} and Eq.~\eqref{lr-mat-mp-properties}.

\section{Proof of information-theoretic lower bounds on the computation tree (Lemma \ref{lem:local-info-theory-lb})}
\label{app:info-theory-lb}

In this section, we prove Lemma \ref{lem:local-info-theory-lb} in both the high-dimensional regression and low-rank matrix estimation models.
We restrict ourselves to the case $r = 1$ and $k = 1$ (with $k$ the dimensionality of $\bW$) 
because the proof for $r > 1$ or $k  > 1$ is completely analogous but would complicate notation.

For any pair of nodes $u,u'$ in the tree $\calT$, let $d(u,u')$ denote the length (number of edges) of the shortest path between nodes $u$ and $u'$ in the tree.
Let $\calT_{u,k} = (\calV_{u,k}, \calF_{u,k}, \calE_{u,k})$ be the radius-$k$ neighborhood of node $u$; that is,
\begin{gather*}
    \calV_{u,k} = \{ v \in \calV \mid d(u,v) \leq k \},\\
    \calF_{u,k} = \{ f \in \calF \mid d(u,f) \leq k \},\\
    \calE_{u,k} = \{ (f,v) \in \calE \mid \max\{d(u,f),d(u,v)\} \leq k\}.
\end{gather*}
With some abuse of notation, we will often use $\calT_{u,k}, \calV_{u,k}, \calF_{u,k}, \calE_{u,k}$ to denote either the collection of observations corresponding to nodes and edges in these sets or the $\sigma$-algebra generated by these obervations. 
No confusion should result.
Note, our convention is that when used to denote a $\sigma$-algebra or collection of random variables, only observed random variables are in include.
Thus, in the high-dimensional regression model, $\calT_{u,k}$ is the $\sigma$-algebra generated by the local observations $x_{fv}$, $y_{f}$, $v_v$, and $u_f$; 
in the low-rank matrix estimation, it is the $\sigma$-algebra genreated by the local observations $x_{fv}$, $v_v$, and $u_f$.
We also denote by $\calT_{v \rightarrow f}^{t,k}$ the collection of observations associated to edges or nodes of $\calT$ which are separated from $f$ by $v$ by at least $k$ intervening edges and at most $t$ intervening edges.
For example, $\calT_{v \rightarrow f}^{1,1}$ contains only $(y_{f'})_{f' \in \partial v \setminus f}$, and $\calT_{v \rightarrow f}^{2,1}$ contains additional the observations $v_{v'}$ and $x_{f'v'}$ for $v' \in \partial f' \setminus v$ for some some $f' \in \partial v \setminus f$.
The collections (or $\sigma$-algebras) $\calV_{v \rightarrow f}^{t,k}$, $\calF_{v \rightarrow f}^{t,k}$, $\calE_{v \rightarrow f}^{t,k}$ are defined similarly, as are the versions of these where the roles of $v$ and $f$ are reversed.

\subsection{Information-theoretic lower bound in the high-dimensional regression model}
\label{app:info-lb-hd-reg}

In this section, we prove Lemma \ref{lem:local-info-theory-lb} in the high-dimensional regression model.

Note that conditions on the conditional density in assumption \textsf{R4} are equivalent positivity, boundedness, and the existence finite, non-negative constants $q_k'$ such that $\frac{|\partial_x^k p(y|x)|}{p(y|x)} \leq q_k'$ for $1 \leq k \leq 5$.
We will often use this form of the assumption without further comment.
This implies that for any random variable $A$ 
\begin{equation}\label{eq:score-bound}
  \frac{|\partial_x^k \E[p(y|x + A)]|}{\E[p(y|x+A)]} \leq \int \frac{|\partial_x^k p(y|x +a)|}{p(y|x+a)} \frac{p(y|x+a)}{\E[p(y|x+A)]} \mu_A(\de a)  \leq q_k',
\end{equation}
because $p(y|x+a)/\E[p(y|x+A)]$ is a probability density with respect to $\mu_A$, the distribution of $A$.

Denote the regular conditional probability of $\Theta$ conditional on $V$ for the measure $\mu_{\Theta,V}$ by $\mu_{\Theta|V} : \reals \times \calB \rightarrow [0,1]$, where $\calB$ denotes the Borel $\sigma$-algebra on $\reals$.
The posterior of $\theta_v$ given $\calT_{v,2t}$ has density with respect to $\mu_{\Theta|V}(v_v,\cdot)$ given by
\begin{equation*}
    p_v(\vartheta | \calT_{v,2t}) \propto \int \prod_{f \in \calF_{v,2t}} p(y_f \mid \sum_{v' \in \partial f} \vartheta_{v'} X_{v'f},u_f) \prod_{v' \in \calV_{v,2t}\setminus v} \mu_{\Theta|V}(v_{v'},\mathrm{d}\vartheta_{v'}).
\end{equation*}
Asymptotically, the posterior density with respect to $\mu_{\Theta|V}(v_v,\cdot)$ behaves like that produced by a Gaussian observation of $\theta_v$ with variance $\tau_t^2$, where $\tau_t$ is defined by \eqref{bamp-se-hd-reg}.
\begin{lemma}\label{lem:hd-reg-posterior}
  In the high-dimensional regression model,
  there exist $\calT_{v,2t}$-measurable random variables $\tau_{v,t},\chi_{v,t}$ such that 
  \begin{equation*}
   p_v( \vartheta | \calT_{v,2t} ) \propto \exp\left(-\frac1{2\tau_{v,t}^2}(\chi_{v,t} - \vartheta)^2 + o_p(1)\right),
  \end{equation*}
  where $o_p(1)$ has no $\vartheta$ dependence.
  Moreover, $(\chi_{v,t},\tau_{v,t},\theta_v,v_v) \stackrel{\mathrm{d}}\rightarrow (\Theta + \tau_t G, \tau_t,\Theta,V)$ where $(\Theta,V) \sim \mu_{\Theta,V}$, $G \sim \normal(0,1)$ independent of $\Theta,V$, and $\tau_t$ is given by \eqref{bamp-se-hd-reg}.
\end{lemma}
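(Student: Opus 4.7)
The plan is to factor the posterior using the tree structure, Taylor-expand each factor to second order in $\vartheta$, and identify the limiting Gaussian parameters with the recursion \eqref{bamp-se-hd-reg}. Let $p_{v'\to f,t}(\vartheta_{v'})$ denote the analogous posterior density on the subtree rooted at $v'\in\partial f\setminus v$ with $f$ removed. The tree structure factors the posterior as
$$p_v(\vartheta\mid \calT_{v,2t}) \propto \prod_{f\in\partial v} m_{f\to v,t}(\vartheta), \qquad m_{f\to v,t}(\vartheta) := \E_{S_{f,t}}\!\bigl[p\bigl(y_f\mid \vartheta x_{fv} + S_{f,t},u_f\bigr)\bigr],$$
where $S_{f,t} = \sum_{v'\in\partial f\setminus v} x_{fv'}\vartheta_{v'}$ with the $\vartheta_{v'}\sim p_{v'\to f,t}$ drawn independently across $v'$.

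Taylor expanding each $\log m_{f\to v,t}$ at $\vartheta=0$ and summing over $f\in\partial v$ yields
$$\log p_v(\vartheta\mid \calT_{v,2t}) = \mathrm{const} + A_{v,t}\,\vartheta - \tfrac12 B_{v,t}\,\vartheta^2 + \mathrm{Rem}_{v,t}(\vartheta),$$
where $A_{v,t}=\sum_f x_{fv}\kappa^{(1)}_{f,t}$, $B_{v,t}=-\sum_f x_{fv}^2\kappa^{(2)}_{f,t}$, and $\kappa^{(k)}_{f,t}=\partial_x^k\log\E_{S_{f,t}}[p(y_f\mid x+S_{f,t},u_f)]\big|_{x=0}$. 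Setting $\tau_{v,t}^2 := 1/B_{v,t}$ and $\chi_{v,t} := A_{v,t}\tau_{v,t}^2$ completes the square and produces the claimed Gaussian form, provided $\mathrm{Rem}_{v,t}(\vartheta)=o_p(1)$ uniformly in $\vartheta$.

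The joint convergence $(\chi_{v,t},\tau_{v,t},\theta_v,v_v)\stackrel{\mathrm{d}}\to(\Theta+\tau_t G,\tau_t,\Theta,V)$ will be established by induction on $t$, paralleling Lemma \ref{lem:se-mp-tree}. Assume the inductive hypothesis that at depth $2(t-1)$ each subtree posterior $p_{v'\to f,t}$ is asymptotically $\normal(\hat\theta_{v'},\tau_{t-1}^2)$ in the Bayes-optimal sense. Conditioned on subtree data, a CLT for $S_{f,t}$ over its $p-1$ independent summands yields $S_{f,t}\approx R_f+\ttau_{t-1}G_2$ with $R_f=\sum_{v'}x_{fv'}\hat\theta_{v'}$, while the Nishimori identity $\E[\hat\Theta^2]=\tau_\Theta^2-\mmse_{\Theta,V}(\tau_{t-1}^2)$ furnishes the joint limit $(R_f,T_f)\stackrel{\mathrm{d}}\to(\sigma_{t-1}G_0,\sigma_{t-1}G_0+\ttau_{t-1}G_1)$ of \eqref{bamp-se-hd-reg}, so that $y_f$ converges to $Y=h(\sigma_{t-1}G_0+\ttau_{t-1}G_1,W)$. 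Lemma \ref{lem:posterior-to-score} then identifies $\E[(\kappa^{(1)}_{f,t})^2]\to\ttau_{t-1}^{-2}\E[\E[G_1\mid Y,G_0,U]^2]=1/\tau_t^2$, and by the score/information identity $\E[\kappa^{(2)}]=-\E[(\kappa^{(1)})^2]$ a LLN gives $B_{v,t}\to 1/\tau_t^2$. A conditional CLT for $A_{v,t}$, with its conditional mean $\theta_v/\tau_t^2$ computed via Stein integration by parts against $x_{fv}$, then gives $A_{v,t}\mid\theta_v\stackrel{\mathrm{d}}\to\normal(\theta_v/\tau_t^2,1/\tau_t^2)$, whence $\chi_{v,t}\to\theta_v+\tau_t G$ as required.

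The main technical obstacle is uniform control of $\mathrm{Rem}_{v,t}(\vartheta)$: bounding $\partial_x^k\log m_{f\to v,t}$ for $k=3,4,5$ requires the bounded-log-derivative hypotheses in \textsf{R4}, which propagate through the mixing operation $p\mapsto\E_{S_{f,t}}[p(\cdot\mid\cdot+S_{f,t},\cdot)]$ exactly as in the derivation of \eqref{eq:score-bound}. The resulting cubic-and-higher remainder then has size $|\vartheta|^3\sum_f|x_{fv}|^3=O_p(n^{-1/2})$, uniformly over $\vartheta$ in the compact support granted by \textsf{R3}. A secondary delicate point is the simultaneous CLT/LLN for $(A_{v,t},B_{v,t})$ conditional on $(\theta_v,v_v)$, which exploits the subtree-independence structure analogous to Lemma \ref{lem:Gfv-measurability-of-messages}.
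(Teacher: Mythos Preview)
Your skeleton matches the paper's proof closely: both factor the posterior along the tree via belief propagation messages, Taylor-expand each factor-to-variable message to second order in $\vartheta$, control the cubic remainder as $O_p(n^{-1/2})$ using the bounded-log-derivative assumptions in \textsf{R4} exactly as you describe, and run an induction in depth to identify the state-evolution parameters from \eqref{bamp-se-hd-reg}. The Lindeberg-type replacement of $S_{f,t}$ by a matched Gaussian (your ``CLT for $S_{f,t}$'') is precisely the content of the paper's Lemma~\ref{lem:lindeberg}.

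The one substantive divergence is how you obtain the conditional law of $A_{v,t}$ given $\theta_v$. The paper does \emph{not} attempt a direct conditional CLT with Stein integration by parts. Instead it first works under the null measure $P_{v,0}$ that forces $\theta_v=0$; there $y_{f'}$ is independent of $x_{f'v}$, so a plain conditional CLT plus LLN give $(a_{v\to f}^{s+1},b_{v\to f}^{s+1})\to(\normal(0,1/\tau_{s+1}^2),1/\tau_{s+1}^2)$. Then it observes that the log-likelihood ratio $\log\frac{dP_{v,\theta}}{dP_{v,0}}$ has exactly the same LAN expansion $\theta a_{v\to f}^{s+1}-\tfrac12\theta^2 b_{v\to f}^{s+1}+o_p(1)$ as the log-posterior, so Le Cam's third lemma delivers the shifted Gaussian under $P_{v,\theta}$ for free. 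This is cleaner than your route because it avoids having to compute $\E[x_{fv}\kappa^{(1)}_{f,t}\mid\theta_v]$ directly; your Stein-based sketch glosses over the crucial ingredient that makes this computable, namely that belief propagation is \emph{exact} on the tree, so $y_f\mid(\text{subtree data},x_{fv},u_f)$ has density precisely $p_{f\to v}^s(\,\cdot\,;x_{fv}\theta_v)$. With that fact your direct approach can be made to work via the score identity (rather than Stein per se), but the Le Cam detour packages the argument more elegantly.
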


\begin{proof}[Lemma \ref{lem:hd-reg-posterior}]
    We compute the posterior density $p_v(\vartheta|\calT_{v,2t})$ via an iteration called belief propagation.
    For each edge $(v,f) \in \calE$, belief propagation generates a pair of sequences of real-valued functions
    $(m_{v \rightarrow f}^t(\vartheta))_{t \geq 0}, (m_{f \rightarrow v}^t(\vartheta))_{t \geq 0}$.
    The iteration is 
    \begin{gather*}
        m_{v \rightarrow f}^0(\vartheta) = 1,\\
        m_{f \rightarrow v}^s(\vartheta) \propto \int p(y_f | X_{fv} \vartheta + \sum_{v' \in \partial f \setminus v} X_{fv'} \vartheta_{v'},u_f) \prod_{v' \in \partial f \setminus v} m_{v' \rightarrow f}^s(\vartheta_{v'}) \prod_{v' \in \partial f\setminus v} \mu_{\Theta|V}(v_{v'},\mathrm{d}\vartheta_{v'}),\\
        m_{v \rightarrow f}^{s+1}(\vartheta) \propto \prod_{f' \in \partial v \setminus f} m_{f' \rightarrow v}^s(\vartheta),
    \end{gather*}
    with normalization $\int m_{f \rightarrow v}^t(\vartheta) \mu_{\Theta|V}(v_v,\mathrm{d}\vartheta) = \int m_{v \rightarrow f}^t(\vartheta) \mu_{\Theta|V}(v_v,\mathrm{d}\vartheta) = 1$.
    For any variable node $v$,
    \begin{equation}\label{eq:hd-reg-message-to-posterior}
        p_v(\vartheta | \calT_{v,2t}) \propto \prod_{f \in \partial v} m_{f \rightarrow v}^{t-1}(\vartheta).
    \end{equation}
    This equation is exact.

    We define several quantities related to the belief propagation iteration.
    \begin{align*}
        \mu_{v \rightarrow f}^s &= \int \vartheta m_{v \rightarrow f}^s(\vartheta)\mu_{\Theta|V}(v_v,\mathrm{d}\vartheta), & (\tilde \tau_{v \rightarrow f}^s)^2 &= \int \vartheta^2 m_{v \rightarrow f}^s(\vartheta) \mu_{\Theta|V}(v_v,\mathrm{d}\vartheta) - (\mu_{v\rightarrow f}^s)^2,\\
        \mu_{f \rightarrow v}^s &= \sum_{v' \in \partial f \setminus v} x_{fv'} \mu_{v' \rightarrow f}^s, & (\tilde \tau_{f \rightarrow v}^s)^2 &= \sum_{v' \in \partial f \setminus v} x_{fv'}^2 (\tilde \tau_{v' \rightarrow f}^s)^2,\\
        a_{f \rightarrow v}^s &= \frac1{x_{fv}}\frac{\mathrm{d}\phantom{b}}{\mathrm{d}\vartheta} \log m_{f \rightarrow v}^s(\vartheta)\Big|_{\vartheta = 0},& b_{f \rightarrow v}^s &= - \frac1{x_{fv}^2}\frac{\mathrm{d}^2\phantom{b}}{\mathrm{d}\vartheta^2} \log m_{f \rightarrow v}^s(\vartheta) \Big|_{\vartheta = 0},\\
        a_{v \rightarrow f}^s &= \frac{\mathrm{d}\phantom{b}}{\mathrm{d}\vartheta} \log m_{v\rightarrow f}^s(\vartheta) \Big|_{\vartheta = 0},&  b_{v \rightarrow f}^s &= -\frac{\mathrm{d}^2\phantom{b}}{\mathrm{d}\vartheta^2} \log m_{v\rightarrow f}^s(\vartheta) \Big|_{\vartheta = 0},\\
        \chi_{v \rightarrow f}^s &= a_{v \rightarrow f}^s / b_{v \rightarrow f}^s, & (\tau_{v \rightarrow f}^s)^2 &= 1 / b_{v \rightarrow f}^s.
    \end{align*}
    Lemma \ref{lem:hd-reg-posterior} follows from the following asymptotic characterization of the quantities in the preceding display in the limit $n,p \rightarrow \infty$, $n/p \rightarrow \delta$:
    \begin{equation}\label{eq:bp-hd-reg-scalar-lim}
        \begin{gathered}
            \qquad\E[(\mu_{v \rightarrow f}^s)^2] \rightarrow \delta \sigma_s^2, \qquad \E[(\tilde \tau_{v \rightarrow f}^s)^2] \rightarrow \delta \ttau_s^2,\\
            (\mu_{f \rightarrow v}^s,u_f) \stackrel{\mathrm{d}}\rightarrow \normal(0,\sigma_s^2)\otimes \mu_U, \qquad (\tilde \tau_{f \rightarrow v}^s)^2 \stackrel{\mathrm{p}}\rightarrow \ttau_s^2,\\
            (\theta_v,v_v,a_{v \rightarrow f}^s/b_{v \rightarrow f}^s, b_{v \rightarrow f}^s) \stackrel{\mathrm{d}}\rightarrow (\Theta,V,\Theta + \tau_s G, 1/\tau_s^2),
        \end{gathered}
    \end{equation}
    where in the last line $\Theta \sim \mu_\Theta$, $G \sim \normal(0,1)$ independent, and $\sigma_s^2,\tau_s^2$ are defined in \eqref{bamp-se-hd-reg}.
    By symmetry, the distribution of these quantities does not depend upon $v$ or $f$, so that the limits holds for all $v,f$ once we establish them for any $v,f$.
    We establish the limits inductively in $s$.\\

    \noindent \textit{Base case: $\E[(\mu_{v \rightarrow f}^0)^2] \rightarrow \delta \sigma_0^2$ and $\E[(\ttau_{v \rightarrow f}^0)^2] \rightarrow \delta \ttau_0^2$.}

    Observe that $\mu_{v \rightarrow f}^s = \int \vartheta \mu_{\Theta|V}(v_v,\mathrm{d}\vartheta) = \E_{\Theta,V}[\Theta | V  = v_v]$. 
    Because $v_v \sim \mu_V$, we have $\E[(\mu_{v \rightarrow f}^1)^2] = \E_{\Theta,V}[\E_{\Theta,V}[\Theta|V]^2] = \E[\Theta^2] - \mmse_{\Theta,V}(\infty) = \delta \sigma_1^2$.
    Similarly,  $(\ttau_{v\rightarrow f}^1)^2 = \Var_{\Theta,V}(\Theta | V = v_v)$, so that $\E[(\ttau_{v \rightarrow f}^1)^2] = \mmse_{\Theta,V}(\infty) = \delta \ttau_0^2$.
    \\

    \noindent \textit{Inductive step 1: If $\E[(\mu_{v \rightarrow f}^s)^2] \rightarrow \delta \sigma_s^2$, then $(\mu_{f \rightarrow v}^s,u_f) \stackrel{\mathrm{d}}\rightarrow \normal(0,\sigma_s^2) \otimes \mu_U$.}

    The quantity $\mu_{v' \rightarrow f}^s$ is $\calT_{v' \rightarrow f}^{2s,0}$-measurable, whence it is independent of $x_{fv'}$ and $u_f$. 
    Moreover, $(\mu_{v'\rightarrow f},x_{fv})$ are independent as we vary $v' \in \partial f \setminus v$.
    Thus, $\mu_{f \rightarrow v}^s | \calT_{f \rightarrow v}^{2s+1,1} \sim \normal(0,\frac1n \sum_{v' \in \partial f \setminus v} (\mu_{v'\rightarrow f}^s)^2)$.
    Note that $\E[\frac1n \sum_{v' \in \partial f \setminus v} (\mu_{v'\rightarrow f}^s)^2] = (p-1) \E[(\mu_{v \rightarrow f}^s)^2]/n \rightarrow \sigma_s^2$ by the inductive hypothesis.
    Moreover, $\mu_{v\rightarrow f}^s$ has bounded fourth moments because it is bounded by $M$.
    By the weak law of large numbers, $\frac1n \sum_{v' \in \partial f \setminus v} (\mu_{v'\rightarrow f}^s)^2 \stackrel{\mathrm{p}}\rightarrow \sigma_s^2$.
    We conclude by Slutsky's theorem and independence that $(\mu_{f \rightarrow v}^s,u_f) \stackrel{\mathrm{d}}\rightarrow \normal(0,\sigma_s^2) \otimes \mu_U$.
    \\

    \noindent \textit{Inductive step 2: If $\E[(\tilde \tau_{v \rightarrow f}^s)^2] \rightarrow \delta \ttau_s^2$, then $(\tilde \tau_{f \rightarrow v}^s)^2 \stackrel{\mathrm{p}}\rightarrow \tilde \tau_s^2$.}

    The quantity $\tilde \tau_{v' \rightarrow f}^s$ is $\calT_{v' \rightarrow f}^{2s,0}$-measurable, whence it is independent of $x_{fv'}$. 
    Therefore, 
    \begin{equation*}
    \E[\sum_{v' \in \partial f \setminus v} x_{fv'}^2 (\tilde \tau_{v'\rightarrow f}^s)^2] = (p-1)\E[(\ttau_{v \rightarrow f}^s)^2]/n \rightarrow \ttau_s^2.
    \end{equation*}
    Moreover, $(\tilde \tau_{v'\rightarrow f},x_{fv})$ are mutually independent as we vary $v' \in \partial f \setminus v$, and because $\tilde \tau_{v \rightarrow f}^s$ is bounded by $M$, the terms $nx_{fv'}^2(\sigma_{v'\rightarrow f}^s)^2$ have bounded fourth moments.
    By the weak law of large numbers, $(\tilde \tau_{f \rightarrow v}^s)^2 \stackrel{\mathrm{p}}\rightarrow \tilde \tau_s^2$.\\

    \noindent \textit{Inductive step 3: If $(\mu_{f \rightarrow v}^s, u_f , \tilde \tau_{f \rightarrow v}^s) \stackrel{\mathrm{d}}\rightarrow \normal(0,\sigma_s^2) \otimes \mu_U \otimes \delta_{\tilde \tau_s}$, then $(\theta_v,v_v,a_{v \rightarrow f}^{s+1}/b_{v \rightarrow f}^{s+1},b_{v \rightarrow f}^{s+1}) \stackrel{\mathrm{d}}\rightarrow (\Theta,V,\Theta + \tau_{s+1}G,1/\tau_{s+1}^2)$ where $G\sim \normal(0,1)$ independent of $(\Theta,V) \sim \mu_{\Theta,V}$.}

    For all $(f,v) \in \calE$ and $s \geq 1$, define 
    $$
    p_{f\rightarrow v}^s(y;x) = \int p(y| x + \sum_{v' \in \partial f \setminus v} x_{fv'}\vartheta_{v'},u_f) \prod_{v' \in \partial f \setminus v} m_{v' \rightarrow f}^s(\vartheta_{v'}) \prod_{v' \in \partial f \setminus v} \mu_{\Theta|V}(v_{v'},\mathrm{d}\vartheta_{v'}).
    $$ 
    More compactly, we may write $p_{f\rightarrow v}^s(y;x,u_f) = \E_{\{\Theta_{v'}\}}[p(y| x + \sum_{v'\in \partial f \setminus v} x_{fv'} \Theta_{v'},u_f)]$, where it is understood that the expectation is taken over $\Theta_{v'}$ independent with densities $m_{v'\rightarrow f}^s$ with respect to $\mu_{\Theta|V}(v_{v'},\cdot)$.
    Note that for all $x$, we have
    $$
    \int p_{f\rightarrow v}^s(y;x) \de y = 1
    $$
    everywhere. 
    That is, $p_{f\rightarrow v}^s(\cdot;x)$ is a probability density with respect to Lebesgue measure.
    We will denote by $\dot p_{f\rightarrow v}^s(y;x) = \frac{\mathrm{d}\phantom{b}}{\mathrm{d}\xi}p_{f\rightarrow v}^s(y;x)\big|_{\xi = x}$, and likewise for higher derivatives.
    These derivatives exist and may be taken under the integral by \textsf{R4}.
    Define 
    $$
      a_{f \rightarrow v}^s(y) = \frac{\mathrm{d}\phantom{b}}{\mathrm{d}x} \log p_{f\rightarrow v}^s(y;x) \Big|_{x = 0} \qquad \text{ and } \qquad b_{f \rightarrow v}^s(y) = - \frac{\mathrm{d}^2\phantom{b}}{\mathrm{d}x^2} \log p_{f \rightarrow v}^s(y;x) \Big|_{x =0}.
    $$
    For fixed $y$, the quantity $a_{f' \rightarrow v}^s(y)$ is independent of $x_{f'v}$, and $(a_{f'\rightarrow v}^s(y),x_{f'v})$ are mutually independent for $f' \in \partial v \setminus f$.
    Observe that 
    \begin{gather*}
        a_{f \rightarrow v}^s = a_{f \rightarrow v}^s(y_f) \qquad \text{and} \qquad a_{v \rightarrow f}^{s+1} = \sum_{f' \in \partial v \setminus f} x_{f'v} a_{f' \rightarrow v}^s(y_{f'}),\\
        b_{f \rightarrow v}^s = b_{f \rightarrow v}^s(y_f) \qquad \text{and} \qquad b_{v \rightarrow f}^{s+1} = \sum_{f' \in \partial v \setminus f} x_{f'v}^2 b_{f' \rightarrow v}^s(y_{f'}).
    \end{gather*}

    We will study the distributions of $a_{f \rightarrow v}^s, a_{v \rightarrow f}^{s+1}, b_{f \rightarrow v}^s$, and $b_{v \rightarrow f}^{s+1}$ under several measures, which we now introduce.
    Define $P_{v,\vartheta}$ to be the distribution of the regression model with $\theta_v$ forced to be $\theta$ and $v_v$ forced to be 0.
    That is, under $P_{v,\theta}$, we have $(\theta_{v'},v_{v'}) \stackrel{\mathrm{iid}}\sim \mu_{\Theta,V}$ for $v' \neq v$, $v_v = 0$ and $\theta_v = \theta$, the features are distributed independently $x_{fv'} \stackrel{\mathrm{iid}}\sim \normal(0,1/n)$ for all $f,v'$, and the observations $y_f$ are drawn independently from $p(\cdot| \sum_{v' \in \partial f} x_{fv'} \theta_{v'})$ for all $f$.
    We will consider the distribution of $a_{f \rightarrow v}^s, a_{v \rightarrow f}^{s+1}, b_{f \rightarrow v}^s$, and $b_{v \rightarrow f}^{s+1}$ under $P_{v,\theta}$ for $\theta \in [-M,M]$.

    We require the following lemmas, whose proofs are deferred to Section \ref{sec:technical-tools-hd-reg}.
    \begin{lemma}\label{lem:lindeberg}
        Under $P_{v,\theta}$ for any $\theta \in [-M,M]$, we have for all fixed $y$ that
        \begin{gather*}
            p_{f \rightarrow v}^s(y;0) - \E_{G_1}[p(y| \mu_{f \rightarrow v}^s + \tilde \tau_{f \rightarrow
            v}^s G_1,u_f)] = o_p(1),\\
            \dot p_{f \rightarrow v}^s(y;0) - \E_{G_1}[\dot p(y| \mu_{f \rightarrow v}^s + \tilde \tau_{f \rightarrow v}^s G_1,u_f)] = o_p(1),\\
            \ddot p_{f \rightarrow v}^s(y;0) - \E_{G_1}[\ddot p(y|\mu_{f \rightarrow v}^s + \tilde \tau_{f \rightarrow v}^s G_1,u_f)] = o_p(1),
        \end{gather*}
        where the expectation is over $G_1 \sim \normal(0,1)$.
        Further, for any $u$, the functions $(\mu,\tilde \tau) \mapsto \E_{G_1}[p(y|\mu + \tilde \tau G_1,u)]$, $(\mu,\tilde \tau) \mapsto \E_{G_1}[\dot p(y|\mu + \tilde \tau G_1,u)]$, and $(\mu,\tilde \tau) \mapsto \E_{G_1}[\ddot p(y|\mu + \tilde \tau G_1,u)]$ are continuous.
    \end{lemma}

    \begin{lemma}\label{lem:LAN-expansion-hd-reg}
        Under $P_{v,\theta}$ for any $\theta \in [-M,M]$, we have for any fixed $s$
        \begin{equation*}
            \log \frac{m_{v\rightarrow f}^{s+1}(\vartheta)}{m_{v\rightarrow f}^{s+1}(0)} = \vartheta a_{v \rightarrow f}^{s+1} - \frac12 \vartheta^2 b_{v \rightarrow f}^{s+1} + O_p(n^{-1/2}),
        \end{equation*}
        where $O_p(n^{-1/2})$ has no $\vartheta$ dependence, and the statement holds for $\vartheta \in [-M,M]$.
    \end{lemma}

    First we study the distribution of $a_{v \rightarrow f}^{s+1}, b_{v \rightarrow f}^{s+1}$ under $P_{v,0}$.
    Because $\mu_{f' \rightarrow v}^s,\tilde \tau_{f' \rightarrow v}^s$ is independent of $\theta_v,v_v$ for all $f' \in \partial v$, its distribution is the same under $P_{v,\theta}$ for all $\theta \in [-M,M]$ and is equal to its distribution under the original model.
    Thus, the inductive hypothesis implies $(\mu_{f \rightarrow v}^s, \tilde \tau_{f \rightarrow v}^s) \xrightarrow[P_{v,0}]{\mathrm{d}} \normal(0,\sigma_s^2) \times \delta_{\tilde \tau_s}$.

    By Lemma \ref{lem:lindeberg}, the inductive hypothesis, and Lemma \ref{lem:strong-dist-convergence}, we have for fixed $y$ 
    \begin{equation*}
        \begin{pmatrix}
        \E_{G_1}[p(y|\mu_{f \rightarrow v}^s + \tilde \tau_{f \rightarrow v}^s G_1,u_f)]\\
        \E_{G_1}[\dot p(y|\mu_{f \rightarrow v}^s + \tilde \tau_{f \rightarrow v}^s G_1,u_f)]\\
        \E_{G_1}[\ddot p(y|\mu_{f \rightarrow v}^s + \tilde \tau_{f \rightarrow v}^s G_1,u_f)]
        \end{pmatrix}
        \xrightarrow[P_{v,0}]{\mathrm{d}} 
        \begin{pmatrix}
        \E_{G_1}[p(y|\sigma_s G_0 + \tilde \tau_s G_1,U]\\
        \E_{G_1}[\dot p(y|\sigma_s G_0 + \tilde \tau_s G_1,U)]\\
        \E_{G_1}[\ddot p(y|\sigma_s G_0 + \tilde \tau_s G_1,U)]
        \end{pmatrix},
    \end{equation*}
    where and $G_0,G_1 \sim \normal(0,1)$ and $U \sim \mu_U$ independent.
    Applying Lemma \ref{lem:lindeberg} and Slutsky's Theorem, 
    we have that 
    \begin{equation*}
        \begin{pmatrix}
            p_{f\rightarrow v}^s(y;0)\\
            \dot p_{f\rightarrow v}^s(y;0)\\
            \ddot p_{f\rightarrow v}^s(y;0)
        \end{pmatrix}
        \xrightarrow[P_{v,0}]{\mathrm{d}} 
        \begin{pmatrix}
        \E_{G_1}[p(y|\sigma_s G_0 + \tilde \tau_s G_1,U)]\\
        \E_{G_1}[\dot p(y|\sigma_s G_0 + \tilde \tau_s G_1,U)]\\
        \E_{G_1}[\ddot p(y|\sigma_s G_0 + \tilde \tau_s G_1,U)]
        \end{pmatrix}.
    \end{equation*}
    By the Continuous Mapping Theorem,
    \begin{gather*}
      p_{f\rightarrow v}^s(y;0) \xrightarrow[P_{v,0}]{\mathrm{d}} \E_{G_1}[p(y|\sigma_s G_0 + \ttau_s G_1,U)],\\
        a_{f\rightarrow v}^s(y) \xrightarrow[P_{v,0}]{\mathrm{d}} \frac{\mathrm{d}\phantom b}{\mathrm{d}x} \log \E_{G_1}[\dot p(y|\sigma_s G_0 + \ttau_s G_1,U)]\Big|_{x = 0},\\
        b_{f\rightarrow v}^s(y) \xrightarrow[P_{v,0}]{\mathrm{d}} -\frac{\mathrm{d}^2\phantom b}{\mathrm{d}x^2} \log \E_{G_1}[\dot p(y|\sigma_s G_0 + \ttau_s G_1,U)]\Big|_{x=0}.
    \end{gather*}
    Because the quantity $p(y|x)$ is bounded  (assumption \textsf{R4}) and the quantities $a_{f \rightarrow v}^s(y), b_{f\rightarrow v}^s(y)$ are bounded by \eqref{eq:score-bound}, 
    we have
    \begin{gather*}
      \E_{P_{v,0}}[p_{f\rightarrow v}^s(y|0)] \rightarrow \E_{G_0,G_1,U}[p(y|\sigma_s G_0 + \ttau_s G_1,U)],\\
    \E_{P_{v,0}}[a_{f\rightarrow v}^s(y)^2] \rightarrow \E_{G_0,U}\left[\left(\frac{\mathrm{d}\phantom b}{\mathrm{d}x} \log \E_{G_1}[\dot p(y|\sigma_s G_0 + \ttau_s G_1,U)]\Big|_{x = 0}\right)^2\right],\\
    \E_{P_{v,0}}[b_{f \rightarrow v}^s] \rightarrow -\E_{G_0,U}\left[\frac{\mathrm{d}^2\phantom b}{\mathrm{d}x^2} \log \E_{G_1}[\dot p(y|\sigma_s G_0 + \ttau_s G_1,U)]\Big|_{x = 0}\right].
    \end{gather*}

    Under $P_{v,0}$, we have for all $f' \in \partial v$ that the random variable $y_{f'}$ is independent of $x_{f'v}$.
    Thus, conditional on $\calT_{v\rightarrow f}^{2s+2,1}$, the random variable $\sum_{f' \in \partial v \setminus f} x_{f'v} a_{f' \rightarrow v}^s(y_{f'})$ is normally distributed. 
    Specifically,
    \begin{equation*}
        \sum_{f' \in \partial v \setminus f} x_{f'v} a_{f' \rightarrow v}^s(y_{f'}) \bigm| \calT_{v\rightarrow f}^{2s+2,1} \underset{P_{v,0}}\sim \normal \left(0,\frac1n\sum_{f' \in \partial v \setminus f} (a_{f' \rightarrow v}^s(y_{f'}))^2\right).
    \end{equation*}
    Because $(a_{f'\rightarrow v}^s(y_{f'}))^2$ is bounded by \eqref{eq:score-bound}, if we show $\E_{P_{v,0}}[(a_{f\rightarrow v}^s(y_f))^2] \rightarrow 1/\tau_{s+1}^2$, then the weak law of large numbers and Slutsky's theorem will imply that
    \begin{equation}\label{dist-convergence-under-null}
        a_{v \rightarrow f}^{s+1} = \sum_{f' \in \partial v \setminus f} x_{f'v} a_{f' \rightarrow v}^s( y_{f'}) \xrightarrow[P_{v,0}]{\mathrm{d}} \normal\left(0,1/\tau_{s+1}^2\right).
    \end{equation}
    We compute
    \begin{align*}
        \E_{P_{v,0}}[(a_{f \rightarrow v}^s(y_f))^2] &= \E_{P_{v,0}}[\E_{P_{v,0}}[(a_{f \rightarrow v}^s(y_f))^2| \sigma(\calT_{f\rightarrow v}^{2s+1,1},(x_{fv'})_{v'\in\partial f \setminus v}),u_f]] \\
        &= \E_{P_{v,0}}\left[\int a_{f\rightarrow v}^s(y)^2 p_{f\rightarrow v}^s(y;0) \de y \right] \\
        &= \int \E_{P_{v,0}}\left[ a_{f\rightarrow v}^s(y)^2 p_{f\rightarrow v}^s(y;0) \right] \de y.
    \end{align*}
    where the second equation holds because under $P_{v,0}$ we have $y_f \mid \sigma(\calT_{f\rightarrow v}^{2s+1,1},(x_{fv'})_{v'\in\partial f \setminus v},u_f)$ has density $p_{f \rightarrow v}^s(\cdot;0)$ with respect to Lebesgue measure, and the last equation follows by Fubini's theorem (using the non-negativity of the integrand).
    Because $a_{f\rightarrow v}^s(y)^2 \leq (q_1')^2$ and  $\E_{P_{v,0}}[p_{f\rightarrow v}^s(y;0)]$ are probability densities which converge pointwise to $\E_{G_1}[p(y|\sigma_s G_0 + \ttau_s G_1)]$, we conclude that 
    \begin{align*}
    \E_{P_{v,0}}[(a_{f \rightarrow v}^s(y_f))^2] &\rightarrow \int \E_{G_0,U}\left[\frac{\E_{G_1}[\dot p(y|\sigma_s G_0 + \tilde \tau_s G_1 , U )]^2}{\E_{G_1}[p(y|\sigma_sG_0 + \tilde \tau_s G_1 , U )]}\right] \de y \\
    &=
    \E_{G_0,U}\left[\int \frac{\E_{G_1}[\dot p(y|\sigma_s G_0 + \tilde \tau_s G_1 , U )]^2}{\E_{G_1}[p(y|\sigma_sG_0 + \tilde \tau_s G_1 , U )]}\de y\right] = \frac1{\tau_{s+1}^2},
    \end{align*}
    where we have used the alternative characterization of the recursion \eqref{bamp-se-hd-reg} from Lemma \ref{lem:posterior-to-score}.
    We conclude \eqref{dist-convergence-under-null}.

    Now we compute the asymptotic behavior of $b_{v \rightarrow f}^{s+1}$ under $P_{v,0}$.
    Under $P_{v,0}$, $x_{f'v}$ is independent of $y_{f'}$, and $(x_{f'v}, b_{f'\rightarrow v}^s(y_{f'}))$ are mutually independent for $f' \in \partial v \setminus f$. 
    Thus, $\E_{P_{v,0}}[x_{f'v}^2b_{f'\rightarrow v}^s(y_{f'})] = \E_{P_{v,0}}[b_{f'\rightarrow v}^s(y_{f'})]/n$.
    Because $b_{f'\rightarrow v}^s(y_{f'})$ is bounded by \eqref{eq:score-bound}, if we can show that $\E_{P_{v,0}}[b_{f'\rightarrow v}^s(y_{f'})] \rightarrow 1/\tau_{s+1}^2$, then $b_{v \rightarrow f}^{s+1} \xrightarrow[P_{v,0}]{\mathrm{p}} 1/\tau_{s+1}^2$ will follow by the weak law of large numbers.
    We compute 
    \begin{align*}
        \E_{P_{v,0}}[b_{f \rightarrow v}^s(y_f)] &= \E_{P_{v,0}}[\E_{P_{v,0}}[b_{f \rightarrow v}^s(y_f)| \sigma(\calT_{f\rightarrow v}^{2s+1,1},(x_{fv'})_{v'\in\partial f \setminus v},u_f)]] \\
        &= \E_{P_{v,0}}\left[\int b_{f\rightarrow v}^s(y) p_{f\rightarrow v}^s(y;0) \de y \right] \\
        &= \int \E_{P_{v,0}}\left[ b_{f\rightarrow v}^s(y) p_{f\rightarrow v}^s(y;0) \right] \de y
    \end{align*}
    where the last equation follows by Fubini's theorem (using that the integrand is bounded by the integrable function $q_2 \E_{P_{v,0}}[p_{f\rightarrow v}^s(y;0)]$). 
    The integrands converge point-wise, so that 
    \begin{align*}
        \E_{P_{v,0}}&[b_{f \rightarrow v}^s(y_f)] \\
        &\rightarrow \E_{G_0,U}\left[\int \frac{\E_{G_1}[\dot p(y|\sigma_s G_0 + \tilde \tau_s G_1 , U )]^2}{\E_{G_1}[p(y|\sigma_sG_0 + \tilde \tau_s G_1 , U )]}\de y\right] - \int \E_{G_0,G_1,U}[\ddot p(y|\sigma_s G_0 + \tilde \tau_s G_1 , U )] \de y \\
        &= \frac1{\tau_{s+1}^2},
    \end{align*}
    where we have concluded that the second integral is zero because $x \mapsto \E_{G_0,G_1,U}[p(y|\sigma_s G_0 + \ttau_s G_1,U)]$ parameterizes a statistical model whose scores up to order 3 are bounded by \eqref{eq:score-bound}.
    Thus, we conclude that $b_{v \rightarrow f}^{s+1} \xrightarrow[P_{v,0}]{\mathrm{p}} 1/\tau_{s+1}^2$.

    Now we compute the asymptotic distribution of $(a_{v\rightarrow f}^{s+1}, b_{v \rightarrow f}^{s+1})$ under $P_{v,\theta}$ for any $\theta \in [-M,M]$.
    The log-likelihood ratio between $P_{v,\theta}$ and $P_{v,0}$ is 
    \begin{align*}
        \sum_{f' \in \partial v} \log \frac{p_{f' \rightarrow v}^s(y_{f'}| x_{f'v}\theta )}{p_{f'\rightarrow v}^s(y_{f'}|0)} &= \log \frac{m_{v \rightarrow f}^{s+1}(\theta)}{m_{v\rightarrow f}^{s+1}(0)} + \log \frac{p_{f \rightarrow v}^s(y_f|x_{fv}\theta)}{p_{f\rightarrow v}^s(y_f|0)}\\
        &= \theta a_{v \rightarrow f}^{s+1} - \frac12 \theta^2 b_{v \rightarrow f}^{s+1} + O_p(n^{-1/2}) ,
    \end{align*}
    where we have used Lemma \ref{lem:LAN-expansion-hd-reg} and that $\left|\log \frac{p_{f \rightarrow v}^s(y_f|x_{fv}\theta)}{p_{f\rightarrow v}^s(y_f|0)}\right| \leq Mq_1|x_{fv}| = O_p(n^{-1/2})$.
    Thus,
    \begin{equation*}
        \left(a_{v \rightarrow f}^{s+1}, b_{v\rightarrow f}^{s+1}, \log \frac{P_{v,\theta}}{P_{v,0}}\right) \xrightarrow[P_{v,0}]{\mathrm{p}} \left(Z, \frac1{\tau_{s+1}^2} , \theta Z - \frac12 \frac{\theta^2}{\tau_{s+1}^2}\right),
    \end{equation*}
    where $Z \sim \normal(0,1/\tau_{s+1}^2)$.
    By Le Cam's third lemma \cite[Example 6.7]{vaart_1998}, we have 
    \begin{equation*}
        (a_{v \rightarrow f}^{s+1}, b_{v \rightarrow f}^{s+1}) \xrightarrow[P_{v,\theta}]{\mathrm{d}} \left(Z' , \frac1{\tau_{s+1}^2}\right).
    \end{equation*}
    where $Z' \sim \normal( \theta/ \tau_{s+1}^2 , 1 / \tau_{s+1}^2 )$.
    By the Continuous Mapping Theorem \cite[Theorem 2.3]{vaart_1998}, we conclude $(a_{v \rightarrow f}^{s+1}/ b_{v \rightarrow f}^{s+1} , b_{v \rightarrow f}^{s+1}) \xrightarrow[P_{v,\theta}]{\mathrm{d}} \normal(\theta, \tau_{s+1}^2 ) \otimes \delta_{1/\tau_{s+1}^2}$.

    Denote by $P^*$ the distribution of the the original model.
    Consider a continuous bounded function $f: (\theta , \nu , \chi , b ) \mapsto \reals$, and define $\hat f_n(\theta,\nu) = \E_{P_{v,\theta}}[f(\theta,\nu,a_{v\rightarrow f}^{s+1}/b_{v \rightarrow f}^{s+1}, b_{v \rightarrow f}^{s+1})]$.
    Under $P^*$, the random variables $a_{v\rightarrow f}^{s+1}, b_{v \rightarrow f}^{s+1}$ are functions are $\theta_v$ and random vectors $\bD := \calT_{v,2t} \setminus \{\theta_v,v_v\}$, which is independent of $\theta_v,v_v$.
    In particular, we may write
    \begin{equation*}
      \E_{P^*}[f(\theta_v,v_v, a_{v \rightarrow f}^{s+1}/ b_{v \rightarrow f}^{s+1} , b_{v \rightarrow f}^{s+1} )] = \E_{P^*}[f(\theta_v,v_v, \chi(\theta_v,\bD) , B(\theta_v,\bD))],
    \end{equation*}
    for some measurable functions $\chi,B$.
    We see that 
    \begin{equation*}
      \E_{P^*}[f(\theta_v,v_v, a_{v \rightarrow f}^{s+1}/ b_{v \rightarrow f}^{s+1} , b_{v \rightarrow f}^{s+1} )\mid \theta_v,v_v ] = \hat f_n(\theta_v,v_v)
    \end{equation*}
    where 
    \begin{equation*}
      \hat f_n(\theta,\nu) = \E_{\bD}[f(\theta,\nu,\chi(\theta,\bD),B(\theta,\bD))],
    \end{equation*}
    with $\bD$ distributed as it is under $P^*$ (see e.g., \cite[Example 5.1.5]{Durrett2010Probability:Examples}). 
    Because $\bD$ has the same distribution on $P^*$ as under $P_{v,\theta}$, we see that in fact $\hat f_n(\theta,\nu) = \E_{P_{v,\theta}}[f(\theta,\nu,a_{v \rightarrow f}^{s+1}/ b_{v \rightarrow f}^{s+1} , b_{v \rightarrow f}^{s+1})]$.
    Because $(a_{v \rightarrow f}^{s+1}/ b_{v \rightarrow f}^{s+1} , b_{v \rightarrow f}^{s+1}) \xrightarrow[P_{v,\theta}]{\mathrm{d}} \normal(\theta, \tau_{s+1}^2 ) \otimes \delta_{1/\tau_{s+1}^2}$, we conclude that $\hat f_n(\theta,\nu) \rightarrow \E_{G}[f(\theta,\nu,\theta + \tau_{s+1}G,\tau_{s+1}^{-2})]$ for all $\theta,\nu$.
    By bounded convergence and the tower property, $\E_{\Theta,V}[\hat f_n(\Theta,V)] \rightarrow \E_{\Theta,V,G}[f(\theta,\nu,\theta + \tau_{s+1}G,\tau_{s+1}^{-2})]$ where $(\Theta,V) \sim \mu_{\Theta,V}$ independent of $G \sim \normal(0,1)$.
    Also by the tower property, 
    we have 
    \begin{equation*}
      \E_{\Theta,V}[\hat f_n(\Theta,V)] = \E_{P^*}[f(\theta_v,v_v,\chi(\theta_v,\bD),B(\theta_v,\bD))] = \E_{P^*}[f(\theta_v,v_v,a_{v\rightarrow f}^{s+1}/b_{v \rightarrow f}^{s+1}, b_{v \rightarrow f}^{s+1})].
    \end{equation*}
    We conclude
    \begin{equation*}
      \E_{P^*}[f(\theta_v,v_v,a_{v\rightarrow f}^{s+1}/b_{v \rightarrow f}^{s+1}, b_{v \rightarrow f}^{s+1})] \rightarrow \E_{\Theta,V,G}[f(\Theta,V,\Theta + \tau_{s+1} G, \tau_{s+1}^{-2})].
    \end{equation*}
    Thus, we conclude that $(\theta_v, v_v, a_{v\rightarrow f}^{s+1}/b_{v \rightarrow f}^{s+1}, b_{v \rightarrow f}^{s+1}) \xrightarrow[P^*]{\mathrm{d}} (\Theta,V,\Theta + \tau_{s+1}G, 1/\tau_{s+1}^2)$, as desired.
    \\

    \noindent \textit{Inductive step 4: If $(\theta_v,v_v,a_{v \rightarrow f}^{s+1}/b_{v \rightarrow f}^{s+1},b_{v \rightarrow f}^{s+1}) \stackrel{\mathrm{d}}\rightarrow (\Theta,V,\Theta + \tau_{s+1}G,1/\tau_{s+1}^2)$ where $G\sim \normal(0,1)$ independent of $(\Theta,V) \sim \mu_{\Theta,V}$, then $\E[(\mu_{v \rightarrow f}^s)^2] \rightarrow \delta \sigma_s^2$ and $\E[(\tilde \tau_{v \rightarrow f}^s)^2] \rightarrow \mmse_{\Theta,V}(\tau_s^2)$.}

    Define
    \begin{equation*}
        \epsilon_{v \rightarrow f}^s = \sup_{\vartheta \in [-M,M]} \left|\log \frac{m_{v \rightarrow f}^s(\vartheta)}{m_{v\rightarrow f}^s(0)} - \left(\vartheta a_{v \rightarrow f}^s - \frac12 \vartheta^2 b_{v \rightarrow f}^s\right)\right|,
    \end{equation*}
    where because all the terms are continuous in $\vartheta$, the random variable $\epsilon_{v \rightarrow f}^s$ is measurable and finite.
    We have that 
    \begin{equation*}
        \mu_{v \rightarrow f}^s \geq \frac{\int \vartheta \exp(\vartheta a_{v \rightarrow f}^s - \vartheta^2 b_{v\rightarrow f}^s / 2 - \epsilon_{v \rightarrow f}^s) \mu_\Theta(v_v,\mathrm{d}\vartheta)}{\int \exp(\vartheta a_{v \rightarrow f}^s - \vartheta^2 b_{v\rightarrow f}^s / 2 + \epsilon_{v \rightarrow f}^s) \mu_\Theta(v_v,\mathrm{d}\vartheta)} \geq e^{-2\epsilon_{v \rightarrow f}^s} \eta_{\Theta,V}( a_{v\rightarrow f}^s / b_{v \rightarrow f}^s , v_v; 1/b_{v \rightarrow f}^s )
    \end{equation*}
    where $\eta_{\Theta,V}(y,v ; \tau^2 ) = \E_{\Theta,V,G}[\Theta | \Theta + \tau G = y; V=v]$ where $(\Theta,V) \sim \mu_{\Theta,V}$, $G \sim \normal(0,1)$ independent.
    Likewise, 
    $$
    \mu_{v \rightarrow f}^s \leq e^{2 \epsilon_{v \rightarrow f}^s} \eta_{\Theta,V}(a_{v\rightarrow f}^s / b_{v\rightarrow f}^s , v_v; 1 / b_{v \rightarrow f}^s).
    $$
    Because $\eta_{\Theta,V}$ takes values in the bounded interval $[-M,M]$ and $\epsilon_{v \rightarrow f} = o_p(1)$ by Lemma \ref{lem:LAN-expansion-hd-reg}, we conclude that 
    \begin{equation*}
      \mu_{v \rightarrow f}^s = \eta_{\Theta,V}(a_{v\rightarrow f}^s / b_{v\rightarrow f}^s , v_v; 1 / b_{v \rightarrow f}^s) + o_p(1).
    \end{equation*}
  For a fixed $v_v$, the Bayes estimator $\eta_{\Theta,V}$ is continuous in the observation and the noise variance on $\reals \times \reals_{>0}$.\footnote{This commonly known fact holds, for example, by \cite[Theorem 2.7.1]{Lehmann2005TestingHypotheses} because the posterior mean can be viewed as the mean in an exponential family paramterized by the observation and noise variance.} 
  Thus, by the inductive hypothesis and the fact that $v_v \sim \mu_V$ for all $n$, we have $\E[\eta_{\Theta,V}(a_{v\rightarrow f}^s / b_{v\rightarrow f}^s , v_v; 1 / b_{v \rightarrow f}^s)^2] = \E[\eta_{\Theta,V}(a_{v\rightarrow f}^s / b_{v\rightarrow f}^s , v_v; 1 / b_{v \rightarrow f}^s)^2 \vee M^2] \rightarrow \E_{\Theta,V,G}[\eta_{\Theta,V}(\Theta + \tau_s G,V;\tau_s^2)] = \E[\Theta^2] - \mmse_{\Theta,V}(\tau_s^2) = \delta \sigma_s^2$ by Lemma \ref{lem:strong-dist-convergence}.
  By the previous display and the boundedness of $\mu_{v \rightarrow f}^s$ and $\eta_{\Theta,V}$, we conclude $\E[(\mu_{v\rightarrow f}^s)^2] \rightarrow \delta \sigma_s^2$, as desired.

    Similarly, we may derive that
    \begin{align*}
        e^{-2 \epsilon_{v \rightarrow f}^s} s_{\Theta,V}^2( a_{v \rightarrow f}^s / b_{v \rightarrow f}^s , v_v ; 1 / b_{v \rightarrow f}^s) &\leq \int \vartheta^2 m_{v \rightarrow f}^s(\vartheta) \mu_\Theta(\mathrm{d}\vartheta) \\
        & \leq e^{2 \epsilon_{v \rightarrow f}^s} s_{\Theta,V}^2( a_{v \rightarrow f}^s / b_{v \rightarrow f}^s , v_v ; 1 / b_{v \rightarrow f}^s),
    \end{align*}
    where $s_{\Theta,V}^2(y,v;\tau^2) = \E_{\Theta,V,G}[\Theta^2 | \Theta + \tau G = y, V=v]$ where $(\Theta,V) \sim \mu_{\Theta,V}$, $G \sim \normal(0,1)$ independent.
    For fixed $v_v$, the the posterior second moment is continuous in the observation and the noise variance.
    Further, it is bounded by $M^2$.
    Thus, by exactly the same argument as in the previous paragraph, we have that $\E[(\ttau_{v\rightarrow f}^s)^2] \rightarrow \E_{\Theta,V,G}[s_{\Theta,V}^2(\Theta + \tau_s G,V; \tau_s^2) - \eta_{\Theta,V}(\Theta + \tau_s G,V; \tau_s^2)^2] = \mmse_{\Theta,V}(\tau_s^2)$, as desired.

    The inductive argument is complete, and \eqref{eq:bp-hd-reg-scalar-lim} is established.

    To complete the proof of Lemma \ref{lem:hd-reg-posterior}, first observe by \eqref{eq:hd-reg-message-to-posterior} that we may express $\log p_v(\vartheta|\calT_{v,2t})$ as, up to a constant, $\log \frac{m_{v \rightarrow f}^t(\vartheta)}{m_{v \rightarrow f}^t(0)} + \log \frac{m_{f \rightarrow v}^{t-1}(\vartheta)}{m_{f \rightarrow v}^{t-1}(0)}$.
    Note that 
    \begin{equation*}
        \left|\log \frac{m_{f \rightarrow v}^{t-1}(\vartheta_v)}{m_{f \rightarrow v}^{t-1}(0)}\right| \leq M|x_{fv}| \sup_{x \in \reals} \left|\frac{\dot p_{f \rightarrow v}^{t-1}(y_f;x)}{p_{f \rightarrow v}^{t-1}(y_f;x)}\right| \leq Mq_1|x_{fv}| = o_p(1).
    \end{equation*}
    By Lemma \ref{lem:LAN-expansion-hd-reg}, we have that, up to a constant, $\log \frac{m_{v \rightarrow f}^t(\vartheta)}{m_{v\rightarrow f}^t(0)} = -\frac12 b_{v \rightarrow f}^s\left(a_{v \rightarrow f}^t / b_{v \rightarrow f}^t - \vartheta \right)^2 + o_p(1)$.
    The lemma follows from \eqref{eq:bp-hd-reg-scalar-lim}.
\end{proof}

We complete the proof of Lemma \ref{lem:local-info-theory-lb} for the high-dimensional regression model.
Consider any estimator $\hat \theta: \calT_{v,2t} \mapsto [-M,M]$ on the computation tree.
We compute
\begin{align*}
  &\E[\ell(\theta_v,\hat \theta(\calT_{v,2t}))] = 
  \E[\E[\ell(\theta_v,\hat \theta(\calT_{v,2t}))|\calT_{v,2t}]] \\
  &\qquad= \E\left[\int \ell(\vartheta , \hat \theta(\calT_{v,2t})) \frac1{Z(\calT_{v,2t})}\exp\left(-\frac1{2\tau_{v,t}^2}(\chi_{v,t} - \vartheta)^2 + o_p(1)\right) \mu_{\Theta|V}(v_v,\mathrm{d}\vartheta)\right] \\
  &\qquad\geq \E\left[\exp(-2\epsilon_v)\int \ell(\vartheta , \hat \theta(\calT_{v,2t})) \frac1{Z(\chi_{v,t},\tau_{v,t},v_v)} \exp\left(-\frac1{2\tau_{v,t}^2}(\chi_{v,t} - \vartheta)^2\right)\mu_{\Theta|V}(v_v,\mathrm{d}\vartheta)\right] \\
  &\qquad\geq \E\left[\exp(-2\epsilon_v)R(\chi_{v,2t},\tau_{v,2t},v_v)\right],
\end{align*} 
where $Z(\calT_{v,2t}) = \int \exp\left(-\frac1{2\tau_{v,t}^2}(\chi_{v,t} - \vartheta)^2 + o_p(1)\right) \mu_{\Theta|V}(v_v,\mathrm{d}\vartheta)$, 
\begin{equation*}
	R(\chi,\tau,v) := \inf_{d \in \reals} \int \frac1Z \ell(\vartheta,d) e^{-\frac1{2\tau^2}(\chi - \vartheta)^2} \mu_{\Theta|V}(v,\de \vartheta)\,,
\end{equation*}
and 
$$
\epsilon_v = \sup_{\vartheta \in [-M,M]} \left|\log \frac{p(\vartheta|\calT_{v,2t})}{p(0|\calT_{v,2t})} + \vartheta \chi_{v,t}/\tau_{v,t}^2 - \vartheta^2 / (2 \tau_{v,t}^2)\right|.
$$ 
Because $\Theta$ is bounded support, by Lemma  \ref{lem:properties-of-bayes-risk}(b), $R(\chi,\tau,v)$ is continuous in $(\chi,\tau)$ on $\reals \times \reals_{> 0}$.
By Lemma \ref{lem:hd-reg-posterior}, $\epsilon_v = o_p(1)$.
The quantity on the right-hand side does not depend on $\hat \theta$, so provides a uniform lower bound over the performance of any estimator.
Because $(v_v,\chi_{v,2t},\tau_{v,2t},\epsilon_v) \stackrel{\mathrm{d}}\rightarrow (V,\Theta+\tau_t G, \tau_t,0)$, $v_v \stackrel{\mathrm{d}}= V$ for all $n$, and $\tau_t > 0$, we have $\E\left[\exp(-2\epsilon_v)R(\chi_{v,2t},\tau_{v,2t},v_v)\right] \rightarrow \E[R(\Theta + \tau_t G,\tau_t,V)] = \inf_{\hat \theta(\cdot)} \E[\ell(\Theta,\hat \theta(\Theta + \tau_t G,V))]$, where the convergence holds by Lemma \ref{lem:strong-dist-convergence} and the equality holds by Lemma \ref{lem:properties-of-bayes-risk}(a).
Thus, 
\begin{equation*}
  \liminf_{n \rightarrow \infty} \inf_{\hat \theta(\cdot)} \E[\ell(\theta_v,\hat \theta(\calT_{v,2t}))] \geq \inf_{\hat \theta(\cdot)} \E[\ell(\Theta,\hat \theta(\Theta + \tau_t G))].
\end{equation*}
The proof of Lemma \ref{lem:local-info-theory-lb} in the high-dimensional regression model is complete.

\subsubsection{Technical tools}\label{sec:technical-tools-hd-reg}

\begin{proof}[Lemma \ref{lem:lindeberg}]
    By Lindeberg's principle (see, e.g., \cite{chatterjee2006}) and using that $\mu_\Theta$ is supported on $[-M,M]$, we have
    \begin{gather*}
            |p_{f \rightarrow v}^s(y;0) - \E_{G_1}[p(y| \mu_{f \rightarrow v}^s + \tilde \tau_{f \rightarrow
            v}^s G_1,u_f)]| \leq \frac{M^3\sup_{x \in \reals} |\partial_x^3p(y|x,u_f)|}3 \sum_{v' \in \partial f \setminus v}|x_{fv'}|^3,\\
            |\dot p_{f \rightarrow v}^s(y;0) - \E_{G_1}[\dot p(y| \mu_{f \rightarrow v}^s + \tilde \tau_{f \rightarrow v}^s G_1,u_f)]| \leq \frac{M^3\sup_{x \in \reals} |\partial_x^4p(y|x,u_f)|}3 \sum_{v' \in \partial f \setminus v}|x_{fv'}|^3,\\
            |\ddot p_{f \rightarrow v}^s(y;0) - \E_{G_1}[\ddot p(y|\mu_{f \rightarrow v}^s + \tilde \tau_{f \rightarrow v}^s G_1,u_f)]| \leq \frac{M^3\sup_{x \in \reals} |\partial_x^5p(y|x,u_f)|}3 \sum_{v' \in \partial f \setminus v}|x_{fv'}|^3.
        \end{gather*}
        Using the $\sup_{x \in \reals}|\partial_x^k p(y|x,u)| \leq q_k' \sup_{x \in \reals} |p(y|x,u)| < \infty$ for $k = 3,4,5$ by \textsf{R4}, we have that for fixed $y$ the expectations on the right-hand side go to 0 as $n \rightarrow \infty$, whence the required expessions are $o_p(1)$.

        Further, $|\E_{G_1}[p(y|\mu + \ttau G_1,u)] - \E_{G_1}[p(y|\mu' + \ttau' G_1,u)]| \leq (|\mu - \mu'| + |\ttau -\ttau'|\sqrt{2/\pi})\sup_{x \in \reals}|\dot p(y|x,u)|$, whence $\E_{G_1}[p(y|\mu + \ttau G_1,u)]$ is continuous in $(\mu,\ttau)$ by \textsf{R4}.
        The remaining continuity results follow similarly.
\end{proof}

\begin{proof}[Lemma \ref{lem:LAN-expansion-hd-reg}]
    Fix any $\vartheta \in [-M,M]$.
    By Taylor's theorem, there exist $\vartheta_i \in [-M,M]$ (in fact, between $0$ and $\vartheta$) such that
    \begin{align*}
        \log &\frac{m_{v \rightarrow f}^{s+1}(\vartheta)}{m_{v \rightarrow f}^{s+1}(0)} = \sum_{f' \in \partial v \setminus f} \log \frac{m_{f' \rightarrow v}^s(\vartheta)}{m_{f' \rightarrow v}^s(0)} \\ 
        &= \vartheta a_{v \rightarrow f}^{s+1} - \frac12 \vartheta^2 b_{v \rightarrow f}^{s+1} + \frac16\vartheta^3 \sum_{f' \in \partial v \setminus f} \left(\frac{\mathrm{d}^3}{\mathrm{d}\vartheta^3} \log \E_{\hat G_{f'}}[p(y_{f'}|x_{f'v}\vartheta + \hat G_{f'},u_{f'})] \bigg|_{\vartheta=\vartheta_i} \right).
    \end{align*}
    where it is understood that expectation is taken with respect to $\hat G_{f'} \stackrel{\mathrm{d}}= \sum_{v' \in \partial f' \setminus v} x_{f'v'} \Theta_{v' \rightarrow f'}$ where $x_{f'v'}$ is considered fixed and $\Theta_{v'\rightarrow f'}$ are drawn independently with densities $m_{v'\rightarrow f'}^s$ with respect to $\mu_{\Theta|V}(v_{v'},\cdot)$.
    We bound the sum using assumption \textsf{R4}:
    \begin{align*}
      \left|\sum_{f' \in \partial v \setminus f} \left(\frac{\mathrm{d}^3}{\mathrm{d}\vartheta^3} \log \E_{\hat G_{f'}}[p(y_f|x_{fv}\vartheta + \hat G_{f'},u_{f'})] \bigg|_{\vartheta=\vartheta_i} \right)\right|
      &\leq q_3\sum_{f' \in \partial v \setminus f} |x_{f'v}|^3 = O_p(n^{-1/2}).
    \end{align*}
    The proof is complete.
\end{proof}

\subsection{Information-theoretic lower bound in the low-rank matrix estimation model}

In this section, we prove Lemma \ref{lem:local-info-theory-lb} in the low-rank matrix estimation model.

Recall that conditions on the conditional density in assumption \textsf{R4} are equivalent positivity, boundedness, and the existence finite, non-negative constants $q_k'$ such that $\frac{|\partial_x^k p(y|x)|}{p(y|x)} \leq q_k'$ for $1 \leq k \leq 5$.
In particular, we have \eqref{eq:score-bound} for any random variable $A$.

Denote the regular conditional probability of $\Theta$ conditional on $V$ for the measure $\mu_{\Theta,V}$ by $\mu_{\Theta|V} : \reals \times \calB \rightarrow [0,1]$, where $\calB$ denotes the Borel $\sigma$-algebra on $\reals$, similarly for $\mu_{\Lambda|U}$.
The posterior density of $\theta_v$ given $\calT_{v,2t-1}$
has density respect to $\mu_{\Theta|V}(v_v,\cdot)$ given by
\begin{equation*}
    p_v(\vartheta_v | \calT_{v,2t-1}) \propto \int \prod \exp\left(-\frac{n}{2} (x_{f'v'} - \frac1n \ell_{f'}\vartheta_{v'})^2\right)\prod \mu_\Lambda(u_f,\mathrm{d}\ell_f) \prod \mu_\Theta(v_{v'},\mathrm{d}\vartheta_{v'}),
\end{equation*}
where the produces are over $(f',v') \in \calE_{v,2t-1}$, $f \in  \calF_{v,2t-1}$, and $v' \in \calV_{v,2t-1}$, respectively.
Asymptotically, the posterior behaves like that produced by a Gaussian observation of $\theta_v$ with variance $\tau_t^2$.
\begin{lemma}\label{lem:lr-mat-posterior}
    In the low-rank matrix estimation model, there exist $\calT_{v,2t-1}$-measurable random variables $q_{v,t},\chi_{v,t}$ such that for fixed $t\geq1$
     \begin{equation*}
         p_v( \vartheta | \calT_{v,2t-1} ) \propto \exp\left(-\frac12(\chi_{v,t} - q_{v,t}^{1/2} \vartheta)^2 + o_p(1)\right),
     \end{equation*}
     where $o_p(1)$ has no $\vartheta$ dependence.
     Moreover, $(\theta_v,v_v,\chi_{v,t},q_{v,t}) \stackrel{\mathrm{d}}\rightarrow (\Theta,V,q_t^{1/2}\Theta + G, q_t)$ where $(\Theta,V) \sim \mu_{\Theta,V}, G \sim \normal(0,1)$ independent of $\Theta,V$, and $q_t$ is given by \eqref{eq:SE_Matrix}.
\end{lemma}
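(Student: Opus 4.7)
The plan is to mirror the belief-propagation analysis of Lemma \ref{lem:hd-reg-posterior}, adapted to the Gaussian edge-observation structure $x_{fv}\mid \lambda_f,\theta_v\sim\normal(\lambda_f\theta_v/n,\,1/n)$ of the low-rank matrix model. For each edge $(f,v)$ of the computation tree I would introduce scalar messages $m_{v\rightarrow f}^s(\vartheta)$ and $m_{f\rightarrow v}^s(\vartheta)$, where the factor-to-variable step marginalizes $\ell\sim\mu_{\Lambda|U}(u_f,\cdot)$ and the incoming $\vartheta_{v'}$ against the local Gaussian likelihoods on the edges adjacent to $f$, while the variable-to-factor step multiplies incoming messages; the posterior of interest is then $p_v(\vartheta\mid\calT_{v,2t-1})\propto \mu_{\Theta|V}(v_v,\de\vartheta)\prod_{f\in\partial v}m_{f\rightarrow v}^{t-1}(\vartheta)$. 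The scalar tracking variables are the score and observed information $a_\bullet^s=\partial_\vartheta\log m_\bullet^s|_{\vartheta=0}$, $b_\bullet^s=-\partial_\vartheta^2\log m_\bullet^s|_{\vartheta=0}$, together with the posterior moments of the $\theta$- and $\lambda$-marginals on each side; asymptotically these encode the matrix iterates $(q_s,\hat q_s)$ of \eqref{eq:SE_Matrix}.

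The induction on $s$ would follow the four-step pattern of \eqref{eq:bp-hd-reg-scalar-lim}. First, the $\theta$-posterior second moment of an outgoing variable-to-factor message concentrates at $\VE_{\Theta,V}(q_s)$, producing effective signal-to-noise $\hat q_s=\delta^{-1}\VE_{\Theta,V}(q_s)$ for the factor on the other side. Second, after replacing the sum $\sum_{v'\in\partial f\setminus v}\lambda\vartheta_{v'}/n$ by its Gaussian surrogate via a Lindeberg swap, the factor-to-variable log-likelihood reduces to a one-dimensional Gaussian mixture in $\ell\sim\mu_{\Lambda|U}(u_f,\cdot)$ of signal strength $\hat q_s$. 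Third, a conditional CLT across the independent subtrees $(\calT_{f'\rightarrow v})_{f'\in\partial v\setminus f}$ (using an analog of Lemma \ref{lem:Gfv-measurability-of-messages}) gives $a_{v\rightarrow f}^{s+1}\stackrel{\mathrm{d}}\rightarrow\normal(0,q_{s+1})$ and $b_{v\rightarrow f}^{s+1}\stackrel{\mathrm{p}}\rightarrow q_{s+1}$ under the null law $P_{v,0}$, with $q_{s+1}=\VE_{\Lambda,U}(\hat q_s)$ as in \eqref{eq:SE_Matrix}. Fourth, Le Cam's third lemma, fed by an LAN expansion analogous to Lemma \ref{lem:LAN-expansion-hd-reg} (whose third-order remainders are controlled by the uniform derivative bounds guaranteed by the compact-support assumption \textsf{M2}), upgrades these limits to $(\theta_v,v_v,a_{v\rightarrow f}^{s+1}/b_{v\rightarrow f}^{s+1},b_{v\rightarrow f}^{s+1})\stackrel{\mathrm{d}}\rightarrow(\Theta,V,\Theta+q_{s+1}^{-1/2}G,q_{s+1})$ under the true tree law. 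A quadratic Taylor expansion of $\log m_{v\rightarrow f}^t+\log m_{f\rightarrow v}^{t-1}$ around $\vartheta=0$ then yields the advertised Gaussian posterior form with $q_{v,t}=b_{v\rightarrow f}^t+b_{f\rightarrow v}^{t-1}$ and $\chi_{v,t}=q_{v,t}^{-1/2}(a_{v\rightarrow f}^t+a_{f\rightarrow v}^{t-1})$, and the joint limit of $(\theta_v,v_v,\chi_{v,t},q_{v,t})$ follows from step four.

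The main obstacle will be step two: verifying that after the Lindeberg swap the resulting Gaussian mixture over $\ell\sim\mu_{\Lambda|U}(u_f,\cdot)$ produces exactly the Fisher information $\VE_{\Lambda,U}(\hat q_s)$ appearing in \eqref{eq:SE_Matrix}, and then that aggregation over $\partial v\setminus f$ yields the factor $1/\delta$ that relates $q_{s+1}$ to $\hat q_s$. Unlike in the regression proof there is no external kernel $p(y|x,u)$ with prescribed bounded log-derivatives to invoke; the Fisher information must instead be computed directly from the Gaussian-mixture structure via an identity matching posterior-variance reduction of $\ell$ to score variance, in the spirit of Lemma \ref{lem:posterior-to-score} applied to the mixture. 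Once this identity is in place, all remaining technicalities---uniform control of Taylor remainders, dominated convergence for upgrading convergence in probability to convergence of moments, and tightness of messages across scales---follow from the compact-support hypothesis \textsf{M2}, in direct parallel to the way \textsf{R4} was used in Appendix \ref{app:info-lb-hd-reg}.
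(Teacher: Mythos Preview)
Your plan is sound and would work, but it is substantially heavier than the paper's route, and the extra weight is self-inflicted. Two choices make the difference.

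First, you parameterize both belief-propagation messages as functions of $\vartheta$, marginalizing $\ell$ inside the factor-to-variable step. The paper instead lets the messages alternate: $m_{v\rightarrow f}^s(\vartheta)$ carries a belief about $\theta_v$, while $m_{f\rightarrow v}^s(\ell)$ carries a belief about $\lambda_f$. With this choice the score at the variable node is computable in closed form,
\[
a_{v\rightarrow f}^{s+1}=\partial_\vartheta\log m_{v\rightarrow f}^{s+1}(0)=\sum_{f'\in\partial v\setminus f}x_{f'v}\,\mu_{f'\rightarrow v}^s,
\]
where $\mu_{f'\rightarrow v}^s$ is the posterior mean of $\lambda_{f'}$ under $m_{f'\rightarrow v}^s$.

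Second, and this is the payoff, the Gaussian edge model $x_{f'v}=\lambda_{f'}\theta_v/n+z_{f'v}$ lets you split this score \emph{exactly} into signal plus noise:
\[
a_{v\rightarrow f}^{s+1}=\theta_v\cdot\frac1n\sum_{f'}\mu_{f'\rightarrow v}^s\lambda_{f'}\;+\;\sum_{f'}z_{f'v}\mu_{f'\rightarrow v}^s.
\]
The coefficient of $\theta_v$ converges to $q_{s+1}$ by the law of large numbers on the product $\mu_{f'\rightarrow v}^s\lambda_{f'}$, and the noise term is conditionally Gaussian with variance $\frac1n\sum(\mu_{f'\rightarrow v}^s)^2\to q_{s+1}$. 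No null/alternative tilt, no Le Cam, no Lindeberg swap: the limit $(\theta_v,v_v,a_{v\rightarrow f}^{s+1},b_{v\rightarrow f}^{s+1})\stackrel{\mathrm d}\to(\Theta,V,q_{s+1}\Theta+q_{s+1}^{1/2}G,q_{s+1})$ drops out directly under the true law. The same decomposition works on the factor side with $\hat q_s$ in place of $q_{s+1}$.

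What you flag as the ``main obstacle''---matching the Fisher information to $\VE_{\bLambda,\bU}(\hat q_s)$---therefore never arises. In the paper's induction the link between $q_{s+1}$ and $\VE_{\bLambda,\bU}(\hat q_s)$ is established not through a Fisher-information calculation but through the LAN expansion of $m_{f\rightarrow v}^s(\ell)$ (the analogue of your Lemma~\ref{lem:LAN-expansion-hd-reg}), which shows $\mu_{f\rightarrow v}^s\approx\E[\Lambda\mid \hat q_s^{1/2}\Lambda+G,U]$; then $\E[\mu_{f\rightarrow v}^s\lambda_f]$ and $\E[(\mu_{f\rightarrow v}^s)^2]$ both converge to $\VE_{\bLambda,\bU}(\hat q_s)=q_{s+1}$ by bounded convergence. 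Your route via Lindeberg and Le Cam would arrive at the same place, but you would have to redo by hand what the linear-Gaussian structure gives for free.
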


\begin{proof}[Lemma \ref{lem:lr-mat-posterior}]
  As in the proof of Lemma \ref{lem:hd-reg-posterior}, we compute the posterior density $p_v(\vartheta|\calT_{v,2t-1})$ via belief propogation.
  The belief propagation iteration is
  \begin{gather*}
      m_{f\rightarrow v}^0(\ell) = 1,\\
      m_{v\rightarrow f}^{s+1}(\vartheta) \propto \int \prod_{f' \in \partial v \setminus f} \left(\exp\left(-\frac{n}{2} (x_{f'v} - \frac1n \ell_{f'} \vartheta)^2\right) m_{f' \rightarrow v}^s(\ell_{f'})  \mu_{\Lambda|U}(u_{f'},\mathrm{d}\ell_{f'}) \right),\\
      m_{f\rightarrow v}^s(\ell) \propto \int \prod_{v' \in \partial f \setminus v} \left(\exp\left(-\frac{n}{2} (x_{fv'} - \frac1n \ell \vartheta_{v'})^2\right) m_{v' \rightarrow f}^s(\vartheta_{v'})  \mu_{\Theta|V}(v_{v'},\mathrm{d}\vartheta_{v'}) \right),
  \end{gather*}
  with normalization $\int m_{f \rightarrow v}^s(\ell) \mu_{\Lambda|U}(u_f,\mathrm{d}\ell) = \int m_{v \rightarrow f}^s(\vartheta) \mu_{\Theta|V}(v_v,\mathrm{d}\vartheta) = 1$.
  For $t \geq 1$
  \begin{gather*}
      p_v(\vartheta|\calT_{v,2t-1}) \propto \int \prod_{f \in \partial v} \left(\exp\left(-\frac{n}{2} (x_{fv} - \frac1n \ell_f \vartheta)^2\right) m_{f \rightarrow v}^{t-1}(\ell_f)  \mu_{\Lambda|U}(u_f,\mathrm{d}\ell_f) \right),
  \end{gather*}
  This equation is exact.

  We define several quantities related to the belief propagation iteration.
  \begin{align*}
      \mu_{f \rightarrow v}^s &= \int \ell m_{f \rightarrow v}^s(\ell) \mu_{\Lambda|U}(u_f,\mathrm{d}\ell), & s_{f \rightarrow v}^s &= \int \ell^2 m_{f \rightarrow v}^s(\ell) \mu_{\Lambda|U}(u_f,\mathrm{d}\ell),\\
      \alpha_{v \rightarrow f}^{s+1} &= \frac1n \sum_{f' \in \partial v \setminus f} \mu_{f' \rightarrow v}^s \lambda_{f'}, & (\tau_{v \rightarrow f}^{s+1})^2 &= \frac1n \sum_{f' \in \partial v \setminus f} (\mu_{f' \rightarrow v}^s)^2,\\
      a_{v \rightarrow f}^s &= \frac{\mathrm{d}\phantom{b}}{\mathrm{d}\vartheta} \log m_{v \rightarrow f}^s(\vartheta) \Big|_{\vartheta = 0}, & b_{v \rightarrow f}^s &= - \frac{\mathrm{d}^2\phantom{b}}{\mathrm{d}\vartheta^2} \log m_{v \rightarrow f}^s(\vartheta)\Big|_{\vartheta = 0},\\
      \mu_{v \rightarrow f}^s &= \int \vartheta m_{v \rightarrow f}^s(\vartheta) \mu_{\Theta|V}(v_v,\mathrm{d}\vartheta), & s_{v \rightarrow f}^s &= \int \vartheta^2 m_{v \rightarrow f}^s(\vartheta) \mu_{\Theta|V}(v_v,\mathrm{d}\vartheta),\\
      \alpha_{f \rightarrow v}^s &= \frac1n \sum_{v' \in \partial f \setminus v} \mu_{v' \rightarrow f}^s \theta_{v'} , & (\hat{\tau}_{f \rightarrow v}^s)^2 &= \frac1n\sum_{v' \in \partial f \setminus v} (\mu_{v'\rightarrow f}^s)^2,\\
      a_{f \rightarrow v}^s &= \frac{\mathrm{d}\phantom{b}}{\mathrm{d}\ell} \log m_{f \rightarrow v}^s(\ell)\Big|_{\ell = 0}, & b_{f \rightarrow v}^s &= -\frac{\mathrm{d}^2\phantom{b}}{\mathrm{d}\ell^2} \log m_{f \rightarrow v}^s(\ell)\Big|_{\ell = 0},\\
  \end{align*}
  Lemma \ref{lem:lr-mat-posterior} follows from the following asymptotic characterization of the quantities in the preceding display in the limit $n,p \rightarrow \infty$, $n/p \rightarrow \delta$:
  \begin{equation}\label{eq:bp-lr-mat-scalar-lim}
      \begin{gathered}
        \E[\mu_{f\rightarrow v}^s\lambda_f] \rightarrow q_{s+1}, \qquad \E[(\mu_{f \rightarrow v}^s)^2] \rightarrow q_{s+1},\\
        \alpha_{v \rightarrow f}^{s+1} \stackrel{\mathrm{p}}\rightarrow q_{s+1}, \qquad (\tau_{v\rightarrow f}^{s+1}) \stackrel{\mathrm{p}}\rightarrow q_{s+1},\\
        (\theta_v,v_v,a_{v\rightarrow f}^s,b_{v\rightarrow f}^s) \stackrel{\mathrm{d}}\rightarrow (\Theta,V,q_s \Theta + q_s^{1/2}G_2,q_s),\\
      \E[\mu_{v\rightarrow f}^s\theta_v] \rightarrow  \delta\hq_s, \qquad \E[(\mu_{v\rightarrow f}^s)^2] \rightarrow  \delta\hq_s,\\
      \alpha_{f \rightarrow v}^s \stackrel{\mathrm{p}}\rightarrow \hq_s, \qquad (\hat{\tau}_{f \rightarrow v}^{s+1})^2 \stackrel{\mathrm{p}}\rightarrow \hq_s,\\
      (\lambda_f , u_f , a_{f \rightarrow v}^s,b_{f\rightarrow v}^s) \stackrel{\mathrm{d}}\rightarrow (\Lambda,U,\hq_s\Lambda + \hq_s^{1/2} G, \hq_s).
      \end{gathered}
  \end{equation}
  As in the proof of Lemma \ref{lem:hd-reg-posterior}, the distribution of these quantities does not depend upon $v$ or $f$, so that the limits hold for all $v,f$ once we establish them for any $v,f$.
  We establish the limits inductively in $s$.
  \\

  \noindent \textit{Base case: $\E[\mu_{f \rightarrow v}^0\lambda_f] \rightarrow q_1$ and $\E[(\mu_{f\rightarrow v}^0)^2] \rightarrow q_1$.}

  Note $\mu_{f \rightarrow v}^0 = \E[\lambda_f | u_f]$. Thus $\E[\mu_{f \rightarrow v}^0\lambda_f] = \E[\E[\lambda_f|u_f]^2] = V_{\Lambda,U}(0) = q_1$ exactly in finite samples, so also asymptotically.
  The expectation $\E[(\mu_{f\rightarrow v}^0)^2]$ has the same value.
  \\

  \noindent \textit{Inductive step 1: If $\E[\mu_{f\rightarrow v}^s\lambda_f] \rightarrow  q_{s+1}$ and $\E[(\mu_{f\rightarrow v}^s)^2] \rightarrow  q_{s+1}$, then $\alpha_{v \rightarrow f}^{s+1} \stackrel{\mathrm{p}}\rightarrow q_{s+1}$ and $(\tau_{v \rightarrow f}^{s+1})^2 \stackrel{\mathrm{p}}\rightarrow q_{s+1}$.}

  By the inductive hypothesis, $\E[\alpha_{v \rightarrow f}^{s+1}] = (n-1)\E[\mu_{f\rightarrow v}^s\lambda_f]/n \rightarrow q_{s+1} $ and $\E[(\tau_{v\rightarrow f}^{s+1})^2] = (n-1)\E[(\mu_{f \rightarrow v}^s)^2] /n \rightarrow q_{s+1}$.
  Moreover, $\mu_{f' \rightarrow v}^s\lambda_{f'}$ are mutually independent as we vary $f' \in \partial v \setminus f$, and likewise for $\mu_{f'\rightarrow v}^s$. 
  We have $\E[(\mu_{f'\rightarrow v}^s\lambda_{f'})^2] \leq M^4$ and $\E[(\mu_{f'\rightarrow v}^s)^4] \leq M^4$ because the integrands are bounded by $M^4$.
  By the weak law of large numbers, $\alpha_{v \rightarrow f}^{s+1} \stackrel{\mathrm{p}}\rightarrow q_{s+1}$ and $(\tau_{v\rightarrow f}^{s+1})^2 \stackrel{\mathrm{p}}\rightarrow q_{s+1}$.
  \\

  \noindent \textit{Inductive step 2: If $\alpha_{v \rightarrow f}^{s+1} \stackrel{\mathrm{p}}\rightarrow q_{s+1}$ and $(\tau_{v \rightarrow f}^{s+1})^2 \stackrel{\mathrm{p}}\rightarrow q_{s+1}$, then $(\theta_v , v_v, a_{v \rightarrow f}^{s+1},b_{v\rightarrow f}^{s+1}) \stackrel{\mathrm{d}}\rightarrow (\Theta,V,q_{s+1}\Theta + q_{s+1}^{1/2} G, q_{s+1})$.}

  We may express
  $$
  \log m_{v \rightarrow f}^{s+1}(\vartheta) = \mathsf{const} +  \sum_{f' \in \partial v \setminus f} \log \E_{\Lambda_{f'}}\left[\exp\left(-\frac1{2n}\Lambda_{f'}^2 \vartheta^2 + x_{f'v} \Lambda_{f'} \vartheta\right)\right],
  $$
  where $\Lambda_{f'}$ has density $m_{f' \rightarrow v}^s$ with respect to $\mu_{\Lambda|U}(u_{f'},\cdot)$.
  We compute
  \begin{align*}
      \frac{\mathrm{d}\phantom{b}}{\mathrm{d}\vartheta} \E_{\Lambda_{f'}}\left[\exp\left(-\frac1{2n}\Lambda_{f'}^2 \vartheta^2 + x_{f'v} \Lambda_{f'} \vartheta\right)\right]\Big|_{\vartheta=0} &= \E_{\Lambda_{f'}}\left[x_{f'v} \Lambda_{f'}\right] = x_{f'v} \mu_{f'\rightarrow v}^s,\\
      \frac{\mathrm{d}^2\phantom{b}}{\mathrm{d}\vartheta^2}  \E_{\Lambda_{f'}}\left[\exp\left(-\frac1{2n}\Lambda_{f'}^2 \vartheta^2 + x_{f'v} \Lambda_{f'} \vartheta\right)\right]\Big|_{\vartheta=0} &= \E_{\Lambda_{f'}}\left[x_{f'v}^2 \Lambda_{f'}^2 - \frac1n \Lambda_{f'}^2\right] = \left(x_{f'v}^2 - \frac1n\right)s_{f'\rightarrow v}^s.
      %
      %
  \end{align*}
  Then
  \begin{align*}
      a_{v \rightarrow f}^{s+1} = \sum_{f' \in \partial v \setminus f} x_{f'v}\mu_{f'\rightarrow v}^s \;\; \text{and} \;\; b_{v \rightarrow f}^{s+1} = \sum_{f' \in \partial v \setminus f'} \left(x_{f'v}^2 (\mu_{f' \rightarrow v}^s)^2-\left(x_{f'v}^2 - \frac1n\right)s_{f'\rightarrow v}^s\right).
  \end{align*}
  We compute
  \begin{align*}
      a_{v \rightarrow f}^{s+1} = \left(\frac1n\sum_{f' \in \partial v \setminus f} \mu_{f'\rightarrow v}^s \lambda_{f'} \right)\theta_v + \sum_{f' \in \partial v \setminus f} z_{f'v} \mu_{f' \rightarrow v}^s.
  \end{align*}
  Because $(z_{f'v})_{f' \in \partial v \setminus f}$ are independent of $\mu_{f' \rightarrow v}^2$ and are mutually independent from each other, conditional on $\calT_{v \rightarrow f}^1$ the quantity $\sum_{f' \in \partial v \setminus f} z_{f'v} \mu_{f' \rightarrow v}^s$ is distributed $\normal(0,(\tau_{v \rightarrow f}^{s+1})^2)$.
  By the inductive hypothesis, $(\tau_{f \rightarrow v}^{s+1})^2 \stackrel{\mathrm{p}}\rightarrow q_{s+1}$, so that $\sum_{f' \in \partial v \setminus f} z_{f'v} \mu_{f' \rightarrow v}^s \stackrel{\mathrm{d}}\rightarrow \normal(0,q_{s+1})$.
  Further, $z_{f'v}$ and $\mu_{f' \rightarrow v}^s$ are independent of $\theta_v$, and by the inductive hypothesis, the coefficient of $\theta_v$ converges in probability to $q_{s+1}$.
  By the Continuous Mapping Theorem \cite[Theorem 2.3]{vaart_1998}, we conclude that $(\theta_v,v_v,a_{v \rightarrow f}^{s+1}) \stackrel{\mathrm{d}}\rightarrow (\Theta,V,q_{s+1}\Theta + q_{s+1}^{1/2} G)$ where $G \sim \normal(0,1)$ independent of $\Theta$, as desired.

  Now we show that $b_{v \rightarrow f}^{s+1} \stackrel{\mathrm{d}}\rightarrow q_{s+1}$.
  We expand $b_{v\rightarrow f}^{s+1} = A - B$ where $A = \sum_{f' \in \partial v \setminus f} x_{f'v}^2(\mu_{f' \rightarrow v}^s)^2$ and $B = \sum_{f' \in \partial v \setminus f} (x_{f'v}^2 - 1/n)s_{f' \rightarrow v}^s$.
  We have
  \begin{equation*}
      A = \frac1{n^2} \sum_{v' \in \partial f \setminus v} \lambda_{f'}^2 \theta_v^2 (\mu_{f' \rightarrow v}^s)^2  + \frac2n \sum_{f' \in \partial v \setminus f} \lambda_{f'} \theta_v z_{f'v} (\mu_{f'\rightarrow v}^s)^2 + \sum_{v' \in \partial f \setminus v} z_{f'v}^2 (\mu_{f'\rightarrow v}^s)^2.
  \end{equation*}
  Observe $\E[\lambda_{f'}^2 \theta_v^2(\mu_{f'\rightarrow v}^s)^2] \leq M^6$, so that the expectation of the first term is bounded by $M^6(p-1)/n^2 \rightarrow 0$. 
  Thus, the first term converges to 0 in probability.
  Because $z_{f'v}$ is independent of $\mu_{f'\rightarrow v}^s$,
  $\E[|\lambda_{f'} \theta_v z_{f'v}(\mu_{f'\rightarrow v}^s)^2|] \leq M^4 \sqrt{2/(\pi n)}$, so that the absolute value of the expectation of the second term is bounded by $2M^4 \sqrt{2/(\pi n)} \rightarrow 0$.
  Thus, the second term converges to 0 in probability.
  Because $\mu_{f' \rightarrow v}^s$ is independent of $z_{f'v}$, the expectation of the last term is $(n-1)\E[(\mu_{f'\rightarrow v})^2]/n \rightarrow q_{s+1}$ (we have used here the assumption of inductive step 1).
  The terms $(z_{f'v}^2 (\mu_{f'\rightarrow v}^s)^2)_{f' \in \partial v \setminus f}$ are mutually independent and $\E[z_{f'v}^4 (\mu_{f'\rightarrow v}^s)^4] \leq 3M^4/n^2$, so that by the weak law of large numbers we have that the last term converges to $q_{s+1}$ in probability. 
  Thus, $A \stackrel{\mathrm{p}}\rightarrow q_{s+1}$.

  We have
  \begin{equation*}
      B = \frac1{n^2} \sum_{v' \in \partial f \setminus v} \lambda_f^2 \theta_{v'}^2 s_{v' \rightarrow f}^s + \frac2n \sum_{v' \in \partial f \setminus v} \lambda_f \theta_{v'} s_{v'\rightarrow f}^s  + \sum_{v' \in \partial f\setminus v}(z_{f'v}^2 - 1/n) s_{v'\rightarrow f}^s.
  \end{equation*}
  As in the analysis of the first two terms of $A$, we may use that $s_{v' \rightarrow f}^s \leq M^2$ to argue that the first two terms of $B$ converge to 0 in probability.
  Further, because $z_{f'v}$ is independent of $s_{v' \rightarrow f}^s$, the expectation of the last term is 0.
  Further, $\E[(z_{f'v}^2 - 1/n)^2(s_{v' \rightarrow f}^s)^2] \leq 2\E[(z_{f'v}^4 + 1/n^2)]\E[(s_{v' \rightarrow f}^s)^2] \leq  8M^4/n^2$, so that by the weak law of large numbers, the final term converges to 0 in probability. Thus, $B \stackrel{\mathrm{p}}\rightarrow 0$.
  Because, as we have shown, $A \stackrel{\mathrm{p}}\rightarrow q_{s+1}$, we conclude $b_{v\rightarrow f}^{s+1} \stackrel{\mathrm{p}}\rightarrow q_{s+1}$.

  Combining with $(\theta_v,v_v,a_{v \rightarrow f}^{s+1}) \stackrel{\mathrm{d}}\rightarrow (\Theta,V,q_{s+1}\Theta + q_{s+1}^{1/2} G)$ and applying the Continuous Mapping Theorem \cite[Theorem 2.3]{vaart_1998}, we have $(\theta_v ,a_{v \rightarrow f}^{s+1},b_{v\rightarrow f}^{s+1}) \stackrel{\mathrm{d}}\rightarrow (\Theta,q_{s+1}\Theta + q_{s+1}^{1/2} G, q_{s+1})$.
  \\

  \noindent \textit{Inductive step 3: If $(\theta_v ,v_v,a_{v \rightarrow f}^s,b_{v\rightarrow f}^s) \stackrel{\mathrm{d}}\rightarrow (\Theta,V,q_s\Theta + q_s^{1/2} G_1, q_s)$, then $\E[\mu_{v\rightarrow f}^s\theta_v] \rightarrow  \delta\hq_s$ and $\E[(\mu_{v\rightarrow f}^s)^2] \rightarrow  \delta\hq_s$.}

  We will require the following lemma, whose proof is deferred to section \ref{sec:technical-tools-lr-mat}.
  \begin{lemma}\label{lem:LAN-expansion-lr-mat}
      For any fixed $s$, we have $\vartheta,\ell \in [-M,M]$
      \begin{gather*}
          \log \frac{m_{v\rightarrow f}^s(\vartheta)}{m_{v\rightarrow f}^s(0)} = \vartheta a_{v \rightarrow f}^s - \frac12 \vartheta^2 b_{v \rightarrow f}^s + O_p(n^{-1/2}),\\
          \log \frac{m_{f\rightarrow v}^s(\ell)}{m_{f\rightarrow v}^s(0)} = \ell a_{f \rightarrow v}^s - \frac12 \ell^2 b_{f \rightarrow v}^s + O_p(n^{-1/2}),
      \end{gather*}
      where $O_p(n^{-1/2})$ has no $\vartheta$ (or $\ell$) dependence.
  \end{lemma}
  Define 
  \begin{equation*}
      \epsilon_{f \rightarrow v}^s = \sup_{\vartheta \in [-M,M]}\left|\log \frac{m_{v\rightarrow f}^s(\vartheta)}{m_{v\rightarrow f}^s(0)} -\left( \vartheta a_{v \rightarrow f}^s - \frac12 \vartheta^2 b_{v \rightarrow f}^s\right)\right|.
  \end{equation*}
  By Lemma \ref{lem:LAN-expansion-lr-mat}, we have $\epsilon_{v \rightarrow f}^s = o_p(1)$.
  Moreover, using the same argument as in inductive step 4 of the proof of Theorem \ref{lem:hd-reg-posterior}, we have that
  \begin{align*}
      e^{-2\epsilon_{v \rightarrow f}^s} \eta_{\Theta,V}( a_{v \rightarrow f}^s(b_{v \rightarrow f}^s)^{-1/2} & , v_v ; b_{v \rightarrow f}^s) \leq \mu_{v \rightarrow f}^s \\
      &\leq e^{2\epsilon_{v \rightarrow f}^s} \eta_{\Theta,V}( a_{v \rightarrow f}^s(b_{v \rightarrow f}^s)^{1/2} , v_v; b_{v\rightarrow f}^s ),
  \end{align*}
  where $\eta_{\Theta,V}(y,v;q) = \E_{\Theta,V,G}[\Theta|q^{1/2}\Theta + \tau G = y;V=v]$.
  Because $\eta_{\Theta,V}$ takes values in the bounded interval $[-M,M]$ and $\epsilon_{v \rightarrow f}^s = o_p(1)$ by Lemma \ref{lem:LAN-expansion-lr-mat},
  we conclude that
  \begin{equation*}
    \mu_{v \rightarrow f}^s = \eta_{\Theta,V}(a_{v\rightarrow f}^s/b_{v\rightarrow f}^s,v_v;b_{v\rightarrow f}^s) + o_p(1).
  \end{equation*}
  For a fixed $v_v$, the Bayes estimator in the observation and coefficient $q$.
  Thus, by the inductive hypothesis and the fact that $v_v \sim \mu_V$ for all $n$, we have that $\E[\Theta\eta_{\Theta,V}( a_{v \rightarrow f}^s(b_{v \rightarrow f}^s)^{1/2} , v_v; b_{v\rightarrow f}^s )]$ has limit $\E[\Theta \eta_{\Theta,V}(q_s^{1/2} \Theta + G,V;q_s)] = \delta \hat q_s$ and $\E[\eta_{\Theta,V}(q_s^{1/2} \Theta + G,V;q_s)^2]$ has limit $\E_{\Theta,V,G}[\eta_{\Theta,V}(q_s^{1/2}\Theta + G,V;q_s)^2] = \delta \hq_s$.
  Because $|\theta_v|,|\mu_{v \rightarrow f}^s|,|\eta_{\Theta,V}(a_{v\rightarrow f}^s/b_{v\rightarrow f}^s,v_v;b_{v\rightarrow f}^s)| \leq M$, by bounded convergence, we conclude $\E[\mu_{v \rightarrow f}^s\theta_v] \rightarrow \delta \hq_s$ and $\E[(\mu_{f \rightarrow v}^s)^2] \rightarrow \delta \hq_s$. \\

  The remaining inductive steps are completely analagous to those already shown. 
  We list them here for completeness.

  \noindent \textit{Inductive step 4: If $\E[\mu_{v\rightarrow f}^s\theta_v] \rightarrow  \delta\hq_s$ and $\E[(\mu_{v\rightarrow f}^s)^2] \rightarrow  \delta\hq_s$, then $\alpha_{f \rightarrow v}^s \stackrel{\mathrm{p}}\rightarrow \hq_s$ and $(\hat{\tau}_{f \rightarrow v}^{s+1})^2 \stackrel{\mathrm{p}}\rightarrow \hq_s$.}

  \noindent \textit{Inductive step 5: If $\alpha_{f \rightarrow v}^s \stackrel{\mathrm{p}}\rightarrow \hq_s$ and $(\hat{\tau}_{f \rightarrow v}^s)^2 \stackrel{\mathrm{p}}\rightarrow \hq_s$, then $(\lambda_f , u_f , a_{f \rightarrow v}^s,b_{f\rightarrow v}^s) \stackrel{\mathrm{d}}\rightarrow (\Lambda,U,\hq_s\Lambda + \hq_s^{1/2} G, \hq_s)$.}

   \noindent \textit{Inductive step 6: If $(\lambda_f , u_f , a_{f \rightarrow v}^s,b_{f\rightarrow v}^s) \stackrel{\mathrm{d}}\rightarrow (\Lambda,U,\hq_s\Lambda + \hq_s^{1/2} G, \hq_s)$, then $\E[\mu_{f\rightarrow v}^s\lambda_f] \rightarrow  q_{s+1}$ and $\E[(\mu_{f\rightarrow v}^s)^2] \rightarrow  q_{s+1}$.}

  The induction is complete, and we conclude \eqref{eq:bp-lr-mat-scalar-lim}.

  To complete the proof of Lemma \ref{lem:lr-mat-posterior}, first observe that we may express $\log \frac{p_v(\vartheta|\calT_{v,2t-1})}{p_v(0|\calT_{v,2t-1})}$ as $\log \frac{m_{v\rightarrow f}^t(\vartheta)}{m_{v\rightarrow f}^t(\vartheta)} + \log \E_{\Lambda_f}[\exp( \vartheta x_{fv} \Lambda_f  - \vartheta^2 \Lambda_f^2/(2n) )]$.
  Note that 
  \begin{equation*}
  \left|\log \E_{\Lambda_f}[\exp( \vartheta x_{fv} \Lambda_f  - \vartheta^2 \Lambda_f^2/(2n) )]\right| \leq M^2 |x_{fv}| + M^4/2n = o_p(1).
  \end{equation*}
  By Lemma \ref{lem:LAN-expansion-lr-mat}, we have that, up to a constant, $\log \frac{m_{v\rightarrow f}^t(\vartheta)}{m_{v\rightarrow f}^t(\vartheta)} = -\frac12 ( ( a_{v\rightarrow f}^t(b_{ v \rightarrow f}^t)^{-1/2} - b_{ v \rightarrow f}^t)^{1/2} \vartheta )^2 + o_p(1)$.
  The lemma follows from \eqref{eq:bp-lr-mat-scalar-lim} and Slutsky's theorem.
\end{proof}

Lemma \ref{lem:local-info-theory-lb} in the low-rank matrix estimation model follows from Lemma \ref{lem:lr-mat-posterior} by exactly the same argument that derived Lemma \ref{lem:local-info-theory-lb} in the high-dimensional regression model from Lemma \ref{lem:hd-reg-posterior}.

\subsubsection{Technical tools}\label{sec:technical-tools-lr-mat}

\begin{proof}[Lemma \ref{lem:LAN-expansion-lr-mat}]
  Fix any $\vartheta \in [-M,M]$.
    By Taylor's theorem, there exist $\vartheta_{f'} \in [-M,M]$ (in fact, between $0$ and $\vartheta$) such that
    \begin{align*}
        \log \frac{m_{v \rightarrow f}^s(\vartheta)}{m_{v \rightarrow f}^s(0)} &= \sum_{f' \in \partial v \setminus f} \log \frac{\E_{\Lambda_{f'}}[\exp(-n(x_{f'v} - \Lambda_{f'}\vartheta/n)^2/2)]}{\E_{\Lambda_{f'}}[\exp(-nx_{f'v}^2/2)]} \\ 
        &= \vartheta a_{v \rightarrow f}^{s+1} - \frac12 \vartheta^2 b_{v \rightarrow f}^{s+1} + \frac16\vartheta^3 \sum_{f' \in \partial v \setminus f} \frac{\mathrm{d}^3}{\mathrm{d}\vartheta^3} \log \E_{\Lambda_{f'}}[\exp(-n(x_{f'v} - \Lambda_{f'}\vartheta/n)^2/2)] \Big|_{\vartheta=\vartheta_{f'}},
    \end{align*}
    where it is understood that $\Lambda_{f'} \sim \mu_{\Lambda|U}(u_{f'},\cdot)$.
    Denote $\psi(\vartheta,\ell,x) = -n(x_{f'v} - \ell \vartheta/n)^2/2$.
    By the same argument that allowed us to derive \eqref{eq:score-bound} from \textsf{R4} in the proof of Lemma \ref{lem:local-info-theory-lb}(a), we conclude
    \begin{align*}
      &\frac{\mathrm{d}^3}{\mathrm{d}\vartheta^3} \log \E_{\Lambda}[\exp(\psi(\vartheta,\Lambda,x))] \Big|_{\vartheta=\vartheta_{f'}}\\
      &\qquad\qquad \leq C \sup_{\ell,\vartheta\in[-M,M]} \max\{|\partial_\vartheta \psi(\vartheta,\ell,x)|^3,|\partial_\vartheta \psi(\vartheta,\ell,x)\partial_\vartheta^2  \psi(\vartheta,\ell,x)|, |\partial_\vartheta^3\psi(\vartheta,\ell,x)| \} \\
      &\qquad\qquad\leq C \max\left\{M^3|M^2/n + x_{f'v}|^3, (M^2/n) M|M^2/n + x_{f'v}|,0 \right\},
    \end{align*}
    where $C$ is a universal constant.
    The expectaton of the right-hand side is $O(n^{-3/2})$, whence we get
    \begin{equation*}
      \frac16\vartheta^3 \sum_{f' \in \partial v \setminus f} \frac{\mathrm{d}^3}{\mathrm{d}\vartheta^3} \log \E_{\Lambda_{f'}}[\exp(-n(x_{f'v} - \Lambda_{f'}\vartheta/n)^2/2)] \Big|_{\vartheta=\vartheta_{f'}} = O_p(n^{-1/2}),
    \end{equation*}
    where because $\vartheta \in [-M,M]$, we may take $O_p(n^{-1/2})$ to have no $\vartheta$-dependence.

    The expansion of $\log \frac{m_{f\rightarrow v}^s(\ell)}{m_{f\rightarrow v}^s(0)}$ is proved similarly.
\end{proof}

\section{Weakening the assumptions}
\label{app:strong-to-weak-ass}

Section \ref{sec:proof-of-main-results} and the preceding appendices establish under the assumptions \textsf{A1}, \textsf{A2} and either \textsf{R3}, \textsf{R4} or \textsf{M2} all claims in Theorems \ref{thm:hd-reg-lower-bound} and \ref{thm:lr-mat-lower-bound} except that the lower bound may be achieved.
In this section we show that if these claims hold under assumptions \textsf{A1}, \textsf{A2}, \textsf{R3}, \textsf{R4}, then they also hold under assumptions \textsf{A1}, \textsf{A2}, \textsf{R1}, \textsf{R2} in the high-dimensional regression model; 
and similarly for the low-rank matrix estimation model.
In the next section we prove we can achieve the lower bounds under the weaker assumptions \textsf{A1}, \textsf{A2} and either \textsf{R1}, \textsf{R2} or \textsf{M1}.

\subsection{From strong to weak assumptions in the high-dimensional regression model}

To prove the reduction from the stronger assumptions in the high-dimensional regression model, 
we need the following lemma, whose proof is given at the end of this section.

\begin{lemma}\label{lem:mmse-continuity}
    Consider on a single probability space random variables $A,B,(B_n)_{n \geq 1}$, and $Z \sim \normal(0,1)$ independent of the $A$'s and $B$'s, all with finite second moment.
    Assume $\E[(B - B_n)^2]  \rightarrow 0$.
    Let $Y = B + \tau Z$ and $Y_n = B_n + \tau Z$ for $\tau > 0$.
    Then
    \begin{equation*}
        \E[\E[A|Y_n]^2] \rightarrow \E[\E[A|Y]^2]\,.
    \end{equation*}
\end{lemma}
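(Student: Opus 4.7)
I would prove the stronger statement that $\hat A_n := \E[A|Y_n]$ converges to $\hat A := \E[A|Y]$ in $L^2$; the desired conclusion then follows from $|\E[\hat A_n^2] - \E[\hat A^2]| \leq \|\hat A_n - \hat A\|_2\,(\|\hat A_n\|_2 + \|\hat A\|_2) \leq 2\|A\|_2\,\|\hat A_n - \hat A\|_2$ by Cauchy--Schwarz and the conditional Jensen bound $\|\hat A_n\|_2 \leq \|A\|_2$. A truncation then reduces the problem to the case $|A| \leq M$: for $A_M := A\,\indic{|A| \leq M}$, a second application of conditional Jensen gives $\|\hat A_n - \E[A_M|Y_n]\|_2 \leq \|A - A_M\|_2$ uniformly in $n$, and $\|A - A_M\|_2 \to 0$ as $M \to \infty$ because $A \in L^2$.

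\textbf{Bounded case.} Suppose $|A| \leq M$. Letting $\psi_\tau$ denote the $\normal(0,\tau^2)$ density, independence of $Z$ from $(A,B,B_n)$ gives the explicit formulas
\[
\hat A_n(y) = \frac{N_n(y)}{D_n(y)}, \qquad N_n(y) := \E[A\,\psi_\tau(y - B_n)], \quad D_n(y) := \E[\psi_\tau(y - B_n)],
\]
and analogously $\hat A = N/D$, with $D_n, D$ the densities of $Y_n, Y$. Both $\hat A_n, \hat A$ are $C^\infty$ and bounded by $M$ in absolute value. I would split
\[
\hat A_n(Y_n) - \hat A(Y) = [\hat A_n(Y_n) - \hat A(Y_n)] + [\hat A(Y_n) - \hat A(Y)].
\]
The second bracket tends to $0$ in $L^2$ by bounded convergence, since $\hat A$ is bounded continuous and $Y_n \to Y$ in probability. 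For the first bracket I use the algebraic identity
\[
(\hat A_n - \hat A)\,D_n \;=\; (N_n - N) + \hat A\,(D - D_n),
\]
integrate against $dy$, and note that $\int f(y)\,D_n(y)\,dy = \E[f(Y_n)]$; this yields
\[
\E\bigl|\hat A_n(Y_n) - \hat A(Y_n)\bigr| \;\leq\; \|N_n - N\|_{L^1(dy)} + M\,\|D_n - D\|_{L^1(dy)}.
\]
Pointwise convergence $N_n(y) \to N(y)$ and $D_n(y) \to D(y)$ is immediate from $B_n \to B$ in probability and bounded convergence. Scheffé's theorem then gives $\|D_n - D\|_1 \to 0$, and combining this with the dominations $|N_n - N| \leq M(D_n + D)$ together with $\int M(D_n + D)\,dy \to 2M$, the generalized dominated convergence theorem yields $\|N_n - N\|_1 \to 0$. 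The $L^1$ bound is upgraded to $L^2$ via $|\hat A_n - \hat A| \leq 2M$.

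\textbf{Main obstacle.} The delicate point is that the denominator $D_n(y)$ can be arbitrarily small in the tails, so $\hat A_n = N_n/D_n$ is not uniformly Lipschitz in $y$ and naive perturbation arguments fail. The trick is to integrate the error $|\hat A_n - \hat A|$ against the very density $D_n$ that sits in the denominator; this cancels the $1/D_n$ singularity and reduces everything to $L^1(dy)$ convergence of the Gaussian-smoothed quantities $D_n \to D$ and $N_n \to N$, which holds essentially for free by Scheffé's theorem thanks to the regularization by the Gaussian noise $\tau Z$.
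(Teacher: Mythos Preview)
Your proof is correct and complete. Both you and the paper write the conditional expectations as ratios $N/D$ of Gaussian-smoothed expectations and exploit the positivity and smoothness coming from the convolution with $\psi_\tau$, but the execution differs.

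The paper argues directly that $\hat A_n(y)\to\hat A(y)$ pointwise (indeed locally uniformly) by invoking Wasserstein convergence $\mu_{A,B_n}\stackrel{\mathrm{W}}\rightarrow\mu_{A,B}$ and the fact that $(a,b)\mapsto a\,e^{-(y-b)^2}$ is pseudo-Lipschitz, then combines this with $Y_n\to Y$ in probability and uniform integrability of $\hat A_n^2$ (from $\E[\hat A_n^2]\le\E[A^2]$) to conclude. This is shorter but leans on machinery developed elsewhere in the paper and leaves some details (uniform-on-compacts convergence, the local Lipschitz bound on the ratio) to the reader. Your route---truncating $A$ to reduce to the bounded case, then using the identity $(\hat A_n-\hat A)D_n=(N_n-N)+\hat A(D-D_n)$ so that integrating against $D_n$ cancels the problematic denominator and reduces everything to Scheff\'e---is more self-contained and sidesteps the need to control $\hat A_n(y)$ uniformly on compacts. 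The weighting-by-$D_n$ trick is a clean way to neutralize the tail singularity you flag; the paper instead implicitly handles it by restricting to compact sets where $D$ is bounded below.
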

We now establish the reduction.

Consider $\mu_{W,U}$, $\mu_{\Theta,V}$, and $h$ satisfying \textsf{R1} and \textsf{R2}.
For any $\epsilon > 0$, we construct $\mu_{\tbW,\tU}$, $\mu_{\tTheta,\tV}$, and $\th$ satisfying \textsf{R3} and \textsf{R4} for $k = 3$ as well as data $\bX \in \reals^{n\times p}$, 
$\btheta,\tbtheta,\bv,\tbv \in \reals^p$, 
and $\by,\tby,\bw,\bu,\tbu \in \reals^n$ 
and  $\tbw \in \reals^{n\times 3}$ such that the following all hold.

\begin{enumerate}

\item $(\bX,\btheta,\bv,\bu,\bw,\by)$ and $(\bX,\tbtheta,\tbv,\tbu,\tbw,\tby)$ are generated according to their respective regression models: namely,
 $(\theta_j,v_j) \stackrel{\mathrm{iid}}\sim \mu_{\Theta,V}$ and $(w_i,u_i) \stackrel{\mathrm{iid}}\sim \mu_{W,U}$ independent; 
$(\ttheta_j,\tv_j) \stackrel{\mathrm{iid}}\sim \mu_{\tTheta,\tV}$ 
and $(\tbw_i,\tu_i)\stackrel{\mathrm{iid}}\sim \mu_{\tbW,\tU}$ independent;
$x_{ij} \stackrel{\mathrm{iid}}\sim \normal(0,1/n)$ independent of everything else;
and $\by = h(\bX\btheta,\bw)$ and $\tby = \tilde h( \bX \tbtheta,\tbv)$.
Here $\tbw_i^\sT$ is the $i^\text{th}$ row of $\tbw$.
We emphasize that the data from the two models are not independent.

\item We have
\begin{gather}\label{eq:hd-reg-models-close}
  \P\left(\frac1n\|\by - \tby \|^2 > \epsilon \right) \rightarrow 0, \;
  \P\left(\frac1p\|\bv - \tilde \bv \|^2 > \epsilon \right) \rightarrow 0, \; \P\left(\frac1n\|\bu - \tilde \bu \|^2 > \epsilon \right) \rightarrow 0\,.
\end{gather}
Note that because in any GFOM the functions $F_t^{(1)},F_t^{(2)},G_t^{(1)},G_t^{(2)},G_*$ are Lipschitz and $\|\bX\|_{\mathsf{op}} \stackrel{\mathrm{p}}\rightarrow C_\delta < \infty$ as $n,p\rightarrow \infty,n/p \rightarrow 0$ \cite[Theorem 5.31]{Vershynin2012IntroductionMatrices}, the previous display and the iteration \eqref{gfom} imply
\begin{equation}\label{eq:gfom-close}
  \P\left(\frac1p\|\hat \btheta^t - \tilde{\hat \btheta}^t\|^2 > c(\epsilon,t)\right) \rightarrow 0\,,
\end{equation}
for some $c(\epsilon,t) < \infty$ which goes to 0 as $\epsilon \rightarrow 0$ for fixed $t$.

\item We have
\begin{gather}
    |\mmse_{\Theta,V}(\tau_s^2) - \mmse_{\tilde \Theta, \tilde V}(\tau_s)^2| < \epsilon,\label{eq:mmse-close}\\
    \left|\E\left[\E[G_1|h(G,W) + \epsilon^{1/2}Z,G_0]^2\right] - \E\left[\E[G_1|\th(G,\tbW),G_0]^2\right]\right| < \ttau_s^2\epsilon\,,\label{eq:Finfo-close}
\end{gather}
for all $s \leq t$ where $G_0,G_1,Z \stackrel{\mathrm{iid}}\sim \normal(0,1)$, $W \sim \mu_W$, and $\tbW \sim \mu_{\tbW}$ independent,
and $G = \sigma_s G_0 + \ttau_s G_1$.

\end{enumerate}

We now describe the construction described and prove it has the desired properties.
Let $\mu_A$ be a smoothed Laplace distribution with mean zero and variance 1;
namely, $\mu_A$ has a $C_\infty$ positive density $p_A(\cdot)$ with respect to Lebesgue measure which satisfies $\partial_a \log p_A(a) = c\cdot \mathsf{sgn}(a)$ when $|x| > 1$ for some positive constant $c$.
This implies that $|\partial_a^k \log p_A(a)|\leq q_k$ for all $k$ and some constants $q_k$, and that $\mu_A$ has moments of all orders.

First we construct $\th$ and $\tbW$.
For a $\xi > 0$ to be chosen, let $\hat h$ be a Lipschitz function such that $\E[(\hat h(G,W) - h(G,W))^2] < \xi$ for $(G,W)$ as above, which is permitted by assumption \textsf{R2}.
Let $L > 0$ be a Lipschitz constant for $\hat h$.
Choose $M > 0$ such that $\E[W^2\indic{|W| > M}] < \xi/L^2$.
Define $\bar W = W\indic{|W| \leq  M}$.
Note that $\E[(h(G,W) - \hat h(G + \xi^{1/2} A, \bar W))^2] \leq 2\E[(h(G,W) - \hat h(G,W))^2] + 2\E[(\hat h(G,W) - \hat h(G + \xi^{1/2} A , \bar W))^2] < 4\xi$.
By Lemma \ref{lem:mmse-continuity}, 
we may pick $0 < \xi < \min\{\epsilon/4,\epsilon/L^2\}$ sufficiently small that
\begin{equation*}
  \left|\E\left[\E[G_1|h(G,W) + \epsilon^{1/2}Z,G_0]^2\right] - \E\left[\E[G_1|\hat h(G + \xi^{1/2}A,\bar W) + \epsilon^{1/2}Z,G_0]^2\right]\right| < \ttau_s^2 \epsilon\,.
\end{equation*} 
In fact, because $t$ is finite, we may choose $\xi > 0$ small enough that this holds for all $s \leq t$.
Define $\tbW = (\bar W,A,Z)$ and $\th(x,\tbw) = \hat h( x + \xi^{1/2} a , \bar w ) + \epsilon^{1/2}z $ where $\tbw = (\bar w,a,z)$.
Then $\th$ is Lipschitz, Eq.~\eqref{eq:Finfo-close} holds for all $s \leq t$, and $\E[(h(G,W) - \th(G,\tbW))^2] < \epsilon$ (the last because $\xi < \epsilon/4$).

Now choose $K>0$ large enough that 
\begin{gather}\label{eq:truncation}
  \E[\Theta^2 \indic{|\Theta| > K}] < \delta \epsilon/L^2, \;\; \E[U^2\indic{|U| > K}] < \epsilon/2, \;\; \E[V^2\indic{|V| > K}] < \epsilon/2\,.
\end{gather}
Define $\tTheta = \bar \Theta = \Theta \indic{|\Theta| \leq K}$, $\tV = \bar V =  V \indic{|V| \leq K}$, $\tU = \bar U = U \indic{|U| \leq K}$,
and let $\mu_{\tTheta,\tV},\mu_{\tbW,\tU}$ be the corresponding distributions;
namely, $\mu_{\tTheta,\tV}$ is the distribution of $(\Theta \indic{|\Theta| \leq K},V \indic{|V|\leq K})$ when $(\Theta,V) \sim \mu_{\Theta,V}$, and $\mu_{\tbW,\tU}$ is the distribution of $(W\indic{|W|\leq M},A,Z)$ when $(W,U) \sim \mu_{W,U}$ and $(A,Z) \sim \mu_A \otimes \normal(0,1)$ independent.
Because the Bayes risk converges as $K \rightarrow \infty$ to the Bayes risk with respect to the untruncated prior, we may choose $K$ large enough that also \eqref{eq:mmse-close} holds for these truncated distributions.

The distributions $\mu_{\tTheta,\tV},\mu_{\tbW,\tU}$ satisfy assumption \textsf{R3}.
We now show that $\th$ and $\tbW$ constructed in this way satisfy assumption \textsf{R4}.
The function $\th$ is Lipschitz because $\hat h$ is Lipschitz.
The random variable $\tY := \hat h(x + \xi^{1/2} A, \bar W) + \epsilon^{1/2} Z$ has density with respect to Lebesgue measure given by
\begin{equation*}
  p(y|x) = \int \int p_{\xi^{1/2}A}\left(s-x\right) p_{\normal(0,\epsilon)}(y - \hat h(s,\bar w)) \mu_{\bar W}(\de \bar w) \de s,
\end{equation*} 
where $p_{\normal(0,\epsilon)}$ is the density of $\normal(0,\epsilon)$ and $p_{\xi^{1/2}A}\left(s-x\right)$ the density of $\xi^{1/2}A$ with respect to Lebesgue measure.
We have $p(y|x) \leq \sup_y p_{\normal(0,\epsilon)}(y) = 1/\sqrt{2\pi \epsilon}$, so is bounded, as desired.
Moreover 
\begin{equation*}
  \left|\frac{\int \int \partial_x p_{\xi^{1/2}A}\left(s-x\right) p_{\normal(0,\epsilon)}(y - \hat h(s,\bar w)) \mu_{\bar W}(\de \bar w) \de s}{p(y|x)}\right| \leq \sup_{s} \left|\frac{\dot p_{\xi^{1/2}A}(s)}{p_{\xi^{1/2}A}(s)}\right|.
\end{equation*}
Because $A$ has a smoothed Laplace distribution, the right-hand side is finite.
Thus, by bounded convergence, we may exchange differentiation and integration and the preceding display is equal to $\partial_x \log p(y|x)$.
We conclude that $|\partial_x \log p(y|x)|$ is bounded.
The boundededness of all higher derivatives holds similarly. 
Thus, \textsf{R4} holds.

We now generate the appropriate joint distribution over $(\bX,\btheta,\bv,\bu,\bw,\by)$ and $(\bX,\tbtheta,\tbv,\tbu,\tbw,\tby)$.
First, generate $(\bX,\btheta,\bv,\bu,\bw,\by)$ from original the high-dimensional regression model.
Then generate $\ba,\bz$ independent and with entries $a_i \stackrel{\mathrm{iid}}\sim \mu_A$ and $z_i \stackrel{\mathrm{iid}}\sim \normal(0,1)$.
Define $\tbtheta,\tbv,\tbu$ by truncating $\btheta,\bv,\bu$ at threshold $K$; define $\tbw$ by truncating $\bw$ at threshold $M$ to form $\bar \bw$ and concatenating to it the vectors $\ba,\bz$ to form a matrix in $\reals^{n\times 3}$;
and define $\tby = \th(\bX\tbtheta,\tbw)$.

All that remains is to show \eqref{eq:hd-reg-models-close} holds for the model generated in this way.
The bounds on $\|\bv-\tbv\|^2$ and $\|\bu - \tbu\|^2$ hold by the weak law of large numbers and \eqref{eq:truncation}.
To control $\|\by - \tby\|$, we bound
\begin{align*}
  &\|\by - \tby\| = \|h(\bX\btheta,\bw) - \th(\bX \tilde \btheta , \tilde \bw) \| \\
  &\qquad  \leq \|h(\bX\btheta,\bw) - \hat h(\bX \btheta,\bw)\| + \|\hat h(\bX\btheta,\bw) - \hat h(\bX \tbtheta , \bw ) \| + \| \hat h (\bX \tbtheta , \bw ) - \th(\bX \tbtheta , \tbw ) \\
  &\qquad \leq \|h(\bX\btheta,\bw) - \hat h(\bX \btheta,\bw)\| +L \|\bX(\btheta - \tbtheta)\| + L\xi^{1/2}\|\ba\| + L \|\bw - \bar \bw\| + \epsilon^{1/2} \|\bz\|\,.
\end{align*}
Because $|h(x,w)| \leq C(1 + |x| + |w|)$ by \textsf{R2} and $\hat h$ is Lipschitz, there exist $C > 0$ such that 
$|h(x,w) - \hat h(x,w)| \leq C(1 + |x| + |w|)$.
Then, $\E[(h(\tau Z,w) - \hat h(\tau Z,w))^2] = \int (h(x,w) - \hat h(x,w))^2 \frac1{\sqrt{2\pi}\tau} e^{-\frac1{2\tau^2}x^2} \de x < C(1 + \tau^2 + w^2)$ and is continuous in $\tau^2$ for $\tau > 0$ by dominated convergence convergence, and is uniformly continuous for $\tau$ bounded away from 0 and infinity and $w_i$ restricted to a compact set.
Because $\bx_i^\sT \btheta | \btheta \sim \normal(0,\|\btheta\|^2/n)$ and $\|\btheta\|^2/n \stackrel{\mathrm{p}}\rightarrow \tau_\Theta^2/\delta$, we have that 
$$
  \E[(h(\bx_i^\sT\btheta,w_i) - \hat h( \bx_i^\sT \btheta , w_i))^2 |\btheta,w_i ] = \E[(h( \tau_\Theta \bx_i^\sT\btheta/\|\btheta\|,w_i) - \hat h( \tau_\Theta \bx_i^\sT \btheta /\|\btheta\|, w_i))^2 |\btheta,w_i ] + o_p(1)\,.
$$
The right-hand side is a  constant equal to $\E[(h( G, W) - \hat h( G, W))^2]$ and the left-hand side is uniformly integrable.
Thus,
\begin{equation*}
  \limsup_{n \rightarrow \infty} \E[(h(\bx_i^\sT\btheta,w_i) - \hat h( \bx_i^\sT \btheta , w_i))^2] \leq \E[(h(G,w_i) - \hat h( G, w_i))^2] < \xi\,.
\end{equation*}
Markov's inequality proves the the first convergence in \eqref{eq:hd-reg-models-close} because $\xi < \epsilon$.
Further, by the weak law of large numbers
\begin{equation*}
  \frac{L^2}{n} \|\bX(\btheta - \tilde \btheta)\|^2 \leq \frac{L^2\|\bX\|_{\mathsf{op}}^2}{n}\|\btheta - \tilde \btheta\|^2 \stackrel{\mathrm{p}}\rightarrow L^2C_\delta \delta^{-1} \E[\Theta^2\indic{|\Theta| > M}] < C_\delta\epsilon\,,
\end{equation*}
where $C_\delta$ is the constant satisfying $\|\bX\|_{\mathsf{op}}^2 \stackrel{\mathrm{p}}\rightarrow C_\delta$ \cite[Theorem 5.31]{Vershynin2012IntroductionMatrices}.
Similarly, by the weak law of large numbers
\begin{equation*}
  \frac{L^2\xi}{n}\|\ba\|^2 \stackrel{\mathrm{p}}\rightarrow L^2\xi < \epsilon, \;\; \frac{L^2}{n}\|\bw - \bar \bw\|^2 \stackrel{\mathrm{p}}\rightarrow L^2 \E[W^2 \indic{|W| > M}] < \xi < \epsilon, \;\; \frac\epsilon n\|\bz\|^2 \stackrel{\mathrm{p}}\rightarrow \epsilon\,.
\end{equation*}
We conclude that
\begin{equation*}
  \P\left(\frac1n \|\by - \tby\|^2 > 5(C_\delta +4)\epsilon\right) \rightarrow 0.
\end{equation*}
Becuse $\epsilon$ was arbitrary, we can in fact achieve \eqref{eq:hd-reg-models-close} by considering a smaller $\epsilon$ (without affecting the validity of \eqref{eq:mmse-close}).

This completes the construction. 
To summarize, we have two models: the first satisfying \textsf{R1} and \textsf{R2}, and the second satisfying \textsf{R3} and \textsf{R4}.

With the construction now complete, we explain why it establishes the reduction.
Let $\tau_s^{(\epsilon)},\ttau_s^{(\epsilon)}$ be the state evolution parameters generated by \eqref{bamp-se-hd-reg} with $\mu_{\tbW,\tU}$, $\mu_{\tTheta,\tV}$, and $\th$ in place of $\mu_{W,U},\mu_{\Theta,V}$, and $h$.
First, we claim that Eqs.~\eqref{eq:mmse-close} and \eqref{eq:Finfo-close} imply, by induction, that as $\epsilon \rightarrow 0$, we have 
\begin{equation*}
  \tau_t^{(\epsilon)} \rightarrow \tau_t.
\end{equation*}
Indeed, to show this, we must only establish that $\E\left[\E[G_1|h(G,W) + \epsilon^{1/2}Z,G_0]^2\right]$ converges to $\E[\E[G_1|h(G,W),G_0]^2]$ as $\epsilon \rightarrow 0$.
Without loss of generality, we may assume that on the same probability space there exists a Brownian motion $(B_\epsilon)_{\epsilon > 0}$ independent of everything else.
We see that $\E[G_1|h(G,W) + \epsilon^{1/2}Z,G_0]^2] \stackrel{\mathrm{d}}= \E[G_1|h(G,W) + B_\epsilon,G_0] = \E[G_1|(h(G,W) + B_s)_{s \geq \epsilon},G_0]$.
By L\'evy's upward theorem \cite[Theorem 5.5.7]{Durrett2010Probability:Examples}, we have that $\E[G_1|(h(G,W) + B_s)_{s \geq \epsilon},G_0]$ converges to $\E[G_1|(h(G,W) + B_s)_{s \geq 0},G_0] = \E[G_1|h(G,W),G_0]$ almost surely.
By uniform integrability, we conclude that $\E[\E[G_1|(h(G,W) + B_s)_{s \geq \epsilon},G_0]^2] \rightarrow \E[\E[G_1|h(G,W),G_0]^2]$, as claimed.
Thus, we conclude the previous display. 

We now show that as $\epsilon \rightarrow 0$, we have
\begin{equation*}
  \inf_{\hat \theta (\cdot) } \E[\ell(\tTheta,\hat \theta (\tTheta + \tau_t^{(\epsilon)}G,V))] \rightarrow \inf_{\hat \theta (\cdot) } \E[\ell(\Theta,\hat \theta (\Theta + \tau_tG,V))]\,.
\end{equation*}
Because the truncation level $K$ can be taken to $\infty$ as $\epsilon \rightarrow 0$,
this holds by combining Lemma \ref{lem:properties-of-bayes-risk}(a) and (c), and specifically, Eqs.~\eqref{eq:bayes-is-exp-post} and \eqref{eq:risks-trunc-limit}.

Because the lower bound of Theorem \ref{thm:hd-reg-lower-bound} holds under assumptions \textsf{R3} and \textsf{R4}, which are satisfied by $\mu_{\tbW,\tU}$, $\mu_{\tTheta,\tV}$, and $\th$, 
we conclude that 
\begin{equation*}
  \lim_{n \rightarrow \infty} \frac1p \sum_{j=1}^p \ell(\theta_j,\hat \theta_j^t) \geq \inf_{\hat \theta (\cdot) } \E[\ell(\tTheta,\hat \theta (\tTheta + \tau_t^{(\epsilon)}G,V))].
\end{equation*}
Taking $\epsilon \rightarrow 0$ and applying \eqref{eq:gfom-close}, we conclude that \eqref{eq:hd-reg-lb} holds for $\hat \btheta^t$, as desired.

The reduction in the high-dimensional regression model is complete.

\begin{proof}[Lemma \ref{lem:mmse-continuity}]
  It is enough to prove the result for $\tau = 1$.
  Note 
    \begin{equation*}
        \E[A|Y = y] = \frac{\int a e^{-(y-b)^2} \mu(\de a,\de b) }{ \int e^{-(y-b)^2} \mu(\de a , \de b) },\qquad \E[A|Y_n = y] = \frac{\int a e^{-(y-b)^2} \mu_n(\de a,\de b) }{ \int e^{-(y-b)^2} \mu_n(\de a , \de b) }.
    \end{equation*}
    Because $\mu_n \stackrel{\mathrm{W}}\rightarrow \mu$, we have 
    \begin{equation*}
        \frac{\int a e^{-(y-b)^2} \mu_n(\de a,\de b) }{ \int e^{-(y-b)^2} \mu_n(\de a , \de b) } \rightarrow \frac{\int a e^{-(y-b)^2} \mu(\de a,\de b) }{ \int e^{-(y-b)^2} \mu(\de a , \de b) },
    \end{equation*}
    for all $y$, and moreover, this convergence is uniform on compact sets.
    Moreover, one can check that the stated functions are Lipschitz (with uniform Lipschitz constant) in $y$ on compact sets.
    This implies that $\E[A|Y_n] \rightarrow \E[A|Y]$ almost surely.
    Because the $\E[A|Y_n]^2$ are uniformly integrable, the lemma follows.
\end{proof}

\subsection{From strong to weak assumptions in the low-rank matrix estimation model}

Consider $\mu_{\bLambda,\bU},\mu_{\bTheta,\bV}$ satisfying \textsf{M1}.
Fix $M > 0$.
For $(\bLambda,\bU) \sim \mu_{\bLambda,\bU}$, 
define $\tilde \Lambda$ by setting $\tilde \Lambda_i = \Lambda_i\indic{|\Lambda_i| \leq M}$ for $1 \leq i \leq k$.
Define $\tilde \bU$ similarly, and let $\mu_{\tilde \bLambda,\tilde \bU}$ be the distribution of $(\tilde \bLambda,\tilde \bU)$ so constructed.
Define $\mu_{\tilde \bTheta,\tilde \bV}$ similarly.

Consider $\{(\blambda_i,\bu_i)\}_{i\le n}\stackrel{\mathrm{iid}}\sim\mu_{\bLambda,\bU}$
and $\{(\btheta_j,\bv_j)\}_{j\le p}\stackrel{\mathrm{iid}}\sim\mu_{\bTheta,\bV}$ and $\bZ \in \reals^{n \times p}$ independent with $z_{ij} \stackrel{\mathrm{iid}}\sim \normal(0,1/n)$.
Constructe $\tilde \blambda_i,\tbu_i,\tilde \btheta_j,\tbv_j$ by truncated each coordinate at level $M$ as above. 
Define $\bX,\tilde \bX \in \reals^{n\times p}$ by $x_{ij} = \frac1n \blambda_i^\sT\btheta_j + z_{ij}$ and $\tilde z_{ij} = \frac1n \tilde \blambda_i^\sT \tilde \btheta + z_{ij}$.
As in the previous section, we have for any $\epsilon > 0$ that 
\begin{equation*}
  \P(\|\bX - \tilde \bX\|_{\mathsf{op}} > \epsilon) \rightarrow 0, \;\; \P\left( \frac1p \|\bv - \tilde \bv\|^2 > \epsilon \right) \rightarrow 0, \;\; \P\left(\frac 1p \|\bu - \tilde \bu\|^2 > \epsilon \right) \rightarrow 0.
\end{equation*}
As in the previous section, this implies that the iterates of the GFOMs before and after the truncation become arbitrarily close with high probability at a fixed iterate $t$ as we take $M \rightarrow \infty$.

Further, as $M \rightarrow \infty$ we have $\bV_{\tilde \bTheta,\tilde \bV}(\bQ) \rightarrow \bV_{\bTheta,\bV}(\bQ)$ for all $\bQ$, and likewise for $\tilde \bLambda, \tilde \bU$.
Further, $\bV_{\tilde \bTheta,\tilde \bV}(\bQ)$ is jointly continuous in $\bQ$ and $M$ (where $M$ is implicit in the truncation used to generate $\tilde \bTheta,\tilde \bV$).
Thus, as we take $M \rightarrow \infty$, the state evolution \eqref{eq:SE_Matrix} after the truncation converges to the state evolution with no truncation.

The reduction now occurs exactly as in the previous section.

\section{Achieving the bound}
\label{app:achieving-the-bound}

All that remains to prove Theorems \ref{thm:hd-reg-lower-bound} and \ref{thm:lr-mat-lower-bound} under assumptions \textsf{A1}, \textsf{A2} and either \textsf{R1}, \textsf{R2} or \textsf{M1}, respectively, is to show that the lower bounds in Eqs.~\eqref{eq:hd-reg-lb} and \eqref{eq:lr-mat-lb} can be achieved. 
In both cases, we can achieve the bound up to tolerance $\epsilon$ using a certain AMP algorithm.

\subsection{Achieving the bound in the high-dimensional regression model}

We first derive certain monotonicity properies of the parameters $\tau_s,\sigma_s,\ttau_s$ defined in the state evolution recursion \eqref{bamp-se-hd-reg}.
As we saw in Appendix \ref{app:info-lb-hd-reg}
and in particular, in Lemma \ref{lem:hd-reg-posterior},
the posterior of $\theta_v$ on the computation tree given observations in the local neighborhood $T_{v,2s}$ behaves like that from an observation under Gaussian noise with variance $\tau_s^2$.
This is made precise in Lemma \ref{lem:hd-reg-posterior}.
Moreover, we saw in the same section that a consequence of Lemma \ref{lem:hd-reg-posterior} is that the asymptotic limiting Bayes risk with respect to loss $\ell$ for estimation $\theta_v$ given observations in $\calT_{v,2s}$ is given by the corresponding risk for estimating $\Theta$ given $\Theta + \tau_s G$, $V$ with $(\Theta,V) \sim \mu_{\Theta,V}$ and $G \sim \normal(0,1)$ independent.
In particular, this applies to the minimum mean square error.
On the computation tree, minimum mean square error can only decrease as $s$ grows because as $s$ grows we receive strictly more information.
If $\E[\Var(\Theta | V)] > 0$, then $\mmse_{\Theta,V}(\tau^2)$ is strictly increasing in $\tau$,
so that we conclude that $\tau_s$ is non-increasing in $s$.
Thus, by \eqref{bamp-se-hd-reg}, we have also $\ttau_s$ is non-increasing in $s$ and $\sigma_s$ is non-decreasing in $s$.
In the complementary case that $\E[\Var(\Theta | V)] = 0$, we compute $\sigma_s^2 = \tau_\Theta^2/\delta$ and $\ttau_s^2 = 0$ for all $s \geq 0$, and $\tau_s^2 = 0$ for all $s \geq 1$.
Thus, the same monotoncity results hold in this case.
These monotonicity results will imply the needed structural properties of the state evolution matrices $(T_{s,s'}),(\Sigma_{s,s'})$ used below.

For all $s \leq t$, define 
\begin{equation*}
  \alpha_s = \frac1{\ttau_s} \E[\E[G_1|Y,G_0,U]^2], \;\; T_{s,t} = \E[\E[G_1|Y,G_0,U]^2], \;\; \Sigma_{s,t} = \sigma_t^2,
\end{equation*}
where $Y = h(\sigma_s G_0 + \ttau_s G_1,W)$ and $G_0,G_1 \stackrel{\mathrm{iid}}\sim \normal(0,1)$ and $W \sim \mu_W$ independent.
By the monotoncity properties stated, $(T_{s,t}),(\Sigma_{s,t})$ define positive definite arrays.
Define
\begin{gather*}
  f_t(b^t;y,u) = \E[B^0 - B^t | h(B^0,W) = y ,\, B^t = b^t ,\, U = u ] / \ttau_t,\\
  g_t(a^t;v) = \E[\Theta | V = v ,\, \alpha_t\Theta + Z^t = a^t],
\end{gather*}
where $(\Theta,V) \sim \mu_{\Theta,V})$, $(W,U) \sim \mu_{W,U}$, $(B^0,\ldots,B^t) \sim \normal(\bzero,\bSigma_{[0{:}t]})$, $(Z^1,\ldots,Z^t) \sim \normal(\bzero,\bT_{[1{:}t]})$, all independent.
With these definitions, $(B^t,B^0-B^t) \stackrel{\mathrm{d}}= (\sigma_tG_0,\ttau_t G_1)$ where $G_0,G_1 \stackrel{\mathrm{iid}}\sim \normal(0,1)$.
In particular, $(B^t)$ form a backwards Gaussian random walk.
We thus compute
\begin{align*}
  &\E[(B^0 - B^t) f_t(B^t;h(B^0,W),U)] / \ttau_t^2 = \E[(\E[B^0 - B^t|Y,B^t,U]/\ttau_t)^2] / \ttau_t = \alpha_t, \\
  &\E[f_s(B^s;h(B^0,W),U)f_t(B^t;h(B^0,W),U) ] \\
  &\qquad\qquad = \E[\E[B^0 - B^s|Y,B^s,U]\E[B^0 - B^t|Y,B^t,U]]/\ttau_t^2 \\
  &\qquad\qquad = \E[(B^0 - B^t)^2| Y , B^t, U] / \ttau_t^2 = T_{s,t},\\
  &\frac1\delta \E[\Theta g_t(\alpha_t\Theta + Z^t;V)] = \frac1\delta \E[\E[\Theta| \Theta + Z^t / \alpha_t,V]^2] = \sigma_t^2,\\
  &\frac1\delta \E[g_s(\alpha_s\Theta + Z^s;V)g_t(\alpha_t\Theta + Z^t;V)] = \frac1\delta \E[\E[\Theta|\Theta + Z^t/\alpha_t,V]^2].
\end{align*}
If $f_t,g_t$ are Lipschitz, then, because $h$ is also Lipschitz, Stein's lemma \cite{Stein1981EstimationDistribution} implies that the first line is equivalent to $\E[\partial_{B^0} f_t(B^t;h(B^0,W),U)] = \alpha_t$. (Here, we have used that $B^0 - B^t$ is independent of $B^t$).
Thus, $(\alpha_s),(T_{s,t}),(\Sigma_{s,t})$ are exactly the state evolution parameters determined by \eqref{amp-scalars-hd-reg}, 
and Lemma \ref{lem:gfom-to-amp} implies that AMP with these $(f_s),(g_s)$ achieves the lower bound.

If the $f_t,g_t$ are not Lipschitz, we proceed as follows.
Fix $\epsilon > 0$.
First, pick Lipschitz $\hat f_0$ such that $\E[(\hat f_0(B^0,W) - f_0(B^0,W))^2] < \epsilon$, which is possibly because Lipschitz functions are dense in $L_2$.
Define $\hat \alpha_0$ and $\hat T_{1,1}$ via \eqref{amp-scalars-hd-reg} with $\hat f_0$ in place of $f_0$.
Note that $\lim_{\epsilon \rightarrow 0} \hat \alpha_0 = \alpha_0$ and $\lim_{\epsilon \rightarrow 0} \hat T_{1,1} = T_{1,1}$.
Next, pick Lipschitz $\hat g_0$ such that $\E[(\hat g_0(\hat \alpha_0 \Theta + \hat T_{1,1}^{1/2} G;V) - \E[\Theta | \hat \alpha_0 + \Theta + \hat T_{1,1}^{1/2} G ; V)])^2] < \epsilon$, which is again possibly because Lipschitz functions are dense in $L_2$.
Define $\hat \Sigma_{0,1} = \frac1\delta \E[\Theta \hat g_t(\hat \alpha \Theta + \hat T_{1,1}^{1/2} G;V)]$ and $\hat \Sigma_{1,1} = \frac1\delta \E[ \hat g_t(\hat \alpha \Theta + \hat T_{1,1}^{1/2} G;V)^2 ]$.
Because as $\alpha \rightarrow \alpha_0$ and $\tau \rightarrow T_{0,0}^{1/2}$, we have $\E[\Theta | \alpha \Theta + \tau G ; V)] \stackrel{L_2}\rightarrow \E[\Theta | \alpha_0 \Theta + T_{0,0}^{1/2} G ; V)]$, 
we conclude that as $\epsilon \rightarrow 0$ that $\hat \Sigma_{0,1} \rightarrow \Sigma_{1,1}$ and $\hat \Sigma_{1,1} \rightarrow \Sigma_{1,1}$.
Continuing in this way, we are able to by taking $\epsilon$ sufficiently small construct Lipschitz functions $(\hat f_t),(\hat g_t)$ which track the state evolutoin of the previous paragraph arbitrarily closely up to a fixed time $t^*$.
Thus, we may come arbitrarily close to achieving the lower bound of Theorem \ref{thm:hd-reg-lower-bound}.

\subsection{Achieving the bound in the low-rank matrix estimation model}

Let $\bgamma_t = \hat \bQ_t$ for $t \geq 0$ and $\balpha_t = \bQ_t$, $\bSigma_{t,t} = \hat \bQ_t$, $\bT_{t,t} = \bQ_t$ for $t \geq 1$.
Define
\begin{gather*}
  f_t(\bb^t; \bu) = \E[\bLambda | \bgamma_t \bLambda + \bSigma_{t,t}^{1/2}\bG = \bb^t;\bU],\\
  g_t(\ba^t; \bv) = \E[\bTheta | \balpha_t \bTheta + \bT_{t,t}^{1/2}\bG = \ba^t;\bV].
\end{gather*}
We check that the parameters so defined satisfy the AMP state evolution \eqref{amp-scalars-lr-mat}.
Note that by \eqref{eq:SE_Matrix}, 
\begin{align*}
  \bT_{t+1,t+1} &= \bQ_{t+1} = \E[\E[\bLambda | \hat \bQ_t^{1/2} \bLambda +  \bG ;\bU]\E[\bLambda | \hat \bQ_t^{1/2}  \bLambda + \bG;\bU]^\sT]\\
  &= \E[\E[\bLambda | \hat \bQ_t \bLambda +  \hat \bQ_t^{1/2} \bG ;\bU]\E[\bLambda | \hat \bQ_t  \bLambda + \hat \bQ_t^{1/2} \bG ;\bU]^\sT]\\
  &= \E[\E[\bLambda | \bgamma_t \bLambda +  \bSigma_{t,t}^{1/2} \bG;\bU]\E[\bLambda | \bgamma_t  \bLambda + \bSigma_{t,t}^{1/2} \bG;\bU]^\sT],\\
  \balpha_{t+1} &= \E[\E[\bLambda | \hat \bQ_t^{1/2} \bLambda +  \bG ;\bU]\E[\bLambda | \hat \bQ_t^{1/2}  \bLambda + \bG;\bU]^\sT]\\
  &= \E[\E[\bLambda | \bgamma_t \bLambda +  \bSigma_{t,t}^{1/2}\bG ;\bU]\bLambda^\sT]\\
\end{align*}
where $(\bTheta,\bV) \sim \mu_{\bTheta,\bV}$ and $(\bLambda,\bU) \sim \mu_{\bLambda,\bU}$.
The state evolution equations \eqref{eq:SE_Matrix} for $\bSigma_{t,t}$ and $\bgamma_t$ hold similarly.

If $f_t,g_t$ so defined are Lipschitz, then $(\alpha_s),(\bT_{s,t}),(\bSigma_{s,t})$ are exactly the state evolution parameters determined by \eqref{amp-scalars-hd-reg}, 
and Lemma \ref{lem:gfom-to-amp} implies that AMP with these $(f_s),(g_s)$ achieves the lower bound.
If the $f_t,g_t$ so defined are not Lipschitz, then the same strategy used in the previous section allows us to achieve the lower bound within tolerance $\epsilon > 0$.

\section{Proofs for sparse phase retrieval and sparse PCA}

\subsection{Proof of Lemma \ref{lemma:SPCA}}
  Note that $\|\obtheta_0\|_2$ is tightly concentrated around $\mu^2\eps$. As a consequence, we can replace the side information
    $\obv$ by $\bv = \sqrt{\talpha}\btheta_0+\bg$.
    We apply Theorem \ref{thm:lr-mat-lower-bound} with $r=1$, and loss $\ell_{\lambda}(\theta,\htheta) = (\htheta-\theta_0/\lambda)^2$, where $\lambda\in \reals_{\ge 0}$ will be adjusted below.
  Setting $\bQ_t=q_t$, $\hbQ_t=\hq_t$, we obtain the iteration
  \begin{align}
  q_{t+1} = \frac{\hq_t}{1+\hq_t}\, ,\;\;\;\;
  \hq_{t} = \frac{1}{\delta}\E\big\{\E[\sqrt{\delta}\Theta_0|(\delta q_t)^{1/2}\Theta_0+G;V]^2\big\}\ ,
  \end{align}
  where $\Theta_0\sim \mu_{\theta}$,and $V= \sqrt{\delta\talpha}+G'$, $G'\sim\normal(0,1)$.
  Notice  that the additional factors $\sqrt{\delta}$ are due to the different normalization
  of the vector $\btheta_0$ with respect to the statement in Theorem \ref{thm:lr-mat-lower-bound}. Also note that
  the second moment of the conditional expectation bove is equal to $\E\big\{\E[\sqrt{\delta}\Theta_0|(\delta (q_t+\talpha))^{1/2}\Theta_0+G]^2\big\}$ and a simple calculation yields
  \begin{align}
  \hq_{t+1} = V_{\pm}(q_t+\talpha)\, ,\;\;\;\;
  q_t = \frac{\hq_t}{1+\hq_t}\, ,
  \end{align}
  which is equivalent to Eqs.~\eqref{eq:SE-SPCA-1}, \eqref{eq:SE-SPCA-2}.

  Let $Y = \sqrt{\delta (q_t+\talpha)}\Theta_0+G$, $G\sim\normal(0,1)$. 
  Theorem  \ref{thm:lr-mat-lower-bound} then yields 
  \begin{align}
    \frac{1}{p}\|\hbtheta^t-\btheta_0/\lambda\|_2^2&\ge \inf_{\htheta(\, \cdot\,)}
                                                     \E\big\{\big(\htheta(Y)-\Theta_0/\lambda\big)^2\big\}+o_p(1)\\
                                                   & = \frac{1}{\lambda^2}  \E\big\{\big(\E(\Theta_0|Y)-\Theta_0\big)^2\big\}+o_p(1)\,.
  \end{align}
    In order to prove the upper bound \eqref{eq:StatementPCA}, it is sufficient to consider $\|\hbtheta^t\|^2_2\le p$.
  Then, for any $\lambda\ge 0$, 
  \begin{align}
   \frac{1}{p}\<\hbtheta^t,\btheta_0\>& \le \frac{1}{p}\<\hbtheta^t,\btheta_0\> -\frac{\lambda}{2p} (\|\hbtheta^t\|^2_2-p) \\
                          &=\frac{\lambda}{2} +\frac{1}{2\lambda p}\|\btheta_0\|_2^2- \frac{\lambda}{2p}\|\hbtheta^t-\btheta_0/\lambda\|_2^2\\
    & \le \frac{\lambda}{2}     +\frac{1}{2\lambda}\E\{\Theta_0^2\}-\frac{1}{2\lambda}
      \E\big\{\big(\E(\Theta_0|Y)-\Theta_0\big)^2\big\}+o(1)\\
                           & \le  \frac{\lambda}{2}     +\frac{1}{2\lambda}V_{\pm}(q_t + \talpha) +o(1)\, .   
  \end{align}
  The claim follows by choosing $\lambda =V_{\pm}(q_t + \talpha) ^{1/2}$, and noting that $\|\btheta_0\|^2_2/p\to \mu^2\eps$, almost surely.
  
  \subsection{Proof of Corollary \ref{coro:SPCA}}
  
   Choose $\mu=R/\sqrt{\eps}$, and let $\mu'<\mu$, $\eps'<\eps$, $R'=\mu'\sqrt{\eps'}$.
    Draw the coordinates of $\btheta_0=\obtheta_0\sqrt{p}$ according to the three points distribution with parameters $\mu',\eps'$.
    Then, with probability one, we have  $\obtheta_0\in \cuT(\eps,R)$ for all $n$ large enough.
    Applying Lemma \ref{lemma:SPCA}, we get
      \begin{align}
      \lim_{n\to\infty}\inf_{\obtheta_0\in\cuT(\eps,R)}\E\left\{\frac{\<\obtheta_0,\hbtheta^t\>}{\|\obtheta_0\|_2\|\hbtheta^t\|_2}\right\}\le \sqrt{\frac{V_{\pm}(q'_t+\talpha')}{(\mu')^2\eps'}}\, ,\label{eq:FirstMinimaxSPCA}
      \end{align}
      ahere we used dominated convergence to pass from the limit in probability to limit in expectation, and $q'_t,\talpha'$ are computed with parameters $\mu'$, $\eps'$. By letting $\eps'\to\eps$, $\mu'\to\mu$, and since $\talpha', q'_t$ are continuous in these parameters by an induction argument, Eq.~\eqref{eq:FirstMinimaxSPCA} also holds with $\mu'$, $\eps'$, $q'_t$ replaced by  $\mu$, $\eps$, $q_t$:
      \begin{align}
      \lim_{n\to\infty}\inf_{\obtheta_0\in\cuT(\eps,R)}\E\left\{\frac{\<\obtheta_0,\hbtheta^t\>}{\|\obtheta_0\|_2\|\hbtheta^t\|_2}\right\}\le \sqrt{\frac{V_{\pm}(q_t+\talpha)}{\mu^2\eps}}\, , \label{eq:SecondMinimaxSPCA}
      \end{align}
      Claims $(a)$ and $(b)$ follow by upper bounding the right-hand side of the last equation.
      
      First notice that $V_{\pm}(q)= \mu^4\eps^2\delta\, q+O(q^2)$ and hence Eqs.~\eqref{eq:SE-SPCA-1}, \eqref{eq:SE-SPCA-2}
      imply that, for any $\eta>0$ there exists $q_*>0$ such that, if $q_t+\talpha\le q_*$, then
      \begin{align}
        q_{t+1} \le (\mu^4\eps^2\delta+\eta)(q_t+\talpha)\, .
        \end{align}
        If $\mu^4\eps^2\delta<1$, choosing $\eta=(1-\mu^4\eps^2\delta)/2$, this inequality implies
        $q_t\le 2\talpha/(1-\mu^4\eps^2\delta)$, which proves claim $(a)$.

        For the second claim, we use the  bounds $e^{-\delta q\mu^2/2}\cosh(\mu\sqrt{\delta q} G)\ge 0$
          and $x/(1+x)\le x$ in Eq.~\eqref{eq:SE-SPCA-2} to get $q_t\le \oq_t$ for all $t$,  where $\oq_0=0$ and
  \begin{align}
    \oq_{t+1} & = F_0(\oq_t+\talpha)\, ,\;\;\; \;\;\;\; F_0(q) := \frac{\mu^2\eps^2}{1-\eps}\sinh(\mu^2\delta q)\, .
  \end{align}
  Further Eq.~\eqref{eq:SecondMinimaxSPCA} implies
  \begin{align}
    \lim_{n\to\infty}\inf_{\obtheta_0\in\cuT(\eps,R)}\E\left\{\frac{\<\obtheta_0,\hbtheta^t\>}{\|\obtheta_0\|_2\|\hbtheta^t\|_2}\right\}\le \sqrt{\frac{\oq_{t+1}}{\mu^2\eps}}\, .\label{eq:ThirdMinimaxSPCA}
  \end{align}
  Define $x_t := \mu^2\delta\oq_t$,  $a:=\mu^4\eps^2\delta/(1-\eps)$, $b := \mu^2\delta\talpha =(\delta/\eps)(\alpha/(1-\alpha))$.
  Then $x_t$ obeys the recursion
  \begin{align}
    x_{t+1}= a\sinh(x_t+b)\, .
  \end{align}
  Since $a= R^4\delta/(1-\eps)$, we know that $a<1/4$. Using the fact that $\sinh(u)\le 2 u$ for $u\le 1$, this implies
  $x_t\le b$ for all $t$ provided $b<1/2$. Subsitiuting this bound in Eq.~\eqref{eq:ThirdMinimaxSPCA}, we obtain the desired claim.
 
   \subsection{Proof of Corollary \ref{coro:PhaseRetrieval}}

   Consider first the case of a random vector $\btheta_0$ with i.i.d. entries $\theta_{0,i}\sim \mu_{\theta}$. Define, for $\Theta_0\sim\mu_{\theta}$,
    \begin{align}
      F_{\eps}(q) & := \E\big\{\E[\Theta_0|\sqrt{q}\Theta_0+G]^2\big\}\\
      & = e^{-q\mu^2}\mu^2\eps^2\E\left\{\frac{\sinh(\mu\sqrt{ q} G)^2}{1-\eps+\eps e^{- q\mu^2/2}
        \cosh(\mu\sqrt{q} G)}\right\}\, .
    \end{align}
    Setting $q_t = \tau_t^{-2}$, $\hq_t=\sigma_t^2$, and $\talpha = \alpha/(1-\alpha)$, and referring to Lemma  \ref{lem:posterior-to-score}, the state evolution recursion \eqref{bamp-se-hd-reg}  takes
    the form 
    \begin{align}
      \hq_t & = F_{\eps}(q_t+\talpha) \, ,\;\;\;\;   q_{t+1}= \delta\, H(\hq_t)
      \, ,\label{eq:SE-Phase-Retrieval}\\
   H(q)& := \E_{G_0,Y}\left[\left(
                           \frac{\E_{G_1}\partial_xp(Y| \sqrt{q}\, G_0+\sqrt{1-q}G_1)}{\E_{G_1}p(Y|\sqrt{q}\, G_0+\sqrt{1-q}G_1}\right)^2\right]\,.
    \end{align}
    Notice the change in factors $\delta$ with respect to Eq.~\eqref{bamp-se-hd-reg}, which is due
    to the different normalization of the design matrix.

    By the same argument used in the proof of Lemma \ref{lemma:SPCA}, Theorem \ref{thm:hd-reg-lower-bound} implies
    that, for any GFOM, with output $\hbtheta_t$, we have
    \begin{align}
      \lim_{n,p\to\infty}\E
    \frac{\<\obtheta_0,\hbtheta^t\>}{\|\obtheta_0\|_2\|\hbtheta^t\|_2}\le \sqrt{\hq_t}\, .
    \end{align}
    We next compute the first order Taylor-expansion of the iteration \eqref{eq:SE-Phase-Retrieval}, and obtain 
    $F_{\eps}(q) = q+O(q^2)$, $H(q) = q/\delta_{\sp}+O(q^2)$ (the first order Taylor expanson of $H(q)$ was already computed in
    \cite{mondelli2019fundamental}). As a consequence, for any $\eta>0$, there exists $\alpha_0$
    such that, if $\talpha<\alpha_0$, $q_t<\alpha_0$, then
    \begin{align*}
      q_{t+1}\le (\frac{\delta}{\delta_{\sp}}+\eta)(q_t+\talpha)\, .
    \end{align*}
    The claim follows by taking $\eta= \eta(\delta) :=  (\delta_{\sp}-\delta)/(2\delta_{\sp})$, whence $q_t\le \talpha/\eta(\delta)$
    for all $t$, provided $\talpha<\alpha_*:=\alpha_0
    \eta(\delta)$. The deterministic argument follows in the same way as Corollary \ref{coro:SPCA}.

\end{appendices}  

\end{document}